%%%% kr-instructions.tex -- version 1.3 (11-Jan-2021)

\typeout{KR2022 Instructions for Authors}

% These are the instructions for authors for KR-22.

\documentclass{article}
\pdfpagewidth=8.5in
\pdfpageheight=11in

\usepackage{kr}

% Use the postscript times font!
\usepackage{times}
\usepackage{soul}
\usepackage{url}
\usepackage[hidelinks]{hyperref}
\usepackage[utf8]{inputenc}
\usepackage[small]{caption}
\usepackage{graphicx}
\usepackage{amsmath}
\usepackage{amsthm}
\usepackage{booktabs}
\usepackage{algorithm}
\usepackage{algorithmic}
\urlstyle{same}

% the following package is optional:
%\usepackage{latexsym}

% See https://www.overleaf.com/learn/latex/theorems_and_proofs
% for a nice explanation of how to define new theorems, but keep
% in mind that the amsthm package is already included in this
% template and that you must *not* alter the styling.
\newtheorem{example}{Example}
\newtheorem{theorem}{Theorem}

% Following comment is from ijcai97-submit.tex:
% The preparation of these files was supported by Schlumberger Palo Alto
% Research, AT\&T Bell Laboratories, and Morgan Kaufmann Publishers.
% Shirley Jowell, of Morgan Kaufmann Publishers, and Peter F.
% Patel-Schneider, of AT\&T Bell Laboratories collaborated on their
% preparation.

% These instructions can be modified and used in other conferences as long
% as credit to the authors and supporting agencies is retained, this notice
% is not changed, and further modification or reuse is not restricted.
% Neither Shirley Jowell nor Peter F. Patel-Schneider can be listed as
% contacts for providing assistance without their prior permission.

% To use for other conferences, change references to files and the
% conference appropriate and use other authors, contacts, publishers, and
% organizations.
% Also change the deadline and address for returning papers and the length and
% page charge instructions.
% Put where the files are available in the appropriate places.
%PDF Info Is REQUIRED.
\pdfinfo{
/TemplateVersion (KR.2022.0)
}

\title{Normalisations of Existential Rules: Not so Innocuous!}

% Single author syntax
\iffalse % (remove the multiple-author syntax below and \iffalse ... \fi here)
\author{%
    Author name
    \affiliations
    Affiliation
    \emails
    email@example.com    % email
}
\fi
% Multiple author syntax
\author{%
David Carral$^1$ \and
Lucas Larroque$^2$\footnote{Our work started when Lucas was intern at LIRMM-Inria.} \and
Marie-Laure Mugnier$^{1}$ \and
Micha\"el Thomazo$^3$ \\
\affiliations
$^1$LIRMM, Inria, University of Montpellier, CNRS, Montpellier, France\\
$^2$DI ENS, ENS, CNRS, PSL University, Paris, France\\
$^3$Inria, DI ENS, ENS, CNRS, PSL University, Paris, France\\
\emails
\{david.carral, michael.thomazo\}@inria.fr,
lucas.larroque@ens.psl.eu,
mugnier@lirmm.fr
}

% Packages
\usepackage{amsfonts}
\usepackage{amsmath}
\usepackage{amssymb} %subsetneq
\usepackage{tikz}
\usetikzlibrary{arrows,automata}
\usepackage{todonotes}
\usepackage{xspace}

\usepackage{comment}
%For TR:
\includecomment{tr}
\excludecomment{paper}
%For paper:
% \includecomment{paper}
% \excludecomment{tr}

 \allowdisplaybreaks

%% MLM

% Solution for repeated theorems in amsthm
% http://tex.stackexchange.com/questions/422/how-do-i-repeat-a-theorem-number
\makeatletter
\newtheorem*{rep@theorem}{\rep@title}
\newcommand{\newreptheorem}[2]{%
\newenvironment{rep#1}[1]{%
 \def\rep@title{#2 \ref{##1}}%
 \begin{rep@theorem}}%
 {\end{rep@theorem}}}
\makeatother

\newreptheorem{theorem}{Theorem}
\newreptheorem{lemma}{Lemma}
\newreptheorem{proposition}{Proposition}

% Environments

\theoremstyle{definition}
\newtheorem{definition}[theorem]{Definition}
\theoremstyle{plain}
\newtheorem{proposition}[theorem]{Proposition}
\newtheorem{lemma}[theorem]{Lemma}

% Macros

\newcommand{\FirstItem}{(i)\xspace}
\newcommand{\SecondItem}{(ii)\xspace}
\newcommand{\ThirdItem}{(iii)\xspace}
\newcommand{\FourthItem}{(iv)\xspace}

\newcommand{\FormatFunction}[1]{\ensuremath{\mathsf{#1}}\xspace}
\newcommand{\FormatEntitySet}[1]{\ensuremath{\texttt{#1}}\xspace}
\newcommand{\FormatPredicate}[1]{\ensuremath{\mathtt{#1}}\xspace}
\newcommand{\FP}[1]{\FormatPredicate{#1}}

\newcommand{\Variables}{\FormatEntitySet{Vars}}
\newcommand{\Vars}{\Variables}
\newcommand{\Predicates}{\FormatEntitySet{Preds}}
\newcommand{\Preds}{\Predicates}

\newcommand{\Constants}{\FormatEntitySet{Cons}}
\newcommand{\C}{\Constants}
\newcommand{\Terms}{\FormatEntitySet{Terms}}
\newcommand{\EntitiesIn}[2]{\ensuremath{#1(#2)}\xspace}
\newcommand{\EI}[2]{\EntitiesIn{#1}{#2}}
\newcommand{\arity}[1]{\ensuremath{\FormatFunction{ar}(#1)}\xspace}

\newcommand{\vx}{{\ensuremath{\vec{x}}}\xspace}
\newcommand{\vy}{\ensuremath{\vec{y}}\xspace}
\newcommand{\vz}{\ensuremath{\vec{z}}\xspace}
\newcommand{\vt}{{\ensuremath{\vec{t}}}\xspace}

\newcommand{\Formula}{\ensuremath{U}\xspace}

\newcommand{\FB}{\ensuremath{F}\xspace}
\newcommand{\Query}{\ensuremath{Q}\xspace}
\newcommand{\BCQ}{\ensuremath{Q}\xspace}

\newcommand{\Rule}{\ensuremath{R}\xspace}
\newcommand{\Homomorphism}{\ensuremath{\pi}\xspace}
\newcommand{\Hom}{\Homomorphism}
\newcommand{\Tuple}[1]{\ensuremath{\langle #1 \rangle}\xspace}

\newcommand{\ens}[1]{\ensuremath{\left\{#1\right\}}}
\newcommand{\dotq}{.\ }

\newcommand{\KB}{\ensuremath{\mathcal{K}}\xspace}
\newcommand{\K}{\KB}
\newcommand{\setF}{\ensuremath{\mathcal{F}}\xspace}
\newcommand{\setG}{\ensuremath{\mathcal{G}}\xspace}
\newcommand{\setR}{\ensuremath{\mathcal{R}}\xspace}
\newcommand{\setM}{\ensuremath{\mathcal{M}}\xspace}
\newcommand{\setN}{\ensuremath{\mathcal{N}}\xspace}
\newcommand{\setU}{\ensuremath{\mathcal{U}}\xspace}
\newcommand{\setV}{\ensuremath{\mathcal{V}}\xspace}
\newcommand{\spt}{\ensuremath{\FormatFunction{sp}}}
\newcommand{\oad}{\ensuremath{\textit{1ad}}}
\newcommand{\tad}{\ensuremath{\textit{2ad}}}

\newcommand{\spR}{\ensuremath{\spt(\setR)}\xspace}
\newcommand{\oadR}{\ensuremath{\textit{1ad}(\setR)}\xspace}
\newcommand{\tadR}{\ensuremath{\textit{2ad}(\setR)}\xspace}

\newcommand{\der}{\ensuremath{\mathcal{D}}\xspace}

% Chase Variants

\newcommand{\FormatChaseVariant}[1]{\ensuremath{\mathbb{#1}}\xspace}
\newcommand{\X}{\FormatChaseVariant{X}}
\newcommand{\Y}{\FormatChaseVariant{Y}}
\newcommand{\Ob}{\FormatChaseVariant{O}}
\newcommand{\SO}{\FormatChaseVariant{SO}}
\newcommand{\R}{\FormatChaseVariant{R}}
\newcommand{\E}{\FormatChaseVariant{E}}
\newcommand{\DatalogFirst}[1]{\ensuremath{\FormatChaseVariant{DF}\text{-}#1\xspace}}
\newcommand{\DF}[1]{\DatalogFirst{#1}}

\newcommand{\funsup}[1]{\ensuremath{\FormatFunction{support}(#1)}}
\newcommand{\funout}[1]{\ensuremath{\FormatFunction{output}(#1)}}
\newcommand{\funres}[1]{\ensuremath{\FormatFunction{res}(#1)}}
\newcommand{\funtrig}[1]{\ensuremath{\FormatFunction{triggers}(#1)}}
\newcommand{\funlen}[1]{\ensuremath{\FormatFunction{length}(#1)}}
\newcommand{\funfr}[1]{\ensuremath{\FormatFunction{fr}(#1)}}
\newcommand{\funCh}[2]{\ensuremath{\FormatFunction{Ch}_{#1}(#2)}}
\newcommand{\funchase}[2]{\ensuremath{\FormatFunction{chase}_{#1}(#2)}}

\newcommand{\chaseterm}[2]{\ensuremath{\mathit{CT}^{#1}_{\forall#2}}}
\newcommand{\chaseterminst}[3]{\ensuremath{\mathit{CT}^{#1}_{#2#3}}}

\newcommand{\cmark}{\ensuremath{\boldsymbol{+}}\xspace}
\newcommand{\xmark}{\ensuremath{\boldsymbol{-}}\xspace}
\newcommand{\emark}{\ensuremath{\boldsymbol{=}}\xspace}
\newcommand{\nemark}{\ensuremath{\boldsymbol{\neq}}\xspace}

\newcommand{\FormatTM}[1]{\ensuremath{\mathit{#1}}\xspace}

\newcommand{\TM}{\FormatTM{M}}
\newcommand{\States}{\ensuremath{Q}\xspace}
\newcommand{\StartingState}{\ensuremath{q_S}\xspace}
\newcommand{\AcceptingState}{\ensuremath{q_A}\xspace}
\newcommand{\RejectingState}{\ensuremath{q_R}\xspace}
\newcommand{\Alphabet}{\ensuremath{\Gamma}\xspace}
\newcommand{\TransitionFunction}{\ensuremath{\delta}\xspace}

\newcommand{\NonFinal}{\FP{NF}}
\newcommand{\Final}{\FP{F}}
\newcommand{\Brake}{\FP{B}}
\newcommand{\Real}{\FP{R}}
\newcommand{\Done}{\FP{D}}
\newcommand{\Last}{\FP{Lst}}
\newcommand{\First}{\FP{Frst}}
\newcommand{\Init}{\FP{Int}}

\newcommand{\HeadState}[1]{\FP{Hd_{#1}}}
\newcommand{\Next}{\FP{Nxt}}
\newcommand{\NextPlus}{\Next^+}
\newcommand{\Step}{\FP{Stp}}
\newcommand{\End}{\FP{End}}
\newcommand{\Content}[1]{\FP{S}_{#1}}
\newcommand{\Blank}{\text{\textvisiblespace}}

\begin{document}

\maketitle

\begin{abstract}
Existential rules are an expressive knowledge representation language mainly developed to query data. In the literature, they are often supposed to be in some normal form that simplifies technical developments. For instance, a common assumption is that rule heads are atomic, i.e., restricted to a single atom. Such assumptions are considered to be made without loss of generality as long as all sets of rules can be normalised
%put in normal form 
while preserving entailment. However, an important question is whether the properties that ensure the decidability of reasoning are preserved as well. We provide a systematic study of the impact of these procedures on the different chase variants with respect to chase (non-)termination and FO-rewritability. This also leads us to study open problems related to chase termination of independent interest.
\end{abstract}

%\tableofcontents

% !TEX root = ../main.tex

\section{Introduction}
\label{section:introduction}

\noindent

Existential rules are an expressive knowledge representation language mainly developed to query data \cite{DBLP:conf/ijcai/BagetLMS09,DBLP:conf/pods/CaliGL09}. Such rules are an extension of first-order function-free Horn rules (like those of Datalog) with existentially quantified variables in the rule heads, which allows to infer the existence of unknown individuals. 

Querying a knowledge base (KB) $\KB = \Tuple{\setR, \FB}$, where $\setR$ is a set of existential rules and $\FB$ a set of facts, consists in computing all the answers to queries that are logically entailed from $\KB$. 
Two main techniques have been developed, particularly in the context of the fundamental (Boolean) conjunctive queries. The \emph{chase} is a bottom-up process that expands $\FB$ by rule applications from $\setR$ towards a fixpoint. It produces a \emph{universal} model of $\KB$, i.e., a model of $\KB$ that homomorphically maps to all models of $\KB$, which is therefore sufficient to decide query entailment. \emph{Query rewriting} is a dual technique, which consists in rewriting a query $q$ with the rules in $\setR$ into a query $q'$ such that $q$ is entailed by $\KB$ if and only if $q'$ is entailed by $\FB$ solely. 

Conjunctive query answering being undecidable for existential rules \cite{beeri-vardi-81}, both the chase and query rewriting may not terminate. There is however a wide range of rule subclasses defined by syntactic restrictions that ensure \emph{chase termination} on any set of facts (see, e.g., various acyclicity notions in \cite{DBLP:journals/jair/GrauHKKMMW13}) or the existence, for any conjunctive query, of a (finite) rewriting into a first-order query, a property referred to as \emph{FO-rewritability} \cite{DBLP:journals/jar/CalvaneseGLLR07}. 

%\mlm{Motivation and starting question}
%
In the literature, existential rules are often supposed to be in some normal form that simplifies technical developments.
%\todo{David: I would add a couple of papers here that consider this normal form to better motivate our work.}
For instance, a common assumption is that rule heads are atomic, i.e., restricted to a single atom.
On the one hand, the use of single-head rules greatly simplifies the presentation of theoretical arguments (e.g., \cite{DBLP:journals/ai/CaliGP12}).
On the other hand, this restriction may also simplify implementations; e.g., the optimisation procedure presented in \cite{MaterKBviaTrigGraph} exploits single-head rules to clearly establish the provenance of each fact computed during the chase.
Moreover, after normalisation, we can apply existing methods to effectively determine if the chase terminates for an input 
%single-headed 
single-head 
existential rule set if this set is linear \cite{DBLP:conf/icdt/LeclereMTU19} or guarded \cite{DBLP:conf/pods/GogaczMP20}.
% Note: 
% tamingInfChase introduces a decomposition of rules into atomic-head rules
% AllInstRTerm is an example of theoretical work that critically depends on this assumption 
% MaterKBviaTrigGraph is an example of more practical work that also critically depends on this assumption 
Normal form assumptions are often made without loss of generality as long as all sets of rules can be normalised
%put in normal form 
while preserving all interesting entailments. However, an important question is whether the properties that ensure the decidability of reasoning are preserved as well. In particular, what is the impact of common normalisation procedures on fundamental properties like chase termination or FO-rewritability? 

%\mlm{In more details}
%
In fact, the chase is a family of algorithms, which differ from each other in their termination properties. Here, we consider the four main chase variants, namely: 
the \emph{oblivious} chase \cite{cgk08}, 
the \emph{semi-oblivious} (aka skolem) chase \cite{SchemaMapTermToTract},
%\cite{marnette09}
 the \emph{restricted} (aka standard) chase \cite{DataExSemQAns}
 %\cite{fkmp05}, 
 and the \emph{core} chase \cite{chaserevisited}. 
%\cite{deutsch-nash-remmel08}. 
As the core chase has the inconvenience of being non-monotonic (i.e., the produced set of facts does not grow monotonically), we actually study a monotonic variant that behaves similarly regarding termination, namely the \emph{equivalent} chase \cite{Rocher2016}. 
The ability of a chase variant to halt on a given KB is directly related to its power of 
%detecting
reducing logical redundancies introduced by rules. 
The oblivious chase blindly performs all possible rule applications, while the equivalent chase terminates exactly when the KB admits a finite universal model. The other variants lie between these two extremes. 
For practical efficiency reasons, the most implemented variant is the restricted chase. However, it is the only variant sensitive to the order of rule applications:
%for which the order of rule applications matters: 
for a given KB, there may be sequences of rule applications that terminate, while others do not. We study a natural strategy, called \emph{Datalog-first restricted chase}, which prioritises Datalog rules (whose head does not include existential quantifiers) thus achieving termination in many real-world cases \cite{DBLP:conf/ijcai/CarralDK17}. Moreover, experiments have shown that it is indeed a very efficient strategy \cite{DBLP:conf/cade/UrbaniKJDC18}. 

On the other hand, we consider two well-known normalisation procedures of a set of rules: \emph{single-piece}-decomposition, which breaks rule heads into subsets called pieces 
%(aka blocks) 
and outputs a logically equivalent rule set
\cite{walkDecLine,SCMUCQRewriting}; and \emph{atomic}-decomposition, which requires to introduce fresh predicates and outputs a set of atomic-head rules that form a conservative extension of the original set, hence preserve entailment \cite{tamingInfChase,DBLP:journals/ai/CaliGP12}.

\paragraph{Contributions.} We provide a systematic study of the impact of these procedures on the different chase variants with respect to chase (non-)termination and FO-rewritability. This also leads us to solve some open problems related to chase termination, which are of independent interest.

Although the relationships between most chase variants with respect to chase termination are well understood \cite{anatomychase}, the question remained open regarding the restricted chase and its Datalog-first version. Unexpectedly, we found that Datalog-first strategies are not always optimal:
% with respect to termination: 
 we exhibit a rule set $\setR$ such that the restricted chase has a terminating sequence on any KB $\Tuple{\setR, \FB}$ but there is a KB $\Tuple{\setR, \FB}$ on which no Datalog-first strategy terminates (Section 3). 

\begin{table}
\setlength{\tabcolsep}{3.5pt}
\renewcommand{\arraystretch}{1.5}
\begin{tabular}{c|c|c|c|c|c|c|c|c|c|c|c|}
\cline{2-8}
							& \Ob & \SO & $\exists$-\R & $\forall$-\R & $\exists$-\DF{\R} & $\forall$-\DF{\R} & \E \\\hline
\multicolumn{1}{|c|}{Single-Piece}	&\emark&\cmark&\nemark&\nemark&\nemark&\nemark&\emark\\ \hline
\multicolumn{1}{|c|}{One-Way}		&\emark&\emark&\xmark&\xmark&\xmark&\xmark&\xmark\\ \hline
\multicolumn{1}{|c|}{Two-Way}		&\emark&\emark&\cmark&\xmark&\emark&\emark&\emark\\ \hline
\end{tabular}
%\small{\Ob: Oblivious;  \SO: semi-oblivious; (\DF){\R}: (Datalog-first) restricted;  \E: equivalent }
\caption{Impact on chase termination. Chase variants are denoted as follows: \Ob: Oblivious;  \SO: semi-oblivious; (\DF){\R}: (Datalog-first) restricted;  \E: equivalent.}\label{table-chase} 
\end{table}

While it appears that none of the considered decompositions influences FO-rewritability, the situation is very different concerning chase termination, as summarized in Table \ref{table-chase}. Note that we distinguish between two behaviors for restricted chases: $\exists$ means that at least one chase sequence terminates on any KB (``sometimes-termination'') and $\forall$ that all sequences terminate on any KB (``termination''). 
Single-piece-decomposition (Section 4) has no impact on the oblivious and equivalent chases (noted =), a positive impact on the semi-oblivious chase (noted +), and an erratic impact on (Datalog-first-)restricted chase (noted $\neq$). The standard atomic-decomposition (Section 5), called \emph{one-way} in the table, has a negative impact on all chase variants, except for the (semi-)oblivious ones. Looking for a well-behaved atomic-decomposition procedure, we study a new one, named \emph{two-way} (Section 6). A salient property of this decomposition is that it preserves the existence of a finite universal model. As shown in the table, two-way behaves better than one-way: it preserves (sometimes-)termination of the Datalog-first restricted chase and may even improve the sometimes-termination of the restricted chase. However, the negative impact on the termination of the restricted chase remains. 
 These findings led us to an intriguing question: does a computable normalisation procedure exist that produces atomic-head rules and exactly preserves the termination of the restricted chase? We show that the answer is negative by a complexity argument (Section 7). 
More specifically, we study the decidability status of the following problem: Given a KB $\mathcal K = \Tuple{\setR, \FB}$, does the restricted chase terminate on $\mathcal K$? We show that the associated membership problem is at least at the second level of the arithmetical hierarchy (precisely $\Pi^0_2$-hard) when there is no restriction on $\setR$, while it is recursively enumerable (in $\Sigma^0_1$) when $\setR$ is a set of atomic-head rules. Since $\Sigma^0_1 \subsetneq \Pi^0_2$, we obtain the negative answer to our question. 

\begin{paper}
The complete proofs for all of the results in this paper can be found on an arXiv submission with the same name.
\end{paper}
\begin{tr}
This extended report includes an appendix with additional details on proofs omitted
from the conference version.
\end{tr}

% !TEX root =  ../main.tex
\section{Preliminaries}\label{sec:prelim}

\paragraph{First-Order Logic (FOL)}
We define \Preds, \C, and \Vars to be mutually disjoint, countably infinite sets of \emph{predicates}, \emph{constants}, and \emph{variables}, respectively.
Every $P \in \Preds$ has an \emph{arity} $\arity{P} \geq 0$.
Let $\Terms = \C \cup \Vars$ be the set of \emph{terms}.
We write lists $t_1, \ldots, t_n$ of terms as $\vt$ and often treat them as sets.
For a formula or set thereof \Formula, let \EntitiesIn{\Preds}{\Formula}, \EntitiesIn{\C}{\Formula}, \EntitiesIn{\Vars}{\Formula}, and \EntitiesIn{\Terms}{\Formula} be the sets of all predicates, constants, variables, and terms that occur in \Formula, respectively.
 
%An \emph{atom} is a FOL formula of the form $P(\vt)$ 
An \emph{atom} is a FOL formula $P(\vt)$ with $P$ a $\vert \vt \vert$-ary predicate and $\vt \in \Terms$.
% the formula $P(\vt)$ is a \emph{fact} if $\vt \in \C$.
For a formula $\Formula$, we write $\Formula[\vx]$ to indicate that \vx is the set of all free variables that occur in $\Formula$.

\begin{definition}
\label{def:exist-rule}
An \emph{(existential) rule} \Rule is a FOL formula 
\begin{align}
\forall \vx \forall \vy . \big(B[\vx, \vy] \rightarrow \exists \vz . H[\vx, \vz]\big) \label{rule}
\end{align}
where \vx, \vy, and \vz are pairwise disjoint lists of variables; and $B$ and $H$ are (finite) non-empty conjunctions of atoms, called the \emph{body} and the \emph{head} of $R$, respectively.
The set \vx is the \emph{frontier} of \Rule.
If \vz is empty, then \Rule is a \emph{Datalog} rule.
\end{definition}

Next, we often denote a rule such as $R$ above by $B \to H$ or $B \to \exists \vz . H$, omitting all or some quantifiers.

 A \emph{factbase} \FB is an existentially closed (finite) conjunction of atoms. 
 A \emph{Boolean conjunctive query} (BCQ) has the same form as a factbase, and we often identify both notions.
A \emph{knowledge base} (KB) \K is a tuple $\Tuple{\setR, \FB}$ with \setR a rule set and \FB a factbase.
We often identify rule bodies, rule heads, and factbases with (finite) sets of atoms. 
 
 Given atom sets \FB and $F'$, a \emph{homomorphism} \Hom from \FB to $F'$
  is a function with domain $\EI{\Vars}{F}$ such that $\Hom(F) \subseteq F'$; 
\Hom is an \emph{isomorphism} from  \FB to $F'$ if additionally, $\Hom$ is injective and $\Hom^{-1}$ is a homomorphism from \FB to $F'$.
A homomorphism \Hom from $F$ to $F'$ is a \emph{retraction} if \Hom is the identity over $\EI{\Vars}{F}\cap\EI{\Vars}{F'}$ (next, we often use this notion with $F' \subseteq F$).

We identify logical interpretations with atom sets. 
An atom set \FB \emph{satisfies} a rule $\Rule = B \to H$ if, for every homomorphism \Hom from $B$ to \FB, there is an extension $\hat{\Hom}$ of $\Hom$ with $\hat{\Hom}(H)\subseteq F$; equivalently, \FB is a \emph{model} of $\Rule$. 
An atom set \setM is a \emph{model of a factbase} \FB if there is a homomorphism from \FB to \setM, and it is a \emph{model of a KB} $\Tuple{\setR, F}$ if it is a model of \FB and satisfies all rules in \setR.
Given KBs or atom sets $A$ and $B$, $A$ \emph{entails} $B$, written $A \models B$, if every model of $A$ is a model of $B$; $A$ and $B$ are \emph{equivalent}
%, written $A \equiv B$, 
if $A \models B$ and $B \models A$. 
Given atom sets $F$ and $F'$, it is known that $F \models F'$ iff there is a homomorphism from $F'$ to $F$. 

\begin{definition}
A model \setM of a KB \K is \emph{universal} if there is a homomorphism from \setM to every model of \K.
\end{definition}

Every KB \K admits some (possibly infinite) universal model. Hence, $\K \models \Query$ for any BCQ \Query iff there is a homomorphism from a universal model of \K to \Query. 
The \emph{BCQ entailment} problem takes as input a KB \K and a BCQ $Q$ and asks if $\K \models Q$; it is undecidable \cite{beeri-vardi-81}. 

Next, we will consider transformations of rule sets that introduce fresh predicates. To specify the relationships between a rule set and its decomposition, we will rely on the notion of conservative extension:
\begin{definition}[Conservative extension]\label{def:ce}
    Let \setR and $\setR'$ be two rule sets such that $\EI{\Preds}{\setR} \subseteq \EI{\Preds}{\setR'}$. The set $\setR'$ is a \emph{conservative extension} of the set \setR if (1) the restriction of any model of $\setR'$ to the predicates in $\EI{\Preds}{\setR}$ is a model of $\setR$, and (2) any model \setM of $\setR$ can be extended to a model $\mathcal{M'}$ of $\setR'$ that has the same domain (i.e., \EntitiesIn{\Terms}{\setM} = \EntitiesIn{\Terms}{\setM'}) and agrees with \setM on the interpretation of the predicates in $\EI{\Preds}{\setR}$ (i.e., they have the same atoms with predicates in $\EI{\Preds}{\setR}$). 
\end{definition}
When $\setR'$ is a conservative extension of $\setR$, for any factbase $\FB$ the KBs $\Tuple{\setR, \FB}$ and $\Tuple{\setR', \FB}$
entail the same (closed) formulas on $\EI{\Preds}{\setR}$, in particular BCQs.

\paragraph{The chase}
The chase is a family of procedures that repeatedly apply rules to a factbase until a fixpoint is reached. We formally define such procedures before stating their correctness with respect to factbase entailment in Proposition~\ref{proposition:chase-correctness}.

\begin{definition}[Triggers and derivations]
\label{def:triggers-derivations}
Given a fact set $F$, a \emph{trigger} $t$ on $F$ is a tuple $\Tuple{R, \pi}$ with $R = B \to \exists \vz . H$ a rule and $\pi$ a homomorphism from $B$ to $F$. 
%a partial function over \Vars.
Let $\funsup{t} = \pi(B)$ and $\funout{t} = \pi^R(H)$, where $\pi^R$ is the extension of $\pi$ that maps every variable $z \in \vz$ to the fresh variable $z_t$ that is unique for $z$ and $t$.
A \emph{derivation} from a KB $\K=\Tuple{\setR, \FB}$ is a sequence $\der  = (\emptyset, F_0), (t_1, F_1), \ldots$ such that:
\begin{enumerate}
\item Every $F_i$ in \der is a factbase; moreover, $F_0 = F$. 
\item Every $t_i$ in \der is a trigger $\Tuple{R, \pi}$ on $F_{i-1}$ such that $R \in \setR$, $\funout{t_i} \not \subseteq F_{i-1}$, and $F_i = F_{i-1} \cup \funout{t_{i}}$.
\end{enumerate}
The \emph{result} of $\der$, written \funres{\der}, is 
%the minimal factbase that includes all factbases occurring in \der
the union of all the factbases in \der.
Let \funtrig{\der} be the set of all triggers in \der and  $\funlen{\der} = |\funtrig{\der}|$ be the length of \der.
\end{definition}

Different chase variants build specific derivations according to different criteria of trigger applicability. Below, the letters \Ob, \SO, \R, and  \E respectively refer to so-called oblivious, semi-oblivious, restricted, and equivalent\footnote{The equivalent chase behaves as the better-known core chase regarding termination: it halts exactly when the KB has a finite universal model.
The difference lies in the fact that the core chase computes a minimal universal model (i.e., a core).
The equivalent chase has the advantage of being monotonic $(\forall i, F_i \subseteq F_{i+1})$.}
 variants.

\begin{definition}[Applicability]
\label{definition:applicability}
A trigger $t = \Tuple{\Rule, \Hom}$ on a factbase $\FB$
 is \FirstItem \emph{\Ob-applicable} on \FB if  $\funout{t} \not \subseteq F$, %\todo{\Ob-applicability is unnecessary; its requirements are satisfied by the definition of a derivation.} 
 \SecondItem \emph{\SO-applicable} on \FB if $\funout{t'} \not \subseteq F$ for every trigger $t' = (R, \pi')$ with $\pi(x) = \pi'(x)$ for all $x \in \funfr{R}$, \ThirdItem \emph{\R-applicable} on \FB if there is no retraction from $F \cup \funout{t}$ to \FB, and \FourthItem \emph{\E-applicable} on \FB if there is no homomorphism from $F \cup \funout{t}$ to \FB.
\end{definition}

\begin{example}
\label{example:preliminaries}
Consider the KB $\K = \Tuple{\setR, F}$ with $\setR = \{R = P(x,y) \rightarrow \exists z . P(y,z) \wedge P(z,y)\}$ and $F = \{ P(a,b) \}$ with $a$ and $b$ some constants.
%Consider a rule $R = P(x,y) \rightarrow \exists z . P(y,z) \wedge P(z,y)$ and a factbase $F = \{ P(a,b) \}$ with $a$ and $b$ some constants.
The trigger $t_1 = (R, \pi_1)$ with $\pi_1 = \{ x \mapsto a, y \mapsto b \}$ is  \X-applicable on $F_0 = F$ (for any  \X), and
$\funout{t_{1}} = \{ P(b, z_{t_1}), P(z_{t_1}, b)\}$. There are two new triggers on $F_1 = F \cup \funout{t_{1}}$, both \Ob- and \SO-applicable, but neither  \R- nor \E-applicable. For instance, consider $t_2 = (R, \pi_2)$ with $\pi_2 = \{ x \mapsto b, y \mapsto z_{t_1} \}$ and $\funout{t_{2}} = \{ P( z_{t_1}, z_{t_2}), P(z_{t_2},  z_{t_1})\}$: there is a retraction from $F_1 \cup \funout{t_{2}}$ to $F_1$, which maps $z_{t_2}$ to $b$. 
\end{example}

\begin{definition}[$(\DF{})\X$-Chase]
For an $\X \in \{\Ob, \SO, \R, \E\}$, an \emph{$\X$-derivation from a KB $\K = \Tuple{\setR, \FB}$} is 
a derivation $\der$ such that every trigger $t_{i} \in \funtrig{\der}$ is  \X-applicable on $F_i$;
$\der$ is a \emph{\DF{\X}-derivation}  if it gives priority to Datalog rules: for any $t_i = \Tuple{R, \pi}  \in \funtrig{\der}$, if $R$ is a non-Datalog rule, then $F_{i-1}$ satisfies every Datalog rule in \setR. 
A (\DF){\X}-derivation $\der$ is \emph{fair} if for every $F_i$ occurring in $\der$ and trigger $t$ $\X$-applicable on $F_i$, 
there is some $j > i$ such that $t$ is not \X-applicable on $F_{j}$. 
A (\DF){\X}-derivation is \emph{terminating} if it is fair and finite.
\end{definition}

The result of any fair \X-derivation is a universal model of the KB, for $\X \in \{\Ob, \SO, \R\}$, and has a retraction to a universal model for \X = \E. Therefore, we obtain:

\begin{proposition}
\label{proposition:chase-correctness}
Consider a BCQ $Q$, a KB \K, and some fair \X-derivation \der from \K where $\X \in \{\Y, \DF{\Y}\}$ and $\Y \in \{\Ob, \SO, \R, \E\}$.
Then, $\K \models Q$ iff $\funres{\der} \models Q$.
\end{proposition}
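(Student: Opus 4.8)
The statement is essentially a corollary of the fact recalled right before it, so the plan is to extract from that fact exactly the homomorphisms we need and then compose them. First I would note that a fair $\DF{\Y}$-derivation is in particular a fair $\Y$-derivation: the Datalog-first restriction only constrains which trigger may be chosen at each step, and affects neither applicability, nor fairness, nor the result $\funres{\der}$; hence it suffices to treat $\X = \Y \in \{\Ob, \SO, \R, \E\}$. By the recalled fact there is then a universal model $U$ of $\K$ together with homomorphisms $\alpha \colon \funres{\der} \to U$ and $\beta \colon U \to \funres{\der}$: for $\Y \in \{\Ob, \SO, \R\}$ one has $U = \funres{\der}$ and $\alpha = \beta = \mathrm{id}$, while for $\Y = \E$ one takes $\alpha$ to be the retraction of $\funres{\der}$ onto a universal model $U \subseteq \funres{\der}$ and $\beta$ the inclusion of $U$ into $\funres{\der}$.

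For the forward direction, assume $\K \models Q$. Since $U$ is a model of $\K$, it is a model of the BCQ $Q$, i.e., there is a homomorphism $h \colon Q \to U$ (recall that $F \models F'$ iff there is a homomorphism from $F'$ to $F$). Then $\beta \circ h$ is a homomorphism from $Q$ to $\funres{\der}$, that is, $\funres{\der} \models Q$. For the converse, assume $\funres{\der} \models Q$, so that there is a homomorphism $h \colon Q \to \funres{\der}$. Let $\setM$ be an arbitrary model of $\K$; universality of $U$ provides a homomorphism $g \colon U \to \setM$, and then $g \circ \alpha \circ h$ is a homomorphism from $Q$ to $\setM$, so $\setM$ is a model of $Q$. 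As $\setM$ was arbitrary, $\K \models Q$.

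The only genuine content here is the recalled fact itself; if it were not already available, the real work would be: (i) showing that $\funres{\der}$ satisfies every rule of $\setR$, which for each variant combines fairness with the matching applicability notion --- the delicate case being the restricted chase, where ``$t$ not $\R$-applicable on $F_j$'' gives only a \emph{retraction} of $F_j \cup \funout{t}$ onto $F_j$, and one must check that this still forces the rule to hold in $\funres{\der}$; and (ii) establishing universality by induction along the derivation --- if a model $\setM$ of $\K$ already receives a homomorphism from $F_{i-1}$, then since $\setM$ satisfies the rule fired at step $i$ that homomorphism extends over $\funout{t_i}$, hence over $F_i$, and one passes to the limit using that a homomorphism out of $\bigcup_i F_i$ amounts to a compatible family of homomorphisms out of the $F_i$. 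For $\Y = \E$ this argument yields only that $\funres{\der}$ retracts onto a universal model rather than being one, but as seen above that is enough. I expect step (i) for the restricted chase, together with the fairness/limit bookkeeping, to be the main obstacle.
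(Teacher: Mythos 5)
Your proposal is correct and follows essentially the same route as the paper: the paper states Proposition~\ref{proposition:chase-correctness} as a direct consequence of the recalled fact that the result of a fair derivation is a universal model (for \Ob, \SO, \R) or retracts onto one (for \E), which is exactly the homomorphism-composition argument you spell out, including the observation that the \DF{} case reduces to the plain one. The extra sketch of how the recalled fact itself would be proved is not needed for the paper's argument but is consistent with the standard proofs it relies on.
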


\paragraph{Decidable Classes of Rule Sets}

We now define classes of rule sets that ensure the decidability of BCQ entailment, based either on chase termination or on query rewritability. 

\begin{definition}[Chase-Terminating Sets]
\label{definition:chase-terminating-sets}
For a $\Y \in \{\Ob, \SO, \R, \E\}$ and an $\X \in \{\Y, \DF{\Y}\}$, let \chaseterm{\X}{\forall} (resp. \chaseterm{\X}{\exists}) be the set of all rule sets \setR such that every (resp. some) fair \X-derivation 
%{\X}-chase sequence
 from every KB \Tuple{\setR, \FB} is finite.
\end{definition}

When $\setR \in \chaseterm{\X}{\forall}$ (resp. $\chaseterm{\X}{\exists}$), \setR ensures the \emph{termination} (resp. \emph{sometimes termination}) of the ${\X}$-chase.

\begin{example}
Consider  the KB $\K = \Tuple{\setR, F}$ from Example~\ref{example:preliminaries}.
All fair \Ob- or \SO-derivations from \K are infinite.  
The only one fair \R-derivation (resp. \E-derivation) from $\K$ is $\der  = (\emptyset, F_0), (t_1, F_1)$.
Any fair \R-derivation from a KB with {\setR} is finite and hence, $\setR \in \chaseterm{\R}{\forall}$ (and $\setR \in \chaseterm{\E}{\forall}$). 
\end{example}

\begin{definition}[FO-rewritability]
A rule set $\setR$ is \emph{FO-rewritable}  if for any BCQ $Q$, there is a (finite) BCQ set $\{Q_1, \ldots, Q_n\}$ such that, for every factbase $F$, $\Tuple{\setR, \FB} \models Q$ iff $\FB \models Q_i$ for some $1 \leq i \leq n$. %\todo{Reverted to this def because it was not clear that the other one is the equivalent.}
\end{definition}

In our proofs, we rely on a property equivalent to FO-rewritability: the bounded derivation depth property, which has the advantage of being based on (a breadth-first version of) the chase \cite{DBLP:conf/pods/CaliGL09}.
See \cite{DBLP:journals/ai/GottlobKKPSZ14} about the equivalence between both properties.

\begin{definition}[BDDP]
For a rule \Rule and a factbase \FB, let $\Rule(\FB) \supseteq \FB$ be the minimal factbase that includes \funout{t} for every trigger $t$ with $R$.
For a rule set $\setR$, let $\setR(F) = \bigcup_{R \in \setR} R(F)$.
For a KB $\KB = \Tuple{\setR, \FB}$, let $\funCh{0}{\KB} = F$ and $\funCh{i}{\KB} = \setR(\funCh{i-1}{\KB})$ for every $i \geq 1$.

A rule set \setR has the \emph{bounded derivation depth property (BDDP)} if, for any BCQ \BCQ, there is some $k \geq 0$ such that, for every factbase $\FB$, $\Tuple{\setR, \FB} \models \BCQ$ iff $\funCh{k}{\Tuple{\setR, \FB}} \models \BCQ$.
\end{definition}

\paragraph{Normalisation Procedures}\label{sub:normProc}

Finally, we formally define normalisation procedures and their impact on the above properties. 
A \emph{normalisation procedure} is a function $f$ that maps rule sets to rule sets (complying with a certain shape) such that 
for any rule set $\setR$, $\Tuple{\setR, \FB}\models Q$  iff $\Tuple{f(\setR), \FB}\models Q$ for any factbase $\FB$ and BCQ $Q$ on $\EI{\Preds}{\setR}$. 

\begin{definition}\label{def:chaseTermination}
Consider some $\mathbb{X}\in\ens{\Ob, \SO, \R, \DF{\R}, \E}$.
Then, a normalisation procedure $f$:
%function $f$ that maps rule sets to rule sets: 
\begin{itemize}
\item \emph{Preserves termination} of the \X-chase if $f(\chaseterm{\X}{\forall}) \subseteq \chaseterm{\X}{\forall}$; it preserves \emph{sometimes-termination} of the \X-chase if $f(\chaseterm{\X}{\exists}) \subseteq \chaseterm{\X}{\exists}$.
\item \emph{Preserves non-termination} %(resp. \emph{non-sometimes-termination}) 
of the \X-chase if $f(\overline{\chaseterm{\X}{\forall}}) \subseteq \overline{\chaseterm{\X}{\forall}}$.  
Otherwise, $f$ \emph{may gain termination.}
\item \emph{Preserves rewritability} if it maps FO-rewritable rule sets to FO-rewritable rule sets.
\end{itemize}

\end{definition} 

% !TEX root =  ../main.tex
\section{Generality of Chase-Terminating Rule Sets}
\label{section:chase-terminating-sets}

One of our goals is to study normalisation procedures that preserve membership over the sets of chase-terminating rule sets from Definition~\ref{definition:chase-terminating-sets}.
To be systematic, we clarify the equality and strict-subset relations between these sets in Theorems~\ref{theorem:chase-terminating-sets-equality} and \ref{theorem:chase-terminating-sets-strict-subsets}, respectively.
\citeauthor{anatomychase} already proved most of the claims in these theorems (see Theorem~4.5, Propositions~4.6 and 4.7, and Corollary~4.8 in \cite{anatomychase}); we reprove some of them again to be self-contained.
However, note that all results regarding Datalog-first chase variants are our own contribution.

\begin{theorem}
\label{theorem:chase-terminating-sets-equality}
For every $\X \in \{\Ob, \SO, \E\}$, we have that
$$\chaseterm{\X}{\forall} = \chaseterm{\X}{\exists} = \chaseterm{\DF{\X}}{\forall} = \chaseterm{\DF{\X}}{\exists}$$
\end{theorem}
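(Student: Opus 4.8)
The plan is to prove the chain of equalities by establishing the two non-trivial inclusions, since the other containments are immediate. Note first that $\chaseterm{\X}{\forall} \subseteq \chaseterm{\X}{\exists}$ trivially (if every fair derivation is finite, then some fair derivation is finite — a fair derivation always exists), and similarly $\chaseterm{\DF{\X}}{\forall} \subseteq \chaseterm{\DF{\X}}{\exists}$. Also, since a $\DF{\X}$-derivation is in particular an $\X$-derivation, one has $\chaseterm{\X}{\forall} \subseteq \chaseterm{\DF{\X}}{\forall}$ (every fair $\DF{\X}$-derivation is a fair $\X$-derivation, hence finite if $\setR \in \chaseterm{\X}{\forall}$). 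So the whole statement reduces to showing $\chaseterm{\DF{\X}}{\exists} \subseteq \chaseterm{\X}{\forall}$ for $\X \in \{\Ob, \SO, \E\}$, i.e., if some fair $\DF{\X}$-derivation from every KB is finite, then every fair $\X$-derivation from every KB is finite.

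The key insight is the well-known ``all-or-nothing'' behaviour of the oblivious, semi-oblivious, and equivalent chases: for these three variants, whether a derivation terminates does not depend on the order in which triggers are applied. I would formalise this as a lemma stating that for $\X \in \{\Ob, \SO, \E\}$, if some fair $\X$-derivation from $\K$ is finite then all fair $\X$-derivations from $\K$ are finite (and moreover all have results that are isomorphic, or at least mutually homomorphically equivalent). For \Ob and \SO this rests on a confluence/commutation argument: the applicability condition depends only on the frontier image (for \SO) or on the output set (for \Ob), so triggers that are applicable remain ``morally'' applicable regardless of reordering, and the set of trigger equivalence classes ultimately applied is the same; this is essentially Theorem~4.5 of \cite{anatomychase}, which I would cite and reuse. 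For \E, termination coincides with the existence of a finite universal model of \K, which is a property of \K alone and independent of any derivation — so trivially all fair \E-derivations terminate or none do.

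Given this lemma, the argument closes quickly. Fix $\X \in \{\Ob, \SO, \E\}$ and suppose $\setR \in \chaseterm{\DF{\X}}{\exists}$. Take any KB $\Tuple{\setR, \FB}$. By hypothesis there is a fair $\DF{\X}$-derivation $\der$ from $\Tuple{\setR, \FB}$ that is finite. Since $\der$ is also a fair $\X$-derivation from $\Tuple{\setR, \FB}$, the lemma above yields that every fair $\X$-derivation from $\Tuple{\setR, \FB}$ is finite. As the KB was arbitrary, $\setR \in \chaseterm{\X}{\forall}$. Combined with the trivial inclusions, this gives the four-way equality.

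I expect the main obstacle to be a careful statement and proof (or precise citation) of the order-independence lemma for \Ob and \SO — in particular handling the bookkeeping of trigger equivalence classes for the semi-oblivious variant and verifying that fairness of one derivation transfers appropriately. The \E case and all the trivial inclusions should be routine; the only subtlety for \E is recalling that \E-termination is equivalent to the existence of a finite universal model (via Proposition~\ref{proposition:chase-correctness} and the footnote characterisation), which makes it manifestly derivation-independent.
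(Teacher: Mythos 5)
Your proposal is correct and follows essentially the paper's own route: the whole theorem rests on the same order-independence lemma (all fair derivations yield the same result for \Ob, isomorphic results for \SO, and termination of the \E-chase is characterised by the existence of a finite universal model), which is exactly the paper's Lemma on identical/isomorphic chase results together with the cited characterisation, the rest being routine inclusion bookkeeping. One small slip: reducing everything to $\chaseterm{\DF{\X}}{\exists} \subseteq \chaseterm{\X}{\forall}$ does not by itself tie in $\chaseterm{\X}{\exists}$, but the missing inclusion $\chaseterm{\X}{\exists} \subseteq \chaseterm{\X}{\forall}$ is an immediate instance of the very same lemma, so nothing essential is lacking.
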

\begin{proof}[Sketch]

To show that the theorem holds if $\X  = \Ob$ (resp. $\X  = \SO$), it suffices to prove that all fair \X-derivations from an input KB \KB produce the same result (resp. same result up to isomorphism); see forthcoming Lemma~\ref{proposition:oblivious-same-result}.
 
 All fair \E-derivations from an input KB \KB are finite iff \KB admits a finite universal model \cite{Rocher2016}. 
%\todo{Do we have a reference for this?}
Hence, the theorem holds if $\X = \E$.
\end{proof}

The equalities in Theorem~\ref{theorem:chase-terminating-sets-equality} simplify our work: for instance, if a function preserves termination of the oblivious chase, then we know that it also preserves sometimes-termination of this variant.
%In this way, we obtain Corollary~\ref{corollary:sptOSO} from Theorem~\ref{thm:sptOSO}.
Alas, the remaining sets of chase-terminating rule sets are not equal:

\newcommand{\NSubset}{\ensuremath{\hspace{-0.18em}\subset\hspace{-0.18em}}\xspace}

\begin{theorem}
\label{theorem:chase-terminating-sets-strict-subsets}
The following hold:
\begin{align*}
\chaseterm{\Ob}{\forall} \subset~& \chaseterm{\SO}{\forall} \subset \chaseterm{\R}{\forall} \subset \chaseterm{\DF{\R}}{\forall} \\
\subset~& \chaseterm{\DF{\R}}{\exists} \subset \chaseterm{\R}{\exists} \subset \chaseterm{\E}{\forall}
\end{align*}
\end{theorem}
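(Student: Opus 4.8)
The plan is to handle each of the six links of the chain separately, first the inclusion, then a separating rule set. For the inclusions among the non-Datalog-first classes, namely $\chaseterm{\Ob}{\forall}\subseteq\chaseterm{\SO}{\forall}$, $\chaseterm{\SO}{\forall}\subseteq\chaseterm{\R}{\forall}$ and $\chaseterm{\R}{\exists}\subseteq\chaseterm{\E}{\forall}$, I would invoke \cite{anatomychase}, re-deriving only the last one since it is short: if some fair $\R$-derivation from $\Tuple{\setR,\FB}$ is finite, its result is a \emph{finite} universal model of the KB, so by \cite{Rocher2016} every fair $\E$-derivation from that KB is finite. The three links passing through the Datalog-first classes are easy: forgetting the priority condition, every fair $\DF{\R}$-derivation is a fair $\R$-derivation (Datalog-first constrains only the \emph{order} of triggers, and fairness for $\DF{\R}$ is defined via $\R$-applicability exactly as for $\R$), which yields $\chaseterm{\R}{\forall}\subseteq\chaseterm{\DF{\R}}{\forall}$ and $\chaseterm{\DF{\R}}{\exists}\subseteq\chaseterm{\R}{\exists}$; and $\chaseterm{\DF{\R}}{\forall}\subseteq\chaseterm{\DF{\R}}{\exists}$ holds because at least one fair $\DF{\R}$-derivation exists from every KB (saturate the Datalog rules, which always terminates over a finite factbase, then apply a non-Datalog trigger, and iterate fairly).

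For strictness of the non-Datalog-first links I would reuse standard witnesses. $\chaseterm{\Ob}{\forall}\subsetneq\chaseterm{\SO}{\forall}$: the rule set $\{\,P(x,y)\rightarrow\exists z.\,P(z,z)\,\}$, whose only frontier is empty, so every KB over it reaches a $\SO$-fixpoint after one application while the $\Ob$-chase loops. $\chaseterm{\SO}{\forall}\subsetneq\chaseterm{\R}{\forall}$: the rule set of Example~\ref{example:preliminaries}, on which the $\R$-chase halts but the (semi-)oblivious chases do not. $\chaseterm{\R}{\exists}\subsetneq\chaseterm{\E}{\forall}$: the rule set $\{\,E(x,y)\rightarrow\exists z.\,E(y,z),\ E(x,y)\rightarrow E(x,x)\,\}$; for \emph{every} factbase $\FB$, the finite set $\FB\cup\{E(c,c)\mid c\in\EI{\Terms}{\FB}\}$ is a universal model (so the $\E$-chase always halts), but on $\Tuple{\setR,\{E(a,b)\}}$ every fair $\R$-derivation diverges, since the successor forced on the constant $b$ can never be retracted onto an older term and fairness keeps the $E$-path growing.

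The substantive part is the three Datalog-first separations. For $\chaseterm{\R}{\forall}\subsetneq\chaseterm{\DF{\R}}{\forall}$ I would take $\{\,L(x)\rightarrow\exists y.\,L(y)\wedge E(x,y),\ L(x)\rightarrow E(x,x)\,\}$: in any $\DF{\R}$-derivation the Datalog rule is satisfied before the existential one can fire, but then the trigger of the existential rule at $L(v)$ retracts via $y\mapsto v$ onto $\{L(v),E(v,v)\}$, so the existential rule is never applied and the $\DF{\R}$-chase halts on every KB, whereas the unconstrained $\R$-chase, on $\{L(a)\}$, can alternate the two rules and grow an infinite $E$-path fairly. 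For $\chaseterm{\DF{\R}}{\forall}\subsetneq\chaseterm{\DF{\R}}{\exists}$ I would use a \emph{Datalog-free} rule set (so that $\DF{\R}$ and $\R$ coincide) that is order-sensitive, e.g.\ $\{\,N(x)\rightarrow\exists y.\,R(x,y)\wedge N(y),\ N(x)\rightarrow\exists y.\,R(x,y)\wedge N(y)\wedge R(y,y)\,\}$: applying the second rule to each $N$-atom of the input produces a self-looped $N$-node that absorbs every remaining trigger (so some fair sequence halts on every KB), yet one can instead keep applying the first rule and build an infinite $R$-path. Finally, $\chaseterm{\DF{\R}}{\exists}\subsetneq\chaseterm{\R}{\exists}$ is the unexpected separation: one must design a rule set such that on every KB some fair $\R$-sequence terminates, but on one particular KB every fair $\DF{\R}$-sequence diverges, i.e., the Datalog-first discipline, forced to eagerly saturate Datalog rules, always commits to a configuration in which no folding can be completed, while the unconstrained restricted chase retains enough freedom to fold and stop; this is the construction of Section~3.

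The main obstacle is precisely this last construction and its verification: one has to pin down a rule set together with a bad factbase on which \emph{every} admissible Datalog-first schedule provably diverges, while simultaneously guaranteeing that on \emph{every} factbase \emph{some} fair restricted schedule terminates. More generally, the delicate point throughout is that membership in a class $\chaseterm{\X}{\forall}$ (or $\chaseterm{\X}{\exists}$) is quantified over all factbases, so for each of the new witnesses one must establish the claimed (non-)termination behaviour for arbitrary inputs, tracking, for the $\R$- and $\DF{\R}$-witnesses, exactly which retractions are and are not available along every eligible derivation, rather than only for the instance that defeats the opposite side.
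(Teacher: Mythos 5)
Your treatment of the inclusions is sound (definitional for the Datalog-first links, citation of \cite{anatomychase} plus the finite-universal-model argument for the others), and five of your six separating witnesses check out, even though they differ from the paper's: $\{P(x,y)\to\exists z.P(z,z)\}$ (empty frontier) for $\chaseterm{\SO}{\forall}\setminus\chaseterm{\Ob}{\forall}$, the $L/E$ set for $\chaseterm{\DF{\R}}{\forall}\setminus\chaseterm{\R}{\forall}$, the Datalog-free $N/R$ set for $\chaseterm{\DF{\R}}{\exists}\setminus\chaseterm{\DF{\R}}{\forall}$, and the $E$-rule set for $\chaseterm{\E}{\forall}\setminus\chaseterm{\R}{\exists}$. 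For the last one there is a small slip: $\FB\cup\{E(c,c)\mid c\in\EI{\Terms}{\FB}\}$ is not universal if \FB mentions terms outside $E$-atoms (e.g.\ $\FB=\{Q(c)\}$); you should add $E(c,c)$ only for terms occurring in $E$-atoms of \FB. That is trivially repaired and the witness then works.

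The genuine gap is the separation $\chaseterm{\DF{\R}}{\exists}\subsetneq\chaseterm{\R}{\exists}$. You state what a witness must achieve and defer to ``the construction of Section~3'', but Section~3 \emph{is} this theorem: that construction is exactly the proof obligation, and it is the part the paper singles out as its main achievement (Datalog-first strategies are not always the most terminating). Without an explicit rule set and its two-sided verification (every fair \DF{\R}-derivation from some fixed KB is infinite, yet every KB over the rule set admits a terminating fair \R-derivation), the chain of strict inclusions is not established. For comparison, the paper's witness is $\setR=\{\eqref{rule:r-loop}\text{--}\eqref{rule:s-succ}\}$, i.e.\ $A(x)\to R(x,x)$, $R(x,y)\wedge S(y,z)\to S(x,x)$, $A(x)\wedge S(x,y)\to A(y)$, $A(x)\to\exists z.R(x,z)$, $R(x,y)\to\exists z.S(y,z)$, with the KB $\Tuple{\setR,\{A(a)\}}$. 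There, Datalog priority forces $R(a,a)$ before \eqref{rule:r-succ} can ever fire (so \eqref{rule:r-succ} is permanently blocked), and the forced pattern \eqref{rule:s-succ} on $R(x,x)$ followed by the Datalog rules \eqref{rule:s-loop}, \eqref{rule:a-prop}, \eqref{rule:r-loop} produces a fresh $A$-element each round, so every fair \DF{\R}-derivation is infinite; conversely, on an arbitrary \FB the strategy ``apply \eqref{rule:a-prop}, \eqref{rule:r-succ}, \eqref{rule:s-succ} exhaustively, then \eqref{rule:s-loop}, then \eqref{rule:r-loop}'' terminates, because by the time $A(x)\to R(x,x)$ is applied, $S(x,x)$ is already present and all triggers that $R(x,x)$ would enable are blocked. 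Some construction of this kind (and its verification over all factbases) must appear in your proof; as it stands, the key link is asserted rather than proved.
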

\begin{proof}[Sketch]
The subset inclusions follow by definition; we present some rule sets to show that these are strict:
\begin{align*}
\{&P(x, y) \to \exists z . P(x, z)\} \in \chaseterm{\SO}{\forall} \setminus \chaseterm{\Ob}{\forall} \\[0.5ex]
\{&P(x, y) \to \exists z .  P(y, z) \wedge P(z, y)\} \in \chaseterm{\R}{\forall} \setminus \chaseterm{\SO}{\forall}\\[0.5ex]
\{&P(x, y) \to \exists z . P(y, z), \\[-0.5ex]
&P(x, y) \to P(y, x)\} \in \chaseterm{\DF{\R}}{\forall} \setminus \chaseterm{\R}{\forall} \\[0.5ex]
\{&P(x, y) \to \exists z . P(y, z) \wedge P(z, y), \\[-0.5ex]
&P(x, y) \to \exists z . P(y, z)\} \in \chaseterm{\DF{\R}}{\exists} \setminus \chaseterm{\DF{\R}}{\forall} \\[0.5ex]
\{&P(x, y) \to \exists z . P(y, z), \\[-0.5ex]
&P(x, y) \wedge P(y, z) \to P(y, x)\} \in \chaseterm{\E}{\forall} \setminus \chaseterm{\R}{\exists}
\end{align*}
Moreover, the rule set $\setR = \{(\ref{rule:r-loop}\text{--}\ref{rule:s-succ})\}$ is in $\chaseterm{\R}{\exists} \setminus \chaseterm{\DF{\R}}{\exists}$:
\begin{align}
A(x) &\to R(x, x) \label{rule:r-loop} \\
R(x, y) \wedge S(y, z) &\to S(x, x) \label{rule:s-loop} \\
A(x) \wedge S(x, y) &\to A(y) \label{rule:a-prop} \\
A(x) &\to \exists z . R(x, z) \label{rule:r-succ} \\
R(x, y) &\to \exists z . S(y, z) \label{rule:s-succ}
\end{align}

%\vspace*{-\baselineskip}
%\begin{center}
%\begin{minipage}{0.55\linewidth}
%\begin{align}
%A(x) &\NTo R(x, x) \label{rule:r-loop} \\
%R(x, y) \NWedge S(y, z) &\NTo S(x, x) \label{rule:s-loop} \\
%A(x) \NWedge S(x, y) &\NTo A(y) \label{rule:a-prop}
%\end{align}
%\end{minipage}
%\begin{minipage}{0.45\linewidth}
%\begin{align}
%A(x) &\NTo \exists z . R(x, z) \label{rule:r-succ}\\
%R(x, y) &\NTo \exists z . S(y, z) \label{rule:s-succ}
%\end{align}
%\end{minipage}
%\end{center}

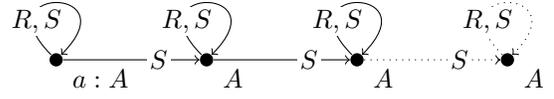
\begin{figure}
\tikzset{every loop/.style={min distance=60,in=130,out=50,looseness=20}}

\begin{center}
\begin{tikzpicture}[roundnode/.style={circle, fill=black, inner sep=0pt, minimum size=1mm},
labelnode/.style={fill=white,inner sep=1pt}]

\newcommand{\xsep}{2}
\newcommand{\xa}{0*\xsep}
\newcommand{\xb}{1*\xsep}
\newcommand{\xc}{2*\xsep}
\newcommand{\xd}{3*\xsep}
\newcommand{\xe}{4*\xsep}

\newcommand{\ya}{0}

\node[fill,circle,inner sep=0pt,minimum size=5pt, label=-3:{$a : A$}] at (\xa, \ya) (1) {};
\node[fill,circle,inner sep=0pt,minimum size=5pt, label=-3:{$A$}] at (\xb, \ya) (2) {};
\node[fill,circle,inner sep=0pt,minimum size=5pt, label=-3:{$A$}] at (\xc, \ya) (3) {};
\node[fill,circle,inner sep=0pt,minimum size=5pt, label=-3:{$A$}] at (\xd, \ya) (4) {};
\node[circle,inner sep=0pt,minimum size=5pt, label=-3:{}] at (\xe, \ya) (5) {};

\path[->] (1) edge [loop above, out=135, in=45, looseness=25] node[labelnode,pos=0.1] {$R, S$} (1);
\path[->] (2) edge [loop above, out=135, in=45, looseness=25] node[labelnode,pos=0.1] {$R, S$} (2);
\path[->] (3) edge [loop above, out=135, in=45, looseness=25] node[labelnode,pos=0.1] {$R, S$} (3);
\path[->] (4) edge [dotted, loop above, out=135, in=45, looseness=25] node[labelnode,pos=0.1] {$R, S$} (4);

\path[->] (1) edge node[labelnode,pos=0.7] {$S$} (2);
\path[->] (2) edge node[labelnode,pos=0.7] {$S$} (3);
\path[->] (3) edge[dotted] node[labelnode,pos=0.7] {$S$} (4);

\end{tikzpicture}
\end{center}
\caption{The only result of the \DF{\R}-chase from the KB $\K = \langle \setR, \{A(a)\} \rangle$ introduced in the proof of Theorem~\ref{theorem:chase-terminating-sets-strict-subsets}}
\label{figure:infinite-dfr-chase}
\end{figure}

To show that $\setR \notin \chaseterm{\DF{\R}}{\exists}$ we prove that the KB $\K = \langle \setR, \{A(a)\} \rangle$ does not admit terminating \DF{\R}-derivations. Specifically, 
%Namely, we can show that 
all fair \DF{\R}-derivations from \K yield the same result, which is depicted in Figure~\ref{figure:infinite-dfr-chase}. 
Rule \eqref{rule:r-loop} is applied first, then the following pattern is repeated: apply rule \eqref{rule:s-succ} followed by Datalog rules \eqref{rule:s-loop}, \eqref{rule:a-prop} and \eqref{rule:r-loop}. Rule \eqref{rule:r-succ}  is never applicable since priority is given to rule \eqref{rule:r-loop}. 
%\todo{Give in short the repeating sequence of rule applications? It is important that the reader understand this example, because it shows something surprising and it quoted twice in the next section. }
To show that $\setR \in \chaseterm{\R}{\exists}$ we verify that every KB of the form $\langle \setR, F \rangle$ admits a terminating $\R$-derivation.
We can produce such a derivation by exhaustively applying the rules in \setR in the following order:\footnote{To understand why this strategy results in a terminating \R-derivation, we suggest 
%that you first try it on the KB $\Tuple{\setR, \{A(a)\}}$.
to first try it on $F = \{A(a)\}$,
which yields $\{A(a), R(a,z_1), S(z_1,z_2), S(a,a), R(a,a)\}$. }
first, apply rules \eqref{rule:a-prop}, \eqref{rule:r-succ} and \eqref{rule:s-succ}; then, apply \eqref{rule:s-loop}; finally, apply \eqref{rule:r-loop}.
\end{proof}

Our main achievement is showing that $\chaseterm{\R}{\exists} \setminus \chaseterm{\DF{\R}}{\exists}$ is non-empty; thus proving that Datalog-first strategies are not necessarily the most terminating for the restricted chase.

%\begin{align*}
%\text{(1)} &&\hspace{-0.75em} A(x) &\to P(x, x) &
%\text{(4)} &&\hspace{-0.75em} A(x) &\to P(x, w) \\
%\text{(2)} &&\hspace{-0.75em} P(x, y) \wedge S(y, z) &\to S(x, x) &
%\text{(5)} &&\hspace{-0.75em} P(x, y) &\to S(y, v) \\
%\text{(3)} &&\hspace{-0.75em} A(x) \wedge S(x, y) &\to A(y)
%\end{align*}

% !TEX root =  ../main.tex
\section{Single-Piece Decomposition}\label{sec:spt}

The single-piece decomposition (piece-decomposition in short) is a %normalisation 
procedure that splits a rule $\Rule = B \to \exists z . H$ into several rules $\Rule_1, \ldots, \Rule_n$ that have the same body as \Rule, and whose head is a subset of $H$ that (directly or indirectly) shares some existential variable in $H$.

\begin{definition}\label{def:sptrans}
The \emph{piece graph} of a rule $R = B \to \exists \vz .H$ is the graph whose vertices are the atoms in $H$, and with an edge between $a$ and $a'$ if $\vz \cap \EI{\Vars}{a} \cap \EI{\Vars}{a'}$ is non-empty.
A \emph{(rule) piece} of $R$ is the conjunction of atoms corresponding to a (connected) component of its piece graph.
%\end{definition}

%\begin{definition}\label{def:sptrans}
The \emph{piece-decomposition} of a rule $R = B \to \exists \vec{z} . H$ is the rule set $\spt(R) = \ens{B \to \exists \vec{v} . H'\ |\ H' \text{ is a piece of } R}$.
For a rule set \setR, let $\spt(\setR) = \bigcup_{R \in \setR} \spt(R)$.
\end{definition}
%\subsection{Chase Termination: Oblivious Variants}
%\label{sub:sptOSO}

\begin{example}
Consider the rule $\eqref{rule:single-piece-input}$ and its single-piece decomposition $\spt(\eqref{rule:single-piece-input}) = \{(\ref{rule:single-piece-output-1}\text{--}\ref{rule:single-piece-output-3})\}$:
\begin{align}
R(x, y) &\to \exists z, u . P(x, z) \wedge A(z) \wedge A(u) \wedge P(x, y) \label{rule:single-piece-input} \\
R(x, y) &\to \exists z. P(x, z) \wedge A(z) \label{rule:single-piece-output-1} \\
R(x, y) &\to \exists u.  A(u) \label{rule:single-piece-output-2} \\
R(x, y) &\to P(x, y) \label{rule:single-piece-output-3}
\end{align}
\end{example}

%Unlike the atomic decompositions presented in later sections, the piece-decomposition preserves equivalence:
Piece-decomposition is indeed a normalisation procedure, since it preserves logical equivalence: 

\begin{proposition}\label{prop:spteq}
A rule set \setR is equivalent to the set \spR.
\end{proposition}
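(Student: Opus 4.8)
The plan is to show the two entailments $\setR \models \spR$ and $\spR \models \setR$ separately, since logical equivalence of rule sets means every model of one is a model of the other. Both directions reduce to a rule-by-rule, indeed piece-by-piece, argument because $\spR = \bigcup_{R \in \setR} \spt(R)$ and it suffices to prove that a single rule $R = B \to \exists \vz . H$ is logically equivalent to $\spt(R) = \{B \to \exists \vec{v}_j . H_j \mid H_j \text{ a piece of } R\}$, where $H_1, \ldots, H_n$ partition $H$ and $\vec{v}_j = \vz \cap \EI{\Vars}{H_j}$.

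For the direction $R \models \spt(R)$: let \setM be a model of $R$ and let $B \to \exists \vec{v}_j . H_j$ be one of the pieces; take any homomorphism $\Hom$ from $B$ to \setM. Since \setM satisfies $R$, there is an extension $\hat\Hom$ of $\Hom$ to $\EI{\Vars}{H}$ with $\hat\Hom(H) \subseteq \setM$. Restricting $\hat\Hom$ to $\EI{\Vars}{B} \cup \vec{v}_j$ gives the required extension of $\Hom$ witnessing $H_j$, because $\EI{\Vars}{H_j} \subseteq \EI{\Vars}{B} \cup \vec{v}_j$ (every variable of $H_j$ is either a frontier/body variable or an existential variable of $R$ occurring in $H_j$, hence in $\vec{v}_j$). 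So \setM satisfies each piece rule, i.e. \setM is a model of $\spt(R)$.

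For the converse $\spt(R) \models R$ — the slightly more delicate direction — let \setM be a model of all piece rules and let $\Hom$ be a homomorphism from $B$ to \setM. For each $j$, applicability of $B \to \exists \vec{v}_j . H_j$ yields an extension $\hat\Hom_j$ of $\Hom$ with $\hat\Hom_j(H_j) \subseteq \setM$. The key observation is that the maps $\hat\Hom_j$ can be glued into a single well-defined map $\hat\Hom$ on $\EI{\Vars}{H}$: they all agree with $\Hom$ on the shared frontier/body variables, and their only other domain elements are the existential variables $\vec{v}_j = \vz \cap \EI{\Vars}{H_j}$, which are pairwise disjoint across $j$ precisely because the $H_j$ are distinct connected components of the piece graph (two atoms in different pieces share no existential variable of $R$, by definition of the edge relation). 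Hence $\hat\Hom := \bigcup_j \hat\Hom_j$ is a function extending $\Hom$, and $\hat\Hom(H) = \bigcup_j \hat\Hom_j(H_j) \subseteq \setM$, so \setM satisfies $R$. This gluing step — verifying that the witness maps for the individual pieces do not conflict, which rests squarely on the connected-component definition of a piece — is the only point requiring care; everything else is a direct unfolding of the definition of rule satisfaction. Finally, since each model of $\setR$ is a model of every $R \in \setR$ hence of every $\spt(R)$ hence of $\spR$, and symmetrically, we conclude $\setR \equiv \spR$.
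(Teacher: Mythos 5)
Your proof is correct and rests on the same key fact as the paper's: existential variables of distinct pieces are disjoint, which follows from the connected-component definition of a piece. The paper packages the argument syntactically --- distributing the existential quantifier over the conjunction of pieces and then using the equivalence of $A \to (B \wedge C)$ with $(A \to B) \wedge (A \to C)$ --- while you verify the same two equivalences semantically, your gluing of the per-piece witness homomorphisms being exactly the semantic content of the quantifier distribution, so the two arguments coincide in substance.
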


The following lemma is later applied to show that the piece-decomposition preserves termination of the oblivious and semi-oblivious chase in Theorem~\ref{thm:sptOSO}:
\begin{lemma}
\label{proposition:oblivious-same-result}
Consider some fair \X-derivations \der and $\der'$ from a KB \KB.
If $\X = \Ob$, then $\funres{\der} = \funres{\der'}$.
If $\X = \SO$, then \funres{\der} is isomorphic to \funres{\der'}.
\end{lemma}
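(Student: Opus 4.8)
The statement compares the results of two fair derivations of the same chase variant from a fixed KB $\KB = \Tuple{\setR, \FB}$, so the natural approach is to set up a correspondence between the triggers used in $\der$ and those used in $\der'$, and to argue that the set of ``effective'' triggers is variant-invariant. For the oblivious case, the key observation is that \Ob-applicability of a trigger $t = \Tuple{R, \pi}$ depends only on whether $\funout{t} \subseteq F$, and $\funout{t}$ is completely determined by $t$ itself (the fresh variables $z_t$ depend only on the pair $\Tuple{R,\pi}$, not on the derivation). So I would define, for any fair \Ob-derivation, the set $T$ of all triggers that ever get applied, and show that $T$ is the same for $\der$ and $\der'$, whence $\funres{\der} = \bigcup_{t \in T} \funout{t} \cup \FB = \funres{\der'}$.

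First I would prove, by induction on the length of a prefix of $\der$, that every $F_i$ occurring in $\der$ satisfies $F_i \subseteq \funres{\der'}$: each $\funout{t_i}$ is added because $\funout{t_i} \not\subseteq F_{i-1}$, but since $\der'$ is fair and $t_i$ is a trigger on $F_{i-1} \subseteq \funres{\der'}$, fairness of $\der'$ forces $\funout{t_i} \subseteq \funres{\der'}$ (an \Ob-applicable trigger on some factbase of a fair derivation must eventually be rendered non-applicable, and the only way that happens for \Ob is by its output appearing). Here I need to be slightly careful: $t_i$ is a trigger on $F_{i-1}$, and since $F_{i-1} \subseteq \funres{\der'} = \bigcup_j F'_j$ and $F_{i-1}$ is finite, $\pi$ already maps $B$ into some $F'_j$, so $t_i$ is genuinely a trigger on $F'_j$; then fairness of $\der'$ applies. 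Symmetrically $\funres{\der'} \subseteq \funres{\der}$, giving equality. For the \SO case the reasoning is the same except that ``$\funout{t} \subseteq F$'' is replaced by ``$\funout{t'} \subseteq F$ for some trigger $t'$ agreeing with $t$ on the frontier''; this means $\der$ and $\der'$ need not pick the \emph{same} representative trigger for a given frontier-image, but for each rule $R$ and each frontier-tuple $\vec{a}$ realised in the derivation, exactly one fresh copy of the head gets instantiated in each derivation, and the resulting factbases differ only by a renaming of these fresh existential variables — hence they are isomorphic rather than equal. I would make this precise by building the isomorphism predicate-by-predicate along the frontier-equivalence classes, using that constants and frontier-reachable terms are preserved on the nose.

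The main obstacle I anticipate is the bookkeeping in the \SO case: defining the right equivalence relation on triggers (two triggers equivalent iff same rule and same frontier restriction), checking that in a fair \SO-derivation each equivalence class that is ``activated'' contributes exactly one output block, and then assembling these blocks into a well-defined bijection between the fresh variables of $\funres{\der}$ and those of $\funres{\der'}$ that extends to an isomorphism fixing $\EI{\Terms}{\FB}$. One has to verify this map respects all atoms, including atoms that mix fresh variables from different activation steps — this works because any such atom lies in a single $\funout{t}$, so its shape is dictated by the head of one rule under one frontier assignment, which is identical in both derivations up to the chosen names of the fresh variables. The oblivious half is comparatively routine once the fairness-forces-containment lemma is in hand.
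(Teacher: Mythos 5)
Your proposal is correct and follows essentially the same route as the paper: an induction showing that each $F_i$ of one derivation is contained in (for \Ob) or injectively embeds into (for \SO) the result of the other, with fairness forcing each applied trigger's output to eventually appear, and with \funout{t} being determined by the trigger alone. Your \SO bookkeeping via (rule, frontier-image) trigger classes and an explicit null-renaming is precisely how the paper's injective homomorphisms are built, and is if anything slightly more explicit about why the two results are isomorphic rather than merely mutually embeddable.
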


\begin{definition}\label{def:ch}
Given some $\X \in \ens{\Ob, \SO}$ and a KB \K, let \funCh{\X}{\K} be some (arbitrarily chosen) atom set that is isomorphic to the result of all fair \X-derivations from \K. 
\end{definition}

%We use the insight from Proposition~\ref{proposition:oblivious-same-result} and the function in Definition~\ref{def:ch} to show the following:

\begin{theorem}\label{thm:sptOSO}
The piece-decomposition preserves the termination of the \Ob-chase and \SO-chase.
\end{theorem}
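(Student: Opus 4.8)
The plan is to show that, for a KB $\K = \Tuple{\setR, \FB}$, termination of the oblivious (resp. semi-oblivious) chase with $\setR$ is equivalent to termination with $\spR$. By Theorem~\ref{theorem:chase-terminating-sets-equality} and Lemma~\ref{proposition:oblivious-same-result}, for $\X \in \{\Ob, \SO\}$ it suffices to reason about a single canonical result, namely $\funCh{\X}{\K}$, since all fair $\X$-derivations agree on it (exactly for $\Ob$, up to isomorphism for $\SO$). So the goal reduces to: $\funCh{\X}{\Tuple{\setR, \FB}}$ is finite iff $\funCh{\X}{\Tuple{\spR, \FB}}$ is finite, for every factbase $\FB$.

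\textbf{From $\setR$ to $\spR$.} Suppose $\setR \in \chaseterm{\X}{\forall}$. I would build a fair $\X$-derivation from $\Tuple{\spR, \FB}$ by simulating a fair $\X$-derivation $\der$ from $\Tuple{\setR, \FB}$. Whenever $\der$ applies a trigger $t = \Tuple{R, \pi}$ with $R = B \to \exists \vz . H$ and $R$ has pieces $H_1, \ldots, H_n$ (so $\spt(R) = \{B \to \exists \vec{v}_k . H_k\}$), I apply in succession the $n$ triggers $\Tuple{B \to \exists \vec{v}_k . H_k, \pi}$. The key observation is that the fresh variables introduced by these piece-rules can be identified with the fresh variables $z_t$ introduced by $t$ in the original derivation (the existential variables of $R$ are partitioned among the pieces by definition of the piece graph), so after this block of $n$ applications the factbase is exactly $F_{i-1} \cup \funout{t}$. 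Hence the simulated derivation produces a factbase that, at the ``synchronisation points'', coincides with the one produced by $\der$; in particular its result is $\funres{\der}$ up to renaming, which is finite. A little care is needed for applicability: some piece-triggers may already be non-applicable (their output present), which only shortens the derivation, and for $\SO$ one must check the $\SO$-applicability condition transfers — but since $\spt(R)$ shares the body and frontier of $R$, the frontier-indexed condition behaves consistently. Fairness of the simulated derivation follows from fairness of $\der$ together with the fact that every $\spR$-trigger's rule is a piece of some $\setR$-rule on the same body.

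\textbf{From $\spR$ to $\setR$.} For the converse, suppose $\spR \in \chaseterm{\X}{\forall}$ and let $\der$ be a fair $\X$-derivation from $\Tuple{\setR, \FB}$. Each application of a trigger $\Tuple{R, \pi}$ in $\der$ can be mirrored by the block of $n$ piece-triggers as above, yielding a fair $\spR$-derivation whose result contains (a renaming of) $\funres{\der}$; since the former is finite, so is the latter, whence $\der$ is finite. Again one argues fairness: any $\spR$-trigger corresponds to a piece of an $\setR$-trigger, and unsatisfied $\setR$-triggers in the image produce unsatisfied piece-triggers, so a fair treatment downstairs forces fairness upstairs.

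\textbf{Main obstacle.} The delicate point is the bookkeeping of fresh existential variables and the verification that applicability is preserved under the decomposition, particularly for the semi-oblivious variant where applicability of a trigger depends on the frontier restriction rather than on the full output; I expect the bulk of the work to be a careful induction showing that the block-simulation keeps the two derivations synchronised up to isomorphism, invoking Lemma~\ref{proposition:oblivious-same-result} to be allowed to pick convenient fair derivations on both sides rather than having to handle arbitrary ones.
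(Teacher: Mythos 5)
There is a genuine gap, concentrated in the \SO half. First, note that ``preserves termination'' only requires the direction $\setR \in \chaseterm{\X}{\forall} \Rightarrow \spR \in \chaseterm{\X}{\forall}$; your reduction to an ``iff'' is not just superfluous but false for \SO: for $\setR = \{P(x,y) \to \exists z . P(x,z) \wedge R(x,y)\}$ (used in Theorem~\ref{thm:sp-improves}) we have $\spR \in \chaseterm{\SO}{\forall}$ while $\setR \notin \chaseterm{\SO}{\forall}$, so your ``from $\spR$ to $\setR$'' mirroring cannot be made to work. The reason it fails is the same flaw that undermines your forward simulation for \SO: it is not true that ``$\spt(R)$ shares the body and frontier of $R$''. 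A piece-rule's frontier can be strictly smaller (in the example, the piece $P(x,y) \to \exists z . P(x,z)$ has frontier $\{x\}$ while $R$ has frontier $\{x,y\}$). Consequently a piece-trigger $(R_k,\pi)$ in your block can be \SO-blocked by an earlier piece-trigger that agrees with $\pi$ only on the smaller frontier, even though the original trigger $(R,\pi)$ was \SO-applicable upstairs. Then the block does \emph{not} produce ``exactly $F_{i-1}\cup\funout{t}$'': the simulated factbase is missing atoms and, worse, missing the corresponding nulls, so a later trigger of \der whose body homomorphism uses those nulls has no counterpart downstairs and the simulation is not even well defined (try simulating the \SO-chase of the rule above from $\{P(a,b)\}$). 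Your fairness argument also silently uses the \Ob-style reading of non-applicability (``output present''), which is wrong for \SO.

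The forward direction can be repaired, but it requires replacing exact synchronisation by a homomorphism invariant, which is essentially what the paper does — and in the opposite direction of mapping: take an \emph{arbitrary} \X-derivation $(\emptyset,F_0),(t_1,F_1),\ldots$ from $\Tuple{\spR,\FB}$ and show by induction that each $F_i$ admits an injective homomorphism into the finite canonical result $\funCh{\X}{\Tuple{\setR,\FB}}$ (for \SO, the inductive step picks an upstairs trigger agreeing with the simulated one on the frontier, using that $\funfr{R_k} \subseteq \funfr{R}$ and that piece bodies coincide with the original body). This bounds the number of terms, hence the length, of every derivation from $\Tuple{\spR,\FB}$, so no fairness construction is needed at all. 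Your \Ob argument is essentially sound (there the blocks do stay synchronised and results are preserved up to renaming), but as written the \SO case rests on a false claim about frontiers and a false synchronisation step, so the proof does not go through.
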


\begin{proof}[Sketch]
Consider some $\X\in\ens{\Ob, \SO}$ and some \X-derivation $\der$ from a KB $\KB = \Tuple{\setR, F}$.
We can show via induction on \der that there is an injective homomorphism from $\funCh{\X}{\langle \spR, F \rangle}$ to $\funCh{\X}{\KB}$.
Therefore, finiteness of \funCh{\X}{\KB} implies finiteness of \funCh{\X}{\langle \spR, F \rangle}.
\end{proof}

The piece-decomposition does not preserve the termination of any restricted chase variant.
The reason is that it allows for intertwining the application of split rules that come from different original rules, resulting in new application strategies that may lead to non-termination. 

\newcommand{\NTo}{\ensuremath{\hspace{-0.15em}\to\hspace{-0.15em}}\xspace}
\newcommand{\NWedge}{\ensuremath{\hspace{-0.15em}\wedge\hspace{-0.15em}}\xspace}

\begin{theorem}\label{thm:sptR}
The piece-decomposition does not preserve termination of the \R- or the \DF{\R}-chase.
\end{theorem}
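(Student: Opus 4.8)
The plan is to exhibit a single rule set $\setR$ with $\setR \in \chaseterm{\R}{\forall}$ (hence also in $\chaseterm{\DF{\R}}{\forall}$, by Theorem~\ref{theorem:chase-terminating-sets-strict-subsets}) such that $\spt(\setR) \notin \chaseterm{\R}{\exists}$ (hence also not in $\chaseterm{\DF{\R}}{\exists}$). This single counterexample suffices for both chase variants: since $\spt(\setR)$ fails to have even one terminating $\R$-derivation on some KB, it certainly fails to have one for every derivation, and likewise for the Datalog-first restriction. The core intuition, already flagged in the text preceding the statement, is that splitting heads allows interleaving pieces coming from \emph{different} original rules in a way that defeats the ``redundancy check'' the restricted chase relies on: a piece that would have been blocked as part of a full head application becomes applicable because, at the moment it is considered, the atoms that would have made it redundant have not yet been produced.

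The first step is to design $\setR$. I would look for a rule $R = B \to \exists \vz . H$ whose head $H$ decomposes into pieces $H_1, \dots, H_k$ with the following property: applying $R$ as a whole on a trigger is always eventually blocked (because the full head folds back into the current factbase), giving termination of the restricted chase; but after piece-decomposition, one can apply piece $H_1$ of one rule instance and piece $H_2$ of another in an order that creates an ever-growing ``staircase'' of fresh nulls, because each partial application looks non-redundant in isolation. A natural template is to reuse the flavour of the rule set $\{(\ref{rule:r-loop}\text{--}\ref{rule:s-succ})\}$ from the proof of Theorem~\ref{theorem:chase-terminating-sets-strict-subsets}, or a combination of two rules like $P(x,y) \to \exists z,u.\, (\text{loop-closing atom on } z) \wedge (\text{successor atom on } u)$, so that the full-head application immediately closes the loop and halts, but the separated successor-piece, applied repeatedly, generates an infinite chain. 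I would pick the predicates and atoms so that (i) with the undivided head, every trigger's output retracts into the existing factbase after finitely many applications — this is the $\setR \in \chaseterm{\R}{\forall}$ direction; and (ii) with $\spt(\setR)$, there is a concrete infinite fair $\R$-derivation from some small factbase, and moreover \emph{no} fair $\R$-derivation is finite.

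The two verification steps are then: prove $\setR \in \chaseterm{\R}{\forall}$, and prove $\spt(\setR) \notin \chaseterm{\R}{\exists}$. For the first, I would argue by a direct structural/termination argument on the possible $\R$-derivations from an arbitrary KB $\Tuple{\setR, F}$: show there is a bound (depending on $F$) on how many non-redundant applications can occur, typically because every new atom produced by a rule application can be folded back via a retraction onto a bounded set of representative terms (e.g. constants of $F$ plus a bounded number of nulls). For the second — and this is the part I expect to be the main obstacle — I need to show that \emph{every} fair $\R$-derivation from the chosen KB is infinite, not merely that one is. This requires an invariant argument: I would characterise the factbases reachable by any fair $\R$-derivation (analogously to Figure~\ref{figure:infinite-dfr-chase}), show that at every finite stage there is still an $\R$-applicable piece-trigger (because the "blocking" atoms are themselves only producible by further applications that introduce new nulls), and conclude no fair derivation can be finite. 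The delicate point is controlling the retraction test: I must ensure that the dangerous successor-piece trigger genuinely has no retraction from $F_i \cup \funout{t}$ back to $F_i$ at the stage it matters, which typically forces a careful choice of predicates so that the fresh null in the successor piece cannot be identified with any existing term. Getting this "no retraction" argument airtight across all fair interleavings — rather than for one fixed strategy — is where the real work lies; the rest is routine once the rule set is well chosen.
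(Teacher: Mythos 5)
There is a genuine gap: you never actually produce the counterexample. Everything after the logical set-up is conditional (``I would look for a rule \ldots'', ``I would argue \ldots''), and the two verification steps — $\setR \in \chaseterm{\R}{\forall}$ and the non-termination of $\spR$ — are exactly the content of the theorem, so as it stands this is a plan rather than a proof. Moreover, the plan commits you to a target that is strictly stronger than what the theorem requires and whose attainability is unclear. Non-preservation of termination only asks for some $\setR \in \chaseterm{\R}{\forall}$ with $\spR \notin \chaseterm{\R}{\forall}$, i.e.\ it suffices to exhibit \emph{one} infinite fair \R-derivation from one KB over $\spR$. You instead aim for $\spR \notin \chaseterm{\R}{\exists}$ (no fair derivation terminates) while keeping $\setR \in \chaseterm{\R}{\forall}$, precisely so that one example covers both \R and \DF{\R}. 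Nothing in the paper establishes that such a rule set exists: the paper's own example showing loss of sometimes-termination (Theorem~\ref{thm:sptExR}) starts from a set that is only in $\chaseterm{\R}{\exists}$, not in $\chaseterm{\R}{\forall}$, and the delicate ``no fair derivation terminates'' argument you defer is exactly the hard part you would still owe.

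The template you sketch also works against your stronger goal. A rule whose full head closes a loop decomposes into pieces that are Datalog (or nearly so), and a strategy that applies those pieces eagerly tends to terminate. Concretely, for the paper's choice $\setR = \{\eqref{rule:p-loop-a}, \eqref{rule:p-successor}\}$, the decomposition $\spR$ applied to $\{A(a)\}$ admits the terminating fair \R-derivation with result $\{A(a), P(a,z_1), P(z_1,z_1), A(z_1)\}$, so $\spR \in \chaseterm{\R}{\exists}$ and your intended ``every fair derivation is infinite'' claim is false for this kind of example; only a carefully ordered \emph{bad} derivation is infinite. This is exactly why the paper treats the two variants separately: for the \R-chase it exhibits a single infinite fair derivation of $\spR$ from $\{A(a)\}$ (interleaving \eqref{rule:p-range-a}, \eqref{rule:p-successor}, \eqref{rule:p-loop}), and for the \DF{\R}-chase it modifies the rules by adding dummy existential variables (rules \eqref{rule:dummy-ext-1}--\eqref{rule:dummy-ext-2}) so that no piece is Datalog and the Datalog-first priority cannot block the same interleaving. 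To repair your proof, either follow that two-example route, or supply (and fully verify) a concrete rule set meeting your stronger specification — which is a substantially harder construction than the sketch suggests, and possibly not achievable with a loop-plus-successor pattern at all.
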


\begin{proof}[Sketch]
Consider the rule set $\setR = \{\eqref{rule:p-loop-a}, \eqref{rule:p-successor}\}$ and its piece-decomposition $\spR = \{(\ref{rule:p-successor}\text{--}\ref{rule:p-range-a})\}$:
\vspace*{-\baselineskip}
\begin{center}
\begin{minipage}{0.55\linewidth}
\begin{align}
P(x, y) &\NTo P(y, y) \NWedge A(y) \label{rule:p-loop-a} \\
A(x) &\NTo \exists z . P(x, z) \label{rule:p-successor}
\end{align}
\end{minipage}~
\begin{minipage}{0.43\linewidth}
\begin{align}
P(x, y) &\NTo P(y, y) \label{rule:p-loop} \\
P(x, y) &\NTo A(y) \label{rule:p-range-a}
\end{align}
\end{minipage}
\end{center}

The set $\setR$ is in \chaseterm{\R}{\forall} because triggers with \eqref{rule:p-successor} are not \R-applicable to the output of triggers with \eqref{rule:p-loop-a}.
The set \spR is not in \chaseterm{\R}{\forall} because the KB $\Tuple{\spR, \{A(a)\}}$ admits the following non-terminating \R-derivation $(\emptyset, F_0), (t_1, F_1), \ldots$:\vspace*{-\baselineskip}
\begin{center}
\begin{minipage}{0.48\linewidth}
\begin{align*}
\FB_0 = \{&A(a)\}, \\
\FB_1 = \{&P(a, z_1)\} \cup F_0, \\
\FB_2 = \{&A(z_1)\} \cup F_1, \\
\FB_3 = \{&P(z_1, z_2)\} \cup F_2,
\end{align*}
\end{minipage}
\begin{minipage}{0.48\linewidth}
\begin{align*}
\FB_4 = \{&P(z_1, z_1)\} \cup F_3, \\
\FB_5 = \{&A(z_2)\} \cup F_4, \\
\FB_6 = \{&P(z_2, z_3)\} \cup F_5, \\
\ldots
\end{align*}
\end{minipage}
\end{center}
%Note how we apply \eqref{rule:p-successor} to obtain $F_1$, \eqref{rule:p-range-a} to obtain $F_2$, \eqref{rule:p-successor} to obtain $F_3$, \eqref{rule:p-loop} to obtain $F_4$, \eqref{rule:p-range-a} to obtain $F_5$, and so on.
This derivation is built by first applying rule \eqref{rule:p-successor} (leading to $\FB_1$), then indefinitely repeating the sequence of rule applications  \eqref{rule:p-range-a}, \eqref{rule:p-successor}, and \eqref{rule:p-loop}. 
 In contrast, the only fair \R-derivation with $\setR$ would apply \eqref{rule:p-successor} then  \eqref{rule:p-loop-a}, leading to 
$\{A(a), P(a,z_1), P(z_1,z_1), A(z_1)\}$.

%To reuse the above example for the \DF{\R} chase, 
To get a similar behavior with the \DF{\R} chase, we introduce ``dummy'' existential variables in rules \eqref{rule:p-loop-a} and \eqref{rule:p-successor}, so that their piece-decomposition has no Datalog rules:
%we extend the previous rules with ``dummy'' existential variables:
\begin{align}
P(x, y, v) &\to \exists u, w . P(y, y, u) \wedge A(y, w) \label{rule:dummy-ext-1}\\
A(x, v) &\to \exists z, u . P(x, z, u) \label{rule:dummy-ext-2} 
\end{align}
Applying analogous arguments we can show that $\setR' = \{\eqref{rule:dummy-ext-1}, \eqref{rule:dummy-ext-2}\}$ is in \chaseterm{\DF{\R}}{\forall} and that $\spt(\setR')$ is not.
\end{proof}

Initially, we believed that the piece-decomposition would preserve sometimes-termination of the \R-chase.
Our intuition was that, given a terminating \R-derivation from a KB $\KB = \Tuple{\setR, \FB}$, we could replicate this derivation from \Tuple{\spt(\setR), \FB} by applying the split rules in $\spt(\setR)$ piece by piece.
Surprisingly, this is not always possible:

\begin{theorem}\label{thm:sptExR}
The piece-decomposition does not preserve the sometimes-termination of the \R-chase.
\end{theorem}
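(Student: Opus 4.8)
The plan is to exhibit a rule set $\setR$ such that $\setR \in \chaseterm{\R}{\exists}$ (i.e., some fair \R-derivation from every KB $\Tuple{\setR, \FB}$ is finite), yet $\spR \notin \chaseterm{\R}{\exists}$ (i.e., there is some KB $\Tuple{\spR, \FB}$ from which no fair \R-derivation is finite). The engine of the construction should be a rule with a non-atomic head in which the atoms of the head play different roles: one atom is responsible for ``blocking'' further rule applications (so that when the head is applied \emph{atomically} the restricted chase can retract and halt), while another atom, if applied \emph{in isolation} via the piece-decomposition, fails to provide that blocking effect and instead feeds a non-terminating loop. The difficulty is that, unlike in Theorem~\ref{thm:sptR}, we must ensure that \emph{every} fair \R-derivation of $\Tuple{\spR, \FB}$ is infinite, not just one; so the split rules must be arranged so that no clever reordering of piece-applications can recreate the blocking atom before the loop has already diverged.

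Concretely, I would look for a set $\setR$ of two or three rules over a small signature (say one or two binary predicates $P$, $S$ and a unary predicate $A$) in which one rule has head $\exists z.\,(P(\cdot,z) \wedge S(z,\cdot))$ or similar, where the $S$-atom (or some combination of atoms sharing the existential $z$) is what allows a retraction to collapse a freshly created node back onto an existing one, thus stopping the \R-chase; whereas after piece-decomposition the body maps only through the $P$-atom and the retraction is no longer available, forcing infinitely many fresh nodes. The KB witnessing $\spR \notin \chaseterm{\R}{\exists}$ would be a minimal factbase like $\{A(a)\}$. The proof then has three components: (i) verify $\setR \in \chaseterm{\R}{\exists}$ by describing an explicit terminating fair \R-derivation from an arbitrary $\Tuple{\setR, \FB}$ (exhaustively applying rules in a prescribed order, as in the proof of Theorem~\ref{theorem:chase-terminating-sets-strict-subsets}), and arguing it reaches a fixpoint; (ii) compute $\spR$ according to Definition~\ref{def:sptrans}; (iii) show that for the chosen $\FB$, every fair \R-derivation from $\Tuple{\spR, \FB}$ is infinite.

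The main obstacle is step (iii): ruling out \emph{all} terminating derivations of the decomposed KB. I would handle it by a structural invariant on the factbases $F_i$ occurring in any fair \R-derivation — for instance, showing that at every stage there is an ``active'' node (a term carrying $A$ but not yet the blocking $S$-loop, or whatever predicate pattern we choose) on which a trigger with the existential split rule is \R-applicable, and that fairness forces it to eventually fire, producing a new active node and never exhausting the supply. The key lemma is that the split rules of $\spR$ can never jointly recreate the configuration that, in the original $\setR$, licensed the halting retraction; this is exactly where the fact that the existential variable $z$ was shared across multiple head atoms — now severed by the decomposition — is used. Once this invariant is established, non-termination of every fair \R-derivation follows, and with it the theorem.

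Optionally, I would also remark (as after Theorem~\ref{thm:sptR}) why the naive simulation argument fails: replaying a terminating $\setR$-derivation ``piece by piece'' from $\Tuple{\spR, \FB}$ can stall because applying only one piece of a head may be \R-applicable even when applying the whole head would not have been, so the decomposed derivation is forced to add atoms that the original derivation never needed, and these atoms in turn re-enable triggers that the original KB kept suppressed.
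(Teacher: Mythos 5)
There is a genuine gap: the theorem is an existence claim, and you never actually produce the witnessing rule set --- everything after ``I would look for'' is a plan whose feasibility is precisely what must be proved. Worse, the mechanism you propose for the witness cannot be realised. You want a rule whose head is of the form $\exists z.\,\big(P(\cdot,z)\wedge S(z,\cdot)\big)$, with the decomposition ``severing'' the blocking $S$-atom from the $P$-atom because they share the existential variable $z$. But by Definition~\ref{def:sptrans}, atoms sharing an existential variable (directly or transitively) lie in the same connected component of the piece graph and hence in the same piece; the piece-decomposition never separates them, so such a rule is not decomposed at all (compare rule~\eqref{rule:one-way-input} in Example~\ref{example-1ad}, which $\spt$ leaves untouched). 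Your closing diagnostic is also backwards: if there is a retraction from $\FB\cup\funout{t}$ to $\FB$ for a whole-head trigger $t$, then composing it with the renaming of fresh existentials and restricting yields a retraction for each piece trigger with the same body homomorphism, so a piece can never be \R-applicable on a factbase on which the full head is blocked. What actually defeats the naive piece-by-piece replication is the opposite failure: a piece can become blocked even though the original derivation applied the full head.

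The paper's witness exploits exactly this opposite phenomenon. In $\setR=\{(\ref{rule:u-init-sp}\text{--}\ref{rule:generate-new-a})\}$, rule~\eqref{rule:u-init-sp} has two existential variables lying in two \emph{disjoint but isomorphic} pieces, so $\spR$ contains the two logically equivalent rules \eqref{rule:u-init-sp-1} and \eqref{rule:u-init-sp-2}; once one of them has fired on an $A$-element, the other is never \R-applicable, so starting from $\{A(a)\}$ every fair \R-derivation of $\spR$ creates only one $U$-successor per $A$-element, fairness then forces the Datalog rule~\eqref{rule:u-to-r} to produce a reflexive atom $R(y_1,y_1)$, and this loop (via \eqref{rule:u-extends-r}--\eqref{rule:generate-new-a}) makes all fair derivations infinite. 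The original set remains in $\chaseterm{\R}{\exists}$ because a carefully ordered strategy uses both successors to create $R$-atoms with distinct arguments first and thereby blocks everything afterwards. Your ``blocking atom severed from the looping atom'' idea is essentially the mechanism already used for Theorem~\ref{thm:sptR} (loss of $\forall$-termination); it neither yields a legal piece-decomposition in the form you describe nor gives any handle on ruling out \emph{all} orderings for the decomposed set, which is the crux of this statement.
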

\begin{proof}[Sketch]
The following set $\setR = \{(\ref{rule:u-init-sp}\text{--}\ref{rule:generate-new-a})\}$ is in \chaseterm{\R}{\exists} and its piece-decomposition $\spR = \{(\ref{rule:u-to-r}\text{--}\ref{rule:u-init-sp-2})\}$ is not. The set $\setR$ is adapted from $\{(\ref{rule:r-loop}\text{--}\ref{rule:s-succ})\}$ (proof of Th. \ref{theorem:chase-terminating-sets-strict-subsets}).
%used to show that 
%\mlm{$\chaseterm{\R}{\exists} \not\subseteq \chaseterm{\DF{\R}}{\exists}$}. 
Note that \eqref{rule:u-init-sp} is split into two equivalent rules \eqref{rule:u-init-sp-1} and \eqref{rule:u-init-sp-2}. To show that $\spR \not \in \chaseterm{\R}{\exists}$, we start again from $\{A(a)\}$. Again, some $R$-atom is created and leads to apply other rules. With $\setR$, applying \eqref{rule:u-init-sp} then \eqref{rule:u-to-r} creates an atom of form $R(y_1,z_1)$, while with $\spR$, applying \eqref{rule:u-init-sp-1} then \eqref{rule:u-to-r} creates an atom $R(y_1,y_1)$. This loop leads to non-termination.  
%but directly creating a loop (by applying $(\ref{rule:u-to-r})$ with $y$ and $z$ mapped to the same term) leads to non-termination. 
\begin{align}
A(x) \to \exists y, z . U(x, y) \NWedge H(&y, x) \NWedge U(x, z) \NWedge H(z, x) \label{rule:u-init-sp} \\
U(x, y) \wedge U(x, z) &\to R(y, z) \label{rule:u-to-r} \\
U(x, z) \wedge R(y, z) &\to \exists v . R(z, v) \label{rule:u-extends-r} \\
R(x, y) \wedge R(y, z) &\to \exists v . S(z, v) \label{rule:r-to-s-sp} \\
R(x, y) \wedge S(y, z) &\to S(x, x) \label{rule:s-loop-sp-1} \\
 A(x) \wedge U(x, y) \wedge S(y, z) &\to \exists v . H(z, v) \wedge A(v) \label{rule:generate-new-a} \\
A(x) \to \exists y . U(x, y) &\NWedge H(y, x) \label{rule:u-init-sp-1} \\
A(x) \to \exists z . U(x, z) &\NWedge H(z, x) \label{rule:u-init-sp-2}
\end{align}

\end{proof}

\begin{theorem}\label{thm:sptExDfR}
The piece-decomposition does not preserve the sometimes-termination of the \DF{\R}-chase.
\end{theorem}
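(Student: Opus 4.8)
The plan is to mimic the strategy already used for Theorem~\ref{thm:sptR} (and refined in Theorem~\ref{thm:sptExR}): take the rule set witnessing Theorem~\ref{thm:sptExR}, which lies in $\chaseterm{\R}{\exists}\setminus\chaseterm{\R}{\exists\text{ after }\spt}$, and ``de-Datalogify'' it so that the same obstruction survives in the Datalog-first setting. Concretely, I would start from $\setR = \{(\ref{rule:u-init-sp}\text{--}\ref{rule:generate-new-a})\}$ of the previous proof and add a fresh extra argument (a ``dummy'' position filled by an existential variable) to every predicate, exactly as was done in passing from \eqref{rule:p-loop-a}--\eqref{rule:p-successor} to \eqref{rule:dummy-ext-1}--\eqref{rule:dummy-ext-2}. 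After this transformation no rule of $\setR$ is Datalog any more, and crucially no rule of its piece-decomposition $\spt(\setR)$ is Datalog either, so the Datalog-first restriction imposes no constraint whatsoever: every \DF{\R}-derivation is an \R-derivation and vice versa, for both $\setR$ and $\spt(\setR)$.

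The key steps, in order. First I would write down the modified rule set $\setR''$ with the dummy arguments inserted, and check that none of its rules — and none of the rules of $\spt(\setR'')$ — is Datalog; this makes $\chaseterm{\DF{\R}}{\exists}$ and $\chaseterm{\R}{\exists}$ coincide on $\setR''$ and on $\spt(\setR'')$. Second, I would argue that $\setR'' \in \chaseterm{\R}{\exists}$, hence $\setR'' \in \chaseterm{\DF{\R}}{\exists}$: the terminating \R-derivation exhibited for $\setR$ in the proof of Theorem~\ref{thm:sptExR} lifts verbatim, since adding a fresh existential argument to each head atom only introduces one new fresh variable per trigger and never creates new opportunities for folding/retraction to fail or succeed (the restricted-applicability check on the padded factbase behaves as on the unpadded one, because the dummy positions carry pairwise distinct fresh variables that are never constrained). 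Third, I would replay the non-terminating \R-derivation of $\spt(\setR)$ from $\{A(a)\}$ in the padded world, starting from $\{A(a,w_0)\}$: the same loop — a bad $R$-atom of the form $R(y_1,y_1,\cdot)$ created by applying the split rule \eqref{rule:u-init-sp-1}-analogue followed by \eqref{rule:u-to-r}-analogue, which then feeds \eqref{rule:u-extends-r}, \eqref{rule:r-to-s-sp}, \eqref{rule:generate-new-a}, producing a fresh $A$-atom and repeating — is still \R-applicable at each step, so $\spt(\setR'')\notin\chaseterm{\R}{\exists}$, hence $\spt(\setR'')\notin\chaseterm{\DF{\R}}{\exists}$. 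Together with the first step this gives $\setR''\in\chaseterm{\DF{\R}}{\exists}$ while $\spt(\setR'')\notin\chaseterm{\DF{\R}}{\exists}$, which is the claim.

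The main obstacle I anticipate is step two: verifying that the dummy-argument padding does not accidentally give the restricted chase a new way to terminate on $\spt(\setR'')$, or conversely destroy the terminating \R-derivation on $\setR''$. Padding changes which atoms are ``the same'' and which retractions exist; one must check that every retraction used (or ruled out) in the original arguments still exists (or is still ruled out) after padding. The safe way to handle this is to insert a \emph{distinct} fresh existential variable in each newly added position of each head atom, so that the dummy positions are always occupied by variables that appear nowhere else — then a retraction of the padded factbase exists iff the corresponding retraction of the unpadded factbase exists that, in addition, is compatible on the dummy positions, and since those positions hold globally unique variables this imposes nothing extra. Making this ``nothing extra'' precise — essentially a projection lemma relating retractions of padded and unpadded factbases — is the only place where real care is needed; the rest is a transcription of the proof of Theorem~\ref{thm:sptExR}.
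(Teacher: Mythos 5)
Your route is genuinely different from the paper's. The paper's own proof exploits the opposite mechanism to yours: it takes the rule set of Theorem~\ref{theorem:chase-terminating-sets-strict-subsets} (rules \eqref{rule:r-loop}--\eqref{rule:s-succ}) and adds an atom $H(x,y)$ with a fresh existential to the Datalog rule $A(x)\to R(x,x)$, obtaining rule \eqref{rule:r-loop-sp}. The modified set is in \chaseterm{\DF{\R}}{\exists} precisely because this rule is no longer Datalog and hence not forced to fire early, whereas its piece-decomposition re-creates the Datalog rule \eqref{rule:r-loop-sp-1}, and the Datalog-first priority then reproduces the non-terminating behaviour of the original example. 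So for the paper the culprit is that $\spt$ can \emph{create} Datalog rules and thereby change the Datalog-first strategy; for you the point is to \emph{erase} all Datalog rules by padding, so that \DF{\R}- and \R-derivations coincide and Theorem~\ref{thm:sptExR} can be recycled. Your padding trick is the same one the paper uses in Theorem~\ref{thm:sptR} (rules \eqref{rule:dummy-ext-1}--\eqref{rule:dummy-ext-2}), and with the projection lemma you sketch (distinct fresh existential dummies in head atoms, don't-care dummies in bodies, so a padded trigger is \R-applicable on a padded factbase iff its projection is \R-applicable on the projected factbase) the transfer of derivations, fairness included, does go through, also for arbitrary initial factbases. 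The cost is that your counterexample rests on the considerably heavier Theorem~\ref{thm:sptExR} plus the padding lemma, where the paper gets away with a small standalone example.

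One step as written is logically short: in your third step you conclude $\spt(\setR'')\notin\chaseterm{\R}{\exists}$ from the fact that the bad loop ``is still \R-applicable at each step''. Exhibiting one infinite derivation only shows $\spt(\setR'')\notin\chaseterm{\R}{\forall}$; for sometimes-termination you must show that \emph{no} fair finite derivation from $\{A(a,w_0)\}$ exists. This is fixable with the tool you already propose: if $\spt(\setR'')$ admitted a terminating \R-derivation from $\{A(a,w_0)\}$, your projection lemma would turn it into a terminating \R-derivation of $\spt(\setR)$ from $\{A(a)\}$, contradicting the proof of Theorem~\ref{thm:sptExR}, which establishes that all fair \R-derivations from that KB are infinite. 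State step three that way (or reprove the ``all fair derivations are infinite'' claim directly in the padded world), and your argument is complete.
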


\begin{proof}[Sketch]
The set $\setR = \{(\ref{rule:r-loop-sp}\text{--}\ref{rule:s-succ-sp})\}$ is in \chaseterm{\DF{\R}}{\exists} while $\spR = \{(\ref{rule:s-loop-sp}\text{--}\ref{rule:r-loop-sp-2})\}$ is not. Note that the only difference with $\{(\ref{rule:r-loop}\text{--}\ref{rule:s-succ})\}$ is the atom $H(x,y)$ in the first rule, making it non-Datalog, which prevents its early application.
\vspace*{-\baselineskip}
\begin{center}
\begingroup
\setlength{\abovedisplayskip}{0pt}
\setlength{\belowdisplayskip}{0.2pt}
\begin{align}
A(x) &\to \exists y . R(x, x) \wedge H(x, y) \label{rule:r-loop-sp} \\
R(x, y) \wedge S(y, z) &\to S(x, x) \label{rule:s-loop-sp} \\
A(x) \wedge S(x, y) &\to A(y) \label{rule:a-prop-sp}
\end{align}
\begin{minipage}{0.49\linewidth}
\begin{align}
A(x) &\NTo \exists y . R(x, y) \label{rule:r-succ-sp} \\
R(x, y) &\NTo \exists z . S(y, z) \label{rule:s-succ-sp}
\end{align}
\end{minipage}\quad
\begin{minipage}{0.46\linewidth}
\begin{align}
A(x) &\NTo R(x, x) \label{rule:r-loop-sp-1} \\
A(x) &\NTo \exists y . H(x, y) \label{rule:r-loop-sp-2}
\end{align}
\end{minipage}
\endgroup
\end{center}

\end{proof}

%\subsection{Chase Termination: Equivalent Variant}\label{sub:sptE}

Since piece-decomposition preserves logical equivalence (Proposition~\ref{prop:spteq}), one directly obtains that it preserves termination of the equivalent chase. Indeed, $\Tuple{\setR, \FB}$ admits a finite universal model iff \Tuple{\spR, \FB} admits one:

\begin{theorem}\label{thm:sptE}
The piece-decomposition preserves the termination of the $\E$-chase.
\end{theorem}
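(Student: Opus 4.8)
The plan is to sidestep any direct manipulation of $\E$-derivations and reduce the statement to a purely semantic fact about finite universal models. Recall (this is exactly what is used in the proof of Theorem~\ref{theorem:chase-terminating-sets-equality}) that for any KB $\K$, every fair $\E$-derivation from $\K$ is finite if and only if $\K$ admits a finite universal model; moreover, by Theorem~\ref{theorem:chase-terminating-sets-equality} the four sets $\chaseterm{\E}{\forall}$, $\chaseterm{\E}{\exists}$, $\chaseterm{\DF{\E}}{\forall}$, $\chaseterm{\DF{\E}}{\exists}$ coincide, so there is really only one thing to prove, namely $\spt(\chaseterm{\E}{\forall}) \subseteq \chaseterm{\E}{\forall}$, and it amounts to: if $\Tuple{\setR,\FB}$ has a finite universal model for every factbase $\FB$, then so does $\Tuple{\spR,\FB}$.

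First I would fix $\setR \in \chaseterm{\E}{\forall}$ and an arbitrary factbase $\FB$, and record the one bookkeeping point that makes the whole argument trivial: $\spt$ introduces no fresh predicate and drops none, since every head atom of a rule $R$ occurs in exactly one piece of $R$ while bodies are left untouched; hence $\EI{\Preds}{\spR} = \EI{\Preds}{\setR}$. Next I would invoke Proposition~\ref{prop:spteq}: $\setR$ and $\spR$ are logically equivalent, i.e.\ an interpretation satisfies all rules of $\setR$ iff it satisfies all rules of $\spR$. Combining the two observations, the models of $\Tuple{\setR,\FB}$ and of $\Tuple{\spR,\FB}$ are the very same interpretations (same signature, same models of $\FB$, same satisfied rules); hence the two KBs have exactly the same universal models, and in particular $\Tuple{\setR,\FB}$ admits a finite universal model iff $\Tuple{\spR,\FB}$ does.

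Then the conclusion is immediate: $\setR \in \chaseterm{\E}{\forall}$ gives a finite universal model for $\Tuple{\setR,\FB}$, hence one for $\Tuple{\spR,\FB}$, hence by the characterisation above every fair $\E$-derivation from $\Tuple{\spR,\FB}$ is finite; as $\FB$ was arbitrary, $\spR \in \chaseterm{\E}{\forall}$. The same chain of equivalences runs backwards, so $\spt$ in fact both preserves and reflects $\E$-chase termination.

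I do not expect a genuine obstacle here. The content of the proof is just ``logically equivalent rule sets over the same signature have the same models, hence the same finite universal models''. The delicate part that appears for atomic-decomposition in Sections~5--6 --- where fresh predicates force one to reason through the two clauses of the conservative-extension Definition~\ref{def:ce} before comparing universal models --- is precisely what is absent for piece-decomposition, because the signature is unchanged. If I had to name a ``hard'' step, it would only be making sure Proposition~\ref{prop:spteq} is used in its ``same models'' reading rather than the weaker ``same BCQ entailments'' reading, but this is immediate since both $\setR$ and $\spR$ are ordinary first-order theories.
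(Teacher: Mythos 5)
Your proof is correct and follows essentially the same route as the paper's: invoke Proposition~\ref{prop:spteq} (logical equivalence, over the unchanged signature) to conclude that $\Tuple{\setR,\FB}$ and $\Tuple{\spR,\FB}$ have the same (finite) universal models, then use the characterisation of $\E$-chase termination via existence of a finite universal model. The only difference is cosmetic: you make the ``no fresh predicates'' bookkeeping explicit, which the paper leaves implicit.
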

%
%\begin{proof}
%Given some KBs $\K = \Tuple{\setR, \FB}$ and $\K' = \Tuple{\setR', \FB}$ where $\setR$ and $\setR'$ are equivalent rule sets, a factbase $M$ is a universal model for $\K$ iff it is a universal model for $\K'$.
%Therefore, a KB $\Tuple{\setR, \FB}$ admits a finite universal model iff \Tuple{\spR, \FB} also admits one by Proposition~\ref{prop:spteq}.
%Since a rule set \setR is in \chaseterm{\E}{\forall} iff every KB of the form $\Tuple{\setR, \FB}$ admits a finite universal model, the theorem holds.
%\end{proof}

The piece-decomposition may gain termination:

\begin{theorem}\label{thm:sp-improves}
The piece-decomposition may gain termination (and sometimes-termination) 
of the \SO-, the \R-, and the \DF{\R}-chase but not of the \Ob- and \E-chase.
\end{theorem}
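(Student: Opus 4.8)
The plan is to prove Theorem~\ref{thm:sp-improves} by exhibiting, for each relevant chase variant, a single rule set $\setR$ that fails to be chase-terminating but whose piece-decomposition $\spR$ is chase-terminating, and separately arguing the negative cases for \Ob and \E. The key observation driving the positive cases is that piece-decomposition can ``break up'' a head in a way that removes a harmful feedback loop: when a rule $B \to \exists \vz . H$ creates all of $H$ at once, later triggers may see the full combined pattern and keep firing, whereas applying the pieces of $H$ separately can make some of those later triggers immediately satisfied (since a smaller piece may already be redundant on its own). So I would look for a rule whose head contains one piece that is ``self-defeating'' once isolated.

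First I would handle the \SO-chase. Here I want $\setR \notin \chaseterm{\SO}{\forall}$ but $\spR \in \chaseterm{\SO}{\forall}$; by Theorem~\ref{theorem:chase-terminating-sets-equality} it is enough to produce any one such example. A natural candidate is a rule like $R(x,y) \to \exists z . R(y,z) \wedge R(y,y)$ (or a close variant): the combined head keeps producing fresh $R$-atoms forever under the semi-oblivious criterion, but after piece-decomposition the piece $R(y,y)$ is a Datalog rule that, once applied, makes the surviving existential piece behave tamely. I would check carefully that the frontier-based \SO-applicability test indeed blocks the infinite chain for $\spR$ — this is the delicate bookkeeping step, since semi-oblivious applicability is indexed by frontier images, so I must verify no fresh frontier value is ever generated that would re-enable the existential piece. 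The same example (or a structurally similar one) should then simultaneously witness the \R- and \DF{\R}-cases, because whatever makes the semi-oblivious chase loop will a fortiori not block the restricted chase from a non-terminating strategy, while the piece-decomposed set, being ``more Datalog'', terminates under all restricted (hence Datalog-first restricted) derivations. If one example does not cover all three variants cleanly, I would give a small family of closely related rule sets, one per variant, reusing the same loop-breaking idea; since ``may gain'' only requires a single witness per variant this is routine once the core example is pinned down.

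The negative side — that piece-decomposition may \emph{not} gain termination of the \Ob- or \E-chase — follows immediately from results already proved: by Lemma~\ref{proposition:oblivious-same-result} and Definition~\ref{def:ch}, all fair \Ob-derivations from a KB produce the same result, and Theorem~\ref{thm:sptOSO}'s proof establishes an injective homomorphism from $\funCh{\Ob}{\langle \spR, F\rangle}$ into $\funCh{\Ob}{\KB}$; I would also note the reverse direction (there is an injective homomorphism the other way, since each piece application in $\spR$ can be matched by the unique corresponding original application in \KB), so $\funCh{\Ob}{\langle \spR, F\rangle}$ is finite iff $\funCh{\Ob}{\KB}$ is — hence no gain is possible for \Ob. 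For \E, Proposition~\ref{prop:spteq} gives that $\setR$ and $\spR$ are logically equivalent, so $\Tuple{\setR,\FB}$ has a finite universal model iff $\Tuple{\spR,\FB}$ does, and since the \E-chase terminates exactly when a finite universal model exists (footnote on the equivalent chase), termination of the \E-chase on $\Tuple{\spR,\FB}$ is equivalent to termination on $\Tuple{\setR,\FB}$; no gain is possible.

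The main obstacle I anticipate is designing the witness rule set for the \SO-case and verifying the semi-oblivious non-termination and post-decomposition termination rigorously: the frontier-indexed applicability condition is subtle, and I must make sure that (a) before decomposition some fresh frontier value keeps appearing forever, forcing a new \SO-applicable trigger at every step, and (b) after decomposition the Datalog piece ``collapses'' the relevant terms so that only finitely many distinct frontier images ever arise for the existential piece. Once that example is validated, propagating it to \R and \DF{\R} is straightforward (restricted variants are even easier to make loop, and the decomposed set is Datalog-heavy), and the \Ob/\E negative claims are essentially corollaries of earlier results, so I expect the rest of the proof to go through with only routine verification.
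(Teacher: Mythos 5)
Your overall structure (one witness rule set for the positive cases, plus the \Ob-isomorphism and the logical-equivalence argument via Proposition~\ref{prop:spteq} for \E) matches the paper, and your treatment of the \Ob- and \E-cases is correct. The gap is in the positive cases, and it is not just that your candidate needs tweaking: the mechanism you rely on cannot work for the \SO-chase. \SO-applicability depends only on which frontier images of a rule have already been used, never on which atoms are present in the factbase; adding the Datalog piece $R(y,y)$ (or any ``self-defeating'' piece) therefore never blocks an \SO-trigger, so your idea that an isolated piece becomes ``immediately satisfied'' is irrelevant to \SO-termination, and a Datalog piece also cannot ``collapse terms''. Concretely, for your candidate $\setR=\{R(x,y)\to\exists z.\,R(y,z)\wedge R(y,y)\}$ the existential piece of $\spR$ is $R(x,y)\to\exists z.\,R(y,z)$, whose frontier is still $\{y\}$ and which keeps producing fresh frontier values; so $\spR\notin\chaseterm{\SO}{\forall}$ and the \SO-gain fails. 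The same $\spR$ also admits an infinite fair \R-derivation (apply the existential piece along the fresh chain, interleaving the Datalog triggers too late to block), so your example does not witness gain of \emph{termination} of the \R-chase either, only of its sometimes-termination and of \DF{\R}. Your fallback of ``closely related rule sets reusing the same loop-breaking idea'' does not repair this, because the loop-breaking idea is exactly what \SO ignores.

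What is actually needed, and what the paper uses, is a head atom that \emph{inflates the frontier} of the original rule: $\setR=\{P(x,y)\to\exists z.\,P(x,z)\wedge R(x,y)\}$. Here the atom $R(x,y)$ forces $y$ into the frontier, so the original rule keeps seeing new frontier values and every fair \X-derivation from $\Tuple{\setR,\{P(a,b)\}}$ is infinite for $\X\in\{\SO,\R,\DF{\R}\}$ (each application produces the new atom $R(a,z_i)$, so no trigger is ever blocked under the restricted criterion either); after decomposition the existential piece $P(x,y)\to\exists z.\,P(x,z)$ has frontier $\{x\}$ and the other piece is Datalog, so $\spR$ terminates under all these variants (every result embeds injectively into $\{R(t,u),P(t,z_t),R(t,z_t)\mid t,u\in\EI{\Terms}{\FB}\}\cup\FB$). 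A single example thus covers all five sets at once. Finally, your ``a fortiori'' step from \SO-looping of $\setR$ to \R-looping of $\setR$ is not valid in general (e.g., $P(x,y)\to\exists z.\,P(y,z)\wedge P(z,y)$ is \SO-non-terminating yet in $\chaseterm{\R}{\forall}$); the restricted-chase non-termination of the witness has to be checked directly, as the paper does.
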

\begin{proof}[Sketch]
Consider the set $\setR = \{P(x, y) \to \exists z . P(x, z) \wedge R(x, y)\}$, which is not in \chaseterm{\SO}{\forall}, \chaseterm{\R}{\exists}, \chaseterm{\R}{\forall}, \chaseterm{\DF{\R}}{\exists}, or \chaseterm{\DF{\R}}{\forall}.
However, $\spR$ is in all of these sets.

Concerning the \Ob-chase, we show via induction that $\funCh{\Ob}{\Tuple{\setR, \FB}}$ and $\funCh{\Ob}{\Tuple{\spR, \FB}}$ are isomorphic for any rule set \setR and factbase \FB (where \funCh{\Ob}{\cdot} is the function from Definition~\ref{def:ch}, which maps a KB to its only \Ob-chase result). Hence, all \Ob-derivations from \Tuple{\setR, \FB} are terminating iff all \Ob-derivations from \Tuple{\spR, \FB} are terminating.
Concerning the \E-chase, we rely again on Proposition~\ref{prop:spteq}. 
%we can use an argument analogous to the proof of Theorem~\ref{thm:sptE}.
%    For the oblivious chase, we only provide a sketch. One can adapt the proof of Theorem~\ref{thm:sptOSO} to show that there is an injective homomorphism from $\funCh{\X}{\langle\setR, F\rangle}$ to $\funCh{\X}{\langle\spR, F\rangle}$. It is due to the fact that, contrarily to the semi-oblivious case, a trigger is applicable if and only if its output is not in the database, and this condition is easy to preserve during the induction, so the proof does not change much.
\end{proof}

To show that the piece-decomposition preserves FO-rewritability we show that it preserves the BDDP property.
%an equivalent property; namely, 

\begin{theorem}\label{thm:sptBDDP}
A rule set $\setR$ is BDDP iff $\spR$ is BDDP.
\end{theorem}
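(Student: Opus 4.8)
The plan is to prove both directions of the equivalence by relating the breadth-first chase $\funCh{i}{\cdot}$ of $\setR$ to that of $\spR$. The key structural observation is that applying a piece-rule $B \to \exists \vec{v}.H'$ produces a subset of the atoms that applying $R = B \to \exists \vec{z}.H$ would produce, and conversely, one application of $R$ can be simulated by applying, in a single breadth-first round, all the piece-rules $\spt(R)$ sharing the body $B$. Care is needed with the fresh existential variables: when $R$ fires on a trigger $\pi$, it introduces one fresh variable $z_t$ per $z \in \vec{z}$, whereas the piece-rules introduce fresh variables $v_{t'}$ independently per piece. Since distinct pieces share no existential variable of $R$ by definition of the piece graph, there is a natural bijection between the variables created by $R(\FB)$ and those created by $(\spt(R))(\FB)$ on the corresponding triggers, inducing an isomorphism between $\setR(\FB)$ and $\spR(\FB)$ for every $\FB$. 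I would first prove this single-step claim precisely as a lemma, then lift it by an easy induction on $i$ to obtain an isomorphism $\iota_i \colon \funCh{i}{\Tuple{\setR,\FB}} \to \funCh{i}{\Tuple{\spR,\FB}}$ that is the identity on $\EI{\Terms}{\FB}$, for every factbase $\FB$ and every $i \geq 0$.

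Given the family of isomorphisms $\iota_i$, the two directions of the theorem follow symmetrically. Suppose $\setR$ is BDDP and fix a BCQ $Q$; let $k$ be the bound witnessing BDDP for $Q$ with respect to $\setR$. I claim the same $k$ works for $\spR$: for any $\FB$, since $\funCh{k}{\Tuple{\spR,\FB}}$ is isomorphic (hence homomorphically equivalent) to $\funCh{k}{\Tuple{\setR,\FB}}$, and since $\Tuple{\setR,\FB}$ and $\Tuple{\spR,\FB}$ have the same models on $\EI{\Preds}{\setR} = \EI{\Preds}{\spR}$ by Proposition~\ref{prop:spteq}, we get $\Tuple{\spR,\FB} \models Q$ iff $\Tuple{\setR,\FB} \models Q$ iff $\funCh{k}{\Tuple{\setR,\FB}} \models Q$ iff $\funCh{k}{\Tuple{\spR,\FB}} \models Q$. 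The converse direction is entirely analogous, using Proposition~\ref{prop:spteq} again and the inverse isomorphisms $\iota_i^{-1}$. One subtlety: $Q$ is required to be a BCQ on $\EI{\Preds}{\setR}$, but since $\spR$ introduces no fresh predicates ($\EI{\Preds}{\spR} = \EI{\Preds}{\setR}$), this is not an issue here — unlike in the atomic-decomposition case of Section 6.

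The main obstacle I expect is getting the bookkeeping of fresh variables exactly right so that the claimed maps are genuine isomorphisms rather than mere homomorphisms in one direction. In particular, one must check that $\setR(\FB)$ as defined (the minimal factbase containing $\funout{t}$ for every trigger $t$ with a rule of $\setR$) is literally isomorphic to $\spR(\FB)$, not just equivalent: an application of $R$ and the corresponding bundle of applications of $\spt(R)$ on the \emph{same} homomorphism $\pi$ of the \emph{common} body $B$ must create matching fresh variables. Because all rules in $\spt(R)$ share the body $B$ of $R$, their triggers are in bijection with triggers of $R$ (via the shared homomorphism on $B$), and the piece-graph-component structure guarantees the existential variables partition cleanly across pieces; this is what makes the bijection on fresh variables well-defined and its inverse a homomorphism. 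Once this is nailed down, the induction and the BDDP transfer are routine, and combined with the equivalence of FO-rewritability and BDDP cited after the BDDP definition, Theorem~\ref{thm:sptBDDP} yields that piece-decomposition preserves FO-rewritability.
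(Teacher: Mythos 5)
Your proposal is correct and matches the paper's argument: the paper's proof is exactly the induction showing that $\funCh{i}{\Tuple{\setR,\FB}}$ and $\funCh{i}{\Tuple{\spR,\FB}}$ are isomorphic for every $i$ and $\FB$ (using that the pieces of a rule share the body and partition its existential variables), from which the BDDP bound transfers in both directions via the logical equivalence of $\setR$ and $\spR$ (Proposition~\ref{prop:spteq}). Your additional care about the fresh-variable bookkeeping and the BDDP transfer just makes explicit what the paper's sketch leaves implicit.
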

\begin{proof}[Sketch]
We can prove by induction that, for any KBs $\Tuple{\setR, \FB}$ and any $i \geq 1$, the factbases $\funCh{i}{\Tuple{\setR, \FB}}$ and 
$\funCh{i}{\Tuple{\spR, F}}$ are isomorphic. 
%The theorem follows since, for a KB $\KB = \Tuple{\setR, \FB}$ and some $i \geq 1$, we have that $\funchase{i}{\Tuple{\setR, F}}$ and $\funchase{i}{\Tuple{\spR, F}}$ are isomorphic.
%Such a claim can easily be proven via induction.
\end{proof}

% !TEX root = ../main.tex
\section{One-Way Atomic Decomposition}\label{sec:1ad}

Piece-decomposition may not produce atomic-head rules; a useful restriction considered in many contexts. 
%different applications.
The following procedure is classically  used to produce such rules:

\begin{definition}\label{def:1ad}
The \emph{one-way atomic decomposition} of a rule $R = B[\vx, \vy] \to \exists \vz . H[\vx, \vz]$ is the rule set $\oad(R)$ that contains the rule $B \to \exists \vz . X_R(\vx, \vz)$ and,  for each atom $P(\vt) \in H$, the rule $X_R(\vx, \vz) \to P(\vt)$, where $X_R$ is a fresh predicate unique for $R$, of arity $\vert \vx \vert + \vert \vz \vert$.
Given a rule set \setR, let $\oad(\setR) = \bigcup_{R \in \setR} \oad(R)$.
\end{definition}

\begin{example}\label{example-1ad}
Consider the rule $\eqref{rule:one-way-input}$ and its one-way atomic decomposition $\oad(\eqref{rule:one-way-input}) = \{(\ref{rule:one-way-output-1}\text{--}\ref{rule:one-way-output-3})\}$:
\begin{align}
R(x, y) &\to \exists z . P(x, z) \wedge S(x, y, z) \label{rule:one-way-input} \\
R(x, y) &\to \exists z. X_\eqref{rule:one-way-input}(x, y, z) \label{rule:one-way-output-1} \\
X_\eqref{rule:one-way-input}(x, y, z) &\to P(x, z) \label{rule:one-way-output-2} \\
X_\eqref{rule:one-way-input}(x, y, z) &\to S(x, y, z) \label{rule:one-way-output-3}
\end{align}
Note that piece-decomposition would not decompose rule \eqref{rule:one-way-input}, i.e., $\spt(\eqref{rule:one-way-input}) = \{\eqref{rule:one-way-input}\}$
%does not produce single-headed rules on input \eqref{rule:one-way-input}.
\end{example}

Strictly speaking, $\setR$ and $\oadR$ cannot be logically equivalent because they are built on different sets of predicates; however, it is straighforward to check that $\oadR$ is a conservative extension of $\setR$.
Therefore, one-way atomic decomposition is indeed a normalisation procedure.
%BCQ entailment is preserved for queries on the original set of predicates $\Sigma$. 

The following is a corollary of forthcoming Theorem~\ref{thm:2adOSO}:
\begin{theorem}\label{thm:1adOSO}
The one-way atomic decomposition preserves termination of the \Ob- and \SO-chase.
\end{theorem}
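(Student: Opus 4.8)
The plan is a direct simulation between the $\X$-chase of $\Tuple{\setR,\FB}$ and that of $\Tuple{\oad(\setR),\FB}$ (alternatively, as noted, the claim is a special case of Theorem~\ref{thm:2adOSO}). By Lemma~\ref{proposition:oblivious-same-result} and Definition~\ref{def:ch} every KB has a canonical $\X$-chase result $\funCh{\X}{\cdot}$, unique up to isomorphism, and since each step of a fair derivation adds at least one atom, $\setR\in\chaseterm{\X}{\forall}$ iff $\funCh{\X}{\Tuple{\setR,\FB}}$ is finite for all $\FB$. So it suffices to prove: \emph{if $\funCh{\X}{\Tuple{\setR,\FB}}$ is finite, then $\funCh{\X}{\Tuple{\oad(\setR),\FB}}$ is finite.} Split $\oad(\setR)$ into the \emph{support rules} $R^{\mathsf{s}} = B\to\exists\vz.X_R(\vx,\vz)$ and the \emph{projection rules} $R^{P} = X_R(\vx,\vz)\to P(\vt)$, for $P(\vt)\in H$, one family per $R = B[\vx,\vy]\to\exists\vz.H[\vx,\vz]$ in $\setR$. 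The decisive structural feature is that each $X_R$ is fresh and occurs \emph{only} in the head of $R^{\mathsf{s}}$ and the bodies of the $R^{P}$'s; hence in any derivation from $\Tuple{\oad(\setR),\FB}$ the support-rule bodies are evaluated over $\EI{\Preds}{\setR}$-atoms only, the $\EI{\Preds}{\setR}$-atoms are produced only by $\FB$ and projection rules, and rules of $\oad(\setR)$ interact only through a single $X_R$-atom at a time.

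Next I would ``collapse'' a fair $\X$-derivation $\der'$ of $\Tuple{\oad(\setR),\FB}$ into a fair $\X$-derivation $\der$ of $\Tuple{\setR,\FB}$. Whenever $\Tuple{R^{\mathsf{s}},\pi}$ fires it creates a single atom $X_R(\pi(\vx),\vec{w})$ with $\vec{w}$ fresh, and the atoms later produced by the projection triggers $\Tuple{R^{P},\{\vx\mapsto\pi(\vx),\vz\mapsto\vec{w}\}}$ on that atom are exactly a copy of $\funout{\Tuple{R,\pi}}=\pi^{R}(H)$, with the components of $\vec{w}$ in the role of the skolem variables $\{z_t:z\in\vz\}$. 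Reading such a support-plus-projections block as the single original application $\Tuple{R,\pi}$, and discarding support steps whose original trigger $\Tuple{R,\pi}$ turns out not to be $\X$-applicable (which can only happen when $R$ is Datalog and $\pi(H)$ is already present), yields $\der$ by a routine induction on derivation length; the induction uses the stratification above and the equality of frontiers $\funfr{R^{\mathsf{s}}}=\funfr{R}=\vx$, which makes $\X$-applicability of $\Tuple{R^{\mathsf{s}},\pi}$ coincide with that of $\Tuple{R,\pi}$ whenever $\vz\neq\emptyset$. Consequently the restriction of $\funCh{\X}{\Tuple{\oad(\setR),\FB}}$ to $\EI{\Preds}{\setR}$ is isomorphic to $\funCh{\X}{\Tuple{\setR,\FB}}$, hence finite.

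Finally I would bound the $X_R$-atoms. Each is created by a trigger $\Tuple{R^{\mathsf{s}},\pi}$ with $\pi$ a homomorphism from the finite set $B$ into the $\EI{\Preds}{\setR}$-part of $\funCh{\X}{\Tuple{\oad(\setR),\FB}}$, which we just showed is finite; so there are finitely many such $\pi$, hence finitely many support-rule applications, hence finitely many $X_R$-atoms. As $\funCh{\X}{\Tuple{\oad(\setR),\FB}}$ is the union of its (finite) $\EI{\Preds}{\setR}$-part and these (finitely many) $X_R$-atoms, it is finite. The semi-oblivious case is identical, except that results are determined only up to isomorphism and $\X$-applicability of support and projection triggers must be read through the frontier $\vx$.

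I expect the main obstacle to be exactly the fairness bookkeeping hidden in the ``discarding'' clause: unlike in the original chase, a support trigger $\Tuple{R^{\mathsf{s}},\pi}$ for a Datalog rule $R$ is $\Ob$-applicable as soon as $X_R(\pi(\vx))$ is absent, even when $\Tuple{R,\pi}$ is already redundant because $\pi(H)$ is present, so a fair derivation of $\Tuple{\oad(\setR),\FB}$ is \emph{not} merely the image of a fair derivation of $\Tuple{\setR,\FB}$. One must check that these extra support applications (i) add no new $\EI{\Preds}{\setR}$-atom --- their projections fall inside the already-present $\pi(H)$ --- and (ii) are finite in number, which is again delivered by the finiteness of the $\EI{\Preds}{\setR}$-part established above; only with both in hand does the collapse stay fair and the counting in the previous paragraph go through.
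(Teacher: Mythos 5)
Your proposal is correct and follows essentially the same route as the paper: the paper obtains this statement as a corollary of Theorem~\ref{thm:2adOSO}, whose proof is exactly your simulation argument — an induction producing an injective homomorphism from the $\EI{\Preds}{\setR}$-restriction of any derivation from the decomposed KB into $\funCh{\X}{\Tuple{\setR,\FB}}$, followed by a cardinality bound covering the remaining $X_R$-atoms and fresh terms. Your direct treatment of the one-way case (including the fairness bookkeeping for redundant support triggers of Datalog rules) is just that argument specialised to $\oad$, which you yourself note.
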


An interesting phenomenon occurs with the one-way atomic decomposition: the notions of \SO-applicability on  \setR and \R-applicability 
on $\oadR$ coincide: 
%after we apply this normalisation procedure. That is:

\begin{lemma}\label{thm:1adReq1adSO}
Consider a KB $\langle \setR, \FB \rangle$ and a finite derivation $\der = (\emptyset, F_0), (t_1, F_1), \ldots, (t_n, F_n)$ from $\Tuple{\oadR, \FB}$.
Then, a trigger $t$ with a rule in $\oadR$ is \R-applicable on $F_n$ iff it is \SO-applicable on $F_n$.
\end{lemma}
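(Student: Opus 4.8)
My plan is to prove both implications of the stated equivalence by contraposition. The implication ``\R-applicable $\Rightarrow$ \SO-applicable'' (equivalently: not \SO-applicable implies not \R-applicable) will hold for an arbitrary rule set; the reverse implication (not \R-applicable implies not \SO-applicable) is where the specific shape of $\oadR$ and the hypothesis that $\der$ runs over $\oadR$ come into play.

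\emph{Not \SO-applicable $\Rightarrow$ not \R-applicable.} Let $t = \Tuple{R', \pi}$ with $R' \in \oadR$ be not \SO-applicable on $F_n$: there is a trigger $t' = \Tuple{R', \pi'}$ with $\pi'$ and $\pi$ coinciding on $\funfr{R'}$ and $\funout{t'} \subseteq F_n$. If $\funout{t} \subseteq F_n$, the identity is already a retraction from $F_n \cup \funout{t} = F_n$ onto $F_n$; otherwise $t$ differs from every applied trigger $t_i$, so the fresh variables $\pi^{R'}$ introduces for $t$ do not occur in $F_n$, and since $\pi$ and $\pi'$ agree on the frontier, $\funout{t}$ and $\funout{t'}$ differ only by the renaming of those fresh variables. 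Hence the map equal to the identity on $F_n$ and sending each fresh variable of $t$ to the corresponding one of $t'$ is a well-defined homomorphism from $F_n \cup \funout{t}$ into $F_n$ that fixes $\EI{\Vars}{F_n}$, i.e.\ a retraction; so $t$ is not \R-applicable.

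\emph{Not \R-applicable $\Rightarrow$ not \SO-applicable.} Let $t = \Tuple{R', \pi}$ be not \R-applicable on $F_n$, witnessed by a retraction $r \colon F_n \cup \funout{t} \to F_n$, and distinguish the two kinds of rules in $\oadR$. If $R' = X_R(\vx, \vz) \to P(\vt)$ is one of the Datalog ``output'' rules, then $\funout{t} = \ens{\pi(P(\vt))}$ has no fresh variable and all its terms already occur in $F_n$ (as $\pi$ maps into $F_n$), so $r$ fixes $\funout{t}$; since $r$ maps $F_n \cup \funout{t}$ into $F_n$ this gives $\funout{t} \subseteq F_n$, whence (taking $t' = t$ in the definition) $t$ is not \SO-applicable. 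If $R' = B \to \exists \vz . X_R(\vx, \vz)$ is the ``support'' rule of some $R \in \setR$, then $\funout{t}$ is the single atom $X_R(\pi(\vx), \vec w)$ with $\vec w$ fresh, and since $r$ fixes $\EI{\Vars}{F_n}$ (which contains $\EI{\Vars}{\pi(\vx)}$, as $\pi$ maps into $F_n$), the image of that atom under $r$ is $a := X_R(\pi(\vx), r(\vec w)) \in F_n$. Now I trace the provenance of $a$: the predicate $X_R$ is fresh, so it does not occur in $\FB = F_0$, and in $\oadR$ it occurs in the head of exactly one rule, namely $R'$; hence $a \in \funout{t_i}$ for some $i$, and the rule of $t_i$ must be $R'$. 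Writing $t_i = \Tuple{R', \pi_i}$, we have $\funout{t_i} = \ens{X_R(\pi_i(\vx), \vec{w_i})}$, and comparing with $a$ forces $\pi_i(\vx) = \pi(\vx)$, i.e.\ $\pi_i$ and $\pi$ agree on $\funfr{R'} = \vx$. Thus $t_i$ is a trigger with the same rule as $t$, agreeing with $t$ on the frontier, with $\funout{t_i} \subseteq F_n$; so $t$ is not \SO-applicable.

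I expect the provenance argument in the support-rule case to be the main point. It is the only place that uses both the shape of $\oadR$ and the hypothesis that $\der$ is a derivation over $\oadR$ starting from $\FB$, and it rests on three observations: $\FB$ contains no atom over the fresh predicate $X_R$; the predicate $X_R$ is generated by a unique rule; and a retraction must send the freshly created $X_R$-atom onto an $X_R$-atom already in $F_n$ that shares its frontier arguments, since those arguments belong to $F_n$ and are hence fixed. Everything else (the output-rule case, the first direction, and the freshness bookkeeping) is routine.
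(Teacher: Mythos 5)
Your proposal is correct and takes essentially the same route as the paper's proof: the key point in both is that $X_R$ occurs in the head of exactly one rule of \oadR, so any atom $X_R(\pi(\vx),\cdot)$ in $F_n$ (here, the image of $\funout{t}$ under the assumed retraction) must come from an earlier trigger with the same rule and the same frontier image, which is exactly the failure of \SO-applicability. You merely phrase both implications contrapositively and spell out the routine direction and the Datalog/output-rule case, which the paper dispatches by reference to the definitions.
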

\begin{proof}
$(\Rightarrow)$: from Definition~\ref{definition:applicability}. $(\Leftarrow)$: Let $t = (R, \pi)$ with $R \in \oadR$. 
If $R$ is Datalog, the notions of \SO- and \R-applicability coincide for every factbase. Otherwise, $R$ is of the form $B[\vx, \vy] \to \exists \vz . X_{R'}(\vx, \vz)$ where $X_{R'} \not \in \EI{\Preds}{\setR}$ and $R' \in \setR$ is of the form $B[\vx, \vy] \to \exists \vz . H[\vx, \vz]$.
If $t$ is \SO-applicable on $F_n$, then, for every trigger $t' = (R, \pi')$ with $\pi(\vx) = \pi'(\vx)$, it holds that $\funout{t'} \not\subseteq F_n$.
By Definition~\ref{def:1ad}, $R$ is the only rule in \oadR with $X_{R'}$ in its head, hence $\pi''(X_{R'}(\vx, \vz)) \notin F_n$ for every extension $\pi''$ of $\pi$. Hence, $t$ is \R-applicable on $F_n$.
\end{proof}

Intuitively, Lemma~\ref{thm:1adReq1adSO} implies that, after applying the one-way atomic decomposition, \R-applicability becomes as loose and unrestrictive as \SO-applicability. 
%After this realisation, 
%Then, it is not difficult to show that:
Therefore:

\begin{theorem}
The one-way atomic decomposition does not preserve termination nor sometimes-termination of the \R- or the \DF{\R}-chase.
\end{theorem}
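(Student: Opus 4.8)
The plan is to exhibit a single counterexample that refutes all four preservation claims at once, namely the rule set $\setR = \{P(x,y) \to \exists z . P(y,z) \wedge P(z,y)\}$ already used in the proof of Theorem~\ref{theorem:chase-terminating-sets-strict-subsets}. There it is shown that $\setR \in \chaseterm{\R}{\forall} \setminus \chaseterm{\SO}{\forall}$, and by Theorem~\ref{theorem:chase-terminating-sets-strict-subsets} we then also have $\setR \in \chaseterm{\DF{\R}}{\forall}$, so $\setR$ belongs to all of $\chaseterm{\R}{\forall}$, $\chaseterm{\R}{\exists}$, $\chaseterm{\DF{\R}}{\forall}$, $\chaseterm{\DF{\R}}{\exists}$. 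Its one-way atomic decomposition is $\oadR = \{\, P(x,y) \to \exists z . X(y,z),\ X(y,z) \to P(y,z),\ X(y,z) \to P(z,y)\,\}$, where $X$ is a fresh binary predicate (the frontier of the source rule being $\{y\}$ and its only existential variable being $z$). Since $\chaseterm{\R}{\forall} \subseteq \chaseterm{\R}{\exists}$ and $\chaseterm{\DF{\R}}{\forall} \subseteq \chaseterm{\DF{\R}}{\exists}$, it then suffices to prove $\oadR \notin \chaseterm{\R}{\exists}$ and $\oadR \notin \chaseterm{\DF{\R}}{\exists}$.

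First I would show that the \SO-chase diverges on $\KB = \Tuple{\oadR, \{P(a,b)\}}$. In the semi-oblivious regime the rule $P(x,y) \to \exists z . X(y,z)$ fires once for every value assigned to its frontier variable $y$, i.e. once for every second argument occurring in a $P$-atom, while the two Datalog rules turn each freshly created atom $X(t, z_t)$ into $P(t, z_t)$ and $P(z_t, t)$ — the first of these supplying a brand-new second argument $z_t$ on which the rule has not yet been triggered. Iterating produces the infinite pattern $P(a,b), X(b,z_1), P(b,z_1), P(z_1,b), X(z_1,z_2), \ldots$, so $\funCh{\SO}{\KB}$ is infinite. By Lemma~\ref{proposition:oblivious-same-result} all fair \SO-derivations from $\KB$ have isomorphic, hence infinite, results; in particular $\oadR \notin \chaseterm{\SO}{\exists}$, and then by Theorem~\ref{theorem:chase-terminating-sets-equality} also $\oadR \notin \chaseterm{\SO}{\forall}$. (This is exactly the usual phenomenon that the semi-oblivious chase ignores retractions, and the construction mirrors the divergence of the \SO-chase on $\Tuple{\setR, \{P(a,b)\}}$ recalled above.)

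The decisive step is to transfer this divergence to the restricted variants via Lemma~\ref{thm:1adReq1adSO}: along any (finite prefix of a) derivation from a KB whose rule set is $\oadR$, a trigger is \R-applicable exactly when it is \SO-applicable. Since all triggers occurring in such a derivation use rules of $\oadR$, and since fairness is defined purely through applicability, a derivation from $\KB$ is a fair \R-derivation iff it is a fair \SO-derivation; moreover any \DF{\R}-derivation from $\KB$, being an \R-derivation, is an \SO-derivation, and is fair iff it is fair as an \SO-derivation (the Datalog-priority condition refers only to rule shapes). As all fair \SO-derivations from $\KB$ are infinite, so are all fair \R- and all fair \DF{\R}-derivations from $\KB$, whence $\KB$ admits no terminating \R- or \DF{\R}-derivation and therefore $\oadR \notin \chaseterm{\R}{\exists}$ and $\oadR \notin \chaseterm{\DF{\R}}{\exists}$. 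Combined with $\setR \in \chaseterm{\R}{\forall} \cap \chaseterm{\R}{\exists} \cap \chaseterm{\DF{\R}}{\forall} \cap \chaseterm{\DF{\R}}{\exists}$, this establishes all four non-preservation statements.

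Given Lemma~\ref{thm:1adReq1adSO}, I do not expect any serious obstacle; the only point requiring genuine care is the divergence claim of the second paragraph, where one must verify that the asymmetry introduced by the one-way translation — the fresh predicate $X$ occurs in no rule body of $\oadR$ — is precisely what keeps the frontier-indexed \SO-applicability condition satisfied indefinitely, so that the chase never stabilises.
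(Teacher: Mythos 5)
Your proposal is correct and follows essentially the same route as the paper: both arguments hinge on exhibiting a rule set that lies in all four restricted-termination classes but outside $\chaseterm{\SO}{\forall}$, showing its one-way decomposition has a KB on which every fair \SO-derivation is infinite, and then transferring this to the \R- and \DF{\R}-chases via Lemma~\ref{thm:1adReq1adSO}. The only (harmless) difference is that you instantiate the argument with the concrete set $\{P(x,y) \to \exists z . P(y,z) \wedge P(z,y)\}$ and verify the \SO-divergence of its decomposition directly, whereas the paper argues abstractly from Theorems~\ref{theorem:chase-terminating-sets-strict-subsets} and \ref{thm:1adOSO}.
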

\begin{proof}
By Theorem~\ref{theorem:chase-terminating-sets-strict-subsets}, there is a rule set $\setR \notin \chaseterm{\SO}{\forall}$ that is in \chaseterm{\R}{\forall}, \chaseterm{\DF{\R}}{\forall}, \chaseterm{\DF{\R}}{\exists}, and \chaseterm{\R}{\exists}.\footnote{Such rule set in given in the proof of Theorem~\ref{theorem:chase-terminating-sets-strict-subsets}; see also $\{\eqref{rule:p-to-p-chain}\}$ in the proof of Theorem~\ref{thm:1adE}.  }
By Theorem~\ref{thm:1adOSO}, $\oadR$ is not in $\chaseterm{\SO}{\forall}$; that is, there is some KB $\KB$ of the form $\Tuple{\oadR, \FB}$ that does not admit any terminating \SO-derivation.
By Lemma~\ref{thm:1adReq1adSO}, every terminating \R-derivation from \KB is also a terminating \SO-derivation from $\KB$.
Therefore, \KB does not admit any terminating \R-derivation, hence \oadR is not in \chaseterm{\R}{\forall}, \chaseterm{\DF{\R}}{\forall}, \chaseterm{\DF{\R}}{\exists}, or \chaseterm{\R}{\exists}.
\end{proof}

\begin{theorem}\label{thm:1adE}
The one-way atomic decomposition does not preserve the termination of the \E-chase.
\end{theorem}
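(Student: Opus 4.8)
The plan is to exhibit a concrete rule set $\setR$ such that $\setR \in \chaseterm{\E}{\forall}$ — equivalently, every KB $\Tuple{\setR,\FB}$ admits a finite universal model — but $\oad(\setR)$ is not in $\chaseterm{\E}{\forall}$, i.e., some KB $\Tuple{\oad(\setR),\FB}$ has no finite universal model. The natural source of such an example is a rule set whose head contains two (or more) atoms that, taken together, force redundancy that a single fresh predicate $X_R$ cannot detect: when $H$ is split into $X_R(\vx,\vz)\to P(\cdot)$ and $X_R(\vx,\vz)\to P'(\cdot)$, the "bottleneck" atom $X_R(\vx,\vz)$ carries all of $\vx$ and $\vz$, so two distinct triggers that would collapse at the level of $H$ (because their $P$- and $P'$-images happen to coincide after identifying terms) now produce distinct $X_R$-atoms and never fold back. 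Concretely I would take a rule of the shape $P(x,y)\to \exists z.\,P(y,z)\wedge P(z,z)$ or a close variant (this is exactly the kind of rule that is $\chaseterm{\R}{\forall}\setminus\chaseterm{\SO}{\forall}$ in Theorem~\ref{theorem:chase-terminating-sets-strict-subsets}), possibly combined with a small "feedback" Datalog rule so that the original set has a finite universal model while the decomposed set does not.

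The argument then has two halves. For the positive direction, I would show directly that for every factbase $\FB$, the KB $\Tuple{\setR,\FB}$ has a finite universal model: either by exhibiting a fair $\E$-derivation (or $\R$-derivation, since $\chaseterm{\R}{\forall}\subseteq\chaseterm{\R}{\exists}\subseteq\chaseterm{\E}{\forall}$ by Theorem~\ref{theorem:chase-terminating-sets-strict-subsets}) that terminates — typically after one round of existential-rule firing the newly created atoms map homomorphically back onto already-existing structure, so no further trigger is $\R$-applicable — or by explicitly describing a small model and checking it is universal. For the negative direction, I would start the $\oad(\setR)$-chase from a singleton factbase (e.g. $\{P(a,b)\}$) and show that the $X_R$-atoms generated along any fair derivation form an infinite "chain" of pairwise non-homomorphically-equivalent structures: each application of $B\to\exists\vz.\,X_R(\vx,\vz)$ introduces a fresh $\vz$, the completion rules $X_R(\vx,\vz)\to P(\vt)$ then re-expose a $P$-atom on which a new trigger for $B$ fires, and crucially — because $X_R$ records the full frontier-and-existential tuple — the homomorphism that would have collapsed things in $\Tuple{\setR,\FB}$ does not exist in $\Tuple{\oad(\setR),\FB}$; hence no finite model of $\oad(\setR)$ is universal. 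This second step is where Lemma~\ref{thm:1adReq1adSO} (\R-applicability on $\oad(\setR)$ coincides with \SO-applicability on $\setR$) does real work: it already tells us the $\R$-chase diverges on $\oad(\setR)$ whenever the \SO-chase diverges on $\setR$, and I would lift this from "$\R$-chase diverges" to "no finite universal model" by observing that the infinite set of distinct $X_R$-atoms, each on a fresh existential term that is never identified, cannot be folded into any finite structure compatible with all the completion rules.

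The main obstacle I anticipate is the negative half: it is not enough that the $\R$- or $\SO$-chase fails to terminate (Lemma~\ref{thm:1adReq1adSO} hands us that cheaply), since $\chaseterm{\E}{\forall}$ is strictly larger than $\chaseterm{\R}{\exists}$ — a KB can admit a finite universal model even though its restricted chase diverges. So the real content is proving the \emph{absence of any} finite universal model for the decomposed KB. I would handle this by a compactness/pigeonhole argument on the $X_R$-predicate: in any model $\setM$ of $\oad(\setR)$ extending $\Tuple{\oad(\setR),\FB}$, trace the forced chain of $X_R$-atoms and show that a homomorphism from $\setM$ into a putative finite universal model would have to identify two points of this chain, which then — by running the completion rules and the body rule once more — produces a $P$-atom configuration that is \emph{not} present in $\Tuple{\oad(\setR),\FB}$'s consequences, contradicting universality (a universal model homomorphically maps into every model, including "thin" ones that contain no such extra atom). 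Getting the example small enough that this chain analysis is clean, while still genuinely lying in $\chaseterm{\E}{\forall}$ on the original side, is the delicate design choice; I would iterate on the two-atom head until both checks go through.
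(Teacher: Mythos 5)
Your proposal is correct and follows essentially the same route as the paper: the paper's witness is exactly the single rule $P(x,y)\to\exists z.\,P(y,z)\wedge P(z,y)$ (the rule you point to from Theorem~\ref{theorem:chase-terminating-sets-strict-subsets}), whose $\E$-chase terminates on every factbase because the symmetric two-atom head folds back, while from $\Tuple{\oad(\setR),\{P(a,b)\}}$ every fair $\E$-derivation produces an infinite chain of pairwise non-foldable $X$-atoms, so no finite universal model exists. Your identification of the key difficulty (absence of a finite universal model rather than mere \R-chase divergence) and your chain argument for the decomposed set match the paper's reasoning, which settles it via the equivalence between $\E$-chase termination and existence of a finite universal model.
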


\begin{proof}
Consider the rule set $\setR = \{\eqref{rule:p-to-p-chain}\}$ 
(see also Example \ref{example:preliminaries})
and its decomposition $\oad(\setR) = \{(\ref{rule:p-to-p-chain-1}\text{--}\ref{rule:p-to-p-chain-3})\}$:
%\begin{align}
%P(x, y) &\to \exists z\dotq P(y,z)\wedge P(z,y) \label{rule:p-to-p-chain} \\
%P(x, y) &\to \exists z .  X_\eqref{rule:p-to-p-chain} \label{rule:p-to-p-chain-1} \\
%X_\eqref{rule:p-to-p-chain}(x, y) &\to P(x, y) \label{rule:p-to-p-chain-2} \\
%X_\eqref{rule:p-to-p-chain}(x, y) &\to P(y, x) \label{rule:p-to-p-chain-3}
%\end{align}
\begin{align}
P(x, y) &\to \exists z\dotq P(y,z)\wedge P(z,y) \label{rule:p-to-p-chain} \\
P(x, y) &\to \exists z .  X_\eqref{rule:p-to-p-chain}(y,z) \label{rule:p-to-p-chain-1} \\
X_\eqref{rule:p-to-p-chain}(y, z) &\to P(y, z) \label{rule:p-to-p-chain-2} \\
X_\eqref{rule:p-to-p-chain}(y, z) &\to P(z, y) \label{rule:p-to-p-chain-3}
\end{align}

The rule set \setR is in \chaseterm{\E}{\forall} since every \E-derivation from a KB $\Tuple{\setR, \FB}$ yields a finite result, which is a subset of 
%$\FB \cup \{P(a, x_a), P(x_a, a) \mid a \in \EI{\Constants}{\FB}\}$.
$\FB \cup \{P(a, z_a), P(z_a, a) \mid a \in \EI{\Terms}{\FB}, z_a \not \in \EI{\Terms}{\FB} \}$. 

The rule set $\oad(\setR)$ is not in $\chaseterm{\E}{\forall}$ since the KB $\KB = \Tuple{\oad(\setR), \{ P(a, b)\} }$ has no terminating \E-derivation.
In fact, all 
 fair \E-derivations from \KB yield the same result:
%\begin{align*}
%\{&P(a, b), X_\eqref{rule:p-to-p-chain}(b, x_1), P(b, x_1), P(x_1, b)\} \cup~ \\
%\{&X_\eqref{rule:p-to-p-chain}(x_i, x_{i+1}), P(x_i, x_{i+1}), P(x_{i+1}, x_i) \mid i \geq 1\}
%\end{align*}
\begin{align*}
\{&P(a, b), X_\eqref{rule:p-to-p-chain}(b, z_1), P(b, z_1), P(z_1, b)\} \cup~ \\
\{&X_\eqref{rule:p-to-p-chain}(z_i, z_{i+1}), P(z_i, z_{i+1}), P(z_{i+1}, z_i) \mid i \geq 1\}
\end{align*}
%\vspace*{-\baselineskip}
\end{proof}

Again, to show that the one-way decomposition preserves FO-rewritability, we show that it preserves BDDP.
\begin{theorem}\label{thm:1adBDDP}
A rule set \setR is BDDP iff \oad(\setR) is BDDP.
\end{theorem}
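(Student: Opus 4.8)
The plan is to mimic the proof of Theorem~\ref{thm:sptBDDP}, but since $\oad(\setR)$ introduces the fresh predicates $X_R$, the breadth-first chase of $\oad(\setR)$ no longer coincides level by level with that of $\setR$: each application of $\setR(\cdot)$ is now simulated by two applications of $\oad(\setR)(\cdot)$, together with some ``pendant'' $X_R$-atoms. Concretely, I would first prove by induction on $i \geq 0$ that, for every factbase $F$ over $\EI{\Preds}{\setR}$, the factbase $\funCh{2i+1}{\Tuple{\oad(\setR), F}}$ equals, up to a renaming of nulls, the union of $\funCh{i}{\Tuple{\setR, F}}$ with one atom $X_R(\pi(\vx), \pi^R(\vz))$ for each trigger $t = \Tuple{R, \pi}$ (with $R = B \to \exists \vz . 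H \in \setR$) on $\funCh{i}{\Tuple{\setR, F}}$; moreover the existential arguments $\pi^R(\vz)$ of each such $X_R$-atom are pairwise distinct fresh nulls that, within $\funCh{2i+1}{\Tuple{\oad(\setR), F}}$, occur only in that very atom, and that become shared with the atoms of $\funout{t}$ exactly one layer later. The induction step is routine: firing the creation rules of $\oad(\setR)$ on a layer produces exactly these pendant $X_R$-atoms, firing the projection rules on the $X_R$-atoms of the previous layer reproduces the sets $\funout{t}$ --- i.e., one more application of $\setR(\cdot)$ --- and these two kinds of rule applications do not interfere, since creation-rule bodies are over $\EI{\Preds}{\setR}$ whereas projection-rule bodies mention only the $X_R$'s. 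In particular, the restriction of $\funCh{j}{\Tuple{\oad(\setR), F}}$ to $\EI{\Preds}{\setR}$ is isomorphic to $\funCh{\lfloor j/2 \rfloor}{\Tuple{\setR, F}}$.

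The \emph{if} direction is then immediate: it suffices to treat BCQs $Q$ over $\EI{\Preds}{\setR}$, since those are the only ones relevant for BDDP of $\setR$. If $k$ witnesses BDDP of $\oad(\setR)$ for $Q$, then for every $F$ we obtain, using that $\oad(\setR)$ is a conservative extension of $\setR$ (Definition~\ref{def:ce}) together with the restriction isomorphism above, $\Tuple{\setR, F} \models Q$ iff $\Tuple{\oad(\setR), F} \models Q$ iff $\funCh{k}{\Tuple{\oad(\setR), F}} \models Q$ iff $\funCh{\lfloor k/2 \rfloor}{\Tuple{\setR, F}} \models Q$, so $\lfloor k/2 \rfloor$ witnesses BDDP of $\setR$ for $Q$.

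For the \emph{only if} direction, fix a BCQ $Q$ over $\EI{\Preds}{\oad(\setR)}$. Since the atoms $X_R(\vs)$ occurring in the chase of $\oad(\setR)$ from $F$ are pendant atoms in bijection with the triggers of the rules $R \in \setR$ on the chase of $\setR$ from $F$, and their existential arguments are distinct fresh nulls shared only with the corresponding atoms of $\funout{t}$, any homomorphism from $Q$ into a sufficiently deep chase layer must map each $X_R$-atom of $Q$ onto such a pendant atom; this forces the existential-position arguments of that $Q$-atom to be pairwise distinct variables occurring in $Q$ only in positions compatible with the head $H$ of $R$. Discarding the finitely many structurally inconsistent cases, and in each remaining case replacing every atom $X_R(\vs)$ of $Q$ --- together with the finitely many head atoms of $Q$ attached to its existential block --- by the body $B$ of $R$ under the induced substitution, turns $Q$ into a fixed finite disjunction $Q''$ of BCQs over $\EI{\Preds}{\setR}$ such that $\Tuple{\oad(\setR), F} \models Q$ iff $\Tuple{\setR, F} \models Q''$ for every $F$. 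Combining this with the level-by-level correspondence above, if $k$ is a uniform derivation-depth bound of $\setR$ for the disjuncts of $Q''$, then a bound of the form $2k + c$ (with $c$ a small constant) works for $\oad(\setR)$ and $Q$.

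The main obstacle is precisely this last step: unlike piece-decomposition, one-way atomic decomposition adds fresh predicates, so BCQ satisfaction cannot simply be transported level by level, and one must verify that queries mentioning the auxiliary predicates $X_R$ can be rewritten uniformly into queries over $\EI{\Preds}{\setR}$ with only a bounded increase of chase depth. Once the structural description of $\funCh{j}{\Tuple{\oad(\setR), F}}$ is in place, the remaining arguments are bookkeeping analogous to Theorem~\ref{thm:sptBDDP}.
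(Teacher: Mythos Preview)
Your overall strategy coincides with the paper's: establish a two-to-one level correspondence between the breadth-first chases of $\oad(\setR)$ and $\setR$, and handle BCQs mentioning the fresh predicates $X_R$ by rewriting them into $\Sigma$-queries. The paper phrases the rewriting step more abstractly---observing that the creation rules $B \to \exists \vz.\,X_R(\vx,\vz)$ form an acyclic (hence FO-rewritable) rule set, which directly yields for every BCQ a finite set of $\Sigma$-rewritings---whereas you extract the rewriting by hand from the structural shape of the chase; both routes lead to the same place.

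There is, however, a genuine gap in your ``only if'' direction. BDDP of $\oad(\setR)$ quantifies over \emph{all} factbases, in particular factbases $F$ that already contain $X_R$-atoms with arbitrary terms in the existential argument positions. Your level correspondence and your ``pendant atoms with pairwise distinct fresh nulls'' description are stated only for $F$ over $\EI{\Preds}{\setR}$, and they both fail as soon as $F$ carries such $X_R$-atoms: the restriction of $\funCh{j}{\Tuple{\oad(\setR),F}}$ to $\Sigma$ is then \emph{not} isomorphic to $\funCh{\lfloor j/2 \rfloor}{\Tuple{\setR,F}}$, and your rewriting $Q\mapsto Q''$ is no longer sound (an $X_R$-atom of $Q$ may legitimately map into $F$, with no freshness constraint on its arguments). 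The paper addresses this explicitly: it isolates $F\setminus F_{\mid\Sigma}$, splits the match of $Q$ into the part landing there and the rest, and preprocesses $F$ by one step of $\oad(\setR)$ so that the remaining argument takes place over a $\Sigma$-factbase. Your proposal needs an analogous step; as written, the bound $2k+c$ is only justified for $F$ over $\EI{\Preds}{\setR}$.
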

\begin{proof}[Sketch]
%The intuition of the if direction is quite clear when considering only fact sets $\FB$ and queries on the original vocabulary. In this case, $\funchase{i}{\FB,\setR} = \funchase{2i}{\FB,\oadR}_{\mid \Sigma}$, where $\Sigma$ denotes the set of predicates occurring in $\FB$ or in $\setR$. 
$(\Rightarrow)$: For factbases $\FB$ restricted to the original vocabulary $\Sigma$, we prove that $\funCh{i}{\FB,\setR} =\funCh{2i}{\FB,\oadR}_{\mid \Sigma}$. Dealing with arbitrary factbases is tackled by a weakening of this correspondence.
%As for the only if direction, it is direct, as if 
$(\Leftarrow)$: If $\setR$ is not BDDP, there are $Q$ and $\{F_i\}_{i \in \mathbb{N}}$ such that for all $i$,
 $F_i, \setR \models Q$ and $\funCh{i}{F_i,\setR} \not \models Q$.
 Since for all $F_i$ on $\Sigma$, $\funCh{i}{F_i,\setR} = \funCh{2i}{F_i,1ad(\setR})_{\mid \Sigma}$, 
 it holds that
 $\funCh{2i}{F_i,1ad(\setR}) \not \models Q$, hence $\oadR$ is not BDDP. 
 \end{proof}

% !TEX root = ../main.tex
\section{Two-Way Atomic Decomposition}\label{sec:2ad}

Despite the fact that it produces a conservative extension of the original rule set, the one-way atomic decomposition does not preserve the existence of a finite universal model; hence, it does not preserve equivalent chase termination.

\begin{example}
    As in the proof of Theorem~\ref{thm:1adE}, consider $\setR = \{\eqref{rule:p-to-p-chain}\}$, its decomposition $\oad(\setR) = \{(\ref{rule:p-to-p-chain-1}\text{--}\ref{rule:p-to-p-chain-3})\}$, and the factbase $\FB=\ens{P(a, b)}$.
Then, $\setU=\ens{P(a, b), P(b, z_1), P(z_1, b)}$ is a finite universal model for $\Tuple{\setR,\FB}$ that cannot be extended (keeping the same domain) into is a universal model of $\Tuple{\oad(\setR),\FB}$. Indeed, the set \[\{P(a, b), P(b, z_1), P(z_1, b), X_\eqref{rule:p-to-p-chain}(b, z_1), X_\eqref{rule:p-to-p-chain}(z_1, b)\}\] is the 
    %only 
  smallest extension of \setU that is a model for $\Tuple{\oad(\setR),\FB}$, but it is not universal.
\end{example}

Hence, we define a notion similar to that of conservative extension, but whose purpose is to  guarantee the preservation of the equivalent chase termination.

\begin{definition}\label{def:uce}
    Let \setR and $\setR'$ be two rule sets such that $\EI{\Preds}{\setR} \subseteq \EI{\Preds}{\setR'}$. The set $\setR'$ is a \emph{universal-conservative extension} of the set \setR if, for any factbase $\FB$ with $\EI{\Preds}{\FB} \subseteq \EI{\Preds}{\setR}$,
    \begin{enumerate}
        \item The restriction of any universal model of $\langle\setR', F\rangle$ to the predicates in $\EI{\Preds}{\setR}$ is a universal model of $\langle\setR, F\rangle$.
        \item Any universal model \setM of $\langle\setR, F\rangle$ can be extended to a universal model of $\langle\setR', F\rangle$ that has the same domain and agrees with \setM on the interpretation of $\EI{\Preds}{\setR}$. 
    \end{enumerate}
\end{definition}

We now introduce a normalisation procedure that produces universal-conservative extensions: 

\begin{definition}\label{def:2ad}
    The \emph{two-way atomic decomposition} of a rule $R = B[\vx, \vy] \to \exists \vz . H[\vx, \vz]$ is the rule set $\tad(R) = \oad(R)\cup \ens{H[\vx, \vz]\to X_R(\vx, \vz)}$, with $X_R$ the fresh predicate in $\oad(R)$.
    For a rule set \setR, we let $\tad(\setR) = \bigcup_{R \in \setR} \tad(R)$.
\end{definition}
    
\begin{example}
        Consider again the rule $\eqref{rule:one-way-input}$ from Example \ref{example-1ad}. 
%        Then its two-way atomic decomposition is $\tad(\eqref{rule:one-way-input}) = \{(\ref{rule:one-way-output-1}\text{--}\ref{rule:one-way-output-3}), \eqref{rule:two-way-output}\}$:
%        \begin{align}
%        R(x, y) &\to \exists z . P(x, z) \wedge S(x, y, z) \tag{\ref{rule:one-way-input}} \\
%        R(x, y) &\to \exists z. X_\eqref{rule:one-way-input}(x, y, z) \tag{\ref{rule:one-way-output-1}} \\
%        X_\eqref{rule:one-way-input}(x, y, z) &\to P(x, z) \tag{\ref{rule:one-way-output-2}} \\
%        X_\eqref{rule:one-way-input}(x, y, z) &\to S(x, y, z) \tag{\ref{rule:one-way-output-3}}\\
%        P(x, z) \wedge S(x, y, z) &\to X_\eqref{rule:one-way-input}(x, y, z) \label{rule:two-way-output}
%    \end{align}
 Then its two-way atomic decomposition is 
 $\tad(\eqref{rule:one-way-input}) = \oad(\eqref{rule:one-way-input}) \cup \{ \eqref{rule:two-way-output}\}$:
  \begin{align}
  P(x, z) \wedge S(x, y, z) &\to X_\eqref{rule:one-way-input}(x, y, z) \label{rule:two-way-output}
   \end{align}

\end{example}

We establish that this new decomposition is indeed a normalisation procedure that has the desired property.

\begin{proposition}\label{prop:adce}
    The rule set \tadR is a conservative extension and a universal-conservative extension of \setR.
\end{proposition}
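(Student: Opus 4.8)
The plan is to verify the two conservative-extension conditions and the two universal-conservative-extension conditions for $\setR' = \tadR$ against $\setR$. Since $\tadR \supseteq \oadR$ and $\oadR$ is already known to be a conservative extension of $\setR$, I would first leverage that fact: the predicates in $\EI{\Preds}{\tadR}$ that are not in $\EI{\Preds}{\setR}$ are exactly the fresh predicates $X_R$, and the rules $\eqref{rule:two-way-output}$-type rules $H[\vx,\vz] \to X_R(\vx,\vz)$ do not affect predicates in $\EI{\Preds}{\setR}$. So for condition (1) of conservative extension (restricting a model of $\tadR$ to $\EI{\Preds}{\setR}$ gives a model of $\setR$), I would note that any model of $\tadR$ is a model of $\oadR$, and then invoke the known result for $\oadR$. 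For condition (2), given a model $\setM$ of $\setR$, I would extend it by defining, for each rule $R = B \to \exists\vz. H[\vx,\vz]$, the extension $X_R$ to hold on a tuple $\langle \vec{a},\vec{b}\rangle$ iff $\setM \models H[\vec{a},\vec{b}]$ (i.e., the natural "witness" interpretation). I would then check that this extended structure satisfies all of $\oad(R)$ and $\tad(R)$: the rule $H \to X_R$ holds by construction; the rules $X_R(\vx,\vz) \to P(\vt)$ hold because $X_R$ only fires on tuples where all of $H$ holds, in particular $P(\vt)$; and the rule $B \to \exists\vz. X_R(\vx,\vz)$ holds because $\setM$ satisfies $R$, so for any homomorphism of $B$ into $\setM$ there is a witnessing extension making $H$ true, which by construction makes $X_R$ true.

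For the universal-conservative-extension part, I would work with universal models rather than arbitrary ones, and the natural tool is the chase: a universal model of $\langle\setR, \FB\rangle$ can be taken to be $\funres{\der}$ for a fair $\Ob$-derivation (or any fair derivation of a monotonic variant). For condition (1), given a universal model $\setN$ of $\langle\tadR, \FB\rangle$, I would show its restriction $\setN|_{\EI{\Preds}{\setR}}$ is a universal model of $\langle\setR, \FB\rangle$: it is a model by the conservative-extension part, and for universality I would take an arbitrary model $\setM$ of $\langle\setR, \FB\rangle$, extend it (by the construction above) to a model $\setM'$ of $\tadR$ with the same domain, then use universality of $\setN$ to get a homomorphism $\setN \to \setM'$, and restrict it to $\EI{\Preds}{\setR}$ to get a homomorphism $\setN|_{\EI{\Preds}{\setR}} \to \setM'|_{\EI{\Preds}{\setR}} = \setM$. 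For condition (2), given a universal model $\setM$ of $\langle\setR, \FB\rangle$, I would extend it to $\setM'$ by the witness construction; $\setM'$ is a model of $\langle\tadR, \FB\rangle$ with the same domain, and I would argue it is universal by noting that for any model $\setN$ of $\langle\tadR, \FB\rangle$, its restriction $\setN|_{\EI{\Preds}{\setR}}$ is a model of $\langle\setR, \FB\rangle$, so there is a homomorphism $h: \setM \to \setN|_{\EI{\Preds}{\setR}}$; I then need $h$ to also respect the $X_R$ atoms of $\setM'$, i.e., if $\setM' \models X_R(\vec{a},\vec{b})$ then $\setN \models X_R(h(\vec{a}),h(\vec{b}))$.

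The main obstacle I anticipate is precisely that last point: showing the homomorphism $h$ extends to the fresh $X_R$-atoms. The subtlety is that $\setM' \models X_R(\vec{a},\vec{b})$ means $\setM \models H[\vec{a},\vec{b}]$, hence (via $h$) $\setN \models H[h(\vec{a}),h(\vec{b})]$; but then we need the rule $H[\vx,\vz] \to X_R(\vx,\vz)$ of $\tadR$ — which $\setN$ satisfies — to force $\setN \models X_R(h(\vec{a}),h(\vec{b}))$. This is exactly where the "two-way" direction (the extra rule absent from $\oadR$) is essential, and it is the crux of why $\tadR$ behaves better than $\oadR$. A minor technical care is needed about constants versus variables and the fact that the frontier variables $\vx$ may be mapped to constants, but the witness construction handles this uniformly since it is phrased in terms of entailment $\setM \models H[\vec{a},\vec{b}]$ over the domain of $\setM$. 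I would also double-check that the two extensions (the one used in condition (2) of plain conservative extension and the one here) coincide, so the same construction serves both, keeping the proof short.
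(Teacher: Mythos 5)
Your proposal is correct and follows essentially the same route as the paper's proof: a direct check of the conservative-extension conditions using the witness interpretation of $X_R$ (add $X_R(\vec{a},\vec{b})$ exactly when $\setM \models H[\vec{a},\vec{b}]$), and for universality the identical key step where the backward rule $H \to X_R$ in $\tadR$ forces the homomorphism to respect the fresh $X_R$-atoms. The brief detour through the chase is unnecessary (the paper, like the rest of your argument, works with arbitrary universal models), but it does not affect correctness.
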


We can now focus our interest again on chase termination. 
%Like the one-way atomic decomposition (Theorem~\ref{thm:sptOSO}), the two-way atomic decomposition preserves the termination of the oblivious and the semi-oblivious chase.
Both atomic decompositions behave like the single-piece decomposition (Theorem~\ref{thm:sptOSO}) regarding the oblivious and the semi-oblivious chase:

\begin{theorem}\label{thm:2adOSO} Both atomic decompositions preserve the termination of the \Ob-chase and the \SO-chase.
\end{theorem}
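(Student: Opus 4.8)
The plan is to reduce the statement to a claim about the canonical (semi-)oblivious chase results and then set up a precise correspondence between the chase of the decomposed rule set and that of the original one. Fix $\X\in\ens{\Ob,\SO}$. By Lemma~\ref{proposition:oblivious-same-result} and Definition~\ref{def:ch}, all fair $\X$-derivations from a fixed KB share the same result $\funCh{\X}{\cdot}$ (up to isomorphism), and since every derivation step strictly enlarges the fact set, this common result is finite precisely when every such derivation is. So, using also Theorem~\ref{theorem:chase-terminating-sets-equality}, it suffices to prove: for each of the two decompositions and every factbase $\FB$, finiteness of $\funCh{\X}{\Tuple{\setR,\FB}}$ implies finiteness of $\funCh{\X}{\Tuple{\oad(\setR),\FB}}$ and of $\funCh{\X}{\Tuple{\tad(\setR),\FB}}$. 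Writing $\Sigma=\EI{\Preds}{\setR}$, I split the atoms of the decomposed chase into those over $\Sigma$ and those over the fresh predicates $\ens{X_R\mid R\in\setR}$, and bound the two parts separately.

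For the $\Sigma$-part, I would build, by induction along a conveniently ordered fair $\X$-derivation of the decomposed rule set (legitimate since, by Lemma~\ref{proposition:oblivious-same-result}, the result does not depend on the order), an injective homomorphism from this part into $\funCh{\X}{\Tuple{\setR,\FB}}$. The key observation is that the only rules of $\oad(\setR)$ --- and of $\tad(\setR)$ --- that produce $\Sigma$-atoms are the ``second-stage'' rules $X_R(\vx,\vz)\to P(\vt)$, and every $X_R$-atom on which such a rule fires arose either (a) from the matching ``first-stage'' rule $B\to\exists\vz.\,X_R(\vx,\vz)$ applied via some $\pi:B\to F$, so that firing its second-stage rules (re)produces exactly $\funout{\Tuple{R,\pi}}$, and the homomorphism sends the null created for each $z\in\vz$ to the null created by the original trigger $\Tuple{R,\pi}$ in the $\setR$-chase; or (b), only for the two-way decomposition, from the backward rule $H[\vx,\vz]\to X_R(\vx,\vz)$ applied via some $\sigma:H\to F$, in which case its second-stage rules merely re-derive atoms of $\sigma(H)$ that are already present, adding no new $\Sigma$-atom. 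Along the way one checks that $\X$-applicability of the decomposed triggers matches that of the original ones --- atom by atom for $\Ob$, batch by batch for $\SO$, using that $R$ and its first-stage rule share the frontier $\vx$ --- and handles separately the case of a Datalog rule $R$, where the first-stage rule degenerates to $B\to X_R(\vx)$. (For the one-way decomposition the same argument applies with case (b) and the backward rules simply absent.) Hence the $\Sigma$-part of each decomposed chase is finite whenever $\funCh{\X}{\Tuple{\setR,\FB}}$ is.

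It remains to bound the fresh-predicate part. Every $X_R$-atom of the decomposed chase is the output of a trigger whose rule is a first-stage rule $B\to\exists\vz.\,X_R(\vx,\vz)$ or --- in the two-way case only --- a backward rule $H[\vx,\vz]\to X_R(\vx,\vz)$, and the bodies of both rules are over $\Sigma$; so the number of $X_R$-atoms is at most the number of homomorphisms from $B$, resp.\ from $H$, into the ($\Sigma$-part of the) decomposed chase, which is finite once that part is. Combining the two bounds yields finiteness of $\funCh{\X}{\Tuple{\oad(\setR),\FB}}$ and of $\funCh{\X}{\Tuple{\tad(\setR),\FB}}$, proving the theorem (and, as noted before its statement, Theorem~\ref{thm:1adOSO}). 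I expect the main obstacle to be the correspondence of the second paragraph: one must verify that the bookkeeping atoms $X_R$ --- especially the extra ones introduced by the backward rules of the two-way decomposition --- never make a chase step derive a genuinely new $\Sigma$-atom, and that the finer, per-atom structure of the decomposed chase remains compatible with the per-batch structure of the semi-oblivious chase of $\setR$.
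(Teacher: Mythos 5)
Your proposal is correct and follows essentially the same route as the paper: an induction along a fair derivation of the decomposed rule set building an injective homomorphism from its $\Sigma$-restriction into $\funCh{\X}{\Tuple{\setR,\FB}}$, with the key case analysis showing that second-stage rules fired on $X_R$-atoms coming from first-stage triggers replay original triggers, while those coming from backward rules yield nothing new, followed by a cardinality argument. The only cosmetic differences are that the paper reduces the one-way case to the two-way case via $\oad(\setR)\subseteq\tad(\setR)$ and monotonicity of (semi-)oblivious applicability, and states the final finiteness step as the same cardinality argument as for the single-piece decomposition rather than your explicit count of fresh-predicate atoms.
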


\begin{proof}[Sketch]
    Consider $\X\in\ens{\Ob,\SO}$ and $\KB=\Tuple{\setR,\FB}$ a KB. First note that for these \X-chases, applying a rule cannot prevent the application of another rule. Hence, since $\oadR\subseteq\tadR$, it is sufficient to prove the result for $\tadR$.
    The proof is similar to that of Theorem~\ref{thm:sptOSO}:
    we show by induction on an arbitrary derivation \der from \KB that there is an injective homomorphism from $\funCh{\X}{\langle \tadR, F \rangle}$ restricted to the predicates in $\EntitiesIn{\Preds}{\setR}$ to $\funCh{\X}{\KB}$, which leads to a similar conclusion. 
\end{proof}

%The restricted chase again behaves differently from other chase variants, as we will see in the following couple results.
The behavior of the restricted chase is again less easily characterized, as we will see in the next results. 

\begin{theorem}\label{thm:2adExR}
    The two-way atomic decomposition preserves sometimes-termination of the \R-chase; it may also gain termination of this chase variant.
\end{theorem}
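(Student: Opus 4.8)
The plan is to prove the two claims separately. For the first claim (preservation of sometimes-termination), I would take a rule set $\setR \in \chaseterm{\R}{\exists}$ and a factbase $\FB$, and show how to simulate a terminating $\R$-derivation $\der$ from $\Tuple{\setR, \FB}$ by a terminating $\R$-derivation $\der'$ from $\Tuple{\tadR, \FB}$. The key mechanism is this: whenever $\der$ applies a trigger $t = \Tuple{R, \pi}$ with $R = B \to \exists \vz . H$, the simulation $\der'$ first applies the trigger for $B \to \exists \vz . X_R(\vx, \vz)$ (if still $\R$-applicable) to create the atom $X_R(\pi(\vx), \vz_t)$, and then applies all the rules $X_R(\vx, \vz) \to P(\vt)$ to produce exactly $\funout{t}$ on the original predicates. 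The crucial difference with the one-way case (Theorem~\ref{thm:sptExR}) is the presence of the backward rule $H[\vx, \vz] \to X_R(\vx, \vz)$: this ensures that whenever $\der$ decides a trigger $t$ is \emph{not} $\R$-applicable because $\funout{t}$ retracts into the current factbase, the corresponding $X_R$-atom also retracts (it is generated by the backward rule applied to the image of $H$), so the $X_R$-generating trigger is likewise not $\R$-applicable in $\der'$. Hence the ``spurious'' $X_R$-atoms that broke termination in the one-way decomposition do not appear here.

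More precisely, I would prove by induction on the length of $\der$ that after simulating the first $i$ steps of $\der$, the factbase $F'_{j(i)}$ reached in $\der'$ satisfies: its restriction to $\EI{\Preds}{\setR}$ retracts onto $F_i$ (the $i$-th factbase of $\der$), and every $X_R$-atom present is ``justified'' in the sense that its tuple of arguments is the image of the frontier-plus-existentials of some applied trigger whose output atoms are all present. Combining this invariant with the backward rules, one argues that if $\der$ terminates (reaches a factbase on which no trigger is $\R$-applicable), then the simulation $\der'$ also reaches a factbase on which no trigger with a rule of $\tadR$ is $\R$-applicable: triggers with forward $X_R$-rules are blocked because the full head $H$ is already satisfied (so the backward rule would reproduce the $X_R$-atom), triggers with the atom-extraction rules $X_R(\vx,\vz) \to P(\vt)$ are blocked because $\der'$ applied them exhaustively, and triggers with backward rules $H \to X_R$ are blocked because the corresponding $X_R$-atom is already present. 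Finally one checks $\der'$ is finite, since each of the finitely many steps of $\der$ is simulated by a bounded number of steps.

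For the second claim (``may also gain termination''), it suffices to exhibit one concrete rule set. The natural candidate is a variant of the set $\setR = \{P(x,y) \to \exists z . P(x,z) \wedge R(x,y)\}$ used in Theorem~\ref{thm:sp-improves} --- or some similarly small set --- that is not in $\chaseterm{\R}{\exists}$ but whose two-way atomic decomposition is: the backward rule $P(x,z) \wedge R(x,y) \to X(x,y,z)$ lets the chase recognise, via a retraction, that a freshly created $X$-atom already collapses into existing data, blocking the rule that generated the infinite $P$-chain. I would present such a set, give the non-terminating $\R$-derivation for $\setR$ (showing $\setR \notin \chaseterm{\R}{\exists}$, using that $\R$-applicability is insensitive to which $\R$-derivation is chosen for a single-rule head-recursive set), and then exhibit a terminating $\R$-derivation for $\tadR$ on an arbitrary factbase, relying on the retraction that maps the new existential argument back onto the frontier term.

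The main obstacle I anticipate is making the simulation argument for the first claim fully rigorous: one must carefully handle the interleaving of forward, extraction, and backward rules in $\der'$, and in particular argue that the backward rules never create an $X_R$-atom that enables a further forward application producing genuinely new original-vocabulary atoms not present in $\funres{\der}$. The universal-conservative-extension property (Proposition~\ref{prop:adce}) should be the right tool to control this --- it guarantees that a finite universal model of $\Tuple{\setR, \FB}$ lifts to one of $\Tuple{\tadR, \FB}$ with the same domain --- but translating that model-theoretic statement into a statement about the existence of a \emph{terminating derivation} (which additionally requires fairness and the $\R$-applicability bookkeeping) is the delicate part.
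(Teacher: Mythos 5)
Your first claim follows essentially the paper's route: replicate the given terminating $\R$-derivation from $\Tuple{\setR,\FB}$ step by step in $\Tuple{\tadR,\FB}$ by firing $B\to\exists\vz\,X_R(\vx,\vz)$ and then the extraction rules, maintaining the invariant that every $X_R$-atom present has all its corresponding head atoms present, so that applicability (and non-applicability) of the simulated trigger transfers. Two bookkeeping points where your plan is looser than the actual proof: the forward trigger must be \emph{proved} applicable (your ``if still $\R$-applicable'' hedge hides the crux — if it were blocked, the justified-$X_R$ invariant plus restriction to $\EI{\Preds}{\setR}$ would yield a retraction contradicting applicability of the original trigger), and your fairness argument tacitly assumes the backward rules have been applied (``the corresponding $X_R$-atom is already present''), so the simulation must explicitly close under the rules $H\to X_R$ — the paper does this after every simulated step, keeping the $\Sigma$-restriction \emph{equal} to the original factbase and then invoking the conservative-extension property to conclude the final result is a model, hence the derivation is fair.

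The genuine gap is in the second claim. Your candidate $\setR=\{P(x,y)\to\exists z\,.\,P(x,z)\wedge R(x,y)\}$ does not work: its two-way decomposition is \emph{not} in $\chaseterm{\R}{\exists}$. The forward rule $P(x,y)\to\exists z\,.\,X(x,y,z)$ applied to $P(a,c_1)$ (where $c_1$ is the null from the previous round) produces $X(a,c_1,c_2)$, and this atom cannot retract: a retraction would need some $X(a,c_1,u)$ already present, which only the backward rule $P(x,z)\wedge R(x,y)\to X(x,y,z)$ could create, but that requires $R(a,c_1)$, which itself can only be produced by the extraction rule from some $X(a,c_1,\cdot)$ atom — a circularity. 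So the blocking mechanism you describe (``the freshly created $X$-atom already collapses into existing data'') never kicks in: every fair $\R$-derivation from $\Tuple{\tadR,\{P(a,b)\}}$ is forced to build the infinite chain $P(a,c_1),P(a,c_2),\ldots$, exactly as for $\setR$ itself. Finding a working witness is the real content of this half of the theorem: the paper uses the five-rule set $\{(\ref{rule:2ad-nst-r-1}\text{--}\ref{rule:2ad-nst-r-5})\}$ (a ternary variant of the set from Theorem~\ref{theorem:chase-terminating-sets-strict-subsets}), where the bundled head of \eqref{rule:2ad-nst-r-1} destroys all terminating orders for $\setR$, while the decomposition re-enables a terminating strategy by letting the two head atoms be produced separately. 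Note also that, by Proposition~\ref{prop:adNonTerm}, only \emph{sometimes}-termination can be gained — the decomposed set still admits infinite fair derivations — which further constrains the shape of any valid example.
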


\begin{proof}[Sketch]
    To prove preservation, consider a KB $\KB=\Tuple{\setR,\FB}$ such that $\setR \in \chaseterm{\R}{\exists}$. Then, there is a terminating \R-derivation \der from \KB. We can then show by induction that if a trigger $t=(R, \pi)$ with $R=B\to H$ is applied at some step, the trigger $t'=(B\to X_R, \pi)$
  is  applicable at the same step, then the triggers $t_i=(X_R\to H_i, \pi^R)$ also are, and that applying $t'$ and all the $t_i$ successively yields the same result (when restricted to the predicates in $\setR$) as applying $t$. This shows that we can replicate a terminating derivation, and thus that the sometimes-termination is preserved.

    We now present an example where we gain termination. Consider the rule set $\setR = \ens{(\ref{rule:2ad-nst-r-1}\text{--}\ref{rule:2ad-nst-r-5})}$:
    \begin{align}
        A(x) \to~ &\exists y,z ~R(x,x,x)\wedge R(x,y,z) \label{rule:2ad-nst-r-1} \\
        R(x,y,z) &\to \exists t. R(x,x,t) \label{rule:2ad-nst-r-2} \\
        R(x,x,y) &\to \exists z. S(x,y,z) \label{rule:2ad-nst-r-3} \\
        R(x,x,y)\wedge S(x,y,z) &\to S(x,x,x) \label{rule:2ad-nst-r-4} \\
        A(x) \wedge S(x,x,y) &\to A(y) \label{rule:2ad-nst-r-5}
    \end{align}
    There is no terminating \R-derivation on the KB $\Tuple{\setR,\ens{A(a)}}$ but there is one on $\Tuple{\tadR,\FB}$ for any \FB.
    \qedhere    
\end{proof}

\begin{theorem}\label{thm:2adR}
    The two-way atomic decomposition does not preserve the termination of the \R-chase.
\end{theorem}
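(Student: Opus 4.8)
The plan is to exhibit a rule set $\setR$ with $\setR \in \chaseterm{\R}{\forall}$ but $\tadR \notin \chaseterm{\R}{\forall}$. I would take $\setR = \{\Rule\}$ with $\Rule = P(x,y) \to \exists z . P(y,z) \wedge P(z,y)$: Theorem~\ref{theorem:chase-terminating-sets-strict-subsets} already gives $\setR \in \chaseterm{\R}{\forall}$ and, crucially, $\setR \notin \chaseterm{\SO}{\forall}$. Writing $X$ for the fresh binary predicate $X_\Rule$, the set $\tadR$ consists of the trigger rule $P(x,y) \to \exists z . X(y,z)$, the two expansion rules $X(y,z) \to P(y,z)$ and $X(y,z) \to P(z,y)$, and the backward rule $P(y,z) \wedge P(z,y) \to X(y,z)$. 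The aim is to build an infinite fair $\R$-derivation from $\Tuple{\tadR, \{P(a,b)\}}$.

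First I would record the observation that drives the whole argument: since $X$ occurs only in these four rules, a trigger with the trigger rule $P(x,y) \to \exists z . X(y,z)$ whose homomorphism sends $y$ to $u$ is $\R$-applicable on a factbase $F$ iff $F$ contains no atom $X(u, \cdot)$. Thus $\R$-applicability of the trigger rule mimics $\SO$-applicability of $\Rule$ (as in Lemma~\ref{thm:1adReq1adSO} for the one-way case): whenever a $P$-atom $P(u,v)$ with a ``new'' second coordinate $v$ appears, the trigger rule still fires on it, creating $X(v, v')$ with a fresh null $v'$, from which the expansion rules produce $P(v,v')$ and $P(v',v)$, and the process restarts at $v'$. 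This is precisely the runaway behaviour of the $\SO$-chase of $\setR$ on $\{P(a,b)\}$, which is why the one-way decomposition already fails to preserve $\R$-termination. The only thing that could break the argument is the backward rule, which can prematurely create an $X(v, \cdot)$-atom and thereby block the chain-advancing trigger.

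Next I would spell out the derivation. Its ``spine'' $X(b,z_1), P(b,z_1), X(z_1,z_2), P(z_1,z_2), X(z_2,z_3), \ldots$ is generated by alternating the trigger rule on the most recent spine $P$-atom with the first expansion rule; the key scheduling point is that, right after $X(z_{i-1},z_i)$ and $P(z_{i-1},z_i)$ have been created, the chain-advancing trigger (the trigger rule on $P(z_{i-1},z_i)$, producing $X(z_i, z_{i+1})$ with a fresh null) must be applied \emph{before} the backward rule can fire on the pair $(z_i, z_{i-1})$ --- which is only possible once $P(z_i, z_{i-1})$ exists, i.e. after the second expansion rule has been applied to $X(z_{i-1},z_i)$, an application I would postpone. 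I would then interleave, round-robin to keep things fair, all the remaining triggers (the second expansion rule on each $X(z_{i-1},z_i)$, giving $P(z_i,z_{i-1})$; the backward rule, giving $X(z_i,z_{i-1})$ and the already-present $X(z_{i-1},z_i)$; and whatever they spawn), checking that every atom so added is harmless: each $X(z_i, z_{i-1})$ appears only after the level-$i$ chain-advancing trigger has fired, so it cannot block it, and any trigger it could otherwise spawn --- e.g. the trigger rule on $P(z_i,z_{i-1})$, which would target $X(z_{i-1}, \cdot)$ --- is already blocked by the presence of $X(z_{i-1}, z_i)$. Concluding that this derivation is fair, infinite, and a legitimate $\R$-derivation yields $\tadR \notin \chaseterm{\R}{\forall}$, hence the theorem.

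I expect the main obstacle to be exactly this fairness/scheduling bookkeeping: producing one derivation that simultaneously keeps the spine growing, eventually resolves every $\R$-applicable trigger (in particular every backward-rule trigger), and never lets a secondary trigger create an $X$-atom that retroactively blocks a chain-advancing trigger. Making this rigorous comes down to showing that only finitely many triggers are pending at each stage, that each is handled within boundedly many steps, and that the ordering constraint ``advance level $i$ before the backward rule touches it'' is always satisfiable --- routine in spirit, but this is where all the care lies. A tidier packaging would instead invoke Lemma~\ref{thm:1adReq1adSO} together with $\setR \notin \chaseterm{\SO}{\forall}$ and Theorem~\ref{thm:1adOSO} to obtain an infinite fair $\R$-derivation for $\oadR$, and then argue that the single extra backward rule of $\tadR$ can be woven into it without restoring termination; but the same subtlety around the backward rule must still be resolved.
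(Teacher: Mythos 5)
Your proposal is correct and essentially coincides with the paper's own proof: it uses the same rule set $\{P(x,y) \to \exists z\,.\,P(y,z)\wedge P(z,y)\}$, the same factbase $\{P(a,b)\}$, and the same infinite fair \R-derivation in which the chain-advancing trigger of $P(x,y)\to\exists z\,.\,X_R(y,z)$ at each level is applied before the backward rule (and the second expansion rule) catch up on the previous level. The only difference is presentational: you make explicit the $\SO$-like applicability of the trigger rule and the fairness bookkeeping, which the paper's sketch leaves implicit.
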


\begin{proof}
    Consider the rule set $\setR = \{\eqref{rule:p-to-p-chain}\}$ introduced in the proof of Theorem~\ref{thm:1adE} and
    $\tad(\setR) = \{(\ref{rule:p-to-p-chain-1}\text{--}\ref{rule:p-to-p-chain-3}),\eqref{rule:p-to-p-chain-back}\}$:
    \begin{align}
        P(x, y) &\to \exists z\dotq P(y,z)\wedge P(z,y)  \tag{\ref{rule:p-to-p-chain}} \\
        P(x, y) &\to \exists z .  X_\eqref{rule:p-to-p-chain}(y,z)  \tag{\ref{rule:p-to-p-chain-1}} \\
        X_\eqref{rule:p-to-p-chain}(y, z) &\to P(y, z)  \tag{\ref{rule:p-to-p-chain-2}} \\
        X_\eqref{rule:p-to-p-chain}(y, z) &\to P(z, y)  \tag{\ref{rule:p-to-p-chain-3}} \\
        P(y,z)\wedge P(z,y) &\to X_\eqref{rule:p-to-p-chain}(y, z) \label{rule:p-to-p-chain-back}
    \end{align}
    The \R-chase yields the same result as the \E-chase  on \setR, so $\setR\in\chaseterm{\R}{\forall}$. We then construct an infinite derivation from $\Tuple{\tad,P(a,b)}$. First, apply \eqref{rule:p-to-p-chain-1}, and \eqref{rule:p-to-p-chain-2}. Then, repeat the following pattern: \eqref{rule:p-to-p-chain-1}, \eqref{rule:p-to-p-chain-2} (on the new variable), then \eqref{rule:p-to-p-chain-3} and \eqref{rule:p-to-p-chain-back} (on the variables of the previous loop). Applying \eqref{rule:p-to-p-chain-1} again before applying \eqref{rule:p-to-p-chain-back} yields an infinite chain. 
    % It yields the following result:
    % \begin{align*}
    %     \{P(a, b), X_\eqref{rule:p-to-p-chain}(b, z_1), &P(b, z_1), P(z_1, b)\} \cup~ \\
    %     \{X_\eqref{rule:p-to-p-chain}(z_i, z_{i+1}), X_\eqref{rule:p-to-p-chain}&(z_{i+1}, z_i),\\ &P(z_i, z_{i+1}), P(z_{i+1}, z_i)\mid i \geq 1\}
    % \end{align*}
    % which is infinite, so $\tadR\notin\chaseterm{\R}{\forall}$.
\end{proof}

Again, the \R-chase is not well-behaved with respect to atomic decomposition. However, the \DF{\R}-chase behaves exactly as desired regarding the two-way atomic decomposition. In fact, we can show an even stronger result: any \DF{\R}-derivation from a KB \Tuple{\setR, \FB} can be replicated by a \DF{\R}-derivation from \Tuple{\tad(\setR), \FB}, and conversely.
\begin{theorem}\label{thm:2adDfR}
The 2-way atomic decomposition has no impact on the (sometimes-)termination of the \DF{\R}-chase; i.e., $\chaseterm{\DF{\R}}{\forall}=\tad(\chaseterm{\DF{\R}}{\forall})$ and $\chaseterm{\DF{\R}}{\exists}=\tad(\chaseterm{\DF{\R}}{\exists})$.
\end{theorem}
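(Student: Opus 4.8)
The plan is to prove a tight two-way simulation between fair $\DF{\R}$-derivations from $\Tuple{\setR,\FB}$ and from $\Tuple{\tad(\setR),\FB}$ that preserves finiteness and fairness. Since, for a rule set, $\DF{\R}$-termination (resp.\ sometimes-termination) means that \emph{every} (resp.\ \emph{some}) fair $\DF{\R}$-derivation from every KB on it is finite, such a simulation yields both equalities of the statement, read as: $\tad(\setR)$ is $\DF{\R}$-(sometimes-)terminating iff $\setR$ is. Recall from Definition~\ref{def:2ad} that for $R = B[\vx,\vy]\to\exists\vz.H[\vx,\vz]$ the set $\tad(R)$ consists of the ``forward'' rule $B\to\exists\vz.X_R(\vx,\vz)$, which is non-Datalog unless $\vz=\emptyset$, together with the ``projection'' Datalog rules $X_R(\vx,\vz)\to P(\vt)$ for each atom $P(\vt)$ of $H$, and the ``backward'' Datalog rule $H[\vx,\vz]\to X_R(\vx,\vz)$.

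The bookkeeping device is the operator $\mathrm{sat}$ that maps a factbase $\FB$ over $\EI{\Preds}{\setR}$ to $\FB$ together with every atom $X_R(h(\vx),h(\vz))$ such that $R \in \setR$ and $h$ is a homomorphism from the head of $R$ into $\FB$ --- that is, exactly the $X$-atoms forced by the backward rules. The simulation maintains the invariant that just before every non-Datalog step, and at the end of any fair finite derivation, the factbase on the $\tad(\setR)$ side equals $\mathrm{sat}(\FB)$, where $\FB$ is its restriction to $\EI{\Preds}{\setR}$, namely the $\setR$-side factbase. Two facts make this invariant self-propagating. First, $\mathrm{sat}(\FB)$ satisfies all Datalog rules of $\tad(\setR)$ exactly when $\FB$ satisfies all Datalog rules of $\setR$: the projection and backward rules hold by construction of $\mathrm{sat}$, and each Datalog rule of $\setR$ is faithfully encoded by the two-hop path through its fresh predicate. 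Second, writing out the retraction condition, for $R \in \setR$ non-Datalog a trigger $(R,\pi)$ is $\R$-applicable on $\FB$ iff no homomorphism from $H$ into $\FB$ agrees with $\pi$ on the frontier, iff $\mathrm{sat}(\FB)$ contains no atom $X_R(\pi(\vx),\cdot)$, iff the forward trigger with the same body homomorphism $\pi$ is $\R$-applicable on $\mathrm{sat}(\FB)$; here it is crucial that the backward rule is Datalog, so by Datalog-first priority it has already blocked the forward rule precisely when the head admits such a homomorphic image.

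Given these two facts, the forward direction simulates one step of an $\setR$-side derivation by a finite block on the $\tad(\setR)$ side: fire the forward trigger (or, for a Datalog rule of $\setR$, the matching $B\to X_R$ trigger), then its projection triggers, recreating an isomorphic copy of $\pi^R(H)$ on $\EI{\Preds}{\setR}$, then fire the newly enabled backward and $\setR$-Datalog triggers to reach Datalog closure and restore the invariant. Conversely, from an arbitrary fair $\DF{\R}$-derivation on the $\tad(\setR)$ side one reads off its non-Datalog steps together with the net effect of its Datalog-closure phases on $\EI{\Preds}{\setR}$; by the two facts above this is a fair $\DF{\R}$-derivation on the $\setR$ side, and every applicability and priority check matches. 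Finiteness is preserved in both directions: each block is finite; and a spurious infinite $\tad(\setR)$-side derivation over a finite $\setR$-side one is impossible, because every fresh variable produced by a forward step reappears in an $\EI{\Preds}{\setR}$-atom during the ensuing Datalog closure (via a projection rule), so once the $\setR$-side stabilises the term set --- hence the set of admissible $X$-atoms --- is finite. Fairness transfers by the same block-wise correspondence.

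I expect the main obstacle to be the precise accounting at the interface between the Datalog-first discipline and restricted applicability: one must verify that a full Datalog-closure phase on the $\tad(\setR)$ side --- projection rules feeding $\EI{\Preds}{\setR}$-atoms, which feed the backward and $\setR$-Datalog rules, cascading until closure --- lands on exactly the $\mathrm{sat}$-image of the corresponding $\setR$-Datalog closure, so that the two-hop encoding of $\setR$'s Datalog rules neither loses nor creates derivations, and that this is consistent with the retraction-based definition of $\R$-applicability, which forces one to reason with homomorphic images of rule heads rather than with plain atom containment. A secondary point is scheduling the backward- and projection-rule steps in an infinite simulated derivation so that every trigger, including those over the fresh predicates, is eventually handled, so that fairness is genuinely preserved.
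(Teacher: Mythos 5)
Your proposal is correct and follows essentially the same route as the paper's proof: decompose Datalog-first derivations into non-Datalog steps followed by Datalog closures, maintain an invariant identifying the $\tad(\setR)$-side closed factbase with the $\setR$-side one (your $\mathrm{sat}$ operator packages exactly what the paper's isomorphism-plus-Datalog-closedness argument uses, namely that after closure the $X_R$-atoms witness precisely the homomorphic images of heads, thanks to the backward rule), and derive the equivalence of $\R$-applicability of $R$ and of its forward rule, then transfer finiteness and fairness block by block in both directions. This matches the paper's Lemma on step-wise simulation and its induction over existential triggers, so no further comparison is needed.
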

\begin{proof}[Sketch]
    One can prove that any fair \DF{\R}-derivation \der from \Tuple{\setR,\FB} with $\setR\in\chaseterm{\DF{\R}}{\forall}$ can be replicated to yield a fair \DF{\R}-derivation $\der'$ from \Tuple{\tad(\setR),\FB} such that $\der'$ is finite if and only if \der is. The reciprocal is also true.
\end{proof}

The following result follows from Proposition~\ref{prop:adce}\begin{theorem}:\label{thm:2adE}
    The two-way atomic decomposition preserves the termination of the \E-chase.
\end{theorem}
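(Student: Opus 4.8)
The plan is to reduce the statement to the standard criterion for equivalent-chase termination: for any KB \K, all fair \E-derivations from \K are finite if and only if \K admits a finite universal model \cite{Rocher2016}. Consequently, $\setR\in\chaseterm{\E}{\forall}$ holds exactly when every KB $\Tuple{\setR,\FB}$ has a finite universal model, and likewise for $\tad(\setR)$. So proving the theorem amounts to turning a finite universal model on the \setR-side into a finite universal model on the $\tad(\setR)$-side, and this transfer is precisely what Proposition~\ref{prop:adce} supplies through the notion of universal-conservative extension.

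Concretely, I would fix $\setR\in\chaseterm{\E}{\forall}$ and an arbitrary factbase \FB. First a small reduction: any atom of \FB whose predicate lies outside $\EI{\Preds}{\setR}\cup\{X_R\mid R\in\setR\}$ is inert for $\tad(\setR)$ (no rule of $\tad(\setR)$ can produce it or fire on it), so it merely adds a fixed finite set of atoms to every model and can be ignored when deciding \E-termination; hence we may assume $\EI{\Preds}{\FB}\subseteq\EI{\Preds}{\setR}$. Now $\setR\in\chaseterm{\E}{\forall}$ yields a finite universal model \setM of $\Tuple{\setR,\FB}$. By Proposition~\ref{prop:adce}, $\tad(\setR)$ is a universal-conservative extension of \setR, so clause~2 of Definition~\ref{def:uce} extends \setM to a universal model $\setM'$ of $\Tuple{\tad(\setR),\FB}$ with the same (finite) domain; since only the predicates of \setR together with the finitely many fresh $X_R$ occur in it, $\setM'$ is a finite set of atoms, i.e.\ a finite universal model of $\Tuple{\tad(\setR),\FB}$. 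Therefore every fair \E-derivation from $\Tuple{\tad(\setR),\FB}$ is finite, and as \FB was arbitrary, $\tad(\setR)\in\chaseterm{\E}{\forall}$, i.e.\ $\tad(\chaseterm{\E}{\forall})\subseteq\chaseterm{\E}{\forall}$. (Clause~1 of Definition~\ref{def:uce} gives the reverse implication as well, though that is not needed here.)

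The hard part does not lie in this argument at all: it has already been discharged in Proposition~\ref{prop:adce}, where one verifies that the ``backward'' rules $H[\vx,\vz]\to X_R(\vx,\vz)$ pin down the $X_R$-atoms to exactly the values they must take in every universal model, without creating or destroying finiteness on the original vocabulary. The only other things to watch in the present proof are the two routine points flagged above — invoking the finite-universal-model characterisation of \E-termination, and restricting attention to factbases over $\EI{\Preds}{\setR}$ so that Definition~\ref{def:uce} applies.
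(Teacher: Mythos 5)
Your proof takes essentially the same route as the paper: Theorem~\ref{thm:2adE} is stated there as a direct consequence of Proposition~\ref{prop:adce}, i.e.\ precisely the combination you spell out of the finite-universal-model characterisation of \E-chase termination with clause~2 of Definition~\ref{def:uce}. The only loose point is your preliminary reduction — discarding atoms whose predicate lies outside $\EI{\Preds}{\setR}\cup\{X_R\mid R\in\setR\}$ does not yet yield $\EI{\Preds}{\FB}\subseteq\EI{\Preds}{\setR}$, since \FB may itself contain $X_R$-atoms, which are not inert for \tadR — but the paper's argument likewise tacitly restricts to factbases over $\EI{\Preds}{\setR}$ (as required by Definition~\ref{def:uce}), so this does not distinguish your proof from the published one.
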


The single-piece decomposition may gain termination for some chase variants (Theorem \ref{thm:sp-improves}); we are interested to know if the same can happen with atomic decompositions.
Unfortunately, there is no way for a non-terminating rule set to gain termination, as stated next:
\begin{proposition}\label{prop:adNonTerm}
    If a chase variant does not terminate on a rule set \setR, it does not terminate on \oadR and \tadR.
\end{proposition}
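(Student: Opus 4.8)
I would prove the statement --- which says that for every variant $\X\in\ens{\Ob,\SO,\R,\DF{\R},\E}$, $\setR\notin\chaseterm{\X}{\forall}$ implies $\oadR\notin\chaseterm{\X}{\forall}$ and $\tadR\notin\chaseterm{\X}{\forall}$ --- by treating the three families of variants separately. \emph{Equivalent chase.} Recall that $\oadR$ and $\tadR$ are both conservative extensions of $\setR$ ($\tadR$ by Proposition~\ref{prop:adce}; $\oadR$ as observed before Theorem~\ref{thm:1adOSO}). Choose a factbase $\FB$ over $\EI{\Preds}{\setR}$ witnessing $\setR\notin\chaseterm{\E}{\forall}$ --- such a factbase exists since dropping from any witness the atoms over fresh predicates cannot create a finite universal model. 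Suppose $\Tuple{\oadR,\FB}$ had a finite universal model $M$ and let $M_0$ be its restriction to $\EI{\Preds}{\setR}$. By clause~(1) of Definition~\ref{def:ce}, $M_0$ is a finite model of $\Tuple{\setR,\FB}$; and it is universal, because any model $N$ of $\Tuple{\setR,\FB}$ extends, by clause~(2), to a model $N'$ of $\oadR$ with the same domain and the same $\EI{\Preds}{\setR}$-atoms, and a homomorphism $M\to N'$ restricts to a homomorphism $M_0\to N$. This contradicts the choice of $\FB$; the argument for $\tadR$ is identical.

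\emph{Oblivious and semi-oblivious chase.} The inductions in the proofs of Theorems~\ref{thm:1adOSO} and~\ref{thm:2adOSO} actually show that, for $\X\in\ens{\Ob,\SO}$ and any factbase $\FB$, the restriction to $\EI{\Preds}{\setR}$ of $\funCh{\X}{\Tuple{\oadR,\FB}}$ (resp.\ of $\funCh{\X}{\Tuple{\tadR,\FB}}$) is isomorphic to $\funCh{\X}{\Tuple{\setR,\FB}}$: every atom with a fresh head predicate $X_R$ arises from a single trigger whose body is that of $R$, hence corresponds bijectively to a homomorphism of that body into the $\EI{\Preds}{\setR}$-part. Hence $\funCh{\X}{\Tuple{\setR,\FB}}$ is finite iff $\funCh{\X}{\Tuple{\oadR,\FB}}$ is finite (and likewise for $\tadR$), so non-termination transfers.

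\emph{Restricted and Datalog-first restricted chase.} (For $\X=\DF{\R}$ and the two-way decomposition this already follows from Theorem~\ref{thm:2adDfR}; the remaining cases I would handle uniformly by a simulation argument.) Fix $\X\in\ens{\R,\DF{\R}}$ and an infinite fair $\X$-derivation $\der=(\emptyset,F_0),(t_1,F_1),\ldots$ from some $\Tuple{\setR,\FB}$. The plan is to build an infinite fair $\X$-derivation $\der'$ from $\Tuple{\oadR,\FB}$ (the case of $\tadR$ is analogous, with extra bookkeeping for the backward rules $H\to X_R$) maintaining the invariant that, after the portion of $\der'$ simulating $t_1,\ldots,t_i$, the current factbase $G$ satisfies all Datalog rules of $\oadR$ and $G$ restricted to $\EI{\Preds}{\setR}$ is isomorphic to $F_i$. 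To simulate $t_i=(R,\pi)$ with $R=B\to\exists\vz.H$: first apply the trigger with rule $B\to\exists\vz.X_R(\vx,\vz)$ and homomorphism $\pi$, producing a fresh atom $X_R(\pi(\vx),\vz')$; then, for each atom $A\in H$, apply the trigger with rule $X_R(\vx,\vz)\to A$ and the extension of $\pi$ mapping $\vz$ to $\vz'$, skipping those whose output is already present. Between consecutive blocks, also exhaustively apply all remaining applicable Datalog triggers of $\oadR$ (the bridge rules $B\to X_R$ obtained from Datalog rules of $\setR$, and any pending $X_R\to A$): these add only atoms over fresh predicates, so $G$ restricted to $\EI{\Preds}{\setR}$ is unchanged, and for $\X=\DF{\R}$ this is precisely the step that re-establishes the Datalog-first priority before the next block. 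A standard dovetailing of remaining applicable triggers makes $\der'$ fair, and $\der'$ contains one block per step of $\der$, hence is infinite.

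\emph{Where the difficulty lies.} The delicate point is to show that every trigger of a simulation block is genuinely $\X$-applicable at the moment it is used, i.e.\ that the simulation never stalls. The head triggers with rule $X_R(\vx,\vz)\to A$ are harmless: once $X_R(\pi(\vx),\vz')$ belongs to $G$, every term of the candidate output is pinned, so a retraction of $G$ enlarged by that output onto $G$ is the identity, which makes the trigger $\R$-applicable exactly when its output is not yet in $G$ --- the very condition under which we keep it. The bridge trigger with rule $B\to\exists\vz.X_R(\vx,\vz)$ is the real obstacle: it is $\R$-applicable on $G$ unless $X_R(\pi(\vx),\vec{s})\in G$ for some $\vec{s}$, and such an atom could only have been introduced by an earlier bridge application with the same frontier image or --- in the two-way case --- by a backward rule $H\to X_R(\vx,\vz)$; either situation would yield a homomorphism from $H$ into $F_{i-1}$ extending $\pi$ on the frontier, hence a retraction of $F_{i-1}\cup\funout{t_i}$ onto $F_{i-1}$, contradicting the $\R$-applicability of $t_i$ in $\der$. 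For $\X=\DF{\R}$ one must additionally check that the saturation step does make $G$ satisfy all Datalog rules of the decomposition before each block, which again rests on the invariant together with the fact that $F_{i-1}$ already satisfies every Datalog rule of $\setR$ whenever $\der$ applies a non-Datalog trigger.
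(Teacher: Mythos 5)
Your overall plan matches the paper's own (very terse) argument: replay each trigger of the original infinite fair derivation by the corresponding decomposed triggers. Your separate treatments of the \E-case via the (universal-)conservative-extension property and of the \Ob/\SO-cases via the isomorphism of chase results are sound, and deferring \DF{\R} with \tadR to Theorem~\ref{thm:2adDfR} is fine. The genuine gap is the fairness of the simulated derivation for the restricted variants, most acutely for \oadR. Your block-applicability argument rests on the invariant that the current factbase $G$ restricted to $\EI{\Preds}{\setR}$ is isomorphic to $F_i$; but the derivation made only of simulation blocks plus your inter-block saturation is in general not fair: a bridge trigger $(B\to\exists\vz.X_R(\vx,\vz),\sigma)$ whose original counterpart is never applied in \der (for instance because its head happens to be satisfied by atoms produced by other rules) remains \R-applicable forever in the one-way setting, since the bridge rule is the only rule producing $X_R$-atoms. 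Restoring fairness forces you to apply it eventually, and then the head triggers $X_R(\vx,\vz)\to A$ on the fresh nulls are forced as well, adding atoms over $\EI{\Preds}{\setR}$ with no counterpart in \der; this destroys exactly the invariant your applicability argument needs, so ``a standard dovetailing'' cannot be invoked as stated. Relatedly, your claim that the inter-block saturation ``adds only atoms over fresh predicates'' is wrong for the pending $X_R\to A$ triggers, whose outputs are over original predicates; this is harmless in the \DF{\R} case (where $F_{i-1}$ already satisfies the Datalog rules of \setR) but not for plain \R.

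The gap can be closed with tools already in the paper, but it must be closed explicitly. For \oadR you need no simulation at all: by Theorem~\ref{theorem:chase-terminating-sets-strict-subsets}, $\setR\notin\chaseterm{\R}{\forall}$ (or $\notin\chaseterm{\DF{\R}}{\forall}$) implies $\setR\notin\chaseterm{\SO}{\forall}$, hence $\oadR\notin\chaseterm{\SO}{\forall}$ by your \Ob/\SO-argument; by Lemma~\ref{thm:1adReq1adSO} every fair \R- (hence every fair \DF{\R}-) derivation from the witness KB is a fair \SO-derivation and therefore infinite, so \oadR is outside $\chaseterm{\R}{\forall}$ and $\chaseterm{\DF{\R}}{\forall}$. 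Alternatively, since \oadR and \tadR consist of atomic-head rules, your possibly unfair infinite simulated derivation already suffices, because for such rules the existence of an infinite \R-derivation implies the existence of an infinite fair one \cite{DBLP:conf/pods/GogaczMP20}, exactly as used in Proposition~\ref{prop-single-head-re}. For \tadR the simulation can be made fair directly: the backward rules $H\to X_R(\vx,\vz)$ are Datalog, so the inter-block Datalog closure blocks every bridge trigger whose original counterpart has become non-applicable in \der, which is the mechanism of the appendix proof of Theorem~\ref{thm:2adDfR}; your write-up relegates these rules to ``extra bookkeeping'' although they are precisely what makes fairness work in that case.
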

\begin{proof}[Sketch]
For each trigger with a rule $R$ in the original infinite fair derivation, one can consider the corresponding triggers with $\oad(R)$ or $\tad(R)$, and thus produce an infinite fair derivation.
\end{proof}

Regarding FO-rewritability, the two-way atomic decomposition behaves similarily to the one-way atomic decomposition 
(which can be proven similarly, see Theorem \ref{thm:1adBDDP}).
\begin{theorem}\label{thm:2adBDDP}
    A rule set \setR is BDDP iff \tad(\setR) is BDDP.
\end{theorem}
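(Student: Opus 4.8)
The plan is to establish both directions of the biconditional by transferring the BDDP witnesses back and forth between $\setR$ and $\tad(\setR)$, following closely the pattern of Theorem~\ref{thm:1adBDDP}. The key observation is that $\tad(R)$ differs from $\oad(R)$ only by the extra ``backward'' rule $H[\vx,\vz]\to X_R(\vx,\vz)$, which fires on a factbase exactly when all of $H$'s atoms (under a matching substitution) are already present, and which merely adds an $X_R$-atom that has no predicate in $\EI{\Preds}{\setR}$. So the behavior of $\tad(\setR)$ on the original vocabulary $\Sigma = \EI{\Preds}{\setR}$ is intertwined with that of $\oad(\setR)$ and, ultimately, of $\setR$.

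First I would handle the direction $(\Rightarrow)$: assume $\setR$ is BDDP. For a factbase $\FB$ whose predicates lie in $\Sigma$, I would prove a round-counting correspondence of the shape $\funCh{i}{\Tuple{\setR,\FB}} = \funCh{c\cdot i}{\Tuple{\tad(\setR),\FB}}_{\mid \Sigma}$ for a small constant $c$ (here $c=2$ should suffice, as in Theorem~\ref{thm:1adBDDP}: one breadth-first round of $\setR$ corresponds to one round to create the $X_R$-atoms via $B\to\exists\vz.X_R(\vx,\vz)$ followed by one round of the rules $X_R(\vx,\vz)\to P(\vt)$ to expand them). The crucial extra point, relative to the one-way case, is that the backward rules $H[\vx,\vz]\to X_R(\vx,\vz)$ can only add $X_R$-atoms and can never create new atoms on $\Sigma$, nor can they enable any $\Sigma$-atom at an earlier round than it would otherwise appear; hence they do not disturb the round count on $\Sigma$. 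For factbases using fresh predicates not in $\Sigma$, one proceeds exactly as the sketch of Theorem~\ref{thm:1adBDDP} indicates, by a weakening of the correspondence (it is enough that $\funCh{i}{\Tuple{\setR,\FB}}$ homomorphically maps into $\funCh{2i}{\Tuple{\tad(\setR),\FB}}_{\mid\Sigma}$, which is all a BCQ entailment argument needs). Given this correspondence, if $\setR$ has bound $k$ for a BCQ $Q$ then $\tad(\setR)$ has bound $2k$ for $Q$ (on $\Sigma$; and since $\tad(\setR)$ is a conservative extension by Proposition~\ref{prop:adce}, BCQ entailment over $\Sigma$ is the relevant notion).

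For $(\Leftarrow)$, assume $\tad(\setR)$ is BDDP and suppose, for contradiction, that $\setR$ is not. Then there is a BCQ $Q$ and a family $\{\FB_i\}_{i\in\mathbb N}$ of factbases over $\Sigma$ with $\Tuple{\setR,\FB_i}\models Q$ but $\funCh{i}{\Tuple{\setR,\FB_i}}\not\models Q$ for every $i$. Using the correspondence $\funCh{i}{\Tuple{\setR,\FB_i}} = \funCh{2i}{\Tuple{\tad(\setR),\FB_i}}_{\mid\Sigma}$ (and that $Q$ is a BCQ on $\Sigma$, so truth of $Q$ depends only on the $\Sigma$-restriction), we get $\funCh{2i}{\Tuple{\tad(\setR),\FB_i}}\not\models Q$, while $\Tuple{\tad(\setR),\FB_i}\models Q$ since $\tad(\setR)$ is a conservative extension of $\setR$ and $\Tuple{\setR,\FB_i}\models Q$. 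This contradicts the BDDP of $\tad(\setR)$, so $\setR$ is BDDP. Finally, BDDP being equivalent to FO-rewritability, this also shows that the two-way atomic decomposition preserves FO-rewritability in both directions.

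I expect the main obstacle to be the $(\Rightarrow)$ direction and, within it, pinning down the exact round-counting correspondence in the presence of the backward rules and of factbases that already mention the fresh predicates $X_R$. One must be careful that an $X_R$-atom present in the input $\FB$ (or produced early) does not let $\tad(\setR)$ ``short-circuit'' and produce $\Sigma$-atoms faster than $\setR$ would; checking this, and choosing the right weakened statement for the general-factbase case so that the induction on the breadth-first chase actually goes through, is the delicate part. Everything else — the conservative-extension bookkeeping, the contrapositive argument in $(\Leftarrow)$, and the passage between BDDP and FO-rewritability — is routine and mirrors Theorem~\ref{thm:1adBDDP}.
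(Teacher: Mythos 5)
Your overall route is the paper's route: mirror Theorem~\ref{thm:1adBDDP}, using that the backward rules $H[\vx,\vz]\to X_R(\vx,\vz)$ only add $X_R$-atoms whose forward expansions are already present, so they never create new $\Sigma$-atoms; your $(\Leftarrow)$ direction and the $\Sigma$-restricted part of $(\Rightarrow)$ match the paper's argument. However, as it stands the $(\Rightarrow)$ direction has two genuine gaps. First, for arbitrary factbases the weakening you announce --- that $\funCh{i}{\Tuple{\setR,\FB}}$ homomorphically embeds into $\funCh{2i}{\Tuple{\tadR,\FB}}_{\mid\Sigma}$ --- is the easy inclusion and cannot yield a uniform depth bound for $\tadR$: to replace the level $2i$ at which a query becomes true by a bound independent of $\FB$, one needs an upper bound on the decomposed chase in terms of the original one, namely the analogue of the paper's containment $\funchase{2i}{\FB,\tadR} \subseteq \funchase{1}{\funchase{i}{\funchase{1}{\FB,\tadR}_{\mid\Sigma},\setR},\setR_X\cup\setR_X^{-1}} \cup (\FB\setminus\FB_{\mid\Sigma})$, which is precisely what controls the ``short-circuiting'' caused by $X_R$-atoms already present in $\FB$. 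You defer exactly this step, and the substitute statement you propose would not make the argument go through.

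Second, BDDP of $\tadR$ quantifies over \emph{all} BCQs and factbases, including those mentioning the fresh predicates $X_R$ and atoms outside $\EI{\Preds}{\setR}$; your appeal to conservative extension (``BCQ entailment over $\Sigma$ is the relevant notion'') does not discharge this, because the theorem asserts BDDP of the rule set $\tadR$ itself, not merely preservation of entailment of $\Sigma$-queries. The paper handles such queries by splitting $q$ into the atoms mapped to $\FB\setminus\FB_{\mid\Sigma}$ and the rest, rewriting the remaining $X_R$-atoms through the acyclic rule set $\setR_X\cup\setR_X^{-1}$ into queries over $\Sigma$, only then invoking the BDDP of $\setR$ to get a bound $k_q$ independent of the factbase, and finally returning via $\funchase{k_q}{\FB',\setR}\subseteq\funchase{2k_q}{\FB',\tadR}$. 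This rewriting step, and the bookkeeping for query atoms mapped outside $\Sigma$, are entirely absent from your proposal.
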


% !TEX root =  ../main.tex
\section{No Normalisation for the Restricted Chase}\label{sec:noNormR}
% \todo{Harmonize the vocabulary: atomic-head instead of single-head; terminates instead of universally terminates; fair derivation instead of chase sequence} 

Normalisation procedures studied so far do not maintain the status of the termination of the $\R$-chase. This raises the question of the existence of such a procedure. We show here that no computable function can map rule sets to sets of rules having atomic head while preserving termination and non-termination of the $\R$-chase. To do that, we show that with atomic-head rules, the class of rule sets $\chaseterminst{\R}{\FB}{\forall}$ for which every fair $\R$-derivation from $\Tuple{\setR,\FB}$ is finite is a recursively enumerable set. With arbitrary rules, we show it is
%Indeed,  there cannot be such a computable decomposition, by studying the complexity of the membership to the class $\chaseterminst{\R}{\FB}{\forall}$, which, given a factbase $F$,  is the set of all rulesets $\setR$ such that any fair $\R$-derivation from $\Tuple{\setR,\FB}$ is finite.
% Indeed, we show that this set is recursively enumerable when rulesets have atomic-head rules, and 
hard for $\Pi_2^0$, the second level of the arithmetic hierarchy \cite{ThRecFunEffComp}.  A complete problem for $\Pi_2^0$ is to decide whether a given Turing machine halts on every 
%of its 
input word; it remains complete when inputs are restricted to words on a unary alphabet.

\begin{proposition}
\label{prop-single-head-re}
For any factbase $\FB$, the subset of $\chaseterminst{\R}{\FB}{\forall}$  composed of sets of atomic-head rules is recognizable.
%  The set of pairs $(\FB,\setR)$ with $\setR$ a set of single-head rules such that the restricted chase universally terminates is recursively enumerable.
\end{proposition}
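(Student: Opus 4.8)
The plan is to exhibit a semi-decision procedure, after reducing the problem to a finiteness test on a computable tree. Fix $\FB$. For a set $\setR$ of atomic-head rules, call the \emph{$\R$-chase forest} of $\Tuple{\setR,\FB}$ the tree whose nodes are the finite $\R$-derivations from $\Tuple{\setR,\FB}$ (recall that, with fresh variables fixed per trigger, a derivation is determined by its trigger sequence), ordered by the prefix relation. This tree is finitely branching: a derivation ending in a finite factbase $F_i$ has only finitely many triggers $\Tuple{R,\pi}$ with $R\in\setR$ and $\pi$ a homomorphism from the body of $R$ into $F_i$; and its successor relation is computable, since $\R$-applicability of a trigger is decidable (it amounts to searching for a retraction of a finite set). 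Hence the property ``the $\R$-chase forest of $\Tuple{\setR,\FB}$ is finite'' is recognizable: enumerate the forest level by level and accept as soon as a level is empty (equivalently, as soon as the current factbase admits no $\R$-applicable trigger on every branch).

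It therefore suffices to prove that, for atomic-head $\setR$, we have $\setR\in\chaseterminst{\R}{\FB}{\forall}$ iff the $\R$-chase forest of $\Tuple{\setR,\FB}$ is finite. One direction is immediate: if the forest is finite, every $\R$-derivation is finite, in particular every fair one. For the other direction I would argue by contraposition, using that the forest is finitely branching: if it is infinite then, by König's lemma, it has an infinite branch, i.e., $\Tuple{\setR,\FB}$ admits an infinite $\R$-derivation; so it is enough to show that \emph{an infinite $\R$-derivation can be turned into an infinite fair one}. This implication is exactly where atomic heads are essential --- for general rules it must fail, since Section~\ref{sec:noNormR} shows the problem becomes $\Pi^0_2$-hard, hence not recognizable.

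To prove it, I would start from an infinite $\R$-derivation $\der=(\emptyset,F_0),(t_1,F_1),\ldots$ and build an infinite fair $\R$-derivation $\der^{\ast}$ by \emph{dovetailing}: process the triggers $t_1,t_2,\ldots$ of $\der$ one at a time, and after each step also treat the oldest still-$\R$-applicable trigger that has been neglected so far. The outcome is fair by construction, so the only danger is that $\der^{\ast}$ becomes \emph{finite}, which can only happen if some neglected trigger (or some $t_i$ already replayed in $\der^{\ast}$) disables a later $t_j$ of $\der$ before we reach it. Here atomicity is decisive: since $\funout{t_j}$ is a single atom, its disabling means precisely that this atom folds onto one already present in the current factbase of $\der^{\ast}$. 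I would therefore maintain, throughout the construction, a homomorphism from the factbases of $\der$ into those of $\der^{\ast}$, equal to the identity on $\FB$ and updated, whenever a trigger is skipped, by sending its fresh variable to the term witnessing the fold. Composing this homomorphism with the body homomorphism of any later $t_j$ yields a trigger for the same rule in $\der^{\ast}$, which can be applied (or is already satisfied, in which case it is harmlessly skipped while the simulating homomorphism is still extended); so $\der^{\ast}$ keeps making progress along $\der$ and, as $\der$ is infinite, $\der^{\ast}$ is infinite as well.

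The main obstacle is precisely the bookkeeping of that last step: one must verify that the replayed triggers are genuinely $\R$-applicable (or redundant), that the skips caused by folding never compromise fairness, and that enough of $\der$ is eventually covered so that $\der^{\ast}$ does not stall --- and it is exactly the single-atom form of rule heads that keeps ``disabling'' under control and makes this simulation go through, whereas with multi-atom heads adding one piece may disable a trigger without the neat folding witness, which is what ultimately separates the atomic-head case (recognizable) from the general case ($\Pi^0_2$-hard).
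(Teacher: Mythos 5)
Your overall strategy is the one the paper follows: build the finitely branching, computable tree of finite $\R$-derivations from $\Tuple{\setR,\FB}$, semi-decide its finiteness by exhaustive search, and reduce correctness, via K\"onig's lemma, to the implication ``infinite $\R$-derivation $\Rightarrow$ infinite \emph{fair} $\R$-derivation'' for atomic-head rules. The difference is that the paper does not prove that implication: it invokes the Fairness Theorem of \cite{DBLP:conf/pods/GogaczMP20}, which states exactly this equivalence for single-head rules, whereas you attempt to prove it with a dovetailing construction --- and that is where your argument has a genuine gap.

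The danger you yourself flag is not neutralised by the homomorphism bookkeeping. In your construction a skipped (folded) trigger contributes no atom to $\der^{\ast}$, so nothing excludes that from some point on \emph{every} remaining trigger of $\der$ folds and no neglected trigger is still $\R$-applicable; then $\der^{\ast}$ is finite and fair while $\der$ is infinite, and your invariant --- a homomorphism from the factbases of $\der$ into $\funres{\der^{\ast}}$ --- yields no contradiction. Indeed, $\R$-applicability of $t_j$ on $F_{j-1}$ is the absence of a retraction of $F_{j-1}\cup\funout{t_j}$ to $F_{j-1}$, which is perfectly compatible with $F_{j-1}$ (and all of $\funres{\der}$) mapping homomorphically into a small finite set. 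The paper's own example $\{P(x,y)\to\exists z\,.\,P(y,z),\ P(x,y)\to P(y,x)\}$ with factbase $\{P(a,b)\}$ exhibits an infinite fair $\R$-derivation whose result maps into a finite model, so ``everything produced so far maps into the current result of $\der^{\ast}$'' can never by itself force $\der^{\ast}$ to keep growing. Hence the claim that $\der^{\ast}$ is infinite is unproven, and making it precise would amount to reproving the Fairness Theorem, whose proof in \cite{DBLP:conf/pods/GogaczMP20} is long and exploits atomicity in a much more delicate way. The economical fix is to cite that theorem; with it, the remainder of your argument (K\"onig's lemma plus enumeration of the derivation tree, accepting when it is exhausted) coincides with the paper's proof and is correct.
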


\begin{proof}[Sketch]
 With atomic-head rules, it is known that the existence of an infinite fair restricted derivation is equivalent to the existence of an infinite restricted derivation \cite{DBLP:conf/pods/GogaczMP20}. Using K\"onig's lemma, one can show that the chase terminates iff there exists a $k$ such that any fair $\R$-derivation is of length at most $k$. 
\end{proof}

\begin{figure*}
\tikzset{every loop/.style={min distance=50,in=125,out=185,looseness=20}}

\begin{center}
\scalebox{0.8}{
\begin{tikzpicture}[roundnode/.style={circle, fill=black, inner sep=0pt, minimum size=1mm},
labelnode/.style={fill=white,inner sep=1pt}]

\newcommand{\xsep}{7}
\newcommand{\ysep}{1}
\newcommand{\xa}{0.5*\xsep}
\newcommand{\xb}{1.25*\xsep}
\newcommand{\xc}{2*\xsep}
\newcommand{\xd}{2.75*\xsep}
\newcommand{\xe}{4*\xsep}
\newcommand{\xf}{4.75*\xsep}
\newcommand{\xg}{5.5*\xsep}
\newcommand{\xh}{6.75*\xsep}

\newcommand{\ya}{0}
\newcommand{\yf}{1*\ysep}
\newcommand{\yca}{2*\ysep}
\newcommand{\ycb}{3*\ysep}

\node[fill,circle,inner sep=0pt,minimum size=5pt, label=-3:{$a : \Init$}] at (\xa, \ya) (1) {};
\node[fill,circle,inner sep=0pt,minimum size=5pt, label=-3:{$nf_1:\Real$}] at (\xb, \ya) (2) {};
\node[fill,circle,inner sep=0pt,minimum size=5pt, label=-3:{$nf_2:\Real$}] at (\xc, \ya) (3) {};
%\node[fill,circle,inner sep=0pt,minimum size=5pt, label=-3:{$nf_3$}] at (\xd, \ya) (4) {};
\node[fill,circle,inner sep=0pt,minimum size=5pt, label=-3:{$b:\Brake,\{\Content{a}\},\{\HeadState{q}\},\First,\End,(\Real)$}] at (0.5*\xb, -0.6*\ycb) (brake) {};

\node[fill,circle,inner sep=0pt,minimum size=5pt, label=-3:{$f_2$}] at (\xb+0.25*\xsep, \yf) (5) {};
\node[fill,circle,inner sep=0pt,minimum size=5pt, label=-3:{$f_3$}] at (\xc+0.25*\xsep, \yf) (5b) {};
%\node[fill,circle,inner sep=0pt,minimum size=5pt, label=-3:{$f_4$}] at (\xd+0.25*\xsep, \yf) (6) {};

\node[fill,circle,inner sep=0pt,minimum size=5pt, label=-3:{$c_2^2:\Content{\Blank},\End$}] at (\xb+0.25*\xsep, \yca) (7) {};
\node[fill,circle,inner sep=0pt,minimum size=5pt, label=-3:{$c_1^2:\Content{1}$}] at (\xb-0.2*\xsep, \yca) (7b) {};
\node[fill,circle,inner sep=0pt,minimum size=5pt, label=-3:{$c_0^2:\Content{1},\First,\HeadState{q_I}$}] at (\xa+0.15*\xsep, \yca) (7c) {};

% \node[fill,circle,inner sep=0pt,minimum size=5pt, label=-3:{$c_4^4$}] at (\xd+0.5*\xsep, \ycb) (8) {};
\node[fill,circle,inner sep=0pt,minimum size=5pt, label=-3:{$c_3^3:\Content{\Blank},\End$}] at (\xd, \ycb) (9) {};
\node[fill,circle,inner sep=0pt,minimum size=5pt, label=-3:{$c_2^3:\Content{1}$}] at (\xc, \ycb) (10) {};
\node[fill,circle,inner sep=0pt,minimum size=5pt, label=-3:{$c_1^3:\Content{1}$}] at (\xb, \ycb) (11) {};
\node[fill,circle,inner sep=0pt,minimum size=5pt, label=-3:{$c_0^3:\Content{1},\First,\HeadState{q_I}$}] at (\xa, \ycb) (12) {};

\node[fill,circle,inner sep=0pt,minimum size=5pt, label=-3:{$c_0^0:\First,\End,\Content{\Blank},\HeadState{q_I}$}] at (\xd, \ya) (t1) {};
\node[fill,circle,inner sep=0pt,minimum size=5pt, label=-3:{$c_0^1:\First,\Content{1},\HeadState{q_I}$}] at (\xc, -\yf) (t3) {};
\node[fill,circle,inner sep=0pt,minimum size=5pt, label=-3:{$c_1^1:\Content{\Blank},\End$}] at (\xd, -\yf) (t2) {};
 \path[->] (t3) edge[dotted] node[labelnode] {} (t2);

 \path[->] (1) edge node[labelnode] {$\NonFinal$} (2);
 \path[->,dashed] (2) edge node[labelnode] {$\NonFinal$} (3);
%  \path[->,dashed] (3) edge node[labelnode] {$\NonFinal$} (4);
%  \path[->,dashed] (4) edge node[labelnode] {$\Final$} (6);
 \path[->,dashed] (3) edge node[labelnode] {$\Final$} (5b);
 \path[->,dashed] (2) edge node[labelnode] {$\Final$} (5);
%  \path[->,dashed] (6) edge node[labelnode] {$\Done$} (8);
 \path[->,dashed] (5) edge node[labelnode] {$\Done$} (7);
 \path[->,dashed] (7b) edge[dotted] node[labelnode] {} (7);
 \path[->,dashed] (2) edge node[labelnode] {$\Done$} (7b);
 \path[->,dashed] (1) edge node[labelnode] {$\Done$} (7c);
 \path[->] (7c) edge[dotted] node[labelnode] {} (7b);
 \path[->,dashed] (5b) edge node[labelnode] {$\Done$} (9);
 \path[->,dashed] (3) edge node[labelnode] {$\Done$} (10);
 \path[->,dashed] (2) edge node[labelnode] {$\Done$} (11);
 \path[->,dashed] (1) edge node[labelnode] {$\Done$} (12);
 \path[->] (12) edge[dotted] node[labelnode] {} (11);
 \path[->] (11) edge[dotted] node[labelnode] {} (10);
 \path[->] (10) edge[dotted] node[labelnode] {} (9);
%  \path[->] (9) edge[dotted] node[labelnode] {} (8);
 \path[->] (2) edge node[labelnode] {$\Done,\NonFinal$} (brake);
 \path[->,dashed] (3) edge node[labelnode] {$\Done,\NonFinal$} (brake);
%  \path[->,dashed] (4) edge node[labelnode] {$\Done,\NonFinal$} (brake);
% \path[->] (8) edge[bend right, dotted] node[labelnode] {$H$} (7);
% 
\path[->] (brake) edge [loop above] node[labelnode,pos=0.6] {$\Done, \NonFinal,\Next,\NextPlus,\Step$} (brake);
% \path[->] (5) edge [loop above] node[labelnode,pos=0.1] {$R, S$} (5);
% \path[->] (8) edge [loop above, dotted] node[labelnode,pos=0.1] {$R, S$} (8);
%\path[->] (2) edge [loop above] node[labelnode,pos=0.1] {$R, S$} (2);
%\path[->] (3) edge [loop above] node[labelnode,pos=0.1] {$R, S$} (3);
%\path[->] (4) edge [dashed, loop above] node[labelnode,pos=0.1] {$R, S$} (4);

\end{tikzpicture}
}
\end{center}

\caption{The effect of $\setR_w$ on $\FB$: Dashed atoms are $\Done,\Final$ or $\NonFinal$ atoms generated by the chase; $\Next$ atoms used by $\setR_M$ are dotted.}
\label{figure:no-nf-restricted-chase}
\end{figure*}

\begin{proposition}
\label{prop-hardness-termination}
There exists a factbase $\FB$ such that $\chaseterminst{\R}{\FB}{\forall}$ is $\Pi_2^0$-hard.\footnote{
Note that this contradicts the first item of Theorem 5.1 in \cite{anatomychase}. However, no proof is given for that statement, which is uncorrectly attributed to \cite{chaserevisited}.}
\end{proposition}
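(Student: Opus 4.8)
The plan is to reduce the $\Pi^0_2$-complete problem mentioned above --- deciding whether a Turing machine $M$ halts on $1^n$ for every $n \ge 0$ --- to membership in $\chaseterminst{\R}{\FB}{\forall}$ for one fixed factbase $\FB$. From $M$ I would build, computably, a rule set $\setR_M$ over the predicates of Figure~\ref{figure:no-nf-restricted-chase} ($\Init$, $\Real$, $\NonFinal$, $\Final$, $\Done$, $\Next$, $\Step$, $\First$, $\End$, the $\Content{\cdot}$ and $\HeadState{\cdot}$ families, $\Brake$, \dots) so that $M$ halts on every $1^n$ iff every fair $\R$-derivation from $\Tuple{\setR_M,\FB}$ is finite. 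As $M \mapsto \setR_M$ is computable and the source problem stays $\Pi^0_2$-complete over unary inputs, this establishes $\Pi^0_2$-hardness.

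The rule set $\setR_M$ would have three interacting parts, matching the figure. First, an \emph{enumeration} part that grows from the atom of $\FB$ a ``real line'' $R_0, R_1, R_2, \dots$ along $\NonFinal$-edges and, for each index $n$, writes the initial configuration of $M$ on $1^n$ as a length-$(n{+}1)$ cell chain --- $n$ cells marked $\Content{1}$, then an $\End$-cell marked $\Content{\Blank}$, with $\First$ and $\HeadState{q_I}$ on the first cell --- each cell being tied to the matching real-line node through $\Done$-atoms. Second, a \emph{simulation} part that faithfully rewrites successive configurations of $M$ on each tape, creating a fresh cell (an existential variable) whenever the head moves right past the current $\End$-cell; this is a routine encoding of $\delta$ and, on its own, raises no restricted-chase subtlety. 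Third, a \emph{brake} element $b$, produced at once from $\FB$, carrying $\Brake$, $\First$, $\End$, a $\Content{\sigma}$ atom for every symbol $\sigma$, a $\HeadState{q}$ atom for every state $q$, and self-loops under all the structural binary predicates, so that $b$ alone already satisfies the structural and transition rules whenever their frontier maps onto it. Finally, for each halting state I would add a rule that redirects the atoms of the $n$-th structure towards $b$; once it has fired there is a retraction onto $b$, so every trigger that would keep the $n$-th structure growing --- and, crucially, every trigger extending the real line past $R_n$ --- ceases to be \R-applicable.

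For correctness I would check the two directions. The easy one: if $M$ fails to halt on some $1^n$, the derivation that builds $R_0,\dots,R_n$, lays down the $n$-th initial configuration, and then faithfully simulates the non-halting run of $M$ on $1^n$ is infinite, and it is fair because every competing trigger is eventually absorbed by the brake; hence $\setR_M \notin \chaseterminst{\R}{\FB}{\forall}$. The hard one: if $M$ is total on unary inputs, I must show that \emph{every} fair $\R$-derivation from $\Tuple{\setR_M,\FB}$ is finite; the argument would show that in any fair derivation each $n$-th simulation runs for finitely many steps, is then folded onto $b$, and thereafter no trigger touching the $n$-th structure (in particular no further extension of the real line past $R_n$) is \R-applicable, so the derivation terminates. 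The hard part will be exactly this: engineering the enumeration part, through its interaction with the brake, so that it cannot by itself produce an infinite fair derivation --- equivalently, so that the \emph{only} way a fair derivation can fail to terminate is to be stuck faithfully simulating a genuinely non-halting run of $M$. Getting the brake to absorb, via legitimate retractions, both a halted tape and the corresponding segment of the real line, without damaging the faithfulness of the simulation or fairness, is where the care is needed; everything else is bookkeeping on the $\delta$-rules. (Combined with Proposition~\ref{prop-single-head-re} and $\Sigma^0_1 \subsetneq \Pi^0_2$, this yields the announced impossibility of a computable atomic-head normalisation preserving $\R$-chase termination.)
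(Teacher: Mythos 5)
Your overall reduction is the same as the paper's (fix \FB, compute from $M$ a rule set whose every-fair-\R-derivation-finiteness coincides with totality of $M$ on unary inputs, enumeration part + TM-simulation part + a brake element $b$), but the mechanism you propose for stopping the enumeration is the wrong one, and it is precisely the step you flag as "where the care is needed". You make the folding onto $b$ be \emph{triggered by the halting states}: only when the simulation on the $n$-th tape halts does a rule create the retraction onto $b$ that blocks further growth of the $n$-th structure and of the real line "past $R_n$". This fails the hard direction. Suppose $M$ is total and consider a fair derivation that dovetails: extend the line to $R_1$, spawn tape $1$, extend to $R_2$, spawn tape $2$, perform one pending simulation step on each existing tape, extend to $R_3$, and so on. Each individual simulation halts after finitely many steps and gets folded onto $b$, but by the time the fold for index $n$ happens the line has already been extended beyond $R_n$, so nothing ever blocks the extension trigger at the \emph{current} end of the line. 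Every trigger is eventually applied, so this derivation is fair, and it is infinite; hence your rule set would lie outside $\chaseterminst{\R}{\FB}{\forall}$ even for a total machine, and the reduction breaks. (A per-structure fold also cannot block an extension trigger whose frontier sits at a node created later; only a \emph{global} blocking atom can do that.)

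The paper's proof resolves exactly this with the emergency-brake technique of Kr\"otzsch, Marx and Rudolph, and the point is that the brake is \emph{decoupled from the simulation}. The Datalog rule \eqref{main-rule-brake}, $\Brake(b)\to\Real(b)$, is applicable from the very first factbase, so fairness forces it to fire at some finite stage of \emph{every} fair derivation; the chain rule \eqref{main-rule-chain} carries the guard $\Real(x)$ in its body and the atoms $\Done(y,b)\wedge\NonFinal(y,b)$ in its head, so that once $\Real(b)$ is present the output of any chain trigger retracts onto $b$ and no further extension is \R-applicable. Thus in every fair derivation the chain, and hence the number and length of the tapes, is finite (the length being nondeterministic, which is what makes arbitrarily long inputs reachable for the converse direction), and termination of each simulation is handled separately by the rule set $\setR_M$ of Bourgaux et al., which terminates iff $M$ halts on that input, with no interaction with the brake. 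Your construction is missing this fairness-forced, simulation-independent brake; as written, the enumeration part can by itself sustain an infinite fair derivation, which is exactly what the proposition must exclude when $M$ is total.
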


\begin{proof}[Sketch]
 Given a Turing machine (TM) $M$ whose input alphabet is unary, we build a KB $\KB = \Tuple{\setR_w \cup \setR_M, F}$ s.t. every fair $\R$-derivation from $\KB$ is finite iff $M$ halts on every input. 
Regardless the chase variant, simulating a TM with a rule set such that the chase terminates whenever the TM halts is classical; we reuse the rule set $\setR_M$ provided in \cite{DBLP:conf/kr/BourgauxCKRT21}, which we recall in Figure~\ref{rule-tm-simulation} for self-containment. We show that we can assume wlog that all the rules of $\setR_M$ are applied after all the rules of $\setR_w$ (listed in Figure~\ref{rule-tape-creation}). The set $\setR_w$ is used to generate from $\FB$ arbitrarily large input tape representations in a terminating way. To ensure that any fair $\R$-derivation from $\Tuple{\setR_w, F}$ terminates, we reuse the emergency brake technique from \cite{DBLP:conf/icdt/KrotzschMR19}, which allows one to stop the derivation at any desired length. 
 The representation of an input word of length $j$ is a set of atoms of the shape $\{\Next(c_i^j,c_{i+1}^j),\Content{1}(c_i^j) \mid 0 \leq i<j\} \cup \{\Content{\Blank}(c_j^j),\First(c_0^j),\End(c_j^j)\}$. 
  As detailed below, the factbase $\FB$ contains the representation of the input words of length $0$ and $1$ (Item \ref{item-init}), atoms used as seeds to build larger words (Item \ref{item-seed}) and atoms that initialize the emergency brake (Items \ref{item-brake} and \ref{item-brake-tm}): 
  \begin{enumerate}
 \item\label{item-init} $\First(c_0^1), \Content{1}(c_0^1),\Next(c_0^1,c_1^1), \End(c_1^1), \Content{\Blank}(c_1^1)$, $\First(c_0^0), \End(c_0^0), \Content{\Blank}(c_0^0)$
 \item\label{item-seed} $\Init(a), \NonFinal(a,nf_1), \Real(nf_1), \NonFinal(nf_1,b), \Done(nf_1,b)$
 \item\label{item-brake} $\Brake(b),\Final(b,b),\NonFinal(b,b),\Done(b,b), \Next(b,b),\Last(b),\First(b)$
 \item\label{item-brake-tm} $\HeadState{s}(b),\Content{l}(b),\End(b),\Step(b,b),\NextPlus(b,b)$
\end{enumerate}
The chase works as follows: after generating a \emph{non-final} ($\NonFinal$) chain with Rule~(\ref{main-rule-chain}), the \emph{brake} ($\Brake$) is made \emph{real} ($\Real$) by Rule~(\ref{main-rule-brake}), which prevents any extension of the non-final chain through restricted rule applications. A \emph{final} ($\Final$) element is added after each non-final element by Rule~(\ref{main-rule-final}), and from each final element a tape is created, by traversing the chain, marking as \emph{done} ($\Done$) processed elements, thanks to Rules~(\ref{main-rule-end})-(\ref{main-rule-first}). Figure~\ref{figure:no-nf-restricted-chase} depicts the result of any $\R$-chase derivation from $\Tuple{\setR_w, F}$ in which $\Real(b)$ has been derived after exactly one application of Rule~(\ref{main-rule-chain}). 
%the first rule of Figure~\ref{rule-tape-creation}.  
Rule~(\ref{main-rule-head}) sets the initial state on the first cell.
% 
% represented in Figure~\ref{figure:no-nf-restricted-chase}: it contains the restriction of the represented instance to terms $a,nf_1,b,t_1$ and $t_2$, except for $\Real(b)$ that is derived by a rule application; \ref{item-init}. separately deals with inputs of size $0$ and $1$; \ref{item-seed}. provides the seed to generate any input length; \ref{item-brake}. initializes the emergency brake for ensuring finitely many inputs in any fair restricted derivation  (\cite{DBLP:conf/icdt/KrotzschMR19}); \ref{item-brake-tm}. completes the brake with TM predicates. 
%   \item $\setR_w$ is given Figure~\ref{rule-tape-creation}: any restricted derivation starting from $\FB$ builds representation of input tapes for any word of length up to $n$, for some $n$ that depends on when $\Real(b)$ is derived. Figure~\ref{figure:no-nf-restricted-chase} contains the unique result of any fair restricted chase derivation assuming that $\Real(b)$ has been generated before the first rule as been applied by mapping its frontier to $nf_2$.  
%   
%  \end{itemize}
\end{proof}

\begin{figure}
\begin{align}
&\begin{aligned}\label{main-rule-chain}
\Brake(b) \wedge \NonFinal(z,x) \wedge \Real(x) \to \exists y . &\NonFinal(x,y) \wedge \Real(y) \wedge \\
&\Done(y,b) \wedge \NonFinal(y,b) \\
\end{aligned}\\
&\Brake(b) \to \Real(b) \label{main-rule-brake}\\
&\NonFinal(x, y) \to \exists z . \Final(y, z) \label{main-rule-final}\\
&\Final(x, y) \to \exists z . \Done(y, z) \wedge \End(z) \wedge \Content{\Blank}(z) \label{main-rule-end} \\
% \Init(x) \wedge \Final(x,y) \wedge \Done(y,z)  &\to \exists u . \Next(u,z) \wedge \First(u) \wedge \Done(x,u) (u)\\
&\begin{aligned}
\NonFinal(t,x) \wedge \NonFinal(x,y) \wedge \Done(y,z) \to \exists u . &\Next(u,z) \wedge \Done(x,u)\\
&\wedge \Content{1}(u) \\
\end{aligned}\\
&\begin{aligned}
\NonFinal(t,x) \wedge \Final(x,y) \wedge \Done(y,z) \to \exists u . &\Next(u,z) \wedge \Done(x,u)\\
&\wedge \Content{1}(u) \\
\end{aligned}\\
&\begin{aligned}\label{main-rule-first}
\Init(x) \wedge \NonFinal(x,y) \wedge \Done(y,z) \to \exists u . &\Next(u,z) \wedge \Done(x,u)\\
&\wedge \Content{1}(u) \wedge \First(u) 
\end{aligned}\\
&\First(x) \to \HeadState{q_I}(x) \label{main-rule-head}
\end{align}
\caption{Rules $\setR_w$ to create the initial tapes}
\label{rule-tape-creation}
\end{figure}

\begin{figure}
\begin{align*}
 \Next(x,y) \to \NextPlus(x,y)& \\
 \NextPlus(x,y) \wedge \NextPlus(y,z) \to \NextPlus(x,z)& \\
 \Next(x,y) \wedge \Step(x,z) \wedge \Step(y,w) \to \Next(z,w)& \\
 \End(x) \wedge \Step(x,z) \to \exists v. \Next(z,v) \wedge \Content{\Blank}(v) \wedge \End(v)& \\
 \HeadState{q}(x) \wedge \NextPlus(x,y) \wedge \Content{c}(y) \to \exists z . \Step(y,z) \wedge \Content{c}(z)& \\
  \HeadState{q}(x) \wedge \NextPlus(y,x) \wedge \Content{c}(y) \to \exists z . \Step(y,z) \wedge \Content{c}(z)& \\
 \HeadState{q}(x) \wedge \Content{a}(x) \to \exists z . \Step(x,z) \wedge \Content{b}(z) &\\
  \HeadState{q}(x) \wedge \Content{a}(x) \wedge \Step(x,z) \wedge \Next(z,w) \to \HeadState{r}(w) &\\
 \HeadState{q}(x) \wedge \Content{a}(x) \wedge \Step(x,z) \wedge \Next(w,z) \to \HeadState{r}(w)& 
\end{align*}
\caption{Rules $\setR_M$ for the Turing Machine simulation: the last three rules are instantiated w.r.t.the transition function of $M$.}
\label{rule-tm-simulation}
\end{figure}

As it is known recursively enumerable sets are a strict subsets of $\Pi_2^0$ \cite{ThRecFunEffComp}, the following theorem follows.

\begin{theorem}
\label{thm-no-nf-restricted}
No computable function $f$ exists that maps rule sets to rule sets having atomic-head rules such that $\setR \in \chaseterm{\R}{\forall}$ if and only if $f(\setR) \in \chaseterm{\R}{\forall}$.
\end{theorem}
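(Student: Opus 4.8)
The plan is to derive Theorem~\ref{thm-no-nf-restricted} as an immediate corollary of Propositions~\ref{prop-single-head-re} and \ref{prop-hardness-termination} together with a standard fact about the arithmetical hierarchy, namely that $\Sigma^0_1 \subsetneq \Pi^0_2$ (every recursively enumerable set lies in $\Sigma^0_1 \subseteq \Pi^0_2$, but $\Pi^0_2$ contains sets, such as the set of TMs halting on all inputs, that are not recursively enumerable). First I would argue by contradiction: suppose such a computable $f$ exists, mapping each rule set \setR\ to an atomic-head rule set $f(\setR)$ with $\setR \in \chaseterm{\R}{\forall}$ iff $f(\setR) \in \chaseterm{\R}{\forall}$.

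Next I would fix the factbase \FB\ provided by Proposition~\ref{prop-hardness-termination}, so that $\chaseterminst{\R}{\FB}{\forall}$ is $\Pi^0_2$-hard (there is a many-one reduction from the $\Pi^0_2$-complete ``TM halts on every unary input'' problem to membership in $\chaseterminst{\R}{\FB}{\forall}$, via the construction $M \mapsto \setR_w \cup \setR_M$). The key observation is that membership of \setR\ in $\chaseterminst{\R}{\FB}{\forall}$ is, for this fixed \FB, the same as $\setR \in \chaseterm{\R}{\forall}$ is \emph{not} quite the same thing, so I should be slightly careful: what Proposition~\ref{prop-hardness-termination} actually gives is hardness of the single-factbase problem. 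To bridge this I would note that the reduction from Proposition~\ref{prop-hardness-termination} produces, for a TM $M$, a rule set $\setR_M'= \setR_w \cup \setR_M$ for which the full set $\Tuple{\setR_M', \FB'}$ has a terminating (equivalently, all fair) \R-derivation on \emph{every} factbase $\FB'$ precisely when $M$ halts on every input --- in other words, the construction can be taken to land inside $\chaseterm{\R}{\forall}$ whenever it lands in $\chaseterminst{\R}{\FB}{\forall}$ (and the ``one bad factbase \FB'' witnesses non-membership otherwise). Thus deciding $\setR_M' \in \chaseterm{\R}{\forall}$ is also $\Pi^0_2$-hard.

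Now composing with $f$: the map $M \mapsto f(\setR_M')$ is computable (as $f$ is computable and $M \mapsto \setR_M'$ is computable), and by the defining property of $f$ we have $f(\setR_M') \in \chaseterm{\R}{\forall}$ iff $\setR_M' \in \chaseterm{\R}{\forall}$ iff $M$ halts on all inputs. Hence this gives a many-one reduction from the $\Pi^0_2$-complete halting-on-all-inputs problem to membership of an atomic-head rule set in $\chaseterm{\R}{\forall}$. But by Proposition~\ref{prop-single-head-re}, for each fixed factbase the atomic-head restriction of $\chaseterminst{\R}{\FB}{\forall}$ is recursively enumerable; more to the point, $\chaseterm{\R}{\forall}$ restricted to atomic-head rule sets is itself r.e.\ (one enumerates, by dovetailing over all finite factbases \FB\ and all lengths $k$, the certificates that every fair \R-derivation from $\Tuple{\setR,\FB}$ has length $\le k$, using the König's-lemma argument of Proposition~\ref{prop-single-head-re}). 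So the halting-on-all-inputs problem would be in $\Sigma^0_1$, contradicting $\Sigma^0_1 \subsetneq \Pi^0_2$.

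The main obstacle I anticipate is the bookkeeping in the previous paragraph: making precise that the $\Pi^0_2$-hardness of the \emph{single-factbase} problem $\chaseterminst{\R}{\FB}{\forall}$ transfers to the \emph{all-factbases} problem $\chaseterm{\R}{\forall}$, and dually that the r.e.\ bound for a single fixed factbase upgrades to an r.e.\ bound for $\chaseterm{\R}{\forall}\cap\{\text{atomic-head sets}\}$. The cleanest route is to phrase both Propositions directly in terms of $\chaseterm{\R}{\forall}$: the hardness construction of Proposition~\ref{prop-hardness-termination} should be observed to have the property that $\setR_w\cup\setR_M \in \chaseterm{\R}{\forall}$ iff $M$ halts on all inputs (the extra structure of an arbitrary factbase only adds disconnected junk that cannot feed the TM simulation, and the emergency-brake device keeps every fair derivation finite regardless), and the r.e.\ argument of Proposition~\ref{prop-single-head-re} should be run with \FB\ quantified existentially inside the enumeration. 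Once those two observations are in place, the theorem is just ``$\Sigma^0_1 \subsetneq \Pi^0_2$ forbids a computable function intertwining a $\Sigma^0_1$-complete-or-easier set with a $\Pi^0_2$-hard set in the required biconditional way.''
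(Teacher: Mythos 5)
Your top-level plan---combine Propositions~\ref{prop-single-head-re} and \ref{prop-hardness-termination} with $\Sigma^0_1 \subsetneq \Pi^0_2$---is indeed the paper's, but the two bridging claims you add in order to lift the fixed-factbase propositions to the all-factbase class $\chaseterm{\R}{\forall}$ are exactly where the attempt breaks, and both are genuine gaps. First, it is not true that $\setR_w \cup \setR_M \in \chaseterm{\R}{\forall}$ iff $M$ halts on every input: an arbitrary factbase is not ``disconnected junk'', because it may use the very predicates of the gadget. For instance, from the factbase $\{\Brake(c), \NonFinal(d,e), \Real(e)\}$, Rule~(\ref{main-rule-chain}) keeps extending a $\NonFinal$-chain forever: the retraction that normally blocks it maps the fresh witness to the brake and needs $\Done(c,c)$ and $\NonFinal(c,c)$, which are present in the paper's carefully chosen \FB but are never derivable here. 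Hence $\setR_w \cup \setR_M \notin \chaseterm{\R}{\forall}$ for \emph{every} $M$, so your map $M \mapsto \setR_w\cup\setR_M$ is not a reduction to the all-factbase problem (factbases encoding cyclic or unreachable tape configurations over $\Next$, $\Content{1}$, $\HeadState{q_I}$ raise the same issue for $\setR_M$). Second, the claim that $\chaseterm{\R}{\forall}$ restricted to atomic-head rule sets is recursively enumerable ``by dovetailing over all factbases and bounds $k$'' is fallacious: membership is a $\forall\FB\,\exists k$ condition, and a semi-decision procedure must accept after inspecting only finitely many factbases, for which there is no finite certificate; there is no critical-instance property for the restricted chase that would reduce the universal quantification over factbases to finitely many. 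Proposition~\ref{prop-single-head-re} gives $\Sigma^0_1$ only per fixed \FB, i.e.\ a $\Pi^0_2$ upper bound for the all-factbase version.

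The paper avoids both problems by never leaving the single factbase \FB of Proposition~\ref{prop-hardness-termination}: a computable $f$ producing atomic-head rule sets and preserving the (non-)termination status of the \R-chase would, in particular on KBs over that \FB, compose with the semi-decision procedure of Proposition~\ref{prop-single-head-re} to make the $\Pi^0_2$-hard set $\chaseterminst{\R}{\FB}{\forall}$ recursively enumerable, a contradiction. No lifting of either proposition to $\chaseterm{\R}{\forall}$ is attempted, and that is what keeps the argument sound. If you insist on arguing at the level of the literal all-factbase classes, you would first have to prove your two lifted statements (a hardness construction robust against arbitrary input factbases, and an r.e.\ upper bound for all-instance atomic-head restricted-chase termination); neither follows from the paper's propositions, and the second is not known to hold.
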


This applies in particular to normalisation procedures producing rule sets with atomic-head rules.

% !TEX root = ../main.tex
\section{Conclusion}
\label{section:conclusions}

As shown in this paper, normalisation procedures do have an impact, sometimes unexpected, on chase termination. This is particularly true regarding the restricted chase, which is the most relevant in practice but also the most difficult to control. 
We extend the understanding of its behavior by three results. We show that the Datatog-first strategy is in fact not always the most terminating, which goes against a common belief. We introduce a new atomic-decomposition (two-way), which behaves nicely, in particular regarding the Datalog-first restricted chase, but still has a negative impact on the restricted chase termination. 
This leads to us to show a more fundamental decidability result, which implies that no computable atomic-decomposition exists that exactly preserves the termination of the restricted chase (i.e., termination and non-termination). Note however that our result does not rule out the existence of a computable normalisation procedure  into atomic-head rules that would improve the termination of the restricted chase, although this seems unlikely. 
Future work includes investigating normalisation procedures for first-order logical formulas, to translate these into the existential rule framework.

\section*{Acknowledgements} This work was partly supported by the ANR project CQFD (ANR-18-CE23-0003).

%% The file kr.bst is a bibliography style file for BibTeX 0.99c
\bibliographystyle{kr}
\bibliography{references}

\begin{tr}
\appendix
\onecolumn
\Large
\section{Proofs of Section~\ref{section:chase-terminating-sets}}

\begin{reptheorem}{theorem:chase-terminating-sets-equality}
For every $\X \in \{\Ob, \SO, \E\}$, we have that $\chaseterm{\X}{\forall} = \chaseterm{\X}{\exists} = \chaseterm{\DF{\X}}{\forall} = \chaseterm{\DF{\X}}{\exists}$.
\end{reptheorem}
\begin{proof}
The theorem follows from (1), (2), and (3).
\begin{enumerate}
\item We have that $\chaseterm{\Ob}{\forall} = \chaseterm{\Ob}{\exists} = \chaseterm{\DF{\Ob}}{\forall} = \chaseterm{\DF{\Ob}}{\exists}$.
\begin{enumerate}
\item[A.] Consider some fair \Ob-derivations $\der = (\emptyset, \FB_0), (t_1, \FB_1), (t_2, \FB_2), \ldots$ and $\der'$ from some KB $\KB$.
We can show via induction that $\FB_i \subseteq \funres{\der'}$ for every $i \geq 1$.
\item[B.] By (1.A), all fair \Ob-derivations from the same KB produce the same result.
Therefore, if a KB \KB admits one fair infinite \Ob-derivation, then all fair \Ob-derivations from \KB are infinite.
\item[C.] Consider a rule set \setR and the following cases:
\begin{itemize}
\item There is a KB of the form $\Tuple{\setR, \FB}$ that admits one infinite fair \Ob-derivation.
Then, all fair \Ob-derivations from the KB \Tuple{\setR, \FB} are infinite by (1.B).
Therefore:
$$\setR \notin \chaseterm{\Ob}{\forall} \cup \chaseterm{\Ob}{\exists} \cup \chaseterm{\DF{\Ob}}{\forall} \cup \chaseterm{\DF{\Ob}}{\exists}$$
\item There is no KB of the form $\Tuple{\setR, \FB}$ that admits an infinite fair \Ob-derivation.
That is, every fair \Ob-derivation from a KB of the form $\Tuple{\setR, \FB}$ is finite.
Therefore:
$$\setR \in \chaseterm{\Ob}{\forall} \cap \chaseterm{\Ob}{\exists} \cap \chaseterm{\DF{\Ob}}{\forall} \cap \chaseterm{\DF{\Ob}}{\exists}$$
\end{itemize}
\item[D.] By (1.C), we have that, for a rule set \setR:
 $$\setR \notin \chaseterm{\Ob}{\forall} \cup \chaseterm{\Ob}{\exists} \cup \chaseterm{\DF{\Ob}}{\forall} \cup \chaseterm{\DF{\Ob}}{\exists} \text{ or }\setR \in \chaseterm{\Ob}{\forall} \cap \chaseterm{\Ob}{\exists} \cap \chaseterm{\DF{\Ob}}{\forall} \cap \chaseterm{\DF{\Ob}}{\exists}$$
\end{enumerate}
\item We have that $\chaseterm{\SO}{\forall} = \chaseterm{\SO}{\exists} = \chaseterm{\DF{\SO}}{\forall} = \chaseterm{\DF{\SO}}{\exists}$.
\begin{enumerate}
\item[A.] Given some fair \SO-derivations $\der = (\emptyset, \FB_0), (t_1, \FB_1), (t_2, \FB_2), \ldots$ and $\der'$ from a KB $\KB$, we can show via induction that, for every $i \geq 1$, there is an injective homomorphism $\Hom_i$ from $\FB_i$ to $\funres{\der'}$.
\item[B.] By (a), all fair \SO-derivations from a KB \KB are infinite if \KB admits one fair infinite \SO-derivation.
\item[C.] The remainder of the proof is analogous to the one of (1).
\end{enumerate}
\item We have that $\chaseterm{\E}{\forall} = \chaseterm{\E}{\exists} = \chaseterm{\DF{\E}}{\forall} = \chaseterm{\DF{\E}}{\exists}$.
\begin{enumerate}
\item[A.] Every fair \E-derivation from a KB \KB is finite iff \KB admits a finite universal model.
\item[B.] By (a), all fair \E-derivations from a KB \KB are infinite if \KB admits one fair infinite \E-derivation.
\item[C.] The remainder of the proof is analogous to the one of (1).
\end{enumerate}
\end{enumerate}
\end{proof}

\begin{reptheorem}{theorem:chase-terminating-sets-strict-subsets}
The following hold: $\chaseterm{\Ob}{\forall} \subset \chaseterm{\SO}{\forall} \subset \chaseterm{\R}{\forall} \subset \chaseterm{\DF{\R}}{\forall} \subset \chaseterm{\DF{\R}}{\exists} \subset \chaseterm{\R}{\exists} \subset \chaseterm{\E}{\forall}$
\end{reptheorem}
\begin{proof}
The theorem follows from (1--5) and Lemma~\ref{lemma:theorem-2-aux}.
\begin{enumerate}
\item We obtain $\chaseterm{\Ob}{\forall} \subset \chaseterm{\SO}{\forall} \subset \chaseterm{\R}{\forall}$ from Theorem~4.5 in \cite{anatomychase}.
\item We obtain $\chaseterm{\R}{\forall} \subseteq \chaseterm{\DF{\R}}{\forall} \subseteq \chaseterm{\DF{\R}}{\exists} \subseteq \chaseterm{\R}{\exists}$ from Definition~\ref{definition:chase-terminating-sets}.
\item We verify that the rule set $\setR = \{P(x, y) \to \exists z . P(y, z), P(x, y) \to P(y, x)\}$ is in $\chaseterm{\DF{\R}}{\forall} \setminus \chaseterm{\R}{\forall}$.
\begin{enumerate}
\item[A.] The rule set $\setR$ is in $\chaseterm{\DF{\R}}{\forall}$.
Note that $\FB \cup \{P(t, u) \mid P(u, t) \in \FB\} = \funres{\der}$ for every factbase \FB and every fair \DF{\R}-derivation \der from \Tuple{\setR, \FB}.
\item[B.] The rule set $\setR$ is not in \chaseterm{\R}{\forall}.
Note that the KB \Tuple{\setR, P(a, b)} admits an infinite fair \R-derivation $\der = (\emptyset, \FB), (t_1, \FB_1), (t_2, \FB_2), \ldots$ such that
\begin{align*}
\FB_1 = \{&P(b, z_1)\} \cup \FB_0,	& \FB_2 = \{&P(b, a)\} \cup F_1,	& \FB_3 = \{&P(z_1, z_2)\} \cup F_1, \\
\FB_4 = \{&P(z_1, b)\} \cup \FB_3,	& \FB_5 = \{&P(z_2, z_3)\} \cup F_4,	& \FB_6 = \{&P(z_2, z_1)\} \cup F_5, \ldots
\end{align*}
\end{enumerate}
\item We verify that the rule set $\setR = \{P(x, y) \to \exists z . P(y, z) \wedge P(z, y), P(x, y) \to \exists z . P(y, z)\}$ is in $\chaseterm{\DF{\R}}{\exists}$ but not in $\chaseterm{\DF{\R}}{\forall}$.
\begin{enumerate}
\item[A.] The rule set $\setR$ is in \chaseterm{\DF{\R}}{\exists}.
Note that, for every factbase $\FB$, there is a finite fair \R-derivation \der from \Tuple{\setR, \FB} and an injective homomorphism from $\funres{\der}$ to $\FB \cup \{P(t, x_t), P(x_t, t) \mid t \in \EI{\Terms}{\FB}\}$ where $x_t$ is a fresh variable unique for $t$.
This derivation can be obtained by exhaustively applying the rule $P(x, y) \to \exists z . P(y, z) \wedge P(z, y)$ on \FB.
\item[B.] The rule set from (5) is not in \chaseterm{\DF{\R}}{\forall}.
Note that the KB \Tuple{\setR, P(a, b)} admits an infinite fair \R-derivation $\der = (\emptyset, \FB), (t_1, \FB_1), (t_2, \FB_2), \ldots$ such that
\begin{align*}
\FB_1 = \{&P(b, z_1)\} \cup \FB_0,	& \FB_2 = \{&P(b, v_1), P(v_1, b)\} \cup F_1, \\
\FB_3 = \{&P(z_1, z_2)\} \cup F_1,	& \FB_4 = \{&P(z_1, v_2), P(v_2, z_1)\} \cup \FB_3, \\
\FB_5 = \{&P(z_2, z_3)\} \cup F_4,	& \FB_6 = \{&P(z_2, v_3), P(v_3, z_2)\} \cup F_5, \ldots
\end{align*}
\end{enumerate}
%
%\item We prove that $\chaseterm{\DF{\R}}{\forall} \setminus \chaseterm{\R}{\forall}$ is non-empty. \subset \chaseterm{\DF{\R}}{\exists} \subset \chaseterm{\R}{\exists}$.
%
\item We obtain $\chaseterm{\R}{\exists} \subset \chaseterm{\E}{\forall}$ from Proposition~4.7 in \cite{anatomychase}.
Note that a rule set $\setR$ is in $\chaseterm{\E}{\forall}$ iff the core chase terminates for \setR iff the core chase as defined in \cite{anatomychase} terminates for every KB with \setR.
\end{enumerate}
\end{proof}

\begin{lemma}
\label{lemma:theorem-2-aux}
There is a rule set in $\chaseterm{\R}{\exists}$ that is not in $\chaseterm{\DF{\R}}{\exists}$.
\end{lemma}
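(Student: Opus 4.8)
The plan is to reuse the rule set $\setR = \{(\ref{rule:r-loop}\text{--}\ref{rule:s-succ})\}$ already introduced in the sketch of Theorem~\ref{theorem:chase-terminating-sets-strict-subsets}, and to prove in full the two memberships $\setR \in \chaseterm{\R}{\exists}$ and $\setR \notin \chaseterm{\DF{\R}}{\exists}$.

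To show $\setR \notin \chaseterm{\DF{\R}}{\exists}$, I would argue that the KB $\KB = \Tuple{\setR, \{A(a)\}}$ admits no terminating \DF{\R}-derivation. The key observation is that the Datalog-first priority forces the Datalog rule \eqref{rule:r-loop}, namely $A(x)\to R(x,x)$, to fire on every derived atom $A(t)$ before any existential trigger can be selected; hence, whenever a non-Datalog trigger is chosen, the current factbase already contains $R(t,t)$ for every atom $A(t)$ it contains. Consequently, no trigger of the existential rule \eqref{rule:r-succ}, namely $A(x)\to\exists z.\,R(x,z)$, is ever \R-applicable, since mapping its fresh output variable onto $t$ yields a retraction onto the already present atom $R(t,t)$. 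The only existential rule ever applied is thus \eqref{rule:s-succ}, fired on an atom $R(t,t)$ to produce a fresh atom $S(t,z_t)$; the Datalog rules \eqref{rule:s-loop}, \eqref{rule:a-prop}, \eqref{rule:r-loop} then derive $S(t,t)$, $A(z_t)$, $R(z_t,z_t)$, and the same situation recurs on $z_t$. I would then check that, up to reordering applications of pairwise non-interfering triggers, this is the unique behaviour of a fair \DF{\R}-derivation from $\KB$, so every such derivation is infinite and yields the result depicted in Figure~\ref{figure:infinite-dfr-chase}.

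To show $\setR \in \chaseterm{\R}{\exists}$, I would build, for an arbitrary factbase $F$, a terminating \R-derivation from $\Tuple{\setR, F}$ in three phases: first exhaustively apply \eqref{rule:a-prop}, \eqref{rule:r-succ} and \eqref{rule:s-succ}; then exhaustively apply \eqref{rule:s-loop}; then exhaustively apply \eqref{rule:r-loop}. Each phase terminates by a bookkeeping argument on the fresh variables: the variables $z_t$ created by \eqref{rule:r-succ} never occur as the target of an $S$-atom, hence never carry an $A$-atom, and the variables created by \eqref{rule:s-succ} never occur as the second argument of an $R$-atom, so the cascade of propagations through \eqref{rule:a-prop}, \eqref{rule:r-succ}, \eqref{rule:s-succ} stops after finitely many steps and only finitely many atoms are ever produced. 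The main obstacle is proving fairness, i.e., that no trigger remains \R-applicable after the three phases. The crucial point is that after the \eqref{rule:s-loop} phase every term $t$ carrying an atom $A(t)$ also carries the self-loop $S(t,t)$: indeed $R(t,z_t)$ was created by \eqref{rule:r-succ} and some $S(z_t,\cdot)$ by \eqref{rule:s-succ}, so \eqref{rule:s-loop} derives $S(t,t)$. Hence, when the last phase adds $R(t,t)$ through \eqref{rule:r-loop}, the putative trigger of \eqref{rule:s-succ} on $R(t,t)$ has output $S(t,z)$, which retracts onto $S(t,t)$ and is therefore not \R-applicable; similarly the $R(t,t)$ atoms make all triggers of \eqref{rule:r-succ} non-\R-applicable, and the remaining rules were already exhausted in their own phase. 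The resulting derivation is fair and finite, which witnesses $\setR \in \chaseterm{\R}{\exists}$ and, together with the previous point, establishes the lemma.
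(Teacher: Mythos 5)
Your proposal is correct and follows essentially the same route as the paper: the same rule set $\{(\ref{rule:r-loop}\text{--}\ref{rule:s-succ})\}$, the same witness KB $\Tuple{\setR,\{A(a)\}}$ with the observation that Datalog-first priority makes every trigger of \eqref{rule:r-succ} retract onto an existing $R(t,t)$ so that only \eqref{rule:s-succ} ever creates nulls and the derivation cannot terminate, and the same three-phase strategy (\eqref{rule:a-prop},\eqref{rule:r-succ},\eqref{rule:s-succ}; then \eqref{rule:s-loop}; then \eqref{rule:r-loop}) with fairness resting on $S(t,t)$ being present for every $A(t)$ before $R(t,t)$ is added. The only point to tighten is your remark that ``the remaining rules were already exhausted in their own phase'': you should also note, as the paper does, that the new $S(t,t)$ atoms produced in the second phase do not re-enable \eqref{rule:a-prop}, since the corresponding trigger's output $A(t)$ is already present.
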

\begin{proof}
We show that the rule set $\setR = \{\text{(\ref{rule:r-loop}--\ref{rule:s-succ})}\}$ is in \chaseterm{\R}{\exists} but not in \chaseterm{\DF{\R}}{\exists}.
\vspace*{-\baselineskip}
\begin{center}
\begin{minipage}[t]{0.45\linewidth}
\begin{align}
A(x) &\to R(x, x) \tag{\ref{rule:r-loop}} \\
R(x, y) \wedge S(y, z) &\to S(x, x) \tag{\ref{rule:s-loop}} \\
A(x) \wedge S(x, y) &\to A(y) \tag{\ref{rule:a-prop}}
\end{align}
\end{minipage}
\begin{minipage}[t]{0.45\linewidth}
\begin{align}
A(x) &\to \exists w ~R(x, w) \tag{\ref{rule:r-succ}}\\
R(x, y) &\to \exists v ~S(y, v) \tag{\ref{rule:s-succ}}
\end{align}
\end{minipage}
\end{center}

To show that $\setR \in \chaseterm{\R}{\exists}$ we prove that every KB of the form $\Tuple{\setR, \FB}$ admits a terminating \R-derivation:
\begin{enumerate}
\item We define a derivation \der{} that is computed by starting with the factbase \FB and then applying the rules in \setR in the following manner:
\begin{enumerate}
\item[A.] Apply rules \eqref{rule:a-prop}, \eqref{rule:r-succ}, and \eqref{rule:s-succ} exhaustively (in any arbitrary order).
\item[B.] Apply rule \eqref{rule:s-loop} exhaustively.
\item[C.] Apply rule \eqref{rule:r-loop} exhaustively.
\end{enumerate}
\item After the application of the rules in Step (1.A), every term in \der{} is of depth $2$ or smaller.
In Steps~(1.B) and (1.C), we only apply Datalog rules and hence, no new terms are introduced.
Hence, \der is finite.
\item We argue that no rule in \setR is \R-applicable to the last element of \der{} and hence, this sequence is fair.
\begin{itemize}
\item After Step~(1.A) rules \eqref{rule:a-prop}, \eqref{rule:r-succ}, and \eqref{rule:s-succ} are satisfied.
\item After Step~(1.B) rules \eqref{rule:s-loop}, \eqref{rule:a-prop}, \eqref{rule:r-succ}, and \eqref{rule:s-succ} satisfied.
Note that the only rule that could become \R-applicable after Step~(1.B) is \eqref{rule:a-prop} since it is the only one that features the predicate $S$ in its body.
However, triggers of the form $(\eqref{rule:s-loop}, [x / t, y / t])$ are never \R-applicable to any factbase.
\item After Step~(1.C) all rules in \setR are satisfied.
The only rules that could become \R-applicable after Step~(1.C) are \eqref{rule:s-loop} and \eqref{rule:s-succ} since they are the only ones with the predicate $R$ in their bodies.
However, $S(t, t) \in \funres{\der}$ for every term $t$ such that $A(t) \in \funres{\der}$ due to the exhaustive application of rules \eqref{rule:s-loop}, \eqref{rule:r-succ}, and \eqref{rule:s-succ} in previous steps.
Hence, neither \eqref{rule:s-loop} nor \eqref{rule:s-succ} are \R-applicable at this point.
\end{itemize}
\item By (1--3), \der is a terminating \R-derivation from the KB \Tuple{\setR, \FB}.
\end{enumerate}

\begin{figure}
\tikzset{every loop/.style={min distance=60,in=130,out=50,looseness=20}}
\begin{center}
\begin{tikzpicture}[roundnode/.style={circle, fill=black, inner sep=0pt, minimum size=1mm},
labelnode/.style={fill=white,inner sep=1pt}]

\newcommand{\xsep}{3.75}
\newcommand{\xa}{1*\xsep}
\newcommand{\xb}{2*\xsep}
\newcommand{\xc}{3*\xsep}
\newcommand{\xd}{4*\xsep}
\newcommand{\xe}{5*\xsep}

\newcommand{\ya}{0}

\node[fill,circle,inner sep=0pt,minimum size=5pt, label=-3:{$a : A$}] at (\xa, \ya) (1) {};
\node[fill,circle,inner sep=0pt,minimum size=5pt, label=-3:{$v_{t_2} : A$}] at (\xb, \ya) (2) {};
\node[fill,circle,inner sep=0pt,minimum size=5pt, label=-3:{$v_{t_6} : A$}] at (\xc, \ya) (3) {};
\node[fill,circle,inner sep=0pt,minimum size=5pt, label=-3:{$v_{t_{10}} : A$}] at (\xd, \ya) (4) {};
\node[circle,inner sep=0pt,minimum size=5pt, label=-3:{}] at (\xe, \ya) (5) {};

\path[->] (1) edge [loop above] node[labelnode,pos=0.1] {$R, S$} (1);
\path[->] (2) edge [loop above] node[labelnode,pos=0.1] {$R, S$} (2);
\path[->] (3) edge [loop above] node[labelnode,pos=0.1] {$R, S$} (3);
\path[->] (4) edge [dashed, loop above] node[labelnode,pos=0.1] {$R, S$} (4);

\path[->] (1) edge node[labelnode,pos=0.7] {$S$} (2);
\path[->] (2) edge node[labelnode,pos=0.7] {$S$} (3);
\path[->] (3) edge[dashed] node[labelnode,pos=0.7] {$S$} (4);
\path[->] (4) edge[dashed] node[labelnode,pos=0.7] {} (5);
% (1) edge node {} (2)
% edge node {} (3)
% (2) edge node {} (1)
% edge node {} (3)
% (3) edge node {} (1)
% edge node {} (2);
\end{tikzpicture}
\end{center}

\caption{\Large The only result of the \DF{\R}-chase of the knowledge base $\K = \langle \setR, \{A(a)\} \rangle$.}
\label{figure:infinite-dfr-chase-appendix}
\end{figure}

To show that $\setR \notin \chaseterm{\DF{\R}}{\exists}$ we prove that the knowledge base $\K = \langle \setR, \{A(a)\} \rangle$ only admits infinite \DF{\R}-derivations.
More precisely, we show that
$$\{\funres{\der} \mid \der \text{ a \DF{\R}-derivation from } \K\}$$
is a singleton set containing the infinite factbase depicted in Figure~\ref{figure:infinite-dfr-chase-appendix}.
Note that, given a \DF{\R}-derivation $\der = (\emptyset, F_0), (t_1, F_1), (t_2, F_2), \ldots$ from $\K$, we can show that
\begin{align*}
F_0 =&~ \{A(a)\}				& F_1 =&~ F_0 \cup \{R(a, a)\} \\
F_2 =&~ F_1 \cup \{S(a, v_{t_2})\}	& F_5 =&~ F_2 \cup \{S(a, a), A(v_{t_2}), R(v_{t_2}, v_{t_2})\} \\
F_i =&~ \{S(v_{t_{i-4}}, v_{i})\} \cup F_{i-1}	& F_{i+3} =&~ \{S(v_{t_{i-4}}, v_{t_{i-4}}, A(v_{t_i}), R(v_{t_i}, v_{t_i})\} \cup F_i
\end{align*}
for every $i \geq 6$ that is an even multiple of $4$.
Note how the only rules applicable to $F_0$ and $F_1$ are \eqref{rule:r-loop} and \eqref{rule:s-succ}, respectively.
Since we only consider Datalog-first derivations, we must apply all of the Datalog rules in \setR (in some possible order) to produce $F_5$.
From there on, we must continue applying rule \eqref{rule:s-succ} and the three Datalog rules in \setR in alternation, thus producing an infinite chase \DF{\R}-derivation that yields the factbase depicted in Figure~\ref{figure:infinite-dfr-chase-appendix}.

\end{proof}
\section{Proofs of Section~\ref{sec:spt}}

\begin{repproposition}{prop:spteq}
A rule set \setR is equivalent to the set \spR.
\end{repproposition}

\begin{proof}
Consider a rule $R = B \to \exists \vec{z} . H$ in \setR, the pieces $H_1, \ldots, H_m$ of $R$, and let $\spt(R) = \{R_1, \ldots, R_m\}$.
Existential variables appearing in a piece are disjoint from those appearing in another; otherwise they would not be in different components in the piece graph of $R$.
Hence, $R$ is equivalent to:
\[\forall \vec{x} \forall \vec{y}\dotq B \to (\exists \vec{z}_1\dotq H_1)\wedge \ldots \wedge (\exists \vec{z}_m\dotq H_m)\]
    
For all first-order formulas $A$, $B$, and $C$, $(A \to B) \wedge (A \to C)$ is equivalent to $A \to B\wedge C$.
Hence $R'_1 \wedge \ldots \wedge R'_m$ is equivalent to $R$.
%Since every rule in \spR is in some set $\spt(\Rule)$ for a rule $\Rule \in \setR$, the theorem holds.
\end{proof}

\begin{replemma}{proposition:oblivious-same-result}
Consider some \X-derivations \der and $\der'$ from a KB \KB.
If $\X = \Ob$, then $\funres{\der} = \funres{\der'}$.
If $\X = \SO$, then \funres{\der} is isomorphic to \funres{\der'}.
\end{replemma}
\begin{proof}
See the proof of Theorem~\ref{theorem:chase-terminating-sets-equality}.
\end{proof}

\begin{reptheorem}{thm:sptOSO}
The single-piece decomposition preserves the termination of the \Ob-chase and \SO-chase.
\end{reptheorem}

Theorem~\ref{thm:sptOSO} is a corollary of Lemmas~\ref{lemma:sptO} and \ref{lemma:sptSO}.

\begin{lemma}
\label{lemma:sptO}
The single-piece decomposition preserves the termination of the \Ob-chase.
\end{lemma}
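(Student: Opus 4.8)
The plan is to show that, for every factbase $F$, if $\setR \in \chaseterm{\Ob}{\forall}$ then every \Ob-derivation from $\Tuple{\spR, F}$ is finite; since $F$ is arbitrary, this yields $\spR \in \chaseterm{\Ob}{\forall}$, which is exactly preservation of termination of the \Ob-chase. So fix such an $F$ and write $M := \funCh{\Ob}{\Tuple{\setR, F}}$, which is a \emph{finite} atom set by Lemma~\ref{proposition:oblivious-same-result} and Definition~\ref{def:ch}, using $\setR \in \chaseterm{\Ob}{\forall}$. The first ingredient I would record is a closure property of $M$: for every rule $R = B \to \exists \vz . H$ in $\setR$ and every homomorphism $\sigma$ from $B$ to $M$, we have $\sigma^{R}(H) \subseteq M$, where $\sigma^{R}$ maps each $z \in \vz$ to the canonical fresh variable $z_{(R,\sigma)}$. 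This follows from fairness of the \Ob-derivation computing $M$: the trigger $(R,\sigma)$ is either applied along that derivation, so its output $\sigma^{R}(H)$ appears in $M$; or it is never \Ob-applicable, which forces $\sigma(H)\subseteq M$ already and moreover forces $\vz=\emptyset$, since otherwise introducing $z_{(R,\sigma)}$ would require applying the trigger.

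The core of the argument is an induction along an arbitrary \Ob-derivation $\der' = (\emptyset, F'_0), (t'_1, F'_1), \ldots$ from $\Tuple{\spR, F}$, building injective homomorphisms $\mu_i$ from $F'_i$ to $M$ such that each $\mu_i$ is the identity on $\EI{\Terms}{F}$ and extends $\mu_{i-1}$. The base case takes $\mu_0$ to be the identity on $\EI{\Terms}{F}$, which works since $F \subseteq M$. For the inductive step, write $t'_i = (R_j, \pi)$ where $R_j = B \to \exists \vec{z}_j . H_j$ is the piece rule of some $R = B \to \exists \vz . H \in \setR$ obtained from the piece $H_j$ (so the $\vec{z}_j$ partition $\vz$ as $j$ ranges over the pieces of $R$). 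Set $\sigma := \mu_{i-1}\circ\pi$, a homomorphism from $B$ to $M$; by the closure property, $\sigma^{R}(H_j) \subseteq M$. I would then define $\mu_i$ to extend $\mu_{i-1}$ with $\mu_i(z_{t'_i}) := z_{(R,\sigma)} = \sigma^{R}(z)$ for each $z \in \vec{z}_j$. One checks that $\mu_i\circ\pi^{R_j}$ agrees with $\sigma^{R}$ on every term of $H_j$ (on the frontier because $\mu_i\circ\pi=\sigma$ there, on $\vec{z}_j$ by construction, on constants trivially), so $\mu_i(\funout{t'_i}) = \sigma^{R}(H_j) \subseteq M$ and, together with the induction hypothesis on $F'_{i-1}$, $\mu_i$ is a homomorphism from $F'_i$ to $M$; it is the identity on $\EI{\Terms}{F}$ by construction.

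The delicate point, and the step I expect to be the main obstacle, is preserving \emph{injectivity} of $\mu_i$. Each $z_{t'_i}$ is a fresh variable absent from the domain of $\mu_{i-1}$, so the only risk is that some target $z_{(R,\sigma)}$ was already in the image of $\mu_{i-1}$. Such a target is a fresh variable of $M$ attached to the specific original trigger $(R,\sigma)$, hence it could only have been introduced by an earlier step $t'_\ell = (R_{j'}, \pi')$ whose underlying rule is a piece rule of the \emph{same} rule $R$ and with $\mu_{\ell-1}\circ\pi' = \sigma$; since the $\mu$'s are injective and form an extension chain, this forces $\pi' = \pi$, so $t'_\ell$ and $t'_i$ use the same homomorphism and differ only in the piece. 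They cannot be the identical trigger (the \Ob-applicability condition forbids applying a trigger twice, as its output becomes a subset of the factbase), so $j' \neq j$; then $t'_\ell$ only consumed targets indexed by $\vec{z}_{j'}$, and since distinct pieces of $R$ have disjoint existential variables (Definition~\ref{def:sptrans}), $\vec{z}_{j'} \cap \vec{z}_j = \emptyset$, so the targets $z_{(R,\sigma)}$ for $z \in \vec{z}_j$ are indeed fresh for $\mu_i$. This keeps $\mu_i$ injective on terms, hence on atoms.

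Finally, each $\mu_i$ being an injective homomorphism into $M$ gives $|F'_i| \le |M|$ for all $i$, while $F'_{i-1} \subsetneq F'_i$ along $\der'$ by definition of a derivation; hence $\funlen{\der'} \le |M|$ and $\der'$ is finite. Since $\der'$ and $F$ were arbitrary, $\spR \in \chaseterm{\Ob}{\forall}$, which is the claim; incidentally this also yields the injective homomorphism $\funCh{\Ob}{\Tuple{\spR, F}} \to \funCh{\Ob}{\Tuple{\setR, F}}$ in the form used by the sketch of Theorem~\ref{thm:sptOSO}.
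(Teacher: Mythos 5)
Your proof is correct and follows essentially the same route as the paper's: an induction along an arbitrary derivation from $\spt(\setR)$ that builds injective homomorphisms into the finite set $\funCh{\Ob}{\Tuple{\setR, F}}$ by sending each fresh variable $z_{t'_i}$ of a piece-rule trigger to the fresh variable of the corresponding trigger of the original rule, and then a cardinality bound on derivation length. You are in fact somewhat more explicit than the paper on the two points it glosses over (the closure of $\funCh{\Ob}{\Tuple{\setR, F}}$ under triggers of $\setR$, and injectivity via disjointness of existential variables across pieces), which is fine.
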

\begin{proof}
The lemma follows from (1) and (4):
\begin{enumerate}
\item Consider a rule set $\setR$, a factbase \FB, and a fair \Ob-derivation $\der = (\emptyset, \FB_0), (t_1, \FB_1), \ldots$ from $\Tuple{\spR, \FB}$.
\item We show via induction that, for every $0 \leq i \leq \funlen{\der}$, there is an injective homomorphism $h_i$ such that $h_i(\FB_i) \subseteq \funCh{\Ob}{\Tuple{\setR, \FB}}$.
Regarding the base case, let $h_0$ be the identity \EI{\Terms}{\FB} and note that $\FB_0 = \FB$.
Regarding the induction step, consider some $1 \leq i \leq \funlen{\der}$:
\begin{enumerate}
\item By induction hypothesis, there is an injective homomorphism $h_{i-1}$ with $h_{i-1}(\FB_{i-1}) \subseteq \funCh{\Ob}{\Tuple{\setR, \FB}}$.
\item Consider the rule $R = B \to \exists \vec{z} . H$ and the substitution $\pi$ in the trigger $t_i$.
Moreover, consider some rule $R' = B \to \exists \vec{y} . H' \in \setR$ with $R \in \spt(R')$ and the trigger $t_i' = (R', h_{i-1} \circ \pi)$.
\item For every term $u \in \EI{\Terms}{F_{i-1}}$, let $h_i(t) = h_{i-1}(t)$.
For every term $u \in \EI{\Terms}{F_i} \setminus \EI{\Terms}{F_{i-1}}$, which is of the form $z_{t_i}$ for some variable $z \in \vec{z}$, let $h_i(z_{t_i}) = z_{t_i'}$.
\item Since $t_i$ is \Ob-applicable on $\FB_{i-1}$, we have that $\funsup{t_i} \subseteq \FB_{i-1}$.
By (a), $\funsup{t_i'} \subseteq \funCh{\Ob}{\Tuple{\setR, \FB}}$ and hence, $\funout{t_i'} \subseteq \funCh{\Ob}{\Tuple{\setR, \FB}}$.
Therefore, $h_i(F_i) \subseteq \funCh{\Ob}{\Tuple{\setR, \FB}}$ .
\item By (a) and (c), the function $h_i$ is injective.
\end{enumerate}
\item By (2), the number of terms in any factbase $\FB_i$ is less than the number of terms in $\EI{\Terms}{\funCh{\Ob}{\Tuple{\setR, \FB}}}$.
Hence, finiteness \funCh{\Ob}{\Tuple{\setR, \FB}} implies finiteness of \der.
\item By (1) and (3), we have that $\setR \in \chaseterm{\Ob}{\forall}$ implies $\spt(\setR) \in \chaseterm{\Ob}{\forall}$.
\end{enumerate}
\end{proof}

\begin{lemma}
\label{lemma:sptSO}
The single-piece decomposition preserves the termination of the \SO-chase.
\end{lemma}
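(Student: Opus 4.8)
The plan is to mirror the argument just given for the oblivious case (Lemma~\ref{lemma:sptO}): from an arbitrary fair \SO-derivation of the decomposed knowledge base I would build an \emph{injective} homomorphism into the (essentially unique, by Lemma~\ref{proposition:oblivious-same-result}) \SO-chase result of the original knowledge base, so that finiteness of the latter forces finiteness of the former. The only genuinely new ingredient, compared with the oblivious case, is that \SO-applicability is governed by frontiers, and piecing a rule apart shrinks its frontier; I would therefore have to argue that this shrinking does not break injectivity.

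Concretely, I would fix a rule set $\setR$, a factbase $\FB$, a fair \SO-derivation $\der = (\emptyset, \FB_0), (t_1, \FB_1), \ldots$ from $\Tuple{\spR, \FB}$, and a fair \SO-derivation $\der'$ from $\Tuple{\setR, \FB}$; by Lemma~\ref{proposition:oblivious-same-result} we may take $\funCh{\SO}{\Tuple{\setR, \FB}}$ to be $\funres{\der'}$. For a rule $R' = B \to \exists \vec{z} . H \in \setR$, a variable $z \in \vec{z}$, and a tuple $\vec{a}$ of terms of $\funres{\der'}$, let $\Phi(R', z, \vec{a})$ denote the fresh variable $z_{t'}$ introduced by the trigger $t' = (R', \sigma)$ of $\der'$ with $\sigma(\funfr{R'}) = \vec{a}$, if such a trigger exists (there is at most one, since $\der'$ is an \SO-derivation). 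I would then prove by induction on $i$ that there is an injective homomorphism $h_i$ from $\FB_i$ to $\funCh{\SO}{\Tuple{\setR, \FB}}$, with $h_0$ the identity on $\EI{\Terms}{\FB}$. For the inductive step, if the applied trigger is $t_i = (R'_j, \pi)$ with $R'_j = B \to \exists \vec{z}_j . H'_j$ a piece of $R' = B \to \exists \vec{z} . H \in \setR$, then $h_{i-1} \circ \pi$ is a homomorphism from $B$ to $\funres{\der'}$ (since $\funsup{t_i} = \pi(B) \subseteq \FB_{i-1}$), so $(R', h_{i-1} \circ \pi)$ is a trigger on $\funres{\der'}$, and fairness of $\der'$ makes $\Phi(R', z, (h_{i-1} \circ \pi)(\funfr{R'}))$ defined for every $z \in \vec{z}$. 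I would extend $h_{i-1}$ to $h_i$ by $h_i(z_{t_i}) := \Phi(R', z, (h_{i-1} \circ \pi)(\funfr{R'}))$ for each $z \in \vec{z}_j$, and check that $h_i(\funout{t_i})$ is contained in the head instance of $(R', h_{i-1}\circ\pi)$ in $\funres{\der'}$ (using that $H'_j \subseteq H$ and that $H'_j$ only mentions variables in $\funfr{R'_j} \cup \vec{z}_j$).

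The hard part will be keeping $h_i$ injective, and this is exactly where the structure of the piece-decomposition is used: since a piece $H'_j$ is a subset of $H$ we have $\funfr{R'_j} \subseteq \funfr{R'}$, and since distinct pieces share no existential variable the sets $\vec{z}_j$ partition the existential variables of $R'$. Suppose a collision $h_i(z_{t_i}) = h_{i-1}(s)$ occurs for some $s \in \EI{\Terms}{\FB_{i-1}}$. As $h_i(z_{t_i})$ is a fresh variable introduced in $\der'$, $s$ cannot be a constant, so $s$ is itself a fresh variable $z'_{t_k}$ with $k < i$ and $h_k(z'_{t_k}) = \Phi(R'', z', \vec{b})$; equality of two such fresh variables of $\der'$ forces $R'' = R'$, $z' = z$, and $\vec{b} = (h_{i-1}\circ\pi)(\funfr{R'})$. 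Since $z \in \vec{z}_j$ and the pieces partition the existentials, $t_k$ must also use $R'_j$, say $t_k = (R'_j, \pi_k)$; restricting $\vec{b} = (h_{i-1}\circ\pi)(\funfr{R'})$ to $\funfr{R'_j} \subseteq \funfr{R'}$ and using injectivity of $h_{i-1}$ yields $\pi(\funfr{R'_j}) = \pi_k(\funfr{R'_j})$, i.e.\ $t_i$ and $t_k$ agree on the frontier of $R'_j$, which contradicts the \SO-applicability of $t_i$ on $\FB_{i-1}$ because $\funout{t_k} \subseteq \FB_{i-1}$. The remaining potential collisions (a fresh variable equal to a constant, or two fresh variables created at step $i$ coinciding) are immediate from the shape of $\Phi$.

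To conclude, an injective homomorphism sends distinct atoms to distinct atoms, so $|\FB_i| \le |\funCh{\SO}{\Tuple{\setR, \FB}}|$ for all $i$; since each step of a derivation strictly enlarges the factbase, $\setR \in \chaseterm{\SO}{\forall}$ (which makes $\funCh{\SO}{\Tuple{\setR, \FB}}$ finite) forces $\der$ to be finite, so $\spR \in \chaseterm{\SO}{\forall}$. Together with Theorem~\ref{theorem:chase-terminating-sets-equality} this also yields preservation of sometimes-termination of the \SO-chase.
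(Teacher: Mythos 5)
Your proposal is correct and takes essentially the same route as the paper's own proof: an induction along an arbitrary fair \SO-derivation from $\Tuple{\spR, \FB}$ that builds an injective homomorphism into the \SO-chase result of $\Tuple{\setR, \FB}$, mapping each new existential variable to the one created by the (unique) applied trigger of the original chase with the same rule and frontier image, and concluding finiteness by a cardinality bound. If anything, you make explicit the collision/injectivity argument (via \SO-applicability of $t_i$ and the fact that pieces partition the existential variables) that the paper's proof only states tersely.
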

\begin{proof}
The lemma follows from (1) and (4):
\begin{enumerate}
\item Consider a rule set $\setR$, a factbase \FB, and a fair \SO-derivation $\der = (\emptyset, \FB_0), (t_1, \FB_1), \ldots$ from $\Tuple{\spR, \FB}$.
\item We show via induction that, for every $0 \leq i \leq \funlen{\der}$, there is an injective homomorphism $h_i$ such that $h_i(\FB_i) \subseteq \funCh{\SO}{\Tuple{\setR, \FB}}$.
Regarding the base case, let $h_0$ be the identity \EI{\Terms}{\FB} and note that $\FB_0 = \FB$.
Regarding the induction step, consider some $1 \leq i \leq \funlen{\der}$:
\begin{enumerate}
\item By induction hypothesis, there is an injective homomorphism $h_{i-1}$ with $h_{i-1}(\FB_{i-1}) \subseteq \funCh{\SO}{\Tuple{\setR, \FB}}$.
\item Consider the rule $R = B \to \exists \vec{z} . H$ and the substitution $\pi$ in the trigger $t_i$.
Moreover, consider some rule $R' = B \to \exists \vec{y} . H' \in \setR$ with $R \in \spt(R')$ and the trigger $t_i' = (R', h_{i-1} \circ \pi)$.
\item For every term $u \in \EI{\Terms}{F_{i-1}}$, let $h_i(t) = h_{i-1}(t)$.
\item Since $t_i$ is \SO-applicable on $\FB_{i-1}$, we have that $\funsup{t_i} \subseteq \FB_{i-1}$.
By (a), $\funsup{t_i'} \subseteq \funCh{\SO}{\Tuple{\setR, \FB}}$ and hence, $\funout{t_i''} \subseteq \funCh{\SO}{\Tuple{\setR, \FB}}$ for some trigger $t_i'' = (R,  \pi'')$ such that $\pi''$ and $h_{i-1} \circ \pi$ coincide for all the universal variables in $R$ and $\funsup{t_i''} \subseteq \funCh{\SO}{\Tuple{\setR, \FB}}$.
\item For all $u \in \EI{\Terms}{F_i} \setminus \EI{\Terms}{F_{i-1}}$, which is of the form $z_{t_i}$ for a variable $z \in \vec{z}$, let $h_i(z_{t_i}) = z_{t_i''}$.
\item By (a), (c), (d), and (e), $h_i(F_i) \subseteq \funCh{\SO}{\Tuple{\setR, \FB}}$ .
\item By (a) and (c), the function $h_i$ is injective.
\end{enumerate}
\item By (2), the number of terms in any factbase $\FB_i$ is less than the number of terms in $\EI{\Terms}{\funCh{\Ob}{\Tuple{\setR, \FB}}}$.
Hence, finiteness \funCh{\Ob}{\Tuple{\setR, \FB}} implies finiteness of \der.
\item By (1) and (3), we have that $\setR \in \chaseterm{\SO}{\forall}$ implies $\spt(\setR) \in \chaseterm{\Ob}{\forall}$.
\end{enumerate}
\end{proof}

\begin{reptheorem}{thm:sptR}
The single-piece decomposition does not preserve termination of the \R- or the \DF{\R}-chase.
\end{reptheorem}

\begin{proof}
Consider the rule set $\setR = \{\eqref{rule:p-loop-a}, \eqref{rule:p-successor}\}$ and its single-piece decomposition $\spR = \{(\ref{rule:p-successor}\text{--}\ref{rule:p-range-a})\}$:
\vspace*{-\baselineskip}
\begin{center}
\begin{minipage}{0.42\linewidth}
\begin{align}
P(x, y) &\to P(y, y) \wedge A(y) \tag{\ref{rule:p-loop-a}} \\
A(x) &\to \exists z . P(x, z) \tag{\ref{rule:p-successor}}
\end{align}
\end{minipage}~
\begin{minipage}{0.42\linewidth}
\begin{align}
P(x, y) &\to P(y, y) \tag{\ref{rule:p-loop}} \\
P(x, y) &\to A(y) \tag{\ref{rule:p-range-a}}
\end{align}
\end{minipage}
\end{center}
We verify that $\setR \in \chaseterm{\R}{\forall}$ and $\spR \notin \chaseterm{\R}{\forall}$:
\begin{itemize}
\item The set $\setR$ is in \chaseterm{\R}{\forall} because every \R-derivation \der from a KB of the form $\Tuple{\setR, \FB}$ is finite.
More precisely, we have that $\funres{\der} \subseteq \{P(t, z_t), P(z_t, z_t), A(z_t), P(t, t), A(t) \mid t \in \EI{\Terms}{\FB}\} \cup \FB$.
This is because no trigger of the form $(\eqref{rule:p-successor}, \pi)$ with $\pi(x) \in \Variables$ is a \R-applicable to a factbase in \der.
\item The set \spR is not in \chaseterm{\R}{\forall} because the KB $\Tuple{\spR, \{A(a)\}}$ admits a infinite fair \R-derivation $(\emptyset, F_0), (t_1, F_1), \ldots$ such that:
\begin{align*}
\FB_0 = \{&A(a)\},				& \FB_1 = \{&P(a, z_1)\} \cup F_0,	& \FB_2 = \{&A(z_1)\} \cup F_1, \\
\FB_3 = \{&P(z_1, z_2)\} \cup F_2,	& \FB_4 = \{&P(z_1, z_1)\} \cup F_3,	& \FB_5 = \{&A(z_2)\} \cup F_4, \\
\FB_6 = \{&P(z_2, z_3)\} \cup F_5,	& \FB_7 = \{&P(z_2, z_2)\} \cup F_6,	& \FB_8 = \{&A(z_3)\} \cup F_7,\ldots
\end{align*}
The triggers $t_1$, $t_3$, and $t_6$ feature rule \eqref{rule:p-successor}; triggers $t_2$, $t_5$, and $t_8$ feature rule \eqref{rule:p-range-a}; and triggers $t_4$ and $t_7$ feature rule \eqref{rule:p-loop}.
\end{itemize}
%Note how we apply \eqref{rule:p-successor} to obtain $F_1$, \eqref{rule:p-range-a} to obtain $F_2$, \eqref{rule:p-successor} to obtain $F_3$, \eqref{rule:p-loop} to obtain $F_4$, \eqref{rule:p-range-a} to obtain $F_5$, and so on.

Consider the rule set $\setR = \{\eqref{rule:dummy-ext-1}, \eqref{rule:dummy-ext-2}\}$ and its single-piece decomposition $\spR = \{(\ref{rule:dummy-ext-2}\text{--}\ref{rule:dummy-ext-1-2})\}$:
\vspace*{-\baselineskip}
\begin{center}
\begin{minipage}{0.46\linewidth}
\begin{align}
P(x, y, v) &\to \exists u, w . P(y, y, u) \wedge A(y, w) \tag{\ref{rule:dummy-ext-1}}\\
A(x, u) &\to \exists z, v . P(x, z, v) \tag{\ref{rule:dummy-ext-2}} 
\end{align}
\end{minipage}~
\begin{minipage}{0.38\linewidth}
\begin{align}
P(x, y, v) &\to \exists u . P(y, y, u) \label{rule:dummy-ext-1-1} \\
P(x, y, v) &\to \exists w . A(y, w) \label{rule:dummy-ext-1-2}
\end{align}
\end{minipage}
\end{center}
We verify that $\setR \in \chaseterm{\R}{\forall}$ and $\spR \notin \chaseterm{\R}{\forall}$:
\begin{itemize}
\item The set $\setR$ is in \chaseterm{\R}{\forall} because every \DF{\R}-derivation \der from a KB of the form $\Tuple{\setR, \FB}$ is finite.
More precisely, we have that $\funres{\der}$ is a subset of:
$$\{P(a, a, u_a), A(a, w_a), P(a, z_a, v_a), P(z_a, z_a, u_a'), P(z_a, w_a') \mid a \in \EI{\Terms}{\FB}\} \cup \FB$$
This is because no trigger of the form $(\eqref{rule:dummy-ext-2}, \pi)$ with $\pi(x) \in \Variables$ is a \R-applicable to a factbase in \der.
\item The set \spR is not in \chaseterm{\R}{\forall} because the KB $\Tuple{\spR, \{A(a, b)\}}$ admits a infinite fair \R-derivation $(\emptyset, F_0), (t_1, F_1), \ldots$ such that:
\begin{align*}
\FB_0 = \{&A(a, b)\},					& \FB_1 = \{&P(a, z_1, v_1)\} \cup F_0,	& \FB_2 = \{&A(z_1, w_1)\} \cup F_1, \\
\FB_3 = \{&P(z_1, z_2, v_2)\} \cup F_2,	& \FB_4 = \{&P(z_1, z_1, u_1)\} \cup F_3,	& \FB_5 = \{&A(z_2, w_2)\} \cup F_4, \\
\FB_6 = \{&P(z_2, z_3, v_3)\} \cup F_5,	& \FB_7 = \{&P(z_2, z_2, u_2)\} \cup F_6,	& \FB_8 = \{&A(z_3, w_3)\} \cup F_7, \ldots
\end{align*}
The triggers $t_1$, $t_3$, and $t_6$ feature rule \eqref{rule:dummy-ext-2}; triggers $t_2$, $t_5$, and $t_8$ feature rule \eqref{rule:dummy-ext-1-2}; and triggers $t_4$ and $t_7$ feature rule \eqref{rule:dummy-ext-1-1}.
\end{itemize}
\end{proof}

\begin{reptheorem}{thm:sptExR}
The piece decomposition does not preserve the sometimes-termination of the \R-chase.
\end{reptheorem}
\begin{proof}
The rule set $\setR = \{(\ref{rule:u-init-sp}\text{--}\ref{rule:generate-new-a})\}$ is in \chaseterm{\R}{\exists} and its piece decomposition $\spR = \{(\ref{rule:u-to-r}\text{--}\ref{rule:u-init-sp-2})\}$ is not.
\begin{align}
A(x) \to \exists y, z . U(x, y) \wedge H(&y, x) \wedge U(x, z) \wedge H(z, x) \tag{\ref{rule:u-init-sp}} \\
U(x, y) \wedge U(x, z) &\to R(y, z) \tag{\ref{rule:u-to-r}} \\
U(x, z) \wedge R(y, z) &\to \exists u . R(z, u) \tag{\ref{rule:u-extends-r}} \\
R(x, y) \wedge R(y, z) &\to \exists v . S(z, v) \tag{\ref{rule:r-to-s-sp}} \\
R(x, y) \wedge S(y, z) &\to S(x, x) \tag{\ref{rule:s-loop-sp-1}} \\
 A(x) \wedge U(x, y) \wedge S(y, z) &\to \exists w . H(z, w) \wedge A(w) \tag{\ref{rule:generate-new-a}} \\
A(x) \to \exists y . U(x, y) &\wedge H(y, x) \tag{\ref{rule:u-init-sp-1}} \\
A(x) \to \exists z . U(x, z) &\wedge H(z, x) \tag{\ref{rule:u-init-sp-2}}
\end{align}
This rule set is an adaptation of $\{(\ref{rule:r-loop}\text{--}\ref{rule:s-succ})\}$, which is used to show that $\chaseterm{\DF{\R}}{\exists} \not \NSubset \chaseterm{\R}{\exists}$.

\begin{figure*}
\tikzset{every loop/.style={min distance=60,in=130,out=50,looseness=20}}

\begin{center}
\begin{tikzpicture}[roundnode/.style={circle, fill=black, inner sep=0pt, minimum size=1mm},
labelnode/.style={fill=white,inner sep=1pt}]

\newcommand{\xsep}{2.2}
\newcommand{\xa}{0*\xsep}
\newcommand{\xb}{1.25*\xsep}
\newcommand{\xc}{2*\xsep}
\newcommand{\xd}{2.75*\xsep}
\newcommand{\xe}{4*\xsep}
\newcommand{\xf}{4.75*\xsep}
\newcommand{\xg}{5.5*\xsep}
\newcommand{\xh}{6.75*\xsep}

\newcommand{\ya}{0}

\node[fill,circle,inner sep=0pt,minimum size=5pt, label=-3:{$a : A$}] at (\xa, \ya) (1) {};
\node[fill,circle,inner sep=0pt,minimum size=5pt, label=-3:{}] at (\xb, \ya) (2) {};
\node[fill,circle,inner sep=0pt,minimum size=5pt, label=-3:{}] at (\xc, \ya) (3) {};
\node[fill,circle,inner sep=0pt,minimum size=5pt, label=-3:{$A$}] at (\xd, \ya) (4) {};
\node[fill,circle,inner sep=0pt,minimum size=5pt, label=-3:{}] at (\xe, \ya) (5) {};
\node[fill,circle,inner sep=0pt,minimum size=5pt, label=-3:{}] at (\xf, \ya) (6) {};
\node[fill,circle,inner sep=0pt,minimum size=5pt, label=-3:{$A$}] at (\xg, \ya) (7) {};
\node[fill,circle,inner sep=0pt,minimum size=5pt, label=-3:{}] at (\xh, \ya) (8) {};

\path[->] (1) edge node[labelnode,pos=0.7] {$U$} (2);
\path[->] (2) edge[bend right=25] node[labelnode] {$H$} (1);
\path[->] (2) edge node[labelnode,pos=0.7] {$S$} (3);
\path[->] (3) edge node[labelnode,pos=0.7] {$H$} (4);
\path[->] (4) edge node[labelnode,pos=0.7] {$U$} (5);
\path[->] (5) edge[bend right=25] node[labelnode] {$H$} (4);
\path[->] (5) edge node[labelnode,pos=0.7] {$S$} (6);
\path[->] (6) edge node[labelnode,pos=0.7] {$H$} (7);
\path[->] (7) edge[dotted] node[labelnode,pos=0.7] {$U$} (8);
\path[->] (8) edge[bend right=20, dotted] node[labelnode] {$H$} (7);

\path[->] (2) edge [loop above, out=135, in=45, looseness=25] node[labelnode,pos=0.9] {$R, S$} (2);
\path[->] (5) edge [loop above, out=135, in=45, looseness=25] node[labelnode,pos=0.9] {$R, S$} (5);
\path[->] (8) edge [loop above, dotted, out=135, in=45, looseness=25] node[labelnode,pos=0.9] {$R, S$} (8);
%\path[->] (2) edge [loop above] node[labelnode,pos=0.2] {$R, S$} (2);
%\path[->] (3) edge [loop above] node[labelnode,pos=0.2] {$R, S$} (3);
%\path[->] (4) edge [dashed, loop above] node[labelnode,pos=0.2] {$R, S$} (4);

\end{tikzpicture}
\end{center}

\caption{\Large The only result of the \R-chase from the KB $\K$ introduced in the proof of Theorem~\ref{thm:sptExR}}
\label{figure:single-piece-infinite-restricted}
\end{figure*}
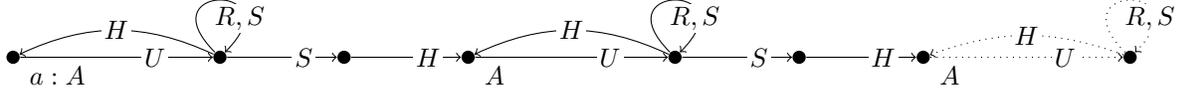

To show that $\spt(\setR) \notin \chaseterm{\R}{\exists}$ one can verify that the KB $\KB = \Tuple{\setR, \{A(a)\}}$ does not admit terminating \R-derivations.
More precisely, one can easily verify that that all fair \R-derivations from \KB yield the same infinite result, which is depicted Figure~\ref{figure:single-piece-infinite-restricted}.

To show that $\setR \in \chaseterm{\R}{\exists}$ we verify that every KB $\KB$ of the form $\Tuple{\setR, \FB}$ admits a terminating \R-derivation:
\begin{enumerate}
\item Consider an \R-derivation \der from \KB that is obtained from the following rule application strategy:\footnote{To better understand this rule application strategy, apply it to the KB \Tuple{\setR, \{A(a)\}} and see what facts it produces.}
\begin{enumerate}
\item Apply all rules except for \eqref{rule:u-init-sp} exhaustively.
\item Apply \eqref{rule:u-init-sp} exhaustively.
\item Consider some (arbitrarily chosen) strict partial order $\prec$ over the set of all terms \Terms.
Then, apply all triggers of the form $\Tuple{\eqref{rule:u-to-r}, \pi}$ with $\pi(y) \prec \pi(z)$.
\item Apply \eqref{rule:u-extends-r} and \eqref{rule:r-to-s-sp} exhaustively.
\item Apply \eqref{rule:u-to-r} and \eqref{rule:s-loop-sp-1} exhaustively.
\end{enumerate}
\item To show that \der is finite, we show that there is an injective homomorphism from $\funres{\der}$ to (finite) factbase.
\begin{itemize}
\item There is an injective homomorphism from the factbase that results after applying Step~(1.a) to:
$$\FB' = \{S(t, t), R(t, t), R(t, u_t), S(u_t, v_t), S(t, v_t'), H(t, w_t), A(w_t), R(t, s) \mid t, s \in \EI{\Terms}{\FB}\} \cup \FB$$
In the above, $u_t$, $v_t$, $v_t'$, and $w_t$ are fresh variables unique for the term $t$.
\item Moreover, there is an injective homomorphism from \funres{\der} to
\begin{align*}
\{&U(t, y_t), H(y_t, t), U(t, z_t), H(z_t, t), R(y_t, z_t), R(z_t, u_t), \\
&S(u_t, v_t), R(z_t, y_t), S(z_t, z_t), S(y_t, y_t) \mid t \in \EI{\Terms}{\FB'}\} \cup \FB'
\end{align*}
In the above, $y_t$, $z_t$, and $u_t$ are fresh variables unique for the term $t$.
\end{itemize}
\item By (2), there is a factbase $\FB_x$ in \der for every $x \in \{a, b, c, d, e\}$ that results from the sequential application Steps~(1.a) to (1.x) to \FB as described in (1).
\item To show that \der is fair we check that no trigger with a rule from \setR is \R-applicable to the last element of \der.
That is, we verify that no such trigger is applicable to $\FB_e$.
\begin{itemize}
\item No trigger with \eqref{rule:u-init-sp} is \R-applicable to $\FB_b$, $\FB_c$, $\FB_d$, or $\FB_e$ since \FirstItem $\FB_b$ is constructed by exhaustively applying \eqref{rule:u-init-sp} and \SecondItem no new facts over the predicate $A$  are introduced in either of these sets.
That is, for every $t \in \EI{\Terms}{\FB_e}$ and every $x \in \{b, c, d, e\}$, we have that $A(t) \in \FB_x$ implies $A(t) \in \FB_a$.
That is, for every $x \in \{b, c, d, e\}$, the set of all facts in $\FB_x$ defined over the predicate $A$ is a subset of $\FB_a$.
\item No trigger with either \eqref{rule:u-to-r} or \eqref{rule:s-loop-sp-1} is \R-applicable to $\FB_e$ since this factbase is constructed by exhaustively applying these two rules.
\item No trigger with either \eqref{rule:u-extends-r} or \eqref{rule:r-to-s-sp} is \R-applicable to $\FB_d$ because this factbase is constructed by exhaustively applying these rules.
Moreover, no trigger with either of these rules is \R-applicable to $\FB_e$ because $R(u, t), R(u, u) \in \FB_e$ for every fact of the form $R(t, u) \in \FB_e \setminus \FB_d$.
\item No trigger with \eqref{rule:generate-new-a} is \R-applicable to $\FB_b$, $\FB_c$, $\FB_d$, and $\FB_e$ since \FirstItem $\FB_b$ is constructed by exhaustively applying this rule and, \SecondItem if $U(t, u), S(u, v) \in \FB_x \setminus \FB_b$ for some $t, u, v \in \Terms$ and $x \in \{c, d, e\}$, then $u = v$ and $H(u, t), A(t) \in \FB_x$.
\end{itemize}
\item By (1), (2), and (4); the sequence \der is a terminating \R-derivation from \Tuple{\setR, \FB}.
\end{enumerate}
\end{proof}

\begin{reptheorem}{thm:sptExDfR}
The single-piece decomposition does not preserve sometimes-termination of the \DF{\R}-chase.
\end{reptheorem}

\begin{proof}
The rule set $\setR = \{(\ref{rule:r-loop-sp}\text{--}\ref{rule:s-succ-sp})\}$ is in \chaseterm{\DF{\R}}{\exists} and its piece decomposition $\spR = \{(\ref{rule:s-loop-sp}\text{--}\ref{rule:r-loop-sp-2})\}$ is not.
\begin{align}
A(x) &\to \exists y . R(x, x) \wedge H(x, y) \tag{\ref{rule:r-loop-sp}} \\
R(x, y) \wedge S(y, z) &\to S(x, x) \tag{\ref{rule:s-loop-sp}} \\
A(x) \wedge S(x, y) &\to A(y) \tag{\ref{rule:a-prop-sp}} \\
A(x) &\to \exists z . R(x, z) \tag{\ref{rule:r-succ-sp}} \\
R(x, y) &\to \exists w . S(y, w) \tag{\ref{rule:s-succ-sp}} \\
A(x) &\to R(x, x) \tag{\ref{rule:r-loop-sp-1}} \\
A(x) &\to \exists y . H(x, y) \tag{\ref{rule:r-loop-sp-2}}
\end{align}
Note that the only difference with respect to $\{(\ref{rule:r-loop}\text{--}\ref{rule:s-succ})\}$, which is used to show that $\chaseterm{\DF{\R}}{\exists} \not \NSubset \chaseterm{\R}{\exists}$, is the atom $H(x, y)$ in the first rule that makes this rule non Datalog, preventing its early application.

\begin{figure}
\begin{center}
\begin{tikzpicture}[roundnode/.style={circle, fill=black, inner sep=0pt, minimum size=1mm},
labelnode/.style={fill=white,inner sep=1pt}]

\newcommand{\xsep}{4}
\newcommand{\xa}{0*\xsep}
\newcommand{\xap}{0.35*\xsep}
\newcommand{\xb}{1*\xsep}
\newcommand{\xbp}{1.35*\xsep}
\newcommand{\xc}{2*\xsep}
\newcommand{\xcp}{2.35*\xsep}
\newcommand{\xd}{3*\xsep}
\newcommand{\xdp}{3.35*\xsep}
\newcommand{\xe}{4*\xsep}

\newcommand{\ya}{0}
\newcommand{\yb}{0.85}

\node[fill,circle,inner sep=0pt,minimum size=5pt, label=-3:{$a : A$}] at (\xa, \ya) (1) {};
\node[fill,circle,inner sep=0pt,minimum size=5pt, label=-3:{}] at (\xap, \yb) (1p) {};
\node[fill,circle,inner sep=0pt,minimum size=5pt, label=-3:{$A$}] at (\xb, \ya) (2) {};
\node[fill,circle,inner sep=0pt,minimum size=5pt, label=-3:{}] at (\xbp, \yb) (2p) {};
\node[fill,circle,inner sep=0pt,minimum size=5pt, label=-3:{$A$}] at (\xc, \ya) (3) {};
\node[fill,circle,inner sep=0pt,minimum size=5pt, label=-3:{}] at (\xcp, \yb) (3p) {};
\node[fill,circle,inner sep=0pt,minimum size=5pt, label=-3:{$A$}] at (\xd, \ya) (4) {};
\node[fill,circle,inner sep=0pt,minimum size=5pt, label=-3:{}] at (\xdp, \yb) (4p) {};
\node[circle,inner sep=0pt,minimum size=5pt, label=-3:{}] at (\xe, \ya) (5) {};

\path[->] (1) edge [loop above, in=90, out=155, looseness=40] node[labelnode,pos=0.2] {$R, S$} (1);
\path[->] (2) edge [loop above, in=90, out=155, looseness=40] node[labelnode,pos=0.2] {$R, S$} (2);
\path[->] (3) edge [loop above, in=90, out=155, looseness=40] node[labelnode,pos=0.2] {$R, S$} (3);
\path[->] (4) edge [dotted, loop above, in=90, out=155, looseness=40] node[labelnode,pos=0.2] {$R, S$} (4);

\path[->] (1) edge node[labelnode] {$H$} (1p);
\path[->] (2) edge node[labelnode] {$H$} (2p);
\path[->] (3) edge node[labelnode] {$H$} (3p);
\path[->] (4) edge[dotted] node[labelnode] {$H$} (4p);

\path[->] (1) edge node[labelnode,pos=0.7] {$S$} (2);
\path[->] (2) edge node[labelnode,pos=0.7] {$S$} (3);
\path[->] (3) edge[dotted] node[labelnode,pos=0.7] {$S$} (4);

\end{tikzpicture}
\end{center}

\caption{\Large The only result of the \DF{\R}-chase from the KB $\K$ introduced in the proof of Theorem~\ref{thm:sptExDfR}}
\label{figure:single-piece-infinite-df-restricted}
\end{figure}

To show that $\spt(\setR) \notin \chaseterm{\R}{\exists}$ one can verify that the KB $\KB = \Tuple{\setR, \{A(a)\}}$ does not admit terminating \R-derivations.
More precisely, one can easily verify that that all fair \R-derivations from \KB yield the same infinite result, which is depicted Figure~\ref{figure:single-piece-infinite-df-restricted}.

To show that $\setR \in \chaseterm{\R}{\exists}$ we verify that every KB $\KB$ of the form $\Tuple{\setR, \FB}$ admits a finite and fair \R-derivation.
Such a derivation can be obtained by applying the following rule application strategy: ; then 

To show that $\setR \in \chaseterm{\R}{\exists}$ we verify that every KB $\KB$ of the form $\Tuple{\setR, \FB}$ admits a terminating \R-derivation:
\begin{enumerate}
\item Consider an \R-derivation \der from \KB that is obtained from the following rule application strategy on \FB:
\begin{enumerate}
\item Apply (\ref{rule:s-loop-sp}\text{--}\ref{rule:s-succ-sp}) exhaustively (in any order that prioritises Datalog rules).
\item Apply \eqref{rule:r-loop-sp} exhaustively
\end{enumerate}
\item The sequence \der is finite, since there is an injective homomorphism from $\funres{\der}$ to:
$$\{S(t, t), R(t, t), A(t), R(t, z_t), S(z_t, w_t), S(t, w_t'), H(t, y_t) \mid t \in \EI{\Terms}{\FB}\}$$
\item By (2), there is a factbase $\FB_a$ that is obtained from applying Step~(1.a) to the factbase \FB.
\item To show that \der is fair we check that no trigger with a rule from \setR is \R-applicable to the last element of \der.
That is, we verify that no such trigger is applicable to $\funres{\der}$; see (a), (d), (e), and (f).
\begin{enumerate}
\item No trigger with \eqref{rule:r-loop-sp} is applicable to $\funres{\der}$ because this factbase is obtained from the exhaustive application of this rule.
\item If $R(t, u) \in \funres{\der} \setminus \FB_a$ for some $t, u \in \Terms$, then $t = u$ and $A(t) \in \FB_a$ since $R(t, u)$ is produced by the application of \eqref{rule:r-loop-sp}.
\item If $A(t) \in \FB_a$ for some $t \in \Terms$, then $S(t, t) \in \FB_a$ since $\FB_a$ is obtained from the exhaustive application of \eqref{rule:r-succ-sp}, \eqref{rule:s-succ-sp}, and \eqref{rule:s-loop-sp}.
\item No trigger with \eqref{rule:s-loop-sp} is applicable to $\FB_a$ because this factbase is obtained from the exhaustive application of this rule.
No trigger with this rule is applicable to \funres{\der} because, for every $R(t, u) \in \funres{\der} \setminus \FB_a$,  we have that $S(t, t) \in \FB_a$ by (2.b) and (2.c).
\item No trigger with \eqref{rule:a-prop-sp} or \eqref{rule:r-succ-sp} is applicable to $\FB_a$ because this factbase is obtained from the exhaustive application of these rules.
Moreover, no trigger with these rules is applicable to \funres{\der} because the set of all facts in \funres{\der} defined over the predicate $A$ or $S$ is a subset of $\FB_a$.
\item No trigger with \eqref{rule:s-succ-sp} is applicable to $\FB_a$ because this factbase is obtained from the exhaustive application of this rule.
No trigger with this rule is applicable to \funres{\der} because, for every $R(t, u) \in \funres{\der} \setminus \FB_a$,  we have that $t = u$ and $S(t, t) \in \FB_a$ by (2.b) and (2.c).

\end{enumerate}
\item By (1), (2), and (4); the sequence \der is a terminating \R-derivation from \Tuple{\setR, \FB}.
\end{enumerate}

\end{proof}

%\subsection{Chase Termination: Equivalent Variant}\label{sub:sptE}

\begin{reptheorem}{thm:sptE}
The single-piece decomposition preserves the termination of the $\E$-chase.
\end{reptheorem}

\begin{proof}
Given some KBs $\K = \Tuple{\setR, \FB}$ and $\K' = \Tuple{\setR', \FB}$ where $\setR$ and $\setR'$ are equivalent rule sets, a factbase $M$ is a universal model for $\K$ iff it is a universal model for $\K'$.
Therefore, a KB $\Tuple{\setR, \FB}$ admits a finite universal model iff \Tuple{\spR, \FB} also admits one by Proposition~\ref{prop:spteq}.
Since a rule set \setR is in \chaseterm{\E}{\forall} iff every KB of the form $\Tuple{\setR, \FB}$ admit a finite universal model, the theorem holds.
\end{proof}

The single-piece decomposition does improve the termination of some chase variants:

\begin{reptheorem}{thm:sp-improves}
The single-piece decomposition improves the termination and sometimes-termination of the \SO-, the \R-, and the \DF{\R}-chase; it does not improve the termination of the \Ob- and \E-chase.
\end{reptheorem}
\begin{proof}
The theorem follows from (1), (2), and (3):
\begin{enumerate}
\item We show that the rule set $\setR = \{P(x, y) \to \exists z . P(x, z) \wedge R(x, y)\}$ is not in \chaseterm{\SO}{\forall}, \chaseterm{\R}{\exists}, \chaseterm{\R}{\forall}, \chaseterm{\DF{\R}}{\exists}, or \chaseterm{\DF{\R}}{\forall} and that $\spR$ is in all of these sets.
\begin{itemize}
\item To show that \setR is not in $\chaseterm{\X}{\forall} \cup \chaseterm{\X}{\forall}$ for any $\X \in \{\Ob, \R, \DF{\R}\}$, it suffices to observe that the KB $\Tuple{\setR, P(a, b)}$ admits exactly one \X-derivation, which is infinite.
\item The rule set \spR is in $\chaseterm{\X}{\forall} \cup \chaseterm{\X}{\forall}$ for any $\X \in \{\Ob, \R, \DF{\R}\}$ because all \X-derivations from a KB of the form $\Tuple{\spR, \FB}$ are finite.
More precisely, given some \X-derivation \der from \Tuple{\spR, \FB}, there is an injective homomorphism from $\funres{\der}$ to $\{R(t, u), P(t, z_t), R(t, z_t) \mid t, u \in \EI{\Terms}{\FB}\} \cup \FB$.
\end{itemize}
\item To show that the \E-chase does not improve termination, we can use an argument analogous to the proof of Theorem~\ref{thm:sptE}.
\item To show that the \Ob-chase does not improve terminating, we can show via induction that, for every KB $\Tuple{\setR, \FB}$, the sets $\funCh{\Ob}{\Tuple{\setR, \FB}}$ and $\funCh{\Ob}{\Tuple{\spt(\setR), \FB}}$ are isomorphic (where \funCh{\Ob}{\cdot} is the function from Definition~\ref{def:ch}, which maps a KB to its only \Ob-chase result).
We can do so with an argument analogous to the proof of Lemma~\ref{lemma:sptO}.
\end{enumerate}
\end{proof}

%\begin{reptheorem}{thm:sptBDDP}
%A rule set $\setR$ is BDDP iff $\spR$ is BDDP.
%\end{reptheorem}
%\begin{proof}
%The theorem follows since, for a KB $\KB = \Tuple{\setR, \FB}$ and some $i \geq 1$, we have that $\funchase{i}{\Tuple{\setR, F}}$ and $\funchase{i}{\Tuple{\spR, F}}$ are isomorphic.
%Such a claim can easily be proven via induction.
%\end{proof}

\section{Proofs of Section~\ref{sec:1ad}}

Next, we denote by $\Sigma$ the set of \emph{original predicates}, i.e., those occuring in $\setR$, or in $\KB = \Tuple{\setR, \FB}$ when a KB is considered. Furthermore, given a rule set $\setR$, we denote by $\setR_X$ the subset of $\oadR$ containing all the rules that introduce a fresh predicate, i.e., 
$$\setR_X=\ens{B\to X_R[\vec{y}]\mid R=B\to H\in\setR, X_R \not \in \Sigma}$$

\subsection{Proof of Theorem~\ref{thm:1adBDDP}}

\begin{proposition}
\label{prop:RTo1ad}
For a factbase $\FB$, a rule set $\setR$, and some $i \geq 1$; we have $\funCh{i}{\setR, \FB} \subseteq \funCh{2i}{\oadR, \FB}$.
\end{proposition}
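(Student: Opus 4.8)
The plan is to prove this by a straightforward induction on $i$, after isolating two elementary facts about the breadth-first one-step operators $\setR(\cdot)$ and $\oadR(\cdot)$. The first is \emph{monotonicity}: if $G \subseteq G'$ then $\oadR(G) \subseteq \oadR(G')$. This is immediate, since the fresh variable $z_t$ attached to a trigger $t = \Tuple{R, \pi}$ depends only on $R$ and $\pi$ and not on the factbase on which $\pi$ is defined, so every trigger on $G$ is also a trigger on $G'$ with the same output; hence $\setR'(G) \subseteq \setR'(G')$ for any rule set $\setR'$, in particular for $\oadR$.

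The second fact is the crux of the argument: \emph{one step of $\setR$ is simulated by two steps of $\oadR$}, i.e. for every factbase $G$ we have $\setR(G) \subseteq \oadR(\oadR(G))$. To prove this I would fix a rule $R = B[\vx, \vy] \to \exists \vz . H[\vx, \vz]$ in $\setR$ and a homomorphism $\pi$ from $B$ to $G$, and show $\funout{\Tuple{R, \pi}} \subseteq \oadR(\oadR(G))$. Applying the head-introducing rule $B \to \exists \vz . X_R(\vx, \vz)$ of $\oadR$ at $\pi$ already puts into $\oadR(G)$ an atom $X_R(\pi(\vx), \vec{z})$ where $\vec{z}$ is a tuple of fresh variables, one per variable of $\vz$; then applying to this atom, for each atom $P(\vt) \in H$, the corresponding projection rule $X_R(\vx, \vz) \to P(\vt)$ puts the atom obtained from $P(\vt)$ by the substitution $\vx \mapsto \pi(\vx)$, $\vz \mapsto \vec{z}$ into $\oadR(\oadR(G))$. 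Ranging over all atoms of $H$, these are exactly the atoms of $\funout{\Tuple{R, \pi}} = \pi^R(H)$ (identifying the fresh variables $\vec{z}$ with the $z_t$'s). Since moreover $G \subseteq \oadR(\oadR(G))$ and $\setR(G)$ is by definition the union of $G$ with all outputs $\funout{\Tuple{R, \pi}}$ of triggers of rules of $\setR$ on $G$, the claimed inclusion follows.

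With these two facts in hand the induction is routine. The base case $i = 1$ is exactly the one-step lemma applied to $G = \FB$, using $\funCh{1}{\setR, \FB} = \setR(\FB)$ and $\funCh{2}{\oadR, \FB} = \oadR(\oadR(\FB))$. For the inductive step, assuming $\funCh{i}{\setR, \FB} \subseteq \funCh{2i}{\oadR, \FB}$, one chains
\[ \funCh{i+1}{\setR, \FB} = \setR(\funCh{i}{\setR, \FB}) \subseteq \oadR(\oadR(\funCh{i}{\setR, \FB})) \subseteq \oadR(\oadR(\funCh{2i}{\oadR, \FB})) = \funCh{2i+2}{\oadR, \FB}, \]
where the first inclusion is the one-step lemma, the second uses the induction hypothesis together with two applications of monotonicity, and the final equality is the definition of the sequence $\funCh{i}{\cdot}$.

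The only delicate point — the one I would be most careful about when writing out the details — is the bookkeeping of fresh variables: the fresh variable produced by a trigger with $R$ must be identified with the one produced by the trigger with the head-introducing rule $B \to \exists \vz . X_R(\vx, \vz)$ at the same $\pi$. Since these two rules share body, frontier and existential variables, the identification is the natural one; formally it only renames variables not occurring in $\setR$ or $\FB$, and every statement about $\funCh{i}{\cdot}$ is invariant under such renamings, so it is harmless. Beyond that, everything reduces to the monotone-operator calculus above, so I do not expect any real difficulty.
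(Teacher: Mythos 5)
Your proposal is correct and follows essentially the same route as the paper's proof: an induction on $i$ whose core is the observation that one application of a rule $R=B\to\exists\vz.H$ is simulated in two $\oadR$-steps by first firing $B\to\exists\vz.X_R(\vx,\vz)$ and then the projection rules $X_R(\vx,\vz)\to P(\vt)$, with the same implicit identification of fresh variables. Isolating monotonicity of the one-step operator as a separate fact is a minor presentational difference, not a different argument.
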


\begin{proof}
We show the result by induction on $i \geq 1$.
Regarding the base case, note that $\funCh{0}{\setR, \FB} = \funCh{0}{\oadR, \FB} = \FB$.
Regarding the induction step, we assume that the proposition holds for some $i \geq 1$ and show that it holds for $i+1$.
\begin{enumerate}
\item By induction hypothesis, we have that $\funCh{i}{\setR, \FB} \subseteq \funCh{2i}{\oadR, \FB}$.
\item By definition, $\funCh{i+1}{\setR, \FB}$ is the minimal set that includes $\funCh{i}{\FB,\setR}$ and \funout{t} for every trigger $t = (R, \pi)$ such that $R \in \setR$ and $\funsup{t} \subseteq \funCh{i}{\setR, \FB}$.
\item Consider a trigger $t = (R, \pi)$ where $R = B[\vec{x}, \vec{z}] \to \exists \vec{y} . (H_1 \wedge \ldots \wedge H_n)[\vec{x}, \vec{y}]$ is a rule in $\setR$ and $\funsup{t} \subseteq \funCh{i}{\setR, \FB}$.
Moreover, consider the rule set $\oad(R) = \{B \to \exists \vec{y} . X_R(\vec{x}, \vec{y}), X_R(\vec{x}, \vec{y}) \to H_1, \ldots, X_R(\vec{x}, \vec{y}) \to H_n\}$, which is a subset of \oadR.
\item $\funout{B \to \exists \vec{y} . X_R(\vec{x}, \vec{y}), \pi} \subseteq \funCh{2i + 1}{\oadR, \FB}$ by (1) and (3).
\item $\funout{\Rule, \pi} \subseteq \funCh{2(i + 1)}{\oadR, \FB}$ for every $R \in \oad(R)$ by (1), (3), and (4).
\item The induction step holds by (2), (3), and (5).
\end{enumerate}
\end{proof}

\begin{proposition}
\label{prop:1adToR}
For a factbase $\FB$ and a rule set $\setR$, for some $i \geq 1$; we have that
$$\funCh{2i}{\FB,\oadR} \subseteq \funCh{1}{\funCh{i}{\funCh{1}{\FB,\setR_X}_{\mid\Sigma},\setR},\setR_X} \cup (\FB \setminus \FB_{\mid \Sigma})$$
\end{proposition}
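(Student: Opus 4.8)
The plan is to prove the inclusion by induction on $i$, after first pinning down exactly what one round of the $\oadR$-chase does. Recall $\oadR=\setR_X\cup\{\,X_R(\vx,\vz)\to P(\vt)\mid P(\vt)\in H,\ (B\to\exists\vz.H)\in\setR\,\}$: the \emph{introduction} rules $\setR_X$ have bodies over $\Sigma$ and a single fresh head atom $X_R(\vx,\vz)$; the \emph{expansion} rules have a single $X_R$-atom in the body and a head over $\Sigma$; and an $X_R$-atom once produced is inert except for firing the expansion rules. So I would first record the single-round lemma: for any $G$, the set $\oadR(G)$ is $G$ together with (a)~the $X_R$-atoms obtained by applying $\setR_X$ to the $\Sigma$-part of $G$, and (b)~the $\Sigma$-atoms obtained by expanding the $X_R$-atoms of $G$; moreover expanding the atom $X_R(\Hom(\vx),z_t)$ produced by the trigger $t=(B\to\exists\vz.X_R(\vx,\vz),\Hom)$ yields precisely $\Hom^R(H)$, i.e.\ the output of the $\setR$-trigger $(B\to\exists\vz.H,\Hom)$. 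Fixing, once and for all, a common trigger-indexed naming of the fresh existential witnesses (identifying each $\setR$-rule with its introduction rule) is what will let me compare the two chases by genuine set inclusion rather than merely up to homomorphism.

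With the single-round lemma in hand, the core of the argument is a ``two rounds of $\oadR$ for one round of $\setR$'' induction. It is cleanest to treat first the case $\EI{\Preds}{\FB}\subseteq\Sigma$, where $\funCh{1}{\FB,\setR_X}_{\mid\Sigma}=\FB$ and $\FB\setminus\FB_{\mid\Sigma}=\emptyset$, so that the target reduces to $\funCh{2i}{\FB,\oadR}\subseteq\funCh{1}{\funCh{i}{\FB,\setR},\setR_X}$. Tracking the two layers separately, one shows by induction that after $2i$ rounds the $\Sigma$-part of $\funCh{2i}{\FB,\oadR}$ is contained in $\funCh{i}{\FB,\setR}$, while its $X_R$-layer ``lags one round behind'' and is contained in the $X_R$-atoms of $\funCh{1}{\funCh{i-1}{\FB,\setR},\setR_X}$, hence a fortiori in those of $\funCh{1}{\funCh{i}{\FB,\setR},\setR_X}$ by monotonicity of $\setR_X$ and of the breadth-first operator. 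The base case $i=1$ is a direct computation from the single-round lemma (round~$1$ creates the $X_R$-atoms from $\setR_X$ on $\FB$; round~$2$ expands them, producing exactly $\funCh{1}{\FB,\setR}\setminus\FB$ on $\Sigma$, and recreates $X_R$-atoms from the new $\Sigma$-atoms), and the inductive step iterates this; together with Proposition~\ref{prop:RTo1ad} this is what underlies the vocabulary-restricted equality $\funCh{i}{\FB,\setR}=\funCh{2i}{\FB,\oadR}_{\mid\Sigma}$ used for Theorem~\ref{thm:1adBDDP}. For a general factbase $\FB$ I would then observe that its atoms over predicates outside $\EI{\Preds}{\oadR}$ are never touched by the chase and lie in $\FB\setminus\FB_{\mid\Sigma}$, while any $X_R$-atoms of $\FB$ are themselves in $\FB\setminus\FB_{\mid\Sigma}$ and contribute, via a single expansion step, only $\Sigma$-atoms that the right-hand side must already contain; running the previous argument from the $\Sigma$-restricted start then gives the claimed bound, the extra outer application of $\setR_X$ and the term $\FB\setminus\FB_{\mid\Sigma}$ being exactly what absorbs these contributions.

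The step I expect to be the real obstacle is the one-round phase shift between \emph{creating} an $X_R$-atom (via $\setR_X$, in odd rounds) and \emph{expanding} it (one round later): a naive attempt to match ``round $2i$ of $\oadR$'' with ``round $i$ of $\setR$'' on the nose fails, because at the moment the $\oadR$-chase has the full $\setR_X$-image of $\funCh{i}{\FB,\setR}$ available it has not yet expanded the most recent batch, and symmetrically the last $\setR$-round's atoms surface only after one further $\oadR$-round. This asymmetry is precisely why the right-hand side is ``$\funCh{i}{\cdot,\setR}$ with one more $\setR_X$ applied on the outside'' rather than a clean $\funCh{2i}{}$. The secondary point that needs care is the one already flagged: keeping the inclusion a literal set inclusion forces the uniform trigger-indexed naming of existential witnesses on both sides; once that convention and the single-round lemma are set up, the induction itself is routine bookkeeping.
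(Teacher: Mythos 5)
Your proposal is correct and takes essentially the same route as the paper: the paper's proof is likewise an induction on $i$ implementing the ``two rounds of $\oadR$ per round of $\setR$'' correspondence, organised as a case analysis on whether a new atom of $\funCh{2i+2}{\FB,\oadR}$ comes from an introduction rule of $\setR_X$ or from an expansion rule, fired at step $2i+1$ or $2i+2$ --- which is your layered invariant ($\Sigma$-part vs.\ lagging $X_R$-layer) in different clothing; your explicit trigger-indexed naming of the fresh witnesses is a point the paper leaves implicit. The only place to flag is the treatment of $X_R$-atoms already present in $\FB$: your claim that their one-step expansions are ``already contained'' in the right-hand side does not hold for the statement as literally written, since $\funCh{1}{\FB,\setR_X}_{\mid\Sigma}=\FB_{\mid\Sigma}$ contains none of them, so the proposition is in fact problematic for such factbases. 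However, the paper's own proof has the same blind spot (its Case~2 presumes every $X_{R'}$-atom was produced by an introduction rule), and the intended reading is evidently the one used downstream in Proposition~\ref{prop:1adBDDPRTo1ad}, where the inner seed is $\funCh{1}{\FB,\oadR}_{\mid\Sigma}$ and thus does absorb those expansions; under that reading your general-factbase step, and the rest of your argument, goes through exactly as in the paper.
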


\begin{proof}
We show the result by induction on $i$. 

For $i=0$, $\funCh{0}{\FB,\oadR} = \FB \subseteq \funCh{1}{\funCh{1}{\FB,\setR_X}_{\mid\Sigma},\setR_X}) \cup (\FB \setminus \FB_{\mid \Sigma})$, as $\FB_{\mid \Sigma}$ is included in the first part of the union.

Let us assume the result to be true for $i$, and let us show it for $i+1$. Let us notice that an atom of $\funCh{2i+2}{\FB,\oadR} \setminus \funCh{2i}{\FB,\oadR}$ can be generated in the following way:
\begin{enumerate}
 \item\label{item-rx-2i} by applying a rule of $\setR_X$ on $\funCh{2i}{\FB,\oadR}$;
 \item\label{item-rset-2i} by applying a rule of $\oadR \setminus \setR_X$ on $\funCh{2i}{\FB,\oadR}$;
 \item\label{item-rx-2i+1} by applying a rule of $\setR_X$ on $\funCh{2i+1}{\FB,\oadR}$;
 \item\label{item-rset-2i+1} by applying a rule of $\oadR \setminus \setR_X$ on $\funCh{2i+1}{\FB,\oadR}$.
\end{enumerate}

Any atom generated by Case\ \ref{item-rx-2i}. belongs to 
$\funCh{1}{\funCh{i+1}{\funCh{1}{\FB,\setR_X}_{\mid\Sigma},\setR},\setR_X} \cup (\FB \setminus \FB_{\mid \Sigma})$. Indeed, the corresponding rule should match its body to atoms having a predicate in $\Sigma$, hence belonging to $\funCh{i}{\funCh{1}{\FB,\setR_X}_{\mid \Sigma},\setR}$, hence the rule is applied when computing $\mathsf{chase}_1$ w.r.t. $\setR_X$ of that set.

Atoms generated by Case\ \ref{item-rset-2i} belong to $\funCh{i+1}{\funCh{1}{\FB,\setR_X}_{\mid\Sigma},\setR}$. Indeed, let $(R,\pi)$ generating such an atom. $(R,\pi)$ must be of the shape $X_{R'}(\vec{y}) \rightarrow \alpha$. Let us consider $\pi'$ such that $(R_a',\pi')$ has been applied to generate $\pi(X_{R'}(\vec{y}))$. $(R',\pi')$ is also applicable on $\funCh{i}{\funCh{1}{\FB,\setR_X}_{\mid\Sigma},\setR}$, and the result of that rule application belongs to $\funCh{i+1}{\funCh{1}{\FB,\setR_X}_{\mid\Sigma},\setR}$.

Case\ \ref{item-rset-2i+1} can be treated in the same way, while Case\ \ref{item-rx-2i+1} is treated by noticing that such atoms must have been generated by an application of a rule of $\setR_X$, mapping its body to atoms that belong (by previous cases) to $\funCh{i+1}{\funCh{1}{\FB,\setR_X}_{\mid\Sigma},\setR}$. Hence the generated atoms belong to $\funCh{1}{\funCh{i+1}{\funCh{1}{\FB,\setR_X}_{\mid\Sigma},\setR},\setR_X}$
\end{proof}

\begin{proposition}\label{prop:1adBDDPRTo1ad}
 For any rule set $\setR$ having the bounded derivation depth property, $\oadR$ has the bounded derivation depth property.
\end{proposition}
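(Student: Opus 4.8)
The plan is to assume $\setR$ has the BDDP and transfer it to $\oadR$, using Propositions~\ref{prop:RTo1ad} and~\ref{prop:1adToR} to squeeze the breadth‑first chase of $\oadR$ between two consecutive levels of the breadth‑first chase of $\setR$, modulo the fresh predicates. Since both inclusions are already available, the proof is mostly a matter of chaining them and then pushing through the bound provided by the BDDP of $\setR$; throughout I write $\funCh{i}{\cdot,\cdot}$ with the factbase as first argument.

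First I would handle factbases over the original vocabulary. For $\FB$ with $\EI{\Preds}{\FB}\subseteq\Sigma$ and any $i\ge 0$, Proposition~\ref{prop:RTo1ad} gives $\funCh{i}{\FB,\setR}\subseteq\funCh{2i}{\FB,\oadR}$, and Proposition~\ref{prop:1adToR} gives $\funCh{2i}{\FB,\oadR}\subseteq\funCh{1}{\funCh{i}{\FB,\setR},\setR_X}$ (here $\FB_{\mid\Sigma}=\FB$, so $\funCh{1}{\FB,\setR_X}_{\mid\Sigma}=\FB$ and $\FB\setminus\FB_{\mid\Sigma}=\emptyset$). As the rules of $\setR_X$ produce only atoms over fresh predicates, restricting to $\Sigma$ turns these into $\funCh{2i}{\FB,\oadR}_{\mid\Sigma}=\funCh{i}{\FB,\setR}$, and each fresh‑predicate atom of $\funCh{2i}{\FB,\oadR}$ results from a single application of an $\setR_X$‑rule to $\funCh{i}{\FB,\setR}$. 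Hence, for a BCQ $\BCQ$ over $\Sigma$, a homomorphism from $\BCQ$ into $\funCh{2i}{\FB,\oadR}$ is exactly a homomorphism into $\funCh{i}{\FB,\setR}$. Taking $k$ a BDDP bound of $\setR$ for $\BCQ$ and using that $\oadR$ is a conservative extension of $\setR$, I get $\Tuple{\oadR,\FB}\models\BCQ$ iff $\Tuple{\setR,\FB}\models\BCQ$ iff $\funCh{k}{\FB,\setR}\models\BCQ$ iff $\funCh{2k}{\FB,\oadR}\models\BCQ$, so $2k$ is a BDDP bound of $\oadR$ for $\BCQ$ on $\Sigma$‑factbases.

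Next I would lift this to an arbitrary factbase $\FB$ over $\EI{\Preds}{\oadR}$. Set $G=\funCh{1}{\FB,\setR_X}_{\mid\Sigma}$, a factbase over $\Sigma$ with $G\subseteq\funCh{1}{\FB,\oadR}$. If $\Tuple{\oadR,\FB}\models\BCQ$ for a BCQ $\BCQ$ over $\Sigma$, then $\BCQ$ maps into $\funCh{2i}{\FB,\oadR}$ for some $i$, which by Proposition~\ref{prop:1adToR} is contained in $\funCh{1}{\funCh{i}{G,\setR},\setR_X}\cup(\FB\setminus\FB_{\mid\Sigma})$; the only $\Sigma$‑atoms of this set are those of $\funCh{i}{G,\setR}$, so $\BCQ$ maps into $\funCh{i}{G,\setR}$, whence $\Tuple{\setR,G}\models\BCQ$ and then $\funCh{k}{G,\setR}\models\BCQ$ by the BDDP of $\setR$. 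Finally, Proposition~\ref{prop:RTo1ad} together with $G\subseteq\funCh{1}{\FB,\oadR}$ gives $\funCh{k}{G,\setR}\subseteq\funCh{2k}{G,\oadR}\subseteq\funCh{2k+1}{\FB,\oadR}$, so $\funCh{2k+1}{\FB,\oadR}\models\BCQ$; the converse being trivial, $2k+1$ is a BDDP bound of $\oadR$ for every BCQ over $\Sigma$.

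It remains to deal with a BCQ $\BCQ$ that mentions a fresh predicate. I would rewrite each atom $X_R(\vt)$ of $\BCQ$: in any chase of $\Tuple{\oadR,\FB}$ such an atom can only be matched to an $X_R$‑atom already present in $\FB$ or to an atom created by the unique rule $B_R\to\exists\vz.X_R(\vx,\vz)$ of $\setR_X$, so one replaces $X_R(\vt)$ either by a lookup into $\FB$ or by a fresh copy of $B_R$ together with the atoms of the corresponding head instance of $H_R$; this produces a finite disjunction of BCQs over $\Sigma$ to which the previous paragraphs apply, at the cost of a bounded number of extra chase rounds. I expect this last step to be the main obstacle: one must verify that the case split is exhaustive and sound and, in particular, keep track of how the nulls introduced for the existential component of an $X_R$‑atom may reappear elsewhere in $\BCQ$, whereas the $\Sigma$‑factbase and arbitrary‑factbase cases follow almost mechanically from Propositions~\ref{prop:RTo1ad} and~\ref{prop:1adToR}.
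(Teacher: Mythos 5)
Your first two steps coincide with the paper's proof: squeeze the breadth-first chase of $\oadR$ between consecutive levels of the chase of $\setR$ via Propositions~\ref{prop:RTo1ad} and~\ref{prop:1adToR}, apply the BDDP of $\setR$, and transfer back to obtain the bound $2k+1$. One caveat on the arbitrary-factbase case: the inner set has to be $\funCh{1}{\FB,\oadR}_{\mid\Sigma}$ (as in the paper's own proof of this proposition), not $G=\funCh{1}{\FB,\setR_X}_{\mid\Sigma}$; since the rules of $\setR_X$ only add fresh-predicate atoms, your $G$ is just $\FB_{\mid\Sigma}$ and misses the $\Sigma$-atoms that the rules $X_R\to H_i$ derive from fresh-predicate atoms already present in $\FB$ (e.g.\ for $\FB=\{X_R(a,b)\}$ one has $\Tuple{\oadR,\FB}\models P(a,b)$ while $G=\emptyset$ entails nothing), so your second paragraph is not complete as written.

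The genuine gap is the step you flag yourself: BCQs mentioning fresh predicates. The replacement you describe---substitute $X_R(\vt)$ by a fresh copy of $B_R$ together with the corresponding instance of $H_R$, or by a lookup into $\FB$---is not sound. Take $\setR=\{B(x)\to\exists z\,.\,P(x,z)\}$, the query $\BCQ=X_R(x,z)\wedge S(z)$ (with $S$ some further predicate) and $\FB=\{B(a),P(a,b),S(b)\}$: your rewritten query $B(x)\wedge P(x,z)\wedge S(z)$ already holds in $\FB$, yet $\Tuple{\oadR,\FB}\not\models\BCQ$, because the only $X_R$-atom ever produced is $X_R(a,z_t)$ for a fresh null $z_t$ that never satisfies $S$. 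So the return trip from the rewritten $\Sigma$-query to $\BCQ$ at bounded depth fails precisely when the existential-position variables of an $X_R$-atom are constrained elsewhere in $\BCQ$---the point you leave open. The paper closes it as follows: after locating the match of $\BCQ$ in the set given by Proposition~\ref{prop:1adToR}, it splits off the atoms mapped into $\FB\setminus\FB_{\mid\Sigma}$, freezes the variables shared with the rest (the substitution $\sigma_{\hat q}$), and applies the sound and complete UCQ (piece-)rewriting of the remaining query with respect to the acyclic rule set $\setR_X$; a piece-rewriting step is only allowed when the existential positions are not shared outside the unified piece, which is exactly the side condition your construction lacks. Without this (or an equivalent soundness argument), the reduction to $\Sigma$-queries, and hence the proof for general BCQs, does not go through.
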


\begin{proof}
 Let $q$ be such that $\FB, \oadR \models q$. There must exists $i$ such that 
  \[\funCh{2i}{\FB,\oadR} \models q.\]
  
By Proposition\ \ref{prop:1adToR}, it holds that 

\[\funCh{1}{\funCh{i}{\funCh{1}{\FB,\oadR}_{\mid\Sigma},\setR},\setR_X} \cup (\FB \setminus \FB_{\mid \Sigma}) \models q\]

Let $\pi$ be a match of $q$ witnessing this entailment. Let $\hat{q}$ be the subset of $q$ containing only the atoms that are not mapped to $\FB \setminus \FB_{\mid\Sigma}$, having as answer variables the terms appearing both in $\hat{q}$ and $q \setminus \hat{q}$. $\pi$ is a match of $\hat{q}$ in $\funCh{1}{\funCh{i}{\funCh{1}{\FB,\oadR}_{\mid\Sigma},\setR},\setR_X}$. Let $\sigma_{\hat{q}}$ be equal to $\pi$ on answer variables of $\hat{q}$ and the identity on other variables. By definition of rewritings, and as $\setR_X$ is acyclic, there exists a finite set of conjunctive queries $\hat{\mathfrak{Q}}$ and $q'\in \hat{\mathfrak{Q}}$ s.t.:

\[\funCh{i}{\funCh{1}{\FB,\oadR}_{\mid\Sigma},\setR} \models \sigma_{\hat{q}}(q').\]

As $\funCh{1}{\FB,\oadR}_{\mid\Sigma}$ is on $\Sigma$, $\funCh{i}{\funCh{1}{\FB,\oadR}_{\mid\Sigma},\setR}$ must also be on $\Sigma$, and hence $q'$ as well. Hence, there is $k_{q}$, independent of $\FB$, such that 
\[\funCh{k_{q}}{\funCh{1}{\FB,\oadR}_{\mid\Sigma},\setR} \models \sigma_{\hat{q}}(q').\]

By Proposition\ \ref{prop:RTo1ad}, it holds that 
\[\funCh{2k_{q}}{\funCh{1}{\FB,\oadR}_{\mid\Sigma},1ad(\setR}) \models \sigma_{\hat{q}}(q').\]

As $\funCh{1}{\FB,\oadR}_{\mid \Sigma} \subseteq \funCh{1}{\FB,\oadR}$, we conclude that:
\[\funCh{2k_q+1}{\FB,\oadR} \models \sigma_{\hat{q}}(q')\]

Hence $\funCh{2k_q+1}{\FB,\oadR} \models q$, as $\FB \setminus \FB_{\mid \Sigma} \subseteq \funCh{2k_q+1}{\FB,\oadR}$.
\end{proof}

\begin{proposition}\label{prop:1adBDDP1adToR}
    For any rule set $\setR$, if $\oadR$ has the bounded derivation depth property, then so do $\setR$.
\end{proposition}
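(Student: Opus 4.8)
The plan is to prove the contrapositive: assuming $\setR$ is not BDDP, I will exhibit a BCQ and a family of factbases witnessing that $\oadR$ is not BDDP either. First I would observe that it suffices to consider BCQs $Q$ and factbases over the original vocabulary $\Sigma = \EI{\Preds}{\setR}$: rules of $\setR$ never create atoms over predicates outside $\Sigma$, rules of $\oadR$ create only the auxiliary $X_R$-atoms in addition, and atoms over predicates foreign to $\Sigma$ are inert for both rule sets and can be handled exactly as the ``foreign'' part of the query is handled in the proof of Proposition~\ref{prop:1adBDDPRTo1ad}. Thus from the failure of BDDP for $\setR$ I obtain a BCQ $Q$ over $\Sigma$ such that, for every $k \geq 0$, some factbase $F$ over $\Sigma$ satisfies $\Tuple{\setR, F} \models Q$ while $\funCh{k}{\setR, F} \not\models Q$.

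The key step is to sharpen Propositions~\ref{prop:RTo1ad} and \ref{prop:1adToR} into an exact identity on the original vocabulary, namely $\funCh{i}{\setR, F} = \funCh{2i}{\oadR, F}_{\mid\Sigma}$ for every $i \geq 0$ and every factbase $F$ over $\Sigma$. The inclusion $\subseteq$ I would read off Proposition~\ref{prop:RTo1ad}, since $\funCh{i}{\setR, F}$ consists only of $\Sigma$-atoms. For the converse I would instantiate Proposition~\ref{prop:1adToR} at this $F$: here $F \setminus F_{\mid\Sigma} = \emptyset$ and $\funCh{1}{F, \setR_X}_{\mid\Sigma} = F$ because the rules in $\setR_X$ add only fresh $X_R$-atoms, so the proposition collapses to $\funCh{2i}{\oadR, F} \subseteq \funCh{1}{\funCh{i}{\setR, F}, \setR_X}$; restricting to $\Sigma$ and again using that $\setR_X$ introduces no $\Sigma$-atoms yields $\funCh{2i}{\oadR, F}_{\mid\Sigma} \subseteq \funCh{i}{\setR, F}$.

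To conclude, I would fix an arbitrary candidate bound $k' \geq 0$ and pick a factbase $F$ over $\Sigma$ with $\Tuple{\setR, F} \models Q$ and $\funCh{k'}{\setR, F} \not\models Q$. Since $\oadR$ is a conservative extension of $\setR$ and $Q$ is over $\Sigma$, the first fact gives $\Tuple{\oadR, F} \models Q$. For the second, the identity above gives $\funCh{2k'}{\oadR, F}_{\mid\Sigma} = \funCh{k'}{\setR, F} \not\models Q$, and since $Q$ is over $\Sigma$ any homomorphism of $Q$ into $\funCh{2k'}{\oadR, F}$ must land in its $\Sigma$-restriction, so $\funCh{2k'}{\oadR, F} \not\models Q$; as $\funCh{k'}{\oadR, F} \subseteq \funCh{2k'}{\oadR, F}$, also $\funCh{k'}{\oadR, F} \not\models Q$. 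Hence $k'$ is not a derivation bound for $\oadR$ with respect to $Q$; as $k'$ was arbitrary, $\oadR$ is not BDDP, which is the contrapositive we wanted. I expect the only delicate points to be the reduction to the vocabulary $\Sigma$ in the first paragraph and the bookkeeping of the factor $2$ between the two chase depths; everything else is a mechanical combination of the two inclusions already established.
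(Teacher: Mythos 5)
Your proof is correct and takes essentially the same route as the paper: the paper also argues by contrapositive, transferring the witnesses via the identity $\funCh{i}{\setR,F} = \funCh{2i}{\oadR,F}_{\mid\Sigma}$ for factbases $F$ over $\Sigma$, which is exactly the consequence of Propositions~\ref{prop:RTo1ad} and~\ref{prop:1adToR} that you derive. The extra bookkeeping you add (the reduction to the vocabulary $\Sigma$, the conservative-extension step giving $\Tuple{\oadR,F}\models Q$, and the monotonicity argument from depth $k'$ to $2k'$) only makes explicit what the paper leaves implicit.
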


\begin{proof}
 Let us assume that $\setR$ does not have the bounded derivation depth property. There exists $q$ and $\{\FB_i\}_{i \in \mathbb{N}}$ such that for all $i$,
 $\FB_i, \setR \models q$
 and
 $\funCh{i}{\FB_i,\setR} \not \models q$.
 
 As for all $\FB_i$ on $\Sigma$,
 
 \[\funCh{i}{\FB_i,\setR} = \funCh{2i}{\FB_i,1ad(\setR})_{\mid \Sigma}\]
 
 if holds that
 
 \[\funCh{2i}{\FB_i,1ad(\setR}) \not \models q,\]
 
 proving that $\oadR$ does not have the bounded derivation depth property. 
 
\end{proof}

\section{Proofs of Section~\ref{sec:2ad}}

%%%%%%%%%%%%%%%%%%%%%%%%%%%%%%%%%%%%%%%%%%%%%%%%%% conservative extension

\subsection{Proofs of Proposition~\ref{prop:adce} and Theorems~\ref{thm:2adOSO}, \ref{thm:2adExR}}

\begin{repproposition}{prop:adce}
    The rule set \tadR is a conservative extension and a universal-conservative extension of \setR.
\end{repproposition}

\begin{proof}
    Let us first notice that for a knowledge base $\langle\setR, \FB\rangle$, every model of $\langle\tadR, \FB\rangle$ is a model of $\langle\oadR, \FB\rangle$, because $\oadR\subseteq\tadR$. As such, we only need to show that \tadR is a conservative extension of \setR to have the result.

    Let $\langle\setR, \FB\rangle$ be a knowledge base. For $(1)$, let \setN be a model of $\langle\tadR,\FB\rangle$. We denote its restriction to the predicates appearing in \setR by \setM. We want to show that \setM is a model of $\langle\setR, \FB\rangle$. First, $\FB\subseteq\setM$, because $\FB$ does not contain any fresh predicate. Let $R=B\to\bigwedge_i H_i$ be a rule and $\pi$ a homomorphism from $B$ to \setM. Since \setM is a restriction of \setN, $\pi$ is a homomorphism from $B$ to \setN. Thus, since every rule in \tadR is satisfied in \setN, the rule $B\to X_R(\vec{x})$ is too, so the atom $\pi^R(X_R(\vec{x}))$ is in \setN. Thus, since for every $i$, the rule $X_R\to H_i$ is satisfied, $\pi^R(H_i)$ is also in the database. As such, the homomorphism $\pi^R$ is an extension of $\pi$ such that forall $i$, $\hat{\pi}(H_i)\in\setM$. Since no $H_i$ features a fresh predicate, they are all in \setM too. Thus, every rule in \setR is satisfied in \setM. As such, \setM is a model of $\langle\setR, \FB\rangle$.

    For $(2)$, let \setM be a model of $\langle\setR, \FB\rangle$. We extend \setM to \setN using the following method: for every rule $R=B\to H\in\setR$, for every homomorphism $\pi$ from $H$ to \setM, we add the atom $\pi(X_R(\vec{x}))$ to \setM. With this definition, \setM and \setN share the same domain and agree on the predicates in \setR. In addition, \setN contains $\FB$. Then, let $R$ be a rule. Let us show that $R$ is satisfied by case analysis on the form of $R$:
    \begin{description}
        \item[If $R=B\to X_R(\vec{x})$: ] If there is a homomorphism $\pi$ from $B$ to \setN, then it is a homomorphism from $B$ to \setM. As such, there is an extension $\hat{\pi}$ of $\pi$ such that $\hat{\pi}(H)\in\setM$ because \setM is a model. Thus, $\hat{\pi}(X_R(\vec{x}))\in\setM$, so $R$ is satisfied.
        \item[If $R=X_R(\vec{x})\to H_i$: ] If there is a homomorphism $\pi$ from $X_R(\vec{x})$ to \setN, then $\pi$ is a homomorphism from $H$ to \setM (else we would not have added $\pi(X_R(\vec{x}))$ to construct \setN), so every atom in $\pi(H)$ is in \setN, which means in particular $\pi(H_i)\in\setN$, so $R$ is satisfied.
        \item[If $R=H\to X_R(\vec{x})$: ] If there is a homomorphism $\pi$ from $H$ to \setM, we added the atom $\pi(X_R(\vec{x}))$ to construct \setN, so $R$ is satisfied.
    \end{description}
    Thus, \setN is a model of $\langle\tadR,\FB\rangle$.

    Now that we have this, we can show that the Two-way atomic decomposition indeed has the property we want.

    Let $\langle\setR,\FB\rangle$ be a knowledge base. For $(1)$, let \setV be a universal model of $\langle\tadR,\FB\rangle$. We want to show that \setU, the restriction of \setV to the predicates appearing in \setR, is a universal model of $\langle\setR, \FB\rangle$. First, since $\langle\tadR,\FB\rangle$ is a conservative extension of $\langle\setR, \FB\rangle$, \setU is a model. To show its universality, we show that it can be homomorphically embedded in any other model of $\langle\setR, \FB\rangle$. Let \setM be another model of $\langle\setR,\FB\rangle$. We can extend \setM into a model \setN of $\langle\tadR,\FB\rangle$. Then, since \setV is a universal model, there is a homomorphism $h$ from \setV to \setN. Since \setM and \setN (resp. \setU and \setV) share the same domain, $h$ is a mapping from \setU to \setM. Let $P(\vec{x})$ be an atom in \setU. As such, $P$ is a predicate in \setR, and is also in \setV. Since $h$ is a homomorphism, $h(P(\vec{x}))\in\setN$. Since our restriction only removes atoms featuring predicates not in \setR, $h(P(\vec{x}))\in\setM$, proving \setU is a universal model.

    For $(2)$, let \setU be a universal model of $\langle\setR, \FB\rangle$. Consider the extension \setV of \setU defined in the context of \tadR being a conservative extension of \setR. It is a model of $\langle\tadR,\FB\rangle$. We want to show that it is a universal model. Let \setN be a model of $\langle\tadR,\FB\rangle$ and \setM the restriction of \setN that is a model of $\langle\setR,\FB\rangle$. Since \setU is a universal model, there is a homomorphism $h$ from \setU to \setM. We show that $h$ is also a homomorphism from \setV to \setN. It is a mapping from \setV to \setN, and for any atom $P(\vec{x})$ in \setV that features no fresh predicate, $h(P(\vec{x}))\in\setN$. Let $X_R(\vec{y})$ be an atom in \setV that features a fresh predicate, with $R=B\to \bigwedge_i H_i$ the rule such that $(B\to X_R)\in\tad(R)$. Since \setV is a model, it features every $H_i(\vec{y_i})$ with $\vec{y_i}$ the restriction of $\vec{y}$ to the variables of $H_i$ (because the rules $X_R\to H_i$ are all satisfied). As such, \setN also features those atoms. Since it is a model of $\langle\tadR,\FB\rangle$, it satisfies the rule $\bigwedge_i H_i\to X_R$, so $X_R(\vec{y})\in\setN$. Thus, \setV is a universal model.
\end{proof}

%%%%%%%%%%%%%%%%%%%%%%%%%%%%%%%%%%%%%%%%%%%%%%%%%% O-SO

\begin{reptheorem}{thm:2adOSO}
    Both atomic decompositions preserve the termination of the oblivious and the semi-oblivious chase.
\end{reptheorem}

\begin{proof}
    First note that in the oblivious and the semi-oblivious chase, firing a rule cannot prevent another one from firing. As such, since $\oadR\subseteq\tadR$, if the oblivious (resp. semi-oblivious) chase terminates on \tadR, it also terminates on \oadR. We can thus prove the result only for the two-way atomic decomposition.

    Let $\X\in\ens{\Ob, \SO}$, \setR be a rule set and $\FB$ a factbase. Let us show by induction that for a derivation $\der=(\emptyset, \FB),(t_1,\FB_1),\ldots$ from $\langle\tadR, \FB\rangle$, there is an injective homomorphism $h$ such that $h(\funres{\der}_{\mid\Sigma})\subseteq\funCh{\X}{\langle\setR, \FB\rangle}$.
    \begin{description}
        \item[Step $0$:] $\FB\subseteq\funCh{\X}{\langle\setR, \FB\rangle}$.
        \item[Step $n$:] Assume the result up to step $n-1$. Thus, there is a homomorphism $h'$ such that $h'((\FB_{n-1})_{\mid\Sigma})\subseteq\funCh{\X}{\langle\setR, \FB\rangle}$. Depending on the trigger $t_n = (R^{ad}, \pi)$, with $R = B\to \bigwedge_i H_i$, we distinguish three cases:
        \begin{description}
            \item[If $R^{ad}=B\to X_R(\vec{x})$,] $(\FB_{n-1})_{\mid\Sigma}=(\FB_n)_{\mid\Sigma}$ so we have the result.
            \item[If $R^{ad}=X_R(\vec{x})\to H_i$,]  since $t_n$ is \X-applicable on $\FB_{n-1}$, its support is in $\FB_{n-1}$. In addition, since $\FB$ does not contain any fresh predicate, there is a $k<n$ such that $t_k=(B\to X_R(\vec{x}), \varphi)$ and $\pi = (\varphi^{R^{ad}})_{|\EI{\Vars}{R}}$ (if $t_n$'s support was introduced by the backwards rule, it would not be applicable). Therefore, $h'(\funsup{t_k}_{\mid\Sigma})\subseteq\funCh{\X}{\langle\setR, \FB\rangle}$. Since the support of $t_k$ is the body of the initial rule, we have $\funsup{t_k}_{\mid\Sigma}=\funsup{t_k}$, which implies that, if we set $t=(R, h\circ\varphi)$, $\funsup{t}\subseteq\funCh{\X}{\langle\setR, \FB\rangle}$. Similarily to the proof of Theorem~\ref{thm:sptOSO}, we can show that we can extend $h'$ to a $h$ such that $h(\funout{t_n})\subseteq\funCh{\X}{\langle\setR, \FB\rangle}$ (because in the \Ob-chase, $t$ was applied, and in the \SO-chase we can find a trigger that shared $t$'s frontier that was applied). We thus have the result.
            \item[If $R^{ad}=\bigwedge_i H_i\to X_R(\vec{x})$,] as in the case of a rule of the form $B\to X_R(\vec{x})$, $(\FB_{n-1})_{\mid\Sigma}=(\FB_n)_{\mid\Sigma}$ so we have the result by induction hypothesis.
        \end{description}
    \end{description}
    We conclude with the same argument of cardinality as in Theorem~\ref{thm:sptOSO}.
\end{proof}

%%%%%%%%%%%%%%%%%%%%%%%%%%%%%%%%%%%%%%%%%%%%%%%%%% R

\begin{reptheorem}{thm:2adExR}
    The two-way atomic decomposition preserves and may gain sometimes-termination of the \R-chase.
\end{reptheorem}

We call a factbase such that no rule in $\oadR\setminus\setR$ (i.e. rules of the form $X_R\to H_i$) is applicable \oad-free.
        
\begin{proof}
    We first prove the preservation of the sometimes-termination. Let $\K=\langle\setR, \FB\rangle$ be a knowledge base on which the restricted chase sometimes terminates, and $\der=(\emptyset, \FB),(\FB_1, t_1),\ldots,(\FB_n,t_n)$ a derivation from \K. Let $\K^{ad}=\langle\tadR, \FB\rangle$. We show by induction on $n$ that there is a derivation $\der^{ad}$ from $\K^{ad}$ such that $\funres{\der^{ad}}_{\mid\Sigma}=\funres{\der}$ and $\funres{\der^{ad}}$ is \oad-free.
    \begin{description}
        \item[$n=0$] If $\der=(\emptyset, \FB)$, we take $\der^{ad}=\der$ and get the result.
        \item[$n>0$] By applying the induction hypothesis to $\der_{|n-1}$, there is a derivation $\der^{ad}_{n-1}$ from $\K^{ad}$ such that $\funres{\der^{ad}_{n-1}}_{\mid\Sigma}=\FB_{n-1}$ and $\funres{\der^{ad}_{n-1}}$ is \oad-free. Assume $t_n=(R, \pi)$, with $R=B\to H_1\wedge\dots\wedge H_k$. 
        
        Then, we prove that $t=(B\to X_R, \pi)$ is applicable on $\funres{\der^{ad}_{n-1}}$, in three steps:
        \begin{itemize}
            \item Since $t_n$ is applicable on $\FB_{n-1}$, $\pi(B)\subseteq \FB_{n-1}=\funres{\der^{ad}_{n-1}}_{\mid\Sigma}$, so $\pi(B)\subseteq\funres{\der^{ad}_{n-1}}$.
        % This case is useless, as it is covered by the next one.
            %\item If $\pi^R(X_R)\in\funres{\der^{ad}_{n-1}}$, the triggers $t^{i}_n=(X_R\to H_i, \pi)$ are applicable on $\funres{\der^{ad}_{n-1}}$, which contradicts the fact that $\funres{\der^{ad}_{n-1}}$ is \oad-free, so $\pi^R(X_R)\notin\funres{\der^{ad}_{n-1}}$.
            \item If there is a retraction $\sigma$ from $\funres{\der^{ad}_{n-1}}\cup\ens{\pi^R(X_R)}$ to $\funres{\der^{ad}_{n-1}}$, then due to the fact that $\funres{\der^{ad}_{n-1}}$ is \oad-free, there is a retraction $\sigma'$ from $\funres{\der^{ad}_{n-1}}\cup\ens{\sigma\circ\pi(H_i)\;|\;1\leq i\leq k}$ to $\funres{\der^{ad}_{n-1}}$. Then, $(\cdot_{\mid\Sigma})\circ\sigma'$ is a retraction from $(\funres{\der^{ad}_{n-1}}\cup\ens{\sigma\circ\pi(H_i)\;|\;1\leq i\leq k})_{\mid\Sigma}$ to $\funres{\der^{ad}_{n-1}}_{\mid\Sigma}$, i.e. from $\FB_{n-1}\cup\ens{\sigma\circ\pi(H_i)\;|\;1\leq i\leq k}$ to $\FB_{n-1}$, which contradicts the applicability of $t_n$ on $\FB_{n-1}$.
        \end{itemize}
        As such, $t$ is applicable on $\funres{\der^{ad}_{n-1}}$. Then, since $(\funres{\der^{ad}_{n-1}}\cup\ens{\pi^R(X_R)})_{\mid\Sigma}=\FB_{n-1}$, for every $i\leq k$, $\pi(H_i)\notin\funres{\der^{ad}_{n-1}}\cup\ens{\pi^R(X_R)}$. Thus, the triggers $t^{i}_n=(X_R\to H_i, \pi)$ are all applicable (as they feature Datalog rules). We thus define $\der^{ad}$ as the result of the following process: start with $\der^{ad}_{n-1}$, apply $t$, and every trigger $t^{i}_n$. Then, close the result under rules of the form $H'\to X_{R'}$ (with $R'=B'\to H'$). Thus, by construction, 
        \begin{flalign*}
            \funres{\der^{ad}}_{\mid\Sigma}&=\funres{\der^{ad}_{n-1}}_{\mid\Sigma}\cup\ens{\pi(H_i)\;|\;0\leq i\leq k}&\\
            &=\funres{\der^{ad}_{n-1}}_{\mid\Sigma}\cup\funout{t_n}&\\
            &=\FB_n&
        \end{flalign*}
        In addition, $\der^{ad}$ is \oad-free, as every $X_R$-predicate in $\funres{\der^{ad}_{n-1}}$ was saturated, and every rule that could have been applicable with an $X_R$-predicate in $\funres{\der^{ad}}\setminus\funres{\der^{ad}_{n-1}}$ is not, as every $H_i$ is in $\funres{\der^{ad}}$ already. 
    \end{description}
    As such, we can indeed create a derivation from $\K^{ad}$ with the properties we needed. We can now show that if \der is fair, then $\der^{ad}$ is too. First, since $\funres{\der^{ad}}_{\mid\Sigma}=\funres{\der}$ and by construction, $\funres{\der^{ad}}$ is the saturation of $\funres{\der}$ by rules of the form $H'\to X_R$, then by the fact that the two-way atomic decomposition is a conservative extension, it is a model of $\K^{ad}$, so $\der^{ad}$ is fair.

    Now, we show that it may gain sometimes-termination. Consider the rule set $\ens{(\ref{rule:2ad-nst-r-1}\text{--}\ref{rule:2ad-nst-r-5})}$ and the fact base $\ens{A(a)}$.
    \begin{align}
        A(x) \to~ &\exists y,z ~R(x,x,x)\wedge R(x,y,z) \tag{\ref{rule:2ad-nst-r-1}} \\
        R(x,y,z) &\to R(x,x,t) \tag{\ref{rule:2ad-nst-r-2}} \\
        R(x,x,y) &\to S(x,y,z) \tag{\ref{rule:2ad-nst-r-3}} \\
        R(x,x,y)\wedge S(x,y,z) &\to S(x,x,x) \tag{\ref{rule:2ad-nst-r-4}} \\
        A(x) \wedge S(x,x,y) &\to A(y) \tag{\ref{rule:2ad-nst-r-5}}
    \end{align}
    This rule set does not terminate w.r.t the restricted chase: the sequence $\eqref{rule:2ad-nst-r-1},\eqref{rule:2ad-nst-r-3},\eqref{rule:2ad-nst-r-4},\eqref{rule:2ad-nst-r-5}$ can be repeated indefinitely. $\eqref{rule:2ad-nst-r-4}$ can happen at anytime, but it does not matter.

    But this rule set terminates after normalisation: in the following, for rule $B\to H$ numbered $n$, we use the following notation:
    \begin{itemize}
        \item $n_X$ is the rule $B\to X_R$.
        \item $n_H$ is the rule $X_R\to H$ when the head is atomic.
        \item $n_{H_i}$ is the rule $X_R\to H_i$ when the head is non-atomic.
        \item $n_\leftarrow$ is the rule $H\to X_R$.
    \end{itemize}
    Then, the terminating sequence is: $\ref{rule:2ad-nst-r-1}_X,\ref{rule:2ad-nst-r-1}_{H_2},\ref{rule:2ad-nst-r-2}_X,\ref{rule:2ad-nst-r-2}_H,\ref{rule:2ad-nst-r-3}_X,\ref{rule:2ad-nst-r-3}_H,\ref{rule:2ad-nst-r-4}_X,\ref{rule:2ad-nst-r-4}_H,\ref{rule:2ad-nst-r-1}_{H_1},\ref{rule:2ad-nst-r-3}_\leftarrow$ then close that by $n_\leftarrow$.
\end{proof}

%%%%%%%%%%%%%%%%%%%%%%%%%%%%%%%%%%%%%%%%%%%%%%%%%% Df-R

\subsection{Proof of Theorem~\ref{thm:2adDfR}}

\begin{reptheorem}{thm:2adDfR}
    The two-way atomic decomposition preserves the termination of the Datalog-first restricted chase.
\end{reptheorem}

\begin{proof}[Sketch]
    For this proof by contrapositive, we will write datalog-first derivations as an alternation of a trigger that introduces an existential variable, and a derivation closed under Datalog rules. This way, we start with a derivation from \tadR, and construct a derivation from \setR that has the same number of existential triggers. We then show that if the first one is fair, the second one is too. Then, if there is an infinite fair derivation from \tadR, we can construct one from \setR, showing the result.
\end{proof}

Let \setR be a rule set and $\FB$ be a factbase. We discriminate the rules in \setR into the ones that introduce at least one existential variable, $\setR^{\exists}$, and the Datalog ones, $\setR^{D}$. An \emph{existential trigger} is triggers that features a non-Datalog rule, and for a factbase $\FB$ and a rule set $R$, the \emph{Datalog closure of $\FB$ (under \setR)} is the derivation that applies every single applicable Datalog rule on $\FB$ until there is none left.

In the following we use the fact that any Datalog-first derivation \der can be decomposed following this schema: $(\emptyset, \FB), \der_0, (t_1, \FB_1), \der_1, \ldots, (t_n, \FB_n), \der_n,\ldots$ where forall $i$, $t_i$ is an existential trigger, and $\der^{ad}_i$ is the Datalog closure of $\FB_i$.

\begin{lemma}\label{lem:2adDfR}
    Let \der be a \DF{\R}-derivation from $\langle\setR, \FB\rangle$ and $\der^{ad}$ a \DF{\R}-derivation from $\langle\tadR, \FB'\rangle$. Assume that \der and $\der^{ad}$ are closed under Datalog, and that there is an isomorphism $h'$ from \funres{\der} to $\funres{\der^{ad}}_{\mid\Sigma}$. Let $R=B\to H$ a rule and $R^{ad}=B\to X_R\in\tad(R)$. Then
    \begin{enumerate}
        \item If $t=(R, \pi)$, is applicable on \der, define $t^{ad}=(R^{ad}, \phi)$, with $\phi=h'\circ\pi$. Then, $t^{ad}$ is applicable on $\der^{ad}$.
        \item If $t^{ad}=(R^{ad}, \phi)$ is applicable on $\der^{ad}$, define $t=(R, \pi)$, with $\pi=h'^{-1}\circ\phi$. Then, $t$ is applicable on \der.
    \end{enumerate}
    In either cases, set $\der_f$ and $\der_f^{ad}$ the Datalog closures of $\funres{\der}\cup\funout{t}$ and $\funres{\der^{ad}}\cup\funout{t^{ad}}$, respectively. We can then extend $h'$ to an isomorphism $h$ from \funres{\der_f} to $\funres{\der_f^{ad}}_{\mid\Sigma}$.
\end{lemma}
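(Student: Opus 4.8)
The plan is to isolate one structural fact about Datalog-closed derivations from $\tadR$ and derive everything from it. \textbf{Structural fact:} for any rule $R = B \to H$ of $\setR$ with associated fresh predicate $X_R$, and any derivation $\der^{ad}$ from $\langle\tadR,\FB'\rangle$ that is closed under Datalog rules, an atom $X_R(\vs,\vt)$ belongs to $\funres{\der^{ad}}$ if and only if the assignment $\vx\mapsto\vs$, $\vz\mapsto\vt$ extends to a homomorphism from $H$ into $\funres{\der^{ad}}$. The ``if'' direction is closure under the backward rule $H\to X_R(\vx,\vz)$ of $\tad(R)$, which is Datalog; the ``only if'' direction is closure under the forward rules $X_R(\vx,\vz)\to P_i$ of $\oad(R)$, one per atom $P_i$ of $H$. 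Since $H$ is built only over original predicates, and $h'$ is an isomorphism $\funres{\der}\cong\funres{\der^{ad}}_{\mid\Sigma}$ with $\der$ itself Datalog-closed, this fact is the bridge between the two sides.

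For parts (1) and (2) I will use the standard reformulation of restricted applicability: a trigger $(R',\pi')$ is $\R$-applicable on $F$ iff $\pi'$ restricted to the frontier of $R'$ does not extend to a homomorphism from the head of $R'$ into $F$ (this follows from Definition~\ref{definition:applicability}, since any retraction from $F\cup\funout{(R',\pi')}$ to $F$ is forced to be the identity on the terms of $F$). Applied to $R^{ad}=B\to\exists\vz.X_R(\vx,\vz)$, the trigger $t^{ad}=(R^{ad},\phi)$ fails to be $\R$-applicable on $\funres{\der^{ad}}$ iff $X_R(\phi(\vx),\vt)\in\funres{\der^{ad}}$ for some $\vt$, iff (structural fact) $\phi(\vx)=h'(\pi(\vx))$ extends to a homomorphism from $H$ into $\funres{\der^{ad}}_{\mid\Sigma}$, iff (composing with $h'$ or $h'^{-1}$, both homomorphisms, using that $H$ is over $\Sigma$) $\pi(\vx)$ extends to a homomorphism from $H$ into $\funres{\der}$, iff $t=(R,\pi)$ fails to be $\R$-applicable on $\funres{\der}$. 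Reading this chain of equivalences in the two directions gives (1) and (2) at once; it also shows $t^{ad}$ is legitimate in a Datalog-first derivation, since $\funres{\der^{ad}}$ already satisfies every Datalog rule of $\tadR$.

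For the final claim, define $h$ to extend $h'$ by sending each fresh variable $z_t$ created by $t$ to the corresponding fresh variable $z_{t^{ad}}$ created by $t^{ad}$ (if $R$ is Datalog this is empty and the two preceding parts are vacuous). Writing $\phi'$ for the extension of $\phi$ by $\vz\mapsto\vz_{t^{ad}}$, the map $h$ is an isomorphism from $\funres{\der}\cup\funout{t}$ onto $\funres{\der^{ad}}_{\mid\Sigma}\cup\phi'(H)$, since $h(\funout{t})=h(\pi^R(H))=\phi'(H)$ and the fresh variables map bijectively. The crucial step is that the $\Sigma$-restriction of the Datalog closure of $\funres{\der^{ad}}\cup\funout{t^{ad}}$ under $\tadR$ equals the Datalog closure of $\funres{\der^{ad}}_{\mid\Sigma}\cup\phi'(H)$ under the Datalog rules of $\setR$: closing the single new atom $X_R(\phi(\vx),\vz_{t^{ad}})$ under the forward rules $X_R\to P_i$ produces exactly $\phi'(H)$, and thereafter every $\tadR$-Datalog derivation of a $\Sigma$-atom is mirrored by an $\setR$-Datalog derivation --- a block $B'\to X_{R'}\to H'$ with $R'$ Datalog simulates one application of $R'$; a block $H'\to X_{R'}\to H'$ yields nothing new; and every $X_{R'}$-atom already in $\funres{\der^{ad}}$ has, by the structural fact, its $H'$-image already present, so it contributes nothing new either. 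The companion identity that $\funres{\der_f}$ is the Datalog closure of $\funres{\der}\cup\funout{t}$ under $\setR$ is immediate from the definition of $\der_f$. Since Datalog-rule application commutes with isomorphisms and adds no new terms, $h$ extends to an isomorphism from $\funres{\der_f}$ onto $\funres{\der_f^{ad}}_{\mid\Sigma}$, as required.

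The main obstacle is the bookkeeping behind that crucial step: showing that restricting a Datalog closure under $\tadR$ to the original signature recovers precisely the Datalog closure under $\setR$, with no ``spurious'' $X_{R'}$-atoms leaking additional $\Sigma$-consequences. The structural fact applied to the Datalog-closed $\der^{ad}$ is exactly what rules out such leakage, so once that fact is in place the remainder is routine.
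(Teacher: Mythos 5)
Your proposal is correct, and it rests on the same two pillars as the paper's own proof: extend $h'$ by matching the fresh nulls of $t$ and $t^{ad}$, and exploit the hypothesis that both results are Datalog-closed, so that the forward rules $X_R\to H_i$ and the backward rule $H\to X_R$ keep $X_R$-atoms synchronised with the corresponding $\Sigma$-instances of $H$. Where you genuinely diverge is in the packaging. For points (1) and (2) the paper argues by contrapositive, taking a retraction witnessing non-applicability on one side and conjugating it by $h$ into a retraction on the other side; you instead combine the frontier-extension characterisation of \R-applicability with your ``structural fact'' ($X_R(\vs,\vt)$ present iff the corresponding instance of $H$ is present), turning both directions into one chain of equivalences. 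This is arguably cleaner and more symmetric, at the small price of justifying the characterisation --- your justification is sound: a retraction from $F\cup\funout{t}$ to $F$ fixes $F$ and is free only on the fresh nulls, every existential variable of $R$ occurs in $H$, so the witnessing terms lie in $\funres{\der^{ad}}_{\mid\Sigma}$ and can be pulled back through $h'^{-1}$; Datalog-closedness also settles the Datalog-first legitimacy of the triggers, as you note. For the final claim the paper verifies directly that $h$ and $h^{-1}$ are homomorphisms between the two closures via a case analysis on which rule produced each atom, whereas you reduce it to a commutation identity: the $\Sigma$-restriction of the $\tad(\setR)$-Datalog closure of $\funres{\der^{ad}}\cup\funout{t^{ad}}$ equals the $\setR$-Datalog closure of $\funres{\der^{ad}}_{\mid\Sigma}\cup\phi'(H)$. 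Your three-case simulation (blocks $B'\to X_{R'}\to H'$ for Datalog $R'$, backward-then-forward blocks yielding nothing new, and pre-existing $X_{R'}$-atoms already saturated by the structural fact) is exactly the induction needed, and it is exhaustive because the only rules of $\tad(\setR)$ with an $X$-predicate in the body are the forward rules. Both routes prove the same statement; yours isolates two reusable lemmas at the cost of some bookkeeping, while the paper's stays closer to the raw definitions of retraction and derivation.
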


\begin{proof}
    Before anything else, we need to extend $h'$ to a bijection from $\EI{\Vars}{\funres{\der}\cup\funout{t}}$ to $\EI{\Vars}{\funres{\der^{ad}}\cup\funout{t^{ad}}}$. To do so, for every existential variable $z$ in $R$, we define $h(\pi^R(z))=\phi^R(z)$. Note that this definition makes sense regardless of the case we are in, and extends the equality $h\circ\pi=\phi$. In addition, $h$ is indeed a bijection (but not yet an isomorphism).

    We now prove the two points one after the other.
    \begin{itemize}
        \item To prove point 1, we proceed by contrapositive. First, assume $t^{ad}$ is not applicable on $\der^{ad}$. Then, there is a retraction from $\funres{\der^{ad}}\cup\funout{t^{ad}}$ to $\funres{\der^{ad}}$. Let us call it $\sigma^{ad}$, and consider $\sigma=h^{-1}\circ\sigma^{ad}\circ h$. We then show that $\sigma$ is a retraction from $\funres{\der}\cup\funout{t}$ to $\funres{\der}$.

        First, let us show that $\sigma$ is the identity on $\funres{\der}$. Let $u\in\EI{\Vars}{\funres{\der}}$. Then $h(u)=h'(u)$, and thus $h(u)\in\EI{\Vars}{\funres{\der^{ad}}}$. So, since $\sigma^{ad}$ is a retraction, we have the following:
        \begin{flalign*}
            && \sigma^{ad}\circ h(u)&=h(u)&\\
            &\text{As such,} & h^{-1}\circ\sigma^{ad}\circ h(u) &= h^{-1}\circ h(u)&\\
            &\text{i.e.} & \sigma(u) &= u&
        \end{flalign*}
    
        Thus, $\sigma$ is the identity on $\funres{\der}$. Let us now prove that $\sigma(\funres{\der}\cup\funout{t})=\funres{\der}$. Using our last result proven, we only have to show that $\sigma(\funout{t})\subseteq\funres{\der}$. Since $\sigma^{ad}$ is a retraction, $\sigma^{ad}\circ\phi^R(X_R)\in\funres{\der^{ad}}$ (with $X_R[\vec{x}]$ the only atom produced by $t^{ad}$). Since $\der^{ad}$ is closed under datalog, every rule of the form $X_R\to H_i$ is satisfied in $\der^{ad}$. 
        \begin{flalign*}
            &\text{Thus,} &\forall i\dotq & \sigma^{ad}\circ\phi^R(H_i)\in\funres{\der^{ad}}&\\
            &\text{Equivalently,} &\forall i\dotq & \sigma^{ad}\circ h\circ\pi^R(H_i)\in\funres{\der^{ad}}&\\
            &\text{Since } h=h'\text{ on } \funres{\der}\text{, }&\forall i\dotq & h^{-1}\circ\sigma^{ad}\circ h\circ\pi^R(H_i)\in\funres{\der}&\\
            &\text{i.e.}& \forall i\dotq& \sigma\circ\pi^R(H_i)\in\funres{\der}&
        \end{flalign*}
        Since every atom in $\funout{t}$ is of the form $\pi^R(H_i)$, $\sigma$ is indeed a retraction from $\funres{\der}\cup\funout{t}$ to $\funres{\der}$, which means $t$ is not applicable on \der. We thus have shown point $1$.

        \item The proof of point $2$ is very similar to point $1$, by contrapositive. Assuming that $t$ is not applicable on \der, we can find $\sigma$ a retraction from $\funres{\der}\cup\funout{t}$ to $\funres{\der}$. We then construct $\sigma^{ad}=h\circ\sigma\circ h^{-1}$. Again, we want to show that $\sigma^{ad}$ is a retraction from $\funres{\der^{ad}}\cup\funout{t^{ad}}$ to $\funres{\der^{ad}}$.

        We start by proving $\sigma^{ad}$ is the identity on $\funres{\der^{ad}}$. Let $u$ be an element of \EI{\Vars}{\funres{\der^{ad}}}. As such, $h^{-1}(u)\in\EI{\Vars}{\funres{\der}}$. We then use the fact that $\sigma$ is a retraction to get that $\sigma\circ h^{-1}(u)=h^{-1}(u)$, which leads us to $\sigma(u) = u$ and proves this partial result.
    
        As before, we then show that $\sigma^{ad}(\funout{t^{ad}})\subseteq\funres{\der^{ad}}$. First note that $\funout{t^{ad}}=\ens{\phi^{R}(X_R)}$. We thus only need to show that $\sigma^{ad}\circ\phi^{R}(X_R)\in\funres{\der^{ad}}$. Since $\sigma$ is a retraction, 
        \begin{flalign*}
            &&\forall i\dotq & \sigma\circ\pi^{R}(H_i)\in\funres{\der}&\\
            &h \text{ is a bijection, so} &\forall i\dotq & \sigma\circ h^{-1}\circ\phi^{R}(H_i)\in\funres{\der}&\\
            &\text{Since } h=h'\text{ on } \funres{\der}\text{, }&\forall i\dotq & \sigma^{ad}\circ\phi^{R}(H_i)\in\funres{\der^{ad}}&
        \end{flalign*}
        Due to the fact that $\der^{ad}$ is closed under Datalog, the rule $\bigwedge_i H_i\to X_R$ is satisfied in $\funres{\der^{ad}}$. As such, $\sigma^{ad}\circ\phi^{R}(X_R)\in\funres{\der^{ad}}$, which proves that $\sigma^{ad}$ is a retraction from $\funres{\der^{ad}}\cup\funout{t^{ad}}$ to $\funres{\der^{ad}}$, and point $2$.
    \end{itemize}

    We can finally prove that $h$ is an isomorphism from \funres{\der_f} to $\funres{\der_f^{ad}}_{\mid\Sigma}$. We will do both directions successively.
    \begin{itemize}
        \item We must first prove that $h$ is a homomorphism from \funres{\der_f} to $\funres{\der_f^{ad}}_{\mid\Sigma}$. Let $A$ be an atom in \funres{\der_f}. We now want to prove that $h(A)\in\funres{\der_f^{ad}}_{\mid\Sigma}$. 
        
        If $A\in\funres{\der}$, then since $h'$ is an isomorphism and $h$ its extension, $h(A)\in\funres{\der^{ad}}_{\mid\Sigma}\subseteq\funres{\der_f^{ad}}_{\mid\Sigma}$, so we have what we want. 
        
        Otherwise, if $A\in\funres{\der_f}\setminus\funres{\der}$, we must distinguish two cases:
        \begin{itemize}
            \item If $A$ has been produced by a rule in $\setR^\exists$: since the only existential trigger applied between \der and $\der_f$ is $t$, then there is an $i$ such that $A=\pi^{R}(H_i)$. Since $t^{ad}$ has been applied in $\der_f^{ad}$, $\phi^{R}(X_R)\in\funres{\der_f^{ad}}$. Then, since $\der_f^{ad}$ is closed under Datalog, the rule $X_R\to H_i$ has been applied. As such, $\phi^{R}(H_i)\in\funres{\der_f^{ad}}$, but $\phi^{R}(H_i)=h(\pi^{R}(H_i))=h(A)$ so $h(A)\in\funres{\der_f^{ad}}_{\mid\Sigma}$.
            \item If $A$ has been produced by a rule in $\setR^D$: In the restricted chase, the only way for a Datalog rule not to fire is for its output to already be there. Thus, if the trigger that produced $A$ fired and the rule that produced it is $R'=B'\to H'$, then the rule $B'\to X_{R'}$ and every rule $X_{R'}\to H'_i$ either will fire or already has its output in the factbase. Note that either of those two cases yield the same atoms, including $A$.
        \end{itemize}
        As such, $h$ is a homomorphism from \funres{\der_f} to $\funres{\der_f^{ad}}_{\mid\Sigma}$.

        \item We then have to prove that $h^{-1}$ is a homomorphism from $\funres{\der_f^{ad}}_{\mid\Sigma}$ to \funres{\der_f}. Let $A$ be an atom in $\funres{\der_f^{ad}}_{\mid\Sigma}$. Note that this excludes any atom using an $X_{R'}$ predicate. We want to show that $h^{-1}(A)\in\funres{\der_f}$. Again, if $A\in\funres{\der^{ad}}_{\mid\Sigma}$, then since $h'$ is an isomorphism, $h^{-1}(A)\in\funres{\der_f}$, so we assume that $A\in(\funres{\der_f^{ad}}\setminus\funres{\der^{ad}})_{\mid\Sigma}$. We again distinguish two cases:
        \begin{itemize}
            \item If $A$ has been produced by a rule in $\setR^\exists$: First note that the only existential trigger applied between $\der^{ad}$ and $\der_f^{ad}$ is $t^{ad}$. Then, since we consider $A\in\funres{\der_f^{ad}}_{\mid\Sigma}$, $A$ cannot use an $X_R$ predicate. As such, there is an $i$ such that $A=\phi^{R}(H_i)=h(\pi^{R}(H_i))$. As $t$ has been applied in $\der_f$, $\pi^{R}(H_i)\in\funres{\der_f}$, i.e. $h^{-1}(A)\in\funres{\der_f}$.
            \item If $A$ has been produced by a rule in $\setR^D$: Again, nothing prevents Datalog rules from being applicable in the restricted chase. Thus, if a rule in $\tad{R'=B'\to H'}$ has produced $A$, then $R'$ will be applicable too at some point between \der and $\der_f$.
        \end{itemize}
        Thus, $h^{-1}$ is also a homomorphism, which concludes the proof.\qedhere
    \end{itemize}
\end{proof}

\noindent We can now prove the theorem.

\begin{proof}
    First, we deal with termination. Let $\der^{ad}$ be a \DF{\R}-derivation from $\K^{ad}=\langle\tadR, \FB\rangle$. We show by induction over the number of existential triggers in $\der^{ad}$ that we can construct a derivation \der from $\K=\langle\setR, \FB\rangle$ that has the same number of existential triggers and such that there is an isomorphism from \funres{\der} to $\funres{\der^{ad}}_{\mid\Sigma}$.
    \begin{description}
        \item[$n=0$] If $\der^{ad}=(\emptyset, \FB),\der^{ad}_0$ with $\der^{ad}_0$ the Datalog closure of $\FB$ in \tadR. We can now consider $\der=(\emptyset, \FB),\der_0$ the derivation such that $\der_0$ is the Datalog closure of $\FB$ in \setR. Define $h=id_{\EI{\Vars}{\funres{\der}}}$. It is indeed a bijection, and even an isomorphism, since nothing in the restricted chase can prevent Datalog rules from firing. One can see (by induction on the number of Datalog rules applied) that any atom produced by one derivation will indeed be produced by the other.
        
        \item[$n>0$] If $\der^{ad}=(\emptyset, \FB),\der^{ad}_0,\ldots,(t^{ad}_{n+1}, \FB^{ad}_{n+1}), \der^{ad}_{n+1}$, then we can use the induction hypothesis on $\der^{ad}_n$ to construct $\der=(\emptyset, \FB),\der_0,\ldots,(t_n, \FB_n), \der_n$ and $h'$ an isomorphism from \funres{\der} to $\funres{\der_n^{ad}}_{\mid\Sigma}$. Since \der and $\der_n^{ad}$ are closed under Datalog and $t^{ad}_{n+1}$ is applicable on $\der_n^{ad}$, we can apply point 2 of Lemma~\ref{lem:2adDfR} to construct $\der'=(\emptyset, \FB),\der_0,\ldots,(t_{n+1}, \FB_{n+1}), \der_{n+1}$ and $h$ an isomorphism from \funres{\der'} to $\funres{\der^{ad}}_{\mid\Sigma}$.
    \end{description}
    We thus created a derivation \der that shares the same number of existential triggers as $\der^{ad}$, and such that $h$ is an isomorphism between the two.
    
    We still have to prove that if $\der^{ad}$ is fair then \der is too. Assume that $\der^{ad}$ is fair. As such, \funres{\der^{ad}} is a model of $\K^{ad}$. Thus, since $h$ is an isomorphism from \funres{\der} to $\funres{\der^{ad}}_{\mid\Sigma}$ and according to Proposition~\ref{prop:adce}, \funres{\der} is a model of \setR. Thus, \der is fair.

    As such, if the Datalog-first restricted chase does not terminate on $\K^{ad}$, we can find an infinite fair derivation from $\K^{ad}$, and from this we can construct an infinite fair derivation from \K, showing that the Datalog-first restricted chase does not terminate on \K. By contrapositive, the two-way atomic decomposition preserves the termination of the Datalog-first restricted chase.
    
    Let us now tackle non-termination. Let $\K=\langle\setR, \FB\rangle$ be a knowledge base on which the Datalog-first restricted chase sometimes terminates, and \der a terminating derivation from \K. Let $\K^{ad}=\langle\tadR, \FB\rangle$. Using point 1 of Lemma~\ref{lem:2adDfR}, we can construct a derivation $\der^{ad}$ from $\K^{ad}$ and $h$ an isomorphism between the two (the induction is almost identical as the one in Theorem~\ref{thm:2adDfR}). As such, since \der is fair and $h$ is an isomorphism between their results, and using the fact that the two-way atomic decomposition produces conservative extensions, we show that $\der^{ad}$ is fair. As such, the Datalog-first restricted chase is sometimes-terminating on $\K^{ad}$. Point 2 of Lemma~\ref{lem:2adDfR} proves the other direction similarily.
\end{proof}

%%%%%%%%%%%%%%%%%%%%%%%%%%%%%%%%%%%%%%%%%%%%%%%%%% BDDP

\subsection{Proof of Theorem~\ref{thm:2adBDDP}}

\begin{reptheorem}{thm:2adBDDP}
    A rule set \setR is BDDP iff \tad(\setR) is BDDP.  
\end{reptheorem}

Similarily to how we defined $\setR_X$ previously, we define $\setR_X^{-1}$ as $\tadR\setminus\oadR$, or alternatively: \[\setR_X^{-1}\ens{H\to X_R[\vec{y}]\mid R=B\to H\in\setR}\]

\begin{proposition}\label{prop:RTo2ad}
    For any $\FB$, for any rule set $\setR$, for any integer $i$, it holds that
    
    \[\funchase{i}{\FB,\setR} \subseteq \funchase{2i}{\FB,\tadR}\]
\end{proposition}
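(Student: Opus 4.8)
The plan is to obtain this as an immediate consequence of Proposition~\ref{prop:RTo1ad} together with the monotonicity of the breadth-first chase in the rule set. First I would record the monotonicity lemma: for any factbase $\FB$, any rule sets $\setR_1 \subseteq \setR_2$, and any $i \geq 0$, one has $\funchase{i}{\FB,\setR_1} \subseteq \funchase{i}{\FB,\setR_2}$. This is a routine induction on $i$; the base case is $\funchase{0}{\FB,\setR_1} = \FB = \funchase{0}{\FB,\setR_2}$, and in the step any trigger with a rule of $\setR_1$ whose support lies in $\funchase{i}{\FB,\setR_1}$ is, by the induction hypothesis, also a trigger with a rule of $\setR_2$ whose support lies in $\funchase{i}{\FB,\setR_2}$, so its output is contained in $\funchase{i+1}{\FB,\setR_2}$.

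Next, since $\tad(R) = \oad(R) \cup \{H \to X_R(\vx,\vz)\}$ by Definition~\ref{def:2ad}, we have $\oadR \subseteq \tadR$. Applying the monotonicity lemma at depth $2i$ gives $\funchase{2i}{\FB,\oadR} \subseteq \funchase{2i}{\FB,\tadR}$, and chaining this with Proposition~\ref{prop:RTo1ad} (which, in the notation used here, asserts $\funchase{i}{\FB,\setR} \subseteq \funchase{2i}{\FB,\oadR}$) yields $\funchase{i}{\FB,\setR} \subseteq \funchase{2i}{\FB,\tadR}$, as desired.

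If instead one prefers a self-contained argument, the statement can be reproved by a direct induction on $i$ mirroring the proof of Proposition~\ref{prop:RTo1ad}: in the induction step, a rule $R = B \to \exists \vz . (H_1 \wedge \dots \wedge H_n)$ whose body is matched inside $\funchase{i}{\FB,\setR}$ is simulated in $\tadR$ by first firing $B \to \exists \vz . X_R(\vx,\vz)$ (producing the $X_R$-atom at depth $2i+1$) and then firing each $X_R(\vx,\vz) \to H_j$ (producing the atoms $H_j$ at depth $2i+2 = 2(i+1)$); the additional rule $H \to X_R(\vx,\vz)$ of $\tad(R)$ only adds atoms and cannot interfere. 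There is no genuine obstacle here: the only thing that needs care is the depth bookkeeping, i.e., verifying that one breadth-first step of $\setR$ is simulated by exactly two breadth-first steps of $\tadR$, which is what produces the factor $2$.
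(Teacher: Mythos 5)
Your proposal is correct, and in substance it matches the paper: the paper's proof of Proposition~\ref{prop:RTo2ad} is the one-liner that the argument is ``the exact same as the proof of Proposition~\ref{prop:RTo1ad}, the additional rule of the two-way atomic decomposition makes no difference,'' which is precisely your fallback direct induction. Your primary route --- invoking Proposition~\ref{prop:RTo1ad} as a black box and combining it with monotonicity of the breadth-first operator $\funchase{i}{\cdot,\cdot}$ in the rule set, using $\oadR \subseteq \tadR$ --- is a slightly more modular packaging of the same observation; the monotonicity lemma is sound here because the operator of the BDDP definition is oblivious (every trigger fires, outputs are determined by the trigger alone, and the operator is inflationary), so enlarging the rule set can only add atoms at each depth. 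Either phrasing is fine; there is no gap.
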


\begin{proof}
    The proof is the exact same as the proof of Proposition~\ref{prop:RTo1ad}, the additional rule of the two-way atomic decomposition makes no difference.
\end{proof}

\begin{proposition}\label{prop:2adToR}
    For any $\FB$, for any rule set $\setR$, for any integer $i$, it holds that
    \[\funchase{2i}{\FB,\tadR} \subseteq \funchase{1}{\funchase{i}{\funchase{1}{\FB,\tadR}_{\mid\Sigma},\setR},\setR_X\cup\setR_X^{-1}} \cup (\FB \setminus \FB_{\mid \Sigma})\]
\end{proposition}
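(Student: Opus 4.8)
The plan is to follow the proof of Proposition~\ref{prop:1adToR} almost verbatim, by induction on $i$, the only new ingredient being a clean treatment of the backward rules of $\setR_X^{-1}$ (those of the form $H\to X_R$). For the base case $i=0$, note that $\funchase{0}{\FB,\tadR}=\FB=\FB_{\mid\Sigma}\cup(\FB\setminus\FB_{\mid\Sigma})$, that $\FB_{\mid\Sigma}\subseteq\funchase{1}{\FB,\tadR}_{\mid\Sigma}=\funchase{0}{\funchase{1}{\FB,\tadR}_{\mid\Sigma},\setR}$, and that the last set is contained in $\funchase{1}{\funchase{0}{\funchase{1}{\FB,\tadR}_{\mid\Sigma},\setR},\setR_X\cup\setR_X^{-1}}$ since the one-step chase operator is inflationary; this yields the claimed inclusion. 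For the induction step I keep the same bound for atoms already present in $\funchase{2i}{\FB,\tadR}$ (they are handled by the induction hypothesis together with monotonicity of the one-step operator), so only the atoms of $\funchase{2i+2}{\FB,\tadR}\setminus\funchase{2i}{\FB,\tadR}$ need attention, and I classify them exactly along the four cases of Proposition~\ref{prop:1adToR}: produced at round $2i$ or at round $2i+1$, and by a rule of $\setR_X\cup\setR_X^{-1}$ (which creates only $X_R$-atoms, out of $\Sigma$-atoms) or by a rule of $\oadR\setminus\setR_X$ (of the shape $X_{R'}\to H'_j$, which creates a $\Sigma$-atom out of an $X_{R'}$-atom).

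The workhorse is the observation that, over $\Sigma$, the right-hand side of the induction hypothesis is exactly $\funchase{i}{\funchase{1}{\FB,\tadR}_{\mid\Sigma},\setR}$ — the extra $\setR_X\cup\setR_X^{-1}$-closure and the set $\FB\setminus\FB_{\mid\Sigma}$ contributing only atoms over fresh predicates — so that $\funchase{2i}{\FB,\tadR}_{\mid\Sigma}\subseteq\funchase{i}{\funchase{1}{\FB,\tadR}_{\mid\Sigma},\setR}$. Consequently, a rule of $\setR_X\cup\setR_X^{-1}$ (whose body, be it $B$ or $H$, is over $\Sigma$) applied at round $2i$ matches that body inside $\funchase{i}{\funchase{1}{\FB,\tadR}_{\mid\Sigma},\setR}\subseteq\funchase{i+1}{\funchase{1}{\FB,\tadR}_{\mid\Sigma},\setR}$ and therefore fires during the outermost $\funchase{1}{\cdot,\setR_X\cup\setR_X^{-1}}$ step; the same argument applies at round $2i+1$, now using the $\Sigma$-restrictions of the cases already treated. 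For a rule $X_{R'}\to H'_j$ producing an atom $\alpha$, I trace back the $X_{R'}$-atom it matched: if that atom lies in $\FB\setminus\FB_{\mid\Sigma}$ then $\alpha\in\funchase{1}{\FB,\tadR}_{\mid\Sigma}$; otherwise it was itself produced either by a forward rule $B'\to X_{R'}$ with some homomorphism $\pi'$ (with $\pi'(B')\subseteq\funchase{i}{\funchase{1}{\FB,\tadR}_{\mid\Sigma},\setR}$, as above) — in which case the original rule $R'=B'\to H'$ fires with $\pi'$ and $\alpha$ is one of its outputs, so $\alpha\in\funchase{i+1}{\funchase{1}{\FB,\tadR}_{\mid\Sigma},\setR}$ — or by a backward rule $H'\to X_{R'}$ with some $\pi'$, in which case $\alpha$ is literally one of the atoms of $\pi'(H')$, so already $\alpha\in\funchase{i}{\funchase{1}{\FB,\tadR}_{\mid\Sigma},\setR}$. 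In every case $\alpha$ ends up in $\funchase{1}{\funchase{i+1}{\funchase{1}{\FB,\tadR}_{\mid\Sigma},\setR},\setR_X\cup\setR_X^{-1}}\cup(\FB\setminus\FB_{\mid\Sigma})$, which closes the induction. (As in Proposition~\ref{prop:1adToR}, I keep the usual naming convention under which the fresh witness introduced by a rule $R$ and by its decomposition $\oad(R)$ for a given homomorphism coincide, so that the set inclusions hold on the nose.)

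The difficulty is not conceptual but lies in the bookkeeping: faithfully tracing the provenance of each $X_R$-atom across an even number of breadth-first steps, including the ``junk'' $X_R$-atoms an arbitrary $\FB$ may already contain — which is precisely why the seed on the right is $\funchase{1}{\FB,\tadR}_{\mid\Sigma}$ rather than $\FB_{\mid\Sigma}$. The single genuinely new point relative to the one-way case is the behaviour of the backward rules, and I expect the crux to be the case $X_{R'}\to H'_j$ above: one must notice that applying an $\setR_X^{-1}$-rule $H'\to X_{R'}$ and then an $\oadR\setminus\setR_X$-rule $X_{R'}\to H'_j$ produces nothing new over $\Sigma$ — it merely reconstructs an atom of $H'$ already present when the $X_{R'}$-atom was created. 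This is why the backward rules cost only the harmless enlargement of the outermost closure from $\setR_X$ to $\setR_X\cup\setR_X^{-1}$ (so that an $X_R$-atom of $\funchase{2i}{\FB,\tadR}$ derived from $\Sigma$-atoms can be recovered on the right) and of the seed from $\FB_{\mid\Sigma}$ to $\funchase{1}{\FB,\tadR}_{\mid\Sigma}$, the remainder of the argument of Proposition~\ref{prop:1adToR} going through unchanged.
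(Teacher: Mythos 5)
Your proof is correct and follows essentially the same inductive argument as the paper's: induction on $i$, with the new atoms of $\funchase{2i+2}{\FB,\tadR}$ classified by round and by whether they come from a rule of $\setR_X\cup\setR_X^{-1}$ (whose $\Sigma$-body is then found in $\funchase{i+1}{\funchase{1}{\FB,\tadR}_{\mid\Sigma},\setR}$) or from a rule $X_{R'}\to H'_j$, in which case one traces the provenance of the matched $X_{R'}$-atom. If anything, you are slightly more careful than the paper's write-up, which treats only the subcases where that $X_{R'}$-atom is in $\FB$ or was produced by the forward rule $B'\to X_{R'}$, whereas you also cover the subcase where it was produced by a backward rule $H'\to X_{R'}$ (where, as you note, nothing new over $\Sigma$ can result).
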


\begin{proof}
    We show the result by induction on $i$.
    
    For $i=0$, $\funchase{0}{\FB,\tadR} = \FB \subseteq \funchase{1}{\funchase{1}{\FB,\tadR}_{\mid\Sigma},\setR_X\cup\setR_X^{-1}} \cup (\FB \setminus \FB_{\mid \Sigma})$, as $\FB_{\mid \Sigma}$ is included in the first part of the union.
    
    Assume the result to be true for $i$. Again, atoms of $\funchase{2i+2}{\FB,\tadR} \setminus \funchase{2i}{\FB,\tadR}$ can be generated in the following way:
    \begin{enumerate}
        \item\label{item2ad-rx-2i} By applying a rule of $\setR_X\cup\setR_X^{-1}$ on $\funchase{2i}{\FB,\tadR}$.
        \item\label{item2ad-rset} By applying a rule of $\tadR \setminus (\setR_X\cup\setR_X^{-1})$.
        \item\label{item2ad-rx-2i+1} By applying a rule of $\setR_X\cup\setR_X^{-1}$ on $\funchase{2i+1}{\FB,\tadR}$.
    \end{enumerate}
    
    If an atom $A$ is generated by Case~\ref{item2ad-rx-2i}, by induction hypothesis, the body of the rule applied is in $\funchase{1}{\funchase{i}{\funchase{1}{\FB,\tadR}_{\mid\Sigma},\setR},\setR_X\cup\setR_X^{-1}}\cup (\FB \setminus \FB_{\mid \Sigma})$. The body of the rule uses only predicates in $\Sigma$, so it is in $\funchase{i}{\funchase{1}{\FB,\tadR}_{\mid\Sigma},\setR}$. Thus, it is also in $\funchase{i+1}{\funchase{1}{\FB,\tadR}_{\mid\Sigma},\setR}$, and the trigger that introduced $A$ in $\funchase{2i+1}{\FB,\tadR}$ is applicable on $\funchase{i+1}{\funchase{1}{\FB,\tadR}_{\mid\Sigma},\setR}$, so $A\in\funchase{1}{\funchase{i+1}{\funchase{1}{\FB,\tadR}_{\mid\Sigma},\setR},\setR_X\cup\setR_X^{-1}}$.
    \vspace{0.2cm}
    
    If an atom $A$ is generated by Case~\ref{item2ad-rset}, then the trigger that created it is of the form $(X_R\to H_i,\pi)$, with $A=\pi(H_i)$. We distinguish two cases:
    \vspace{-0.2cm}
    \begin{itemize}
        \item Either $\pi(X_R)\in \FB$. Then, $A\in\funchase{1}{\FB,\tadR}_{\mid\Sigma}$, so $A$ is in the set we consider.
        \item Or $\pi(X_R)\notin \FB$. Then, let $(B\to X_R, \pi')$ be the rule that introduced $\pi(X_R)$ (so the rule such that $\pi=\pi'^R$).
        Then, by induction hypothesis and the fact that $B$ only features predicates in $\Sigma$, $\pi'(B)\in\funchase{i}{\funchase{1}{\FB,\tadR}_{\mid\Sigma},\setR}$. 
        Thus, the trigger $(B\to H,\pi')$ is applicable on $\funchase{i}{\funchase{1}{\FB,\tadR}_{\mid\Sigma},\setR}$, so $\pi'^R(H)\subseteq\funchase{i+1}{\funchase{1}{\FB,\tadR}_{\mid\Sigma},\setR}$, and since $A\in\pi(H)$ and $\pi=\pi'^R$, $A$ is again in the set we want it to be.
    \end{itemize}
    
    If an atom $A$ is generated by Case~\ref{item2ad-rx-2i+1}, then the body of the rule used to generate $A$ has been generated using a rule in $\tadR \setminus (\setR_X\cup\setR_X^{-1})$. Thus, by the previous cases, the body of this rule is in $\funchase{i+1}{\funchase{1}{\FB,\tadR}_{\mid\Sigma},\setR}$, so $A\in\funchase{1}{\funchase{i}{\funchase{1}{\FB,\tadR}_{\mid\Sigma},\setR},\setR_X\cup\setR_X^{-1}}$.
    \vspace{0.2cm}
    
    As such, we indeed have the inclusion.
\end{proof}

\begin{proposition}\label{prop:2adBDDPRTo2ad}
    For any rule set $\setR$ having the bounded derivation depth property, $\tadR$ has the bounded derivation depth property.
\end{proposition}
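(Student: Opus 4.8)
The plan is to transcribe the proof of Proposition~\ref{prop:1adBDDPRTo1ad}, substituting the two-way ingredients for the one-way ones: Propositions~\ref{prop:RTo2ad} and~\ref{prop:2adToR} take the place of Propositions~\ref{prop:RTo1ad} and~\ref{prop:1adToR}, and the auxiliary rule set $\setR_X$ is replaced throughout by $\setR_X \cup \setR_X^{-1}$. First I would fix a BCQ $q$ such that $\Tuple{\tadR, \FB} \models q$; then $\funchase{2i}{\FB,\tadR} \models q$ for some $i$, and by Proposition~\ref{prop:2adToR}
\[
\funchase{1}{\funchase{i}{\funchase{1}{\FB,\tadR}_{\mid\Sigma},\setR},\setR_X\cup\setR_X^{-1}} \cup (\FB \setminus \FB_{\mid\Sigma}) \models q .
\]
Picking a homomorphism $\pi$ witnessing this, I decompose $q$ as in the one-way proof into the subquery $\hat q$ of atoms \emph{not} sent by $\pi$ into $\FB\setminus\FB_{\mid\Sigma}$ (with the terms shared with $q\setminus\hat q$ promoted to answer variables), and let $\sigma_{\hat q}$ agree with $\pi$ on these answer variables and be the identity elsewhere, so that $\funchase{1}{\funchase{i}{\funchase{1}{\FB,\tadR}_{\mid\Sigma},\setR},\setR_X\cup\setR_X^{-1}} \models \sigma_{\hat q}(\hat q)$.

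The one point that genuinely requires the extra rules $\setR_X^{-1}$ to be re-examined is the query-rewriting step, and I would dispatch it with the following observation: $\setR_X \cup \setR_X^{-1}$ is FO-rewritable — in fact its chase reaches a fixpoint after one round on any $\Sigma$-factbase — because every rule of $\setR_X\cup\setR_X^{-1}$ has a body over $\Sigma$ and a head whose only predicate is a fresh $X_R$, so none of these rules can ever fire on the $X_R$-atoms produced by the others. Hence $\sigma_{\hat q}(\hat q)$ admits a finite set of conjunctive queries $\hat{\mathfrak{Q}}$ as a rewriting with respect to $\setR_X\cup\setR_X^{-1}$, and some $q' \in \hat{\mathfrak{Q}}$ matches into $\funchase{i}{\funchase{1}{\FB,\tadR}_{\mid\Sigma},\setR}$; since rewriting against $\setR_X\cup\setR_X^{-1}$ only eliminates $X_R$-atoms (and that factbase is over $\Sigma$ anyway), $q'$ is a query over $\Sigma$. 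Now the hypothesis that $\setR$ is BDDP supplies a bound $k_q$, independent of $\FB$, with $\funchase{k_q}{\funchase{1}{\FB,\tadR}_{\mid\Sigma},\setR} \models \sigma_{\hat q}(q')$. Applying Proposition~\ref{prop:RTo2ad} gives $\funchase{2k_q}{\funchase{1}{\FB,\tadR}_{\mid\Sigma},\tadR} \models \sigma_{\hat q}(q')$, hence $\funchase{2k_q+1}{\FB,\tadR} \models \sigma_{\hat q}(q')$ since $\funchase{1}{\FB,\tadR}_{\mid\Sigma} \subseteq \funchase{1}{\FB,\tadR}$; together with $\FB\setminus\FB_{\mid\Sigma} \subseteq \funchase{2k_q+1}{\FB,\tadR}$ this yields $\funchase{2k_q+1}{\FB,\tadR} \models q$. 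The bound $2k_q+1$ depends only on $q$, so $\tadR$ is BDDP.

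I expect no deep obstacle here: the argument is essentially the one-way proof verbatim, and the only thing to keep an eye on is precisely the claim that the enlarged auxiliary set $\setR_X\cup\setR_X^{-1}$ is still tame enough — acyclic in the predicate-dependency sense, hence FO-rewritable — for the backward-rewriting step to go through, which is immediate from the shape of its rules.
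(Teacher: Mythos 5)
Your proposal is correct and follows essentially the same route as the paper's own proof: the paper likewise transplants the one-way argument, invoking Propositions~\ref{prop:2adToR} and~\ref{prop:RTo2ad} with $\setR_X$ replaced by $\setR_X\cup\setR_X^{-1}$, decomposing $q$ into $\hat q$ and $\sigma_{\hat q}$, and justifying the backward-rewriting step by the acyclicity of $\setR_X\cup\setR_X^{-1}$ exactly as you do. Your explicit remark that all bodies of $\setR_X\cup\setR_X^{-1}$ are over $\Sigma$ while all heads use fresh predicates is precisely the reason the paper's appeal to acyclicity is sound, so nothing is missing.
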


\begin{proof}
    Let $\setR$ be a rule set that has the BDDP, $q$ a query and $\FB$ a factbase such that $\langle\tadR,\FB\rangle\models q$. Then, there is a $i$ such that
    \[\funchase{2i}{\FB,\tadR}\models q\]
    Thus, by Proposition~\ref{prop:2adToR},
    \[\funchase{1}{\funchase{i}{\funchase{1}{\FB,\tadR}_{\mid\Sigma},\setR},\setR_X\cup\setR_X^{-1}} \cup (\FB \setminus \FB_{\mid\Sigma})\models q\]
    We use the same technique we used in Theorem~\ref{prop:1adBDDPRTo1ad}. Let $\pi$ be a homomorphism witnessing this entailment, and $\hat{q}$ the subset of $q$ containing exactly the atoms that $\pi$ does not map in $\FB\setminus \FB_{\mid\Sigma}$, with as answer variables the terms in both $\hat{q}$ and $q\setminus\hat{q}$. By this definition, $\pi$ is a match of $\hat{q}$ in $\funchase{1}{\funchase{i}{\funchase{1}{\FB,\tadR}_{\mid\Sigma},\setR},\setR_X\cup\setR_X^{-1}}$. Consider $\sigma_{\hat{q}}$ the substitution that maps $\hat{q}$'s answer variables to their images by $\pi$ and other variables to themselves. Then, since $\setR_X\cup\setR_X^{-1}$ is acyclic, by rewriting, there is a finite set of conjunctive queries $\hat{\mathfrak{Q}}$ and $q'\in \hat{\mathfrak{Q}}$ such that:
    \[\funchase{i}{\funchase{1}{\FB,\tadR}_{\mid\Sigma},\setR}\models \sigma_{\hat{q}}(q')\]
    As $\funchase{1}{\FB,\tadR}_{\mid\Sigma}$ is on $\Sigma$, $\funchase{i}{\funchase{1}{\FB,\tadR}_{\mid\Sigma},\setR}$ and $q'$ are too, so by the BDDP, there is a $k_q$ independant of $\FB$ such that
    \[\funchase{k_q}{\funchase{1}{\FB,\tadR}_{\mid\Sigma},\setR}\models \sigma_{\hat{q}}(q')\]
    Using Proposition~\ref{prop:RTo2ad}, we get
    \[\funchase{2k_q}{\funchase{1}{\FB,\tadR}_{\mid\Sigma},\tadR}\models \sigma_{\hat{q}}(q')\]
    Then, as $\funchase{1}{\FB,\tadR}_{\mid\Sigma}\subseteq\funchase{1}{\FB,\tadR}$ and $\FB\setminus \FB_{\mid\Sigma}\subseteq\funchase{2k_q+1}{\FB,\tadR}$,
    \[\funchase{2k_q+1}{\FB,\tadR}\models q\]
    which concludes the proof.
\end{proof}

\begin{proposition}\label{prop:2adBDDP2adToR}
    For any rule set $\setR$, if $\tadR$ has the bounded derivation depth property, then so do $\setR$.
\end{proposition}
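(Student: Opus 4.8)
The plan is to establish the contrapositive, following the same skeleton as Proposition~\ref{prop:1adBDDP1adToR} (the one-way analogue): from a failure of the BDDP for $\setR$ we manufacture a failure for $\tadR$. The only machinery needed beyond Propositions~\ref{prop:RTo2ad} and~\ref{prop:2adToR} is a clean correspondence between the breadth-first chases of $\setR$ and of $\tadR$ on factbases over the original signature $\Sigma = \EI{\Preds}{\setR}$.

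First I would prove that for every factbase $\FB$ with $\EI{\Preds}{\FB} \subseteq \Sigma$ and every $i \geq 0$ we have $\funchase{i}{\FB,\setR} = \funchase{2i}{\FB,\tadR}_{\mid\Sigma}$. The inclusion $\subseteq$ is immediate from Proposition~\ref{prop:RTo2ad} together with the fact that $\funchase{i}{\FB,\setR}$ uses only predicates of $\Sigma$ (no $\setR$-rule introduces a fresh predicate). For $\supseteq$ I would specialise Proposition~\ref{prop:2adToR} to such an $\FB$: since $\FB$ is over $\Sigma$ we have $\FB \setminus \FB_{\mid\Sigma} = \emptyset$ and $\funchase{1}{\FB,\tadR}_{\mid\Sigma} = \FB$ (the only $\Sigma$-headed rules of $\tadR$, namely $X_R \to H_i$, are inapplicable to a $\Sigma$-only factbase), so the right-hand side of Proposition~\ref{prop:2adToR} becomes $\funchase{1}{\funchase{i}{\FB,\setR},\setR_X \cup \setR_X^{-1}}$; restricting to $\Sigma$ and using that every rule in $\setR_X \cup \setR_X^{-1}$ has a fresh-predicate head (hence $\funchase{1}{G,\setR_X \cup \setR_X^{-1}}_{\mid\Sigma} = G_{\mid\Sigma}$) together with the fact that $\funchase{i}{\FB,\setR}$ is already over $\Sigma$, the nested expression telescopes to $\funchase{i}{\FB,\setR}$.

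With this in hand the argument is short. Assume $\setR$ is not BDDP. Because $\setR$ mentions only predicates from $\Sigma$, its BDDP is equivalent to the BDDP restricted to BCQs and factbases over $\Sigma$ (non-$\Sigma$ predicates occurring in a query or a factbase contribute exactly the same atoms at every chase depth), so we may fix a BCQ $q$ over $\Sigma$ and factbases $\{\FB_i\}_{i \in \mathbb{N}}$ over $\Sigma$ with $\Tuple{\setR,\FB_i} \models q$ and $\funchase{i}{\FB_i,\setR} \not\models q$ for all $i$. Since $\tadR$ is a conservative extension of $\setR$ (Proposition~\ref{prop:adce}) and $q$ is over $\Sigma$, we get $\Tuple{\tadR,\FB_i} \models q$. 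By the correspondence above, $\funchase{2i}{\FB_i,\tadR}_{\mid\Sigma} = \funchase{i}{\FB_i,\setR} \not\models q$, and since $q$ is over $\Sigma$ any homomorphism from $q$ into $\funchase{2i}{\FB_i,\tadR}$ would already land in its $\Sigma$-restriction, hence $\funchase{2i}{\FB_i,\tadR} \not\models q$. Finally, for an arbitrary bound $k$ choose $i$ with $2i \geq k$; by monotonicity of the breadth-first chase $\funchase{k}{\FB_i,\tadR} \subseteq \funchase{2i}{\FB_i,\tadR}$, so $\funchase{k}{\FB_i,\tadR} \not\models q$ while $\Tuple{\tadR,\FB_i} \models q$. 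Thus no bound works for $q$ and $\tadR$ is not BDDP, which is the contrapositive of the statement; together with Proposition~\ref{prop:2adBDDPRTo2ad} this also gives Theorem~\ref{thm:2adBDDP}.

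I expect the main obstacle to be the first step: Proposition~\ref{prop:2adToR} is stated in a deeply nested form, and one must check carefully that neither the intermediate single-step closures nor the backward rules of $\setR_X^{-1}$ leave any residue on the $\Sigma$-restriction, so that the nested expression collapses exactly to $\funchase{i}{\FB,\setR}$. Everything afterwards is a routine transfer of counterexamples, essentially identical to the one-way case, plus the standard reduction to queries and factbases over $\Sigma$.
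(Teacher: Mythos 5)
Your proposal is correct and follows essentially the same route as the paper's own proof: argue by contrapositive and transfer the counterexamples via the identity $\funchase{i}{\FB,\setR}=\funchase{2i}{\FB,\tadR}_{\mid\Sigma}$ for factbases over $\Sigma$, which you rightly obtain from Propositions~\ref{prop:RTo2ad} and~\ref{prop:2adToR} (the paper merely asserts this identity ``in a similar fashion'' to the one-way case). You in fact fill in details the paper glosses over --- the collapse of the nested expression in Proposition~\ref{prop:2adToR}, the entailment $\Tuple{\tadR,\FB_i}\models q$ via Proposition~\ref{prop:adce}, the monotonicity step for arbitrary bounds $k$, and the reduction to $\Sigma$-only queries and factbases, which the paper assumes without comment.
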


\begin{proof}
    Assume that \setR does not have the BDDP. Then, there is $q$ a query and $\ens{\FB_i}_{i\in\mathbb{N}}$ a family of factbases such that for any $i$, $\langle\setR, \FB\rangle\models q$ and $\funchase{i}{\FB_i, \setR}\not\models q$.
    
    Since, in a similar fashion to the proof of Theorem~\ref{prop:1adBDDP1adToR}, for any $\FB_i$ on $\Sigma$, 
Since, in a similar fashion to the proof of Theorem~\ref{prop:1adBDDP1adToR}, for any $\FB_i$ on $\Sigma$, 
    Since, in a similar fashion to the proof of Theorem~\ref{prop:1adBDDP1adToR}, for any $\FB_i$ on $\Sigma$, 
    \[\funchase{i}{\FB_i,\setR}=\funchase{2i}{\FB_i,\tadR}_{\mid\Sigma}\]
    then
    \[\funchase{2i}{\FB_i,\tadR}\not\models q\]
    which concludes the proof.
\end{proof}

Then, Theorem~\ref{thm:sptBDDP} is exactly Proposition~\ref{prop:2adBDDPRTo2ad} and Proposition~\ref{prop:2adBDDP2adToR}.
\section{Proofs of Section~\ref{sec:noNormR}}

\subsection{Proof of Proposition~\ref{prop-single-head-re}}
\begin{lemma}\label{lem:derBdd}
If $\setR \in \chaseterminst{\R}{\FB}{\forall}$, then there is some $k$ such that $\vert \der \vert \leq k$ for each $\R$-derivation \der\ from $\FB$.
\end{lemma}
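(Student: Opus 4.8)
The statement to prove is Lemma~\ref{lem:derBdd}: if $\setR \in \chaseterminst{\R}{\FB}{\forall}$ (every fair $\R$-derivation from $\Tuple{\setR, \FB}$ is finite), then there is a uniform bound $k$ on the length of \emph{every} $\R$-derivation from $\FB$ — not just the fair ones. The natural approach is a compactness / König's lemma argument on the tree of all $\R$-derivations from $\FB$. First I would organise all finite $\R$-derivations from $\Tuple{\setR, \FB}$ into a tree $T$: the root is the trivial derivation $(\emptyset, F_0)$, and the children of a node (a finite derivation ending in $F_i$) are the one-step extensions by a trigger $t_{i+1}$ that is $\R$-applicable on $F_i$ and whose output is not already contained in $F_i$. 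Up to the choice of names for fresh variables there are only finitely many such extensions at each node, so $T$ is finitely branching. (This is exactly where the hypothesis about atomic-head rules / the general chase framework gives us a well-defined finitely-branching structure; since at each step only finitely many triggers up to iso are available, each node has finite degree.)

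\textbf{Main argument.} The key step is to show $T$ has no infinite branch. An infinite branch of $T$ would be an infinite $\R$-derivation $\der^\infty$ from $\Tuple{\setR,\FB}$ — but it need not be fair, so we cannot directly invoke the hypothesis. However, for atomic-head (single-head) rules, the result of \cite{DBLP:conf/pods/GogaczMP20} cited in the sketch of Proposition~\ref{prop-single-head-re} tells us that the existence of an infinite $\R$-derivation is equivalent to the existence of an infinite \emph{fair} $\R$-derivation. So an infinite branch would yield an infinite fair $\R$-derivation from $\Tuple{\setR, \FB}$, contradicting $\setR \in \chaseterminst{\R}{\FB}{\forall}$. (Alternatively, and more self-containedly, one can argue: a non-fair infinite derivation can be completed to a fair one by interleaving the neglected applicable triggers — one has to check that a trigger that stays $\R$-applicable forever along the branch can still be fired, and firing it only adds atoms, so this completion is still an $\R$-derivation; here one must be slightly careful that fairness is about triggers becoming \emph{non}-applicable, and an $\R$-applicable trigger either gets fired or gets ``killed'' by a later retraction, both of which we can schedule.) Either way, $T$ has no infinite branch.

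\textbf{Conclusion via König.} Once $T$ is finitely branching and has no infinite branch, König's lemma gives that $T$ is finite. Hence there is a maximal depth $k = $ the height of $T$, and every $\R$-derivation \der from $\FB$ (being a branch, or prefix of a branch, of $T$) satisfies $\funlen{\der} = \vert \der \vert \leq k$. This is the desired uniform bound.

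\textbf{Expected obstacle.} The delicate point is the ``no infinite branch'' step: translating a possibly-unfair infinite $\R$-derivation into an infinite \emph{fair} one so as to reach the contradiction with the hypothesis. For general (multi-head) rules this translation can fail (which is ultimately why Proposition~\ref{prop-hardness-termination} holds), so the argument genuinely relies on restricting to atomic-head rules, via the equivalence of infinite and infinite-fair $\R$-derivations in that setting \cite{DBLP:conf/pods/GogaczMP20}. A secondary, more routine point is checking finite branching: one must argue that, modulo renaming of freshly introduced variables, only finitely many $\R$-applicable triggers with non-redundant output exist on any finite factbase, which follows because there are finitely many rules and finitely many homomorphisms from a rule body into a finite factbase.
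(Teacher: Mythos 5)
Your proof is correct and follows essentially the same route as the paper's: a König's-lemma argument on the finitely-branching structure of $\R$-derivations from $\Tuple{\setR,\FB}$, combined with the result of Gogacz et al.\ that for atomic-head rules an infinite $\R$-derivation yields an infinite \emph{fair} one, contradicting $\setR \in \chaseterminst{\R}{\FB}{\forall}$. The only cosmetic difference is that the paper phrases it on a graph of reachable factbases and extracts an infinite simple path from the assumed unboundedness, whereas you argue on the derivation tree that no infinite branch exists and conclude finiteness; these are the same argument.
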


\begin{proof}
Step-by-step argument:
\begin{enumerate}
\item Suppose for a contradiction that $\setR \in \chaseterminst{\R}{\FB}{\forall}$ and that, for each $k \geq 0$, there is some $\R$-derivation \der\ from $\Tuple{\setR, \FB}$ such that $\vert \der \vert \geq k$.
\item Consider the graph $G = (V, E)$ such that
\begin{itemize}
\item $V$ is the set of all factbases that occur in some \R-derivation from $\Tuple{\setR, \FB}$.
\item For each $\setG, \setG' \in V$, we have $\setG \to \setG' \in E$ if there is some trigger $t = (R, \pi)$ such that $R \in \setR$, $t$ is \R-applicable to \setG, and $\setG' = \setG \cup \funout{t}$.
\end{itemize}
\item By (2): the degree of each node in $G$ is finite.
\item By (2) and (3): by K\"onig's lemma, the graph $G$ features a simple path that is infinite.
\item By (2) and (4): since all vertices can be reached from its only root (i.e., \setF), there is a simple infinite path $\setF_0 = \setF, \setF_1, \setF_2, \ldots$ in $G$.
This infinite path corresponds to an infinite $\R$-derivation from \K.
\item By (5) and \cite{DBLP:conf/pods/GogaczMP20}, which states that for single-head rules, there exists an infinite $\R$-derivation if and only if there exists an infinite fair $\R$-derivation: $\FB$ admits an $\R$-derivation that is infinite and fair.
\item By (1) and (6): contradiction.\qedhere
\end{enumerate}
 \end{proof}

\begin{repproposition}{prop-single-head-re}
For any factbase $\FB$, the subset of $\chaseterminst{\R}{\FB}{\forall}$  containing only atomic-head rules is recognizable.
\end{repproposition}

\begin{proof}
    Step-by-step argument:
    \begin{enumerate}
        \item Consider the graph $G$ defined as in point $2$ of the proof of Lemma~\ref{lem:derBdd}.
        \item By Lemma~\ref{lem:derBdd}, the depth of $G$ is bounded exactly if $\setR \in \chaseterminst{\R}{\FB}{\forall}$.
        \item By definition, the degree of each node in $G$ is finite.
        \item By (2) and (3), $G$ is thus finite exactly if $\setR \in \chaseterminst{\R}{\FB}{\forall}$.
        \item From a factbase $\FB'$ in the graph, we can follow the edge between $\FB'$ and $\FB''$ by applying the trigger that yields $\FB''$ from $\FB'$.
        \item By (5), we can thus do a breadth-first search on the graph.
        \item By (4), the breadth-first search will terminate if and only if $\setR \in \chaseterminst{\R}{\FB}{\forall}$.\qedhere
    \end{enumerate}
\end{proof}

\subsection{Proof of Proposition~\ref{prop-hardness-termination} and Theorem~\ref{thm-no-nf-restricted}}
We will use (deterministic) Turing machines (TM), denoted as a tuple $\TM=\Tuple{\States, \Alphabet, \TransitionFunction}$,
with states $\States$, tape alphabet $\Alphabet$ with blank $\Blank\in\Alphabet$, and transition function $\TransitionFunction$.
$\TM$ has a distinguished initial state $\StartingState\in\States$, and accepting and rejecting halting states
$\AcceptingState,\RejectingState\in\States$. For all states $q\in\States\setminus \{\AcceptingState, \RejectingState\}$
and tape symbols $a\in\Alphabet$,
there is exactly one transition $(q, a)\mapsto (r, b, D) \in \TransitionFunction$, where $D$ can either be $L$ or $R$.
We assume that TM tapes are unbounded to the right but bounded to the left, and
that TMs will never attempt to move left on the first
position of the tape (this is w.l.o.g., since one can modify any TM to insert a marker at the tape start
to recognise this case).

We prove that given a Turing machine $\TM$, we can build a rule set $\setR_w \cup \setR_M$ and $\FB$ such that every restricted chase sequence from $\Tuple{\FB,\setR_w \cup \setR_M}$ is finite if and only if $\TM$ halts on every input. We can restrict ourselves w.l.o.g. to the case where the input alphabet of $\TM$ is unary. The construction works as follows:
\begin{itemize}
 \item build a rule set $\setR_w$ that generates representations of input tapes of length up to $k$, for arbitrary $k$. In order to ensure that restricted chase sequences are finite, we make use of the emergency brake technique from \cite{DBLP:conf/icdt/KrotzschMR19}
 \item build a rule set $\setR_M$ that simulate the run of a Turing machine in a terminating way if $\TM$ halts. This is classical , and we reuse a rule set provided in \cite{DBLP:conf/kr/BourgauxCKRT21}, recalled for self-containedness.
\end{itemize}

Let us consider $\FB$ containing the following atoms:
  \begin{enumerate}
 \item $\First(c_0^1), \Content{1}(c_0^1),\Next(c_0^1,c_1^1), \End(c_1^1), \Content{\Blank}(c_1^1)$, $\First(c_0^0), \End(c_0^0), \Content{\Blank}(c_0^0)$
 \item $\Init(a), \NonFinal(a,nf_1), \Real(nf_1), \NonFinal(nf_1,b), \Done(nf_1,b)$
 \item $\Brake(b),\Final(b,b),\NonFinal(b,b),\Done(b,b), \Next(b,b),\Last(b),\First(b)$
 \item $\HeadState{s}(b),\Content{l}(b),\End(b),\Step(b,b),\NextPlus(b,b)$
\end{enumerate}

$\Brake$ stands for \emph{brake}, $\Real$ for \emph{real}, $\NonFinal$ for \emph{non-final}, $\Done$ for \emph{done}, $\Final$ for \emph{final}.

% The first line contains the representation of the initial tapes for the words of length $0$ and $1$, treated as border cases. The second line contains the seed to generate tapes of arbitrary length. The third line contains the atoms regarding the emergency brake and relevant for the rules of Figure~\ref{rule-tape-creation-app}, while the last line contains the atoms regarding the emergency brake relevant for the rules of Figure~\ref{rule-tm-simulation-app}.
\begin{figure}
\begin{align}
\Brake(b) \wedge \NonFinal(z,x) \wedge \Real(x) &\to \exists y . \NonFinal(x,y) \wedge \Real(y) \wedge \Done(y,b) \wedge \NonFinal(y,b) \label{rule-intermediate}\\
\Brake(b) &\to \Real(b) \label{rule-brake} \\
\NonFinal(x, y) &\to \exists z . \Final(y, z) \label{rule-final}\\
\Final(x, y) &\to \exists z . \Done(y, z) \wedge \End(z) \wedge \Content{\Blank}(z) \label{rule-tape-end}\\
% \Init(x) \wedge \Final(x,y) \wedge \Done(y,z)  &\to \exists u . \Next(u,z) \wedge \First(u) \wedge \Done(x,u) \label{rule-tape-first} \wedge \Content{1}(u)\\
\NonFinal(t,x) \wedge \Final(x,y) \wedge \Done(y,z) &\to \exists u . \Next(u,z) \wedge \Done(x,u)
\wedge \Content{1}(u) \\
\NonFinal(t,x) \wedge \NonFinal(x,y) \wedge \Done(y,z) &\to \exists u . \Next(u,z) \wedge \Done(x,u) \wedge \Content{1}(u)\label{rule-tape-2}\\
\Init(x) \wedge \NonFinal(x,y) \wedge \Done(y,z) &\to \exists u . \Next(u,z) \wedge \Done(x,u) \wedge \Content{1}(u) \wedge \First(u) \label{rule-tape-1} \\
\First(x) &\to \HeadState{q_I}(x) \label{rule-head-init}
\end{align}
\caption{Rules $\setR_w$ to Create the Initial Tapes}
\label{rule-tape-creation-app}
\end{figure}

\begin{figure}
\begin{align}
 \Next(x,y) &\to \NextPlus(x,y) \label{rule-next-plus-init}\\
 \NextPlus(x,y) \wedge \NextPlus(y,z) &\to \NextPlus(x,z) \label{rule-transitive-next-plus}\\
 \Next(x,y) \wedge \Step(x,z) \wedge \Step(y,w) &\to \Next(z,w) \label{rule-next-step-next}\\
 \End(x) \wedge \Step(x,z) &\to \exists v. \Next(z,v) \wedge \Content{\Blank}(v) \wedge \End(v) \label{rule-extend-tape}\\
 \HeadState{q}(x) \wedge \NextPlus(x,y) \wedge \Content{c}(y) &\to \exists z . \Step(y,z) \wedge \Content{c}(z) \label{rule-right-inertia}\\
  \HeadState{q}(x) \wedge \NextPlus(y,x) \wedge \Content{c}(y) &\to \exists z . \Step(y,z) \wedge \Content{c}(z) \label{rule-left-inertia}\\
  \HeadState{q}(x) \wedge \Content{a}(x) &\to \exists z . \Step(x,z) \wedge \Content{b}(z) \label{rule-change-cell-content}\\
  \HeadState{q}(x) \wedge \Content{a}(x) \wedge \Step(x,z) \wedge \Next(z,w) &\to \HeadState{r}(w) \label{rule-change-head-state-right}\\
 \HeadState{q}(x) \wedge \Content{a}(x) \wedge \Step(x,z) \wedge \Next(w,z) &\to \HeadState{r}(w) \label{rule-change-head-state-left}
\end{align}
\caption{Rules $\setR_M$ for the Turing Machine Simulation: for a rule $(q, a)\mapsto (r, b, R) \in \TransitionFunction$, instantiations of Rules~(\ref{rule-change-cell-content}) and (\ref{rule-change-head-state-right}) are created; for a rule $(q, a)\mapsto (r, b, L) \in \TransitionFunction$,  instantiations of Rules~(\ref{rule-change-cell-content}) and (\ref{rule-change-head-state-left})}
\label{rule-tm-simulation-app}
\end{figure}

We claim that $\setR_w \cup \setR_M \in \chaseterminst{\R}{\FB}{\forall}$ if and only if $M$ halts on every input.

\begin{lemma}
\label{prop-tape-creation}
The result of any restricted chase sequence from $\Tuple{\setR_w,\FB}$ is of the following shape, for some $n$:

\begin{align*}
 &\FB   \\
 \cup& \quad \{\NonFinal(nf_i,nf_{i+1}),\Real(nf_{i+1}) \mid \{i \in \{1,\ldots,n\}),\Done(nf_{i+1},b),\NonFinal(nf_{i+1},b)\} \\
 \cup& \quad\{\Final(nf_{i},f_{i+1}) \mid i \in \{1,\ldots,n+1\}\} \\
  \cup& \quad\{\First(c_0^i),\Next(c_0^i,c_1^i),\Done(a,c_i^0),\Done(f_i,c_i^i),\End(c_i^i) \mid i \in \{2,\ldots, n+1\}\} \\
 \cup& \quad\{\Next(c_j^i,c_{j+1}^i),\Done(nf_j,c_j^i), \Content{1}(c_j^i)\mid i \in \{2,\ldots,n\}, j \in \{1,\ldots,i-1\}\} \\
 \cup& \quad\{\Real(b)\}
\end{align*}

Moreover, for any $n \geq 2$ there exists a restricted chase sequence whose result is described above.

\end{lemma}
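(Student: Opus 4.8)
The statement has two halves, both resting on one structural observation: inside $\setR_w$ (Figure~\ref{rule-tape-creation-app}), the only rule whose head can re-trigger it is Rule~\eqref{rule-intermediate}, which grows a $\NonFinal$-chain, while every other rule merely reads that chain and $\FB$ and writes atoms ``downstream'' of it. For the first half I would fix an arbitrary fair \R-derivation $\der$ from $\Tuple{\setR_w, \FB}$, show the $\NonFinal$-chain it builds is finite with length governed by a parameter $n$, and then show the downstream rules converge to exactly the displayed factbase, which I call $U_n$.

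\emph{Bounding the chain.} The key is the emergency-brake mechanism. First, the trigger with Rule~\eqref{rule-brake} producing $\Real(b)$ is \R-applicable to every factbase lacking $\Real(b)$, and $\Real(b)$ can enter only via that trigger; so fairness forces $\Real(b) \in \funres{\der}$, derived by an application of Rule~\eqref{rule-brake}. Second, a short computation: the output $\{\NonFinal(x,y),\Real(y),\Done(y,b),\NonFinal(y,b)\}$ of any trigger with Rule~\eqref{rule-intermediate} applied at a node $x$ with $\NonFinal(x,b)$ already present retracts onto the current factbase by sending the fresh variable to $b$ once $\Real(b)$ is present — the atoms $\Done(b,b),\NonFinal(b,b)$ are supplied by $\FB$ (seed item~3). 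An induction along $\der$ establishes the invariant that the $\NonFinal$-atoms created by Rule~\eqref{rule-intermediate} form a simple path $nf_1, nf_2, \dots$ each node of which carries $\NonFinal(\cdot,b)$ and $\Done(\cdot,b)$; this makes every non-tail Rule~\eqref{rule-intermediate}-trigger non-\R-applicable (its fresh variable retracts onto the next chain node) and, after $\Real(b)$ is derived, kills the tail trigger too. Hence the number $n$ of Rule~\eqref{rule-intermediate}-steps in $\der$ is finite, and the chain is exactly $a, nf_1, \dots, nf_{n+1}$ (up to the statement's indexing convention).

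\emph{Downstream atoms, the converse, and the second half.} With the chain fixed, each of Rules~\eqref{rule-final}, \eqref{rule-tape-end}, the three tape-construction rules, and \eqref{rule-head-init} either produces one canonical atom off a chain node or a $\Final$-node, or has an output that retracts onto $\FB$ (in particular every match reaching into the brake constant $b$ is killed this way), so only finitely many new terms appear — one tape ``walk'' per $\Final$-node — and $\der$ is finite. For the precise shape I would argue in two directions: (i) every atom of $U_n$ is produced, since the corresponding trigger is \R-applicable and, its output being absent so far, can be rendered non-applicable only by firing it, which fairness demands; and (ii) nothing else is produced, by checking that $U_n$ is a model of $\Tuple{\setR_w, \FB}$ and constructing, by induction along $\der$, a retraction of $\funres{\der}$ into $U_n$ (equivalently, using universality of the fair \R-chase result against $U_n$). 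Together these give $\funres{\der} \cong U_n$. For the second half, given $n \geq 2$ I would exhibit the derivation that first fires Rule~\eqref{rule-intermediate} the prescribed number of times, then fires Rule~\eqref{rule-brake}, then exhaustively applies, in the order \eqref{rule-final}, \eqref{rule-tape-end}, the tape rules, \eqref{rule-head-init}; one verifies it is finite, that after the brake no Rule~\eqref{rule-intermediate}-trigger is \R-applicable, that it is fair (its result is a model of $\setR_w$, by the same retraction checks), and that its result is $U_n$ by construction.

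\emph{Main obstacle.} The conceptual content is short (the brake plus the retraction onto $b$, and ``everything else is downstream and converges''); the work is the bookkeeping — confirming that \emph{every} trigger touching $b$ retracts, that the $\NonFinal$-structure stays a simple path however the downstream rules are interleaved with Rule~\eqref{rule-intermediate}, and that a single walk per $\Final$-node produces precisely the listed tape atoms and no more. The genuinely delicate point is that the restricted chase is order-sensitive, so the ``for any fair derivation'' claim is not automatic: it is obtained uniformly from the two-direction argument above (fairness forces the atoms of $U_n$, a retraction of $\funres{\der}$ into $U_n$ forbids any others), leaving $n$ as the only quantity that varies with the derivation.
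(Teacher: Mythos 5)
Your core mechanism is the right one and matches the paper's: fairness forces $\Real(b)$ via Rule~\eqref{rule-brake}; once $\Real(b)$ is present every trigger of Rule~\eqref{rule-intermediate} is blocked by retracting the fresh variable to $b$ (using $\Done(b,b)$, $\NonFinal(b,b)$ from \FB and $\NonFinal(x,b)$ carried by the last chain node), so the $\NonFinal$-chain is finite and simple; the downstream rules \eqref{rule-final}--\eqref{rule-head-init} fire once per relevant node and every match reaching $b$ is killed by the seed atoms. Where you diverge from the paper is in how the ``any fair derivation'' quantifier is handled: the paper argues by \emph{reordering} -- any fair \R-derivation can be rearranged so that all applications of Rule~\eqref{rule-intermediate} come first, then Rule~\eqref{rule-brake}, then Rule~\eqref{rule-final}, then the tape rules, after which the result is read off with a canonical naming of nulls -- whereas you analyse the arbitrary derivation in place, via invariants maintained under every interleaving. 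Your route is more explicit about the order-sensitivity of the restricted chase (the paper's reordering step is stated rather tersely), at the cost of more bookkeeping; both rest on the same trigger-level blocking facts.

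One step of your plan, as stated, would not carry the weight you put on it: for the exact shape you say ``nothing else is produced'' follows from $U_n$ being a model plus a retraction of $\funres{\der}$ into $U_n$ (``equivalently, using universality''), concluding $\funres{\der}\cong U_n$. That inference is not valid in general: a retraction of $\funres{\der}$ onto $U_n\subseteq\funres{\der}$ only gives homomorphic equivalence, and a restricted-chase result may well contain redundant atoms that were non-redundant when produced, so hom-equivalence with (or universality against) $U_n$ does not pin down the atom set. The exact-shape claim has to come from the per-trigger case analysis you sketch in the preceding sentence -- showing that at every step of \der the only \R-applicable triggers are the canonical ones (chain-tail extension, one $\Final$ per chain node, one tape-walk step per $\Final$-node and position, with all matches involving $b$ blocked by the seed atoms of items 3--4 of \FB) -- which is also what the paper's reordering argument implicitly relies on. If you promote that case analysis to the main argument and drop the universality detour, your proof is complete and sound; the second half (exhibiting the canonical derivation for each $n\geq 2$) is unproblematic and identical in both treatments.
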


\begin{proof}
 Let us first notice that a trigger of Rule~(\ref{rule-intermediate}) can only be blocked by the only possible application of Rule~(\ref{rule-brake}). Once Rule~(\ref{rule-brake}) has been applied, no trigger of Rule~(\ref{rule-intermediate}) is active, hence there are finitely many such triggers applied during any restricted fair derivation. This implies that any restricted chase sequence can be reordered in order to start with $m \in \mathbb{N}$ applications of Rule~(\ref{rule-intermediate}) followed by one application of Rule~(\ref{rule-brake}), followed by all the other rule applications in the original order. 
 
 As Rule~(\ref{rule-final}) can be applied exactly once per atom of predicate $\NonFinal$ (as they all have distinct second argument), and such a rule application cannot block any other rule, one can reorder the fair restricted derivation by applying all these triggers right after Rule~\ref{rule-brake}).
 
 The same reasoning applies to Rule~(\ref{rule-tape-end}), (\ref{rule-tape-2}) and (\ref{rule-tape-1}), which we thus order in that way.
 
 By choosing the following naming convention of the nulls, we obtain the claimed shape of the result:
 \begin{itemize}
  \item Rule~(\ref{rule-intermediate}) mapping $x$ to $nf_i$ instantiates $y$ to $nf_{i+1}$
  \item Rule~(\ref{rule-final}) mapping $y$ to $nf_i$ instantiates $z$ to $f_{i+1}$
  \item Rule~(\ref{rule-tape-end}) mapping $y$ to $f_i$ instantiates $z$ to $c_i^i$
  \item Rule~(\ref{rule-tape-2}) mapping $x$ to $nf_j$ and $z$ to $c_{j+1}^i$ instantiates $u$ to $c_{j}^i$
  \item Rule~(\ref{rule-tape-1}) mapping $x$ to $a$ and $z$ to $c_1^i$ instantiates $u$ to $c_0^i$.
 \end{itemize}

 Note that no rule (except for Rule~(\ref{rule-brake})) is ever applicable by mapping a frontier term to $b$: Rule~\ref{rule-intermediate} is blocked by $b$ whenever $R(b)$ is derived, which is necessary to map its body, while all the other rules are blocked due to atoms in $\FB$. 
 %$D$. 
 
% \question{$D$ is $F$ ? To be updated in several places. }

\end{proof}

\begin{lemma}
\label{proposition-tape-creation-then-simulation}
 The result of any restricted chase sequence from $\Tuple{\FB,\setR_w\cup\setR_M}$ is isomorphic to the result of a restricted chase sequence where rules of $\setR_w$ are all applied before rules of $\setR_M$..
\end{lemma}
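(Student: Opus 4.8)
The plan is to start from an arbitrary fair $\R$-derivation $\der$ from $\Tuple{\FB,\setR_w\cup\setR_M}$ and transform it, by a finite sequence of adjacent transpositions, into a fair $\R$-derivation $\der'$ that applies all of its $\setR_w$-triggers before any of its $\setR_M$-triggers and whose result is the same (hence a fortiori isomorphic). Two ingredients are needed: \emph{(i)} $\der$ uses only finitely many $\setR_w$-triggers, and \emph{(ii)} whenever two consecutive triggers of $\der$ form a pattern ``$(\setR_M\text{-trigger})(\setR_w\text{-trigger})$'', they can be swapped while remaining a valid $\R$-derivation and keeping all later factbases unchanged. For \emph{(i)}: since $\Brake(b)\in\FB$ but $\Real(b)\notin\FB$, the trigger of the Datalog rule $\Brake(b)\to\Real(b)$ (\eqref{rule-brake}) is $\R$-applicable on $\FB$, so fairness of $\der$ forces $\Real(b)$ to be derived; as in the proof of Lemma~\ref{prop-tape-creation} the emergency brake then blocks every $\R$-applicable extension of the non-final chain, so \eqref{rule-intermediate} fires only finitely often, and the remaining $\setR_w$-rules (which only build tapes along the finitely many chain elements) then fire only finitely often as well.

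The core of \emph{(ii)} rests on a predicate inspection. No predicate occurring in a body of an $\setR_w$-rule --- $\Brake,\NonFinal,\Real,\Final,\Done,\Init,\First$ --- is ever produced by an $\setR_M$-rule (whose heads use only $\NextPlus,\Next,\Step,\HeadState{\cdot}$ and $\Content{\cdot}$-atoms); conversely, $\setR_w$ attaches $\End,\Content{\cdot},\First$ and the $\NonFinal/\Real/\Done/\Final$ atoms only to its own freshly introduced nulls, produces $\Next$-atoms only with a \emph{fresh first argument}, and never produces $\Step$-, $\NextPlus$- or $\HeadState{\cdot}$-atoms. Now take consecutive triggers of $\der$ at positions $k,k{+}1$, say $t$ using a rule of $\setR_M$ and $t'$ using a rule of $\setR_w$, with $F_{k-1}\subseteq F_k=F_{k-1}\cup\funout{t}\subseteq F_{k+1}=F_k\cup\funout{t'}$. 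First, $\funsup{t'}\subseteq F_{k-1}$, because $\funsup{t'}$ uses only $\setR_w$-body predicates, none of which occur in $\funout{t}$. Second, $t'$ is $\R$-applicable on $F_{k-1}$: a retraction from $F_{k-1}\cup\funout{t'}$ to $F_{k-1}$ would extend, by the identity on the fresh nulls of $t$ --- which are disjoint from all terms of $F_{k-1}\cup\funout{t'}$ since $t$ precedes $t'$ --- to a retraction from $F_{k+1}$ to $F_k$, contradicting the $\R$-applicability of $t'$ on $F_k$. So the swapped prefix $\ldots,(t',F_{k-1}\cup\funout{t'}),(t,F_{k+1}),\ldots$ is a candidate, and it remains to check that $t$ is still $\R$-applicable on $\tilde F_k:=F_{k-1}\cup\funout{t'}$.

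This last check is the main obstacle, because $\R$-applicability is not monotone: enlarging a factbase can make a trigger redundant. I would dispose of it using the structural markers above. If $t$ uses a Datalog rule of $\setR_M$, then $\funout{t}\not\subseteq F_{k-1}$, and since each atom of $\funout{t}$ uses $\NextPlus$, $\HeadState{\cdot}$, or a $\Next$-atom between old terms --- none of which is added to any new term by $t'$ --- we still have $\funout{t}\not\subseteq\tilde F_k$. If $t$ uses an existential rule of $\setR_M$, then by inspection its fresh null $z$ occurs in every head atom and in particular in an atom $\Step(s,z)$ or $\Next(s,z)$ with $s$ a frontier term; since $\Step$ never occurs in an $\setR_w$-head and $\setR_w$ produces $\Next$-atoms only with a fresh first argument, no such atom can involve a new term of $\funout{t'}$, so no retraction from $\tilde F_k\cup\funout{t}$ to $\tilde F_k$ can send $z$ to a new term of $\funout{t'}$; it must send $z$ to a term already in $F_{k-1}$, and restricting that retraction yields a retraction from $F_{k-1}\cup\funout{t}$ to $F_{k-1}$, contradicting the $\R$-applicability of $t$ on $F_{k-1}$. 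Hence the transposition is legal.

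To conclude, by \emph{(i)} there are only finitely many $\setR_w$-triggers in $\der$, so repeatedly applying the transposition of \emph{(ii)} (bubble-sort style) produces a derivation $\der'$ that fires all $\setR_w$-triggers first and then all $\setR_M$-triggers, each in its original relative order. The multiset of applied triggers is unchanged, so $\funres{\der'}=\funres{\der}$; since $\der$ is fair, $\funres{\der}$ is a model of $\setR_w\cup\setR_M$, and any $\R$-derivation whose result is a model is fair (any $\R$-applicable trigger on some $F_i$ becomes redundant once its ``witness'' head instance, which lies in the model, has been produced), so $\der'$ is fair. Thus $\der'$ is a fair $\R$-derivation of the required shape with result equal --- hence isomorphic --- to that of $\der$; its $\setR_w$-prefix is moreover a restricted $\setR_w$-chase, to which Lemma~\ref{prop-tape-creation} then applies. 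I expect the monotonicity failure in the existential case of \emph{(ii)} to be the only delicate point; everything else is bookkeeping on predicates and fresh nulls.
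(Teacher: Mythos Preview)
Your approach is essentially the paper's: use finiteness of the $\setR_w$-triggers together with a stratification/swap argument to reorder any fair $\R$-derivation so that all $\setR_w$-triggers come first. The paper compresses this to two sentences (``no rule of $\setR_M$ can either trigger or block a rule of $\setR_w$; $\setR_w$ fires finitely often, so reorder''), while you spell out the transposition and the non-monotonicity check for $\R$-applicability in detail.

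One factual slip to fix: you assert that $\setR_w$ ``never produces $\HeadState{\cdot}$-atoms'', but rule $\First(x)\to\HeadState{q_I}(x)$ is in $\setR_w$ and does exactly that. Fortunately this does not break your Datalog case. If a Datalog $\setR_M$-trigger $t$ outputs $\HeadState{r}(w)$ and the $\setR_w$-trigger $t'$ is this $\First$-rule with output $\HeadState{q_I}(w')$, then either the two outputs differ (and $\funout{t}\not\subseteq\tilde F_k$ still holds), or they coincide, in which case $t'$ could not have been $\R$-applicable immediately after $t$ in the original derivation since its single output atom was already produced by $t$. So the pattern you are transposing never arises in that problematic form. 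Adjust the predicate inventory accordingly and the argument goes through.
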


\begin{proof}
Let us notice that no rule of Figure~\ref{rule-tm-simulation} can either trigger or block a rule from Figure~\ref{rule-tape-creation-app} (due to stratification). As rules from Figure~\ref{rule-tape-creation-app} are applied finitely many times (direct consequence of Lemma~\ref{prop-tape-creation}), they can all be applied first while preserving fairness. 
\end{proof}

\begin{lemma}
\label{proposition-tm-simulation}
 The chase of $(\setR_M,Tp_i)$ where $Tp_i$ is defined as 
 
  \[\{\Next(c_j^i,c_{j+1}^i),\First(c_0^i), \Content{1}(c_j^i) \mid j \in \{0,\ldots,i-i\}\} \cup\{\End(c_i^i),\Content{\Blank}(c_i^i)\}\]
  
w.r.t. rules of Figure~\ref{rule-tm-simulation-app} is finite if and only if $M$ halts on the input of length $i \geq 1$.
\end{lemma}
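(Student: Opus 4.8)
The plan is to exhibit a tight step-by-step correspondence between the restricted chase of $\langle\setR_M, Tp_i\rangle$ and the run of the deterministic machine $M$ on the unary input word $1^i$ (we restrict to $i\geq 1$ so that $Tp_i$ genuinely encodes an initial configuration; we also take $Tp_i$ to contain $\HeadState{q_I}(c_0^i)$, which in the ambient reduction is supplied by rule~\eqref{rule-head-init} of $\setR_w$). First I would fix the notion of a \emph{configuration layer}: the configuration of $M$ at computation step $t$ is encoded by a finite set of atoms $C_t$ consisting of a linear $\Next$-chain of fresh cells $d_0,d_1,\dots,d_{m_t}$ with $m_t=i+t$, exactly one $\Content{c}$ atom per cell recording that cell's symbol, the atoms $\First(d_0)$ and $\End(d_{m_t})$, a single $\HeadState{q}$ atom on the cell currently scanned, together with the $\NextPlus$ atoms forming the transitive closure of the chain. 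With the naming conventions of Figure~\ref{rule-tm-simulation-app}, $C_0$ is (isomorphic to) $Tp_i$ enriched with the $\NextPlus$-closure of its $\Next$-chain, and encodes the start configuration of $M$ on $1^i$.

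The heart of the argument is an invariant proved by induction on the length of a fair restricted derivation: any reachable factbase is the union of complete layers $C_0,\dots,C_{t-1}$ (encoding the first $t-1$ configurations of $M$, so in particular $M$ has not halted within its first $t-1$ steps), the $\Step$-atoms linking each cell of $C_s$ to its counterpart in $C_{s+1}$, and a partially-built layer on top of $C_{t-1}$. The key sub-claim is: when $C_t$ is present \emph{completely} (head atom included) and $M$ does \emph{not} halt at step $t$, the restricted-chase-applicable triggers are exactly those that build $C_{t+1}$, namely --- (i) the transition rule~\eqref{rule-change-cell-content} for the current pair (state, scanned symbol), applied to the head cell; (ii) the inertia rules \eqref{rule-right-inertia} and \eqref{rule-left-inertia} applied to every non-head cell, where one uses that each cell carries exactly one $\Content$ atom and that the intra-layer chain is acyclic, so $\NextPlus(h,h)$ never holds and the head cell is never copied by inertia; (iii) rule~\eqref{rule-next-step-next}, rebuilding the $\Next$-chain on the new layer; (iv) rule~\eqref{rule-extend-tape}, appending one fresh blank cell, so that $m_{t+1}=m_t+1$ and the encoded tape stays strictly longer than any position the head can have reached; (v) a head-state rule, \eqref{rule-change-head-state-right} or \eqref{rule-change-head-state-left} according to the move direction of the transition, placing the unique head atom of $C_{t+1}$; and (vi) rules \eqref{rule-next-plus-init} and \eqref{rule-transitive-next-plus} saturating $\NextPlus$ on the new chain. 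One must additionally verify that none of these outputs is absorbed by a retraction into previously built atoms (each involves fresh cells together with $\Step$ or head atoms that force any homomorphism to be injective), so the restricted chase really does perform them, and that no \emph{other} trigger is applicable (a short case analysis over the nine rule schemas). Iterating this sub-claim, if $M$ runs forever the chase produces an infinite strictly increasing sequence of fresh layers, so no fair derivation from $\langle\setR_M,Tp_i\rangle$ is finite; contrapositively, finiteness of the chase forces $M$ to halt.

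For the converse direction, assume $M$ halts at step $N$, in state $\AcceptingState$ or $\RejectingState$. By the invariant the chase builds $C_0,\dots,C_N$, and I would then argue only finitely many further atoms are derivable. Since $M$ has no transition from a halting state, no instance of rule~\eqref{rule-change-cell-content} and no head-state rule is triggered on the head cell of $C_N$, so that cell is never copied; the inertia rules still copy every non-head cell of $C_N$ and rule~\eqref{rule-extend-tape} still appends one blank, producing a ``stub'' layer. But the stub carries no head atom, and its induced $\Next$-relation is disconnected precisely at the absent head cell (rule~\eqref{rule-next-step-next} needs a $\Step$-atom out of both of its cells), so no inertia rule, no head-state rule, and only a single further application of rule~\eqref{rule-extend-tape} apply to it; $\NextPlus$ then saturates on the two disconnected pieces. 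Hence beyond $C_0,\dots,C_N$ only the stub plus finitely many $\NextPlus$ atoms appear, so every fair restricted derivation terminates.

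The step I expect to be the main obstacle is the sub-claim inside the invariant: showing that one round of the chase simulates exactly one step of $M$, neither less nor more. It demands careful tracking of rule applicability under the restricted-chase semantics, ruling out both spurious derivations and premature blocking by retractions, and attention to the boundary cases --- the head at the left end of the tape (covered by the standing assumption that $M$ never moves left from cell $0$), the head never reaching the $\End$ cell (ensured by $m_t=i+t$), and the ordering dependency that $\NextPlus$ must be saturated on a layer before the inertia rules can copy it. Once this invariant is established, both directions of the equivalence, together with the finiteness bound for the halting case, follow routinely.
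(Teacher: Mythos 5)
Your proposal is sound, but it is worth noting that the paper does not actually prove this lemma: its entire ``proof'' is a pointer to prior work, observing that $\setR_M$ is taken verbatim from Bourgaux et al.\ (2021) and was ``designed specifically to simulate a terminating Turing machine run in a terminating way.'' The layer-by-layer simulation invariant you develop --- one chase round (transition rule on the head cell, $\NextPlus$-guarded inertia rules on the other cells, tape extension, $\Next$/$\NextPlus$ propagation, head placement) builds exactly the next configuration layer, so the restricted chase is infinite precisely when $M$ runs forever, while a halting state leaves only a headless stub layer and a finite $\NextPlus$ saturation --- is exactly the content the paper outsources to that citation, so your route is the intended one, just made self-contained. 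Two of your observations deserve to be kept explicit: (i) as literally stated, $Tp_i$ contains no $\HeadState{q_I}$ atom (it is produced by rule~\eqref{rule-head-init}, which belongs to $\setR_w$, not $\setR_M$), so without your amendment the chase from $\Tuple{\setR_M, Tp_i}$ would be trivially finite for every $M$; adding $\HeadState{q_I}(c_0^i)$, as the paper implicitly does when it applies rule~\eqref{rule-head-init} before invoking this lemma, is needed for the stated equivalence to hold (the ``$j \in \{0,\ldots,i-i\}$'' is likewise a typo for $i-1$); and (ii) the retraction checks you flag --- each cell acquires at most one outgoing $\Step$ atom, and $\NextPlus$ stays intra-layer and irreflexive, so no output of a layer-building trigger maps into previously created atoms --- are precisely what makes the simulation valid for the restricted chase rather than merely the oblivious one, and they are the part a full write-up must carry out carefully.
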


\begin{proof}
 This is the same rule set at that used in \cite{DBLP:conf/kr/BourgauxCKRT21}, and designed specifically to simulate a terminating Turing machine run in a terminating way.
\end{proof}

% 
% \begin{proof}
% 
% 
% \end{proof}

\begin{lemma}
\label{proposition-union-chases}
 The result of any chase sequence w.r.t. rules of Figure~\ref{rule-tm-simulation} is $I_n$ union the chase of $Tp_k$ w.r.t. rules of Figure~\ref{rule-tm-simulation} for any $k \leq n$.
\end{lemma}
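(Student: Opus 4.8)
By Lemma~\ref{proposition-tape-creation-then-simulation} it is enough to analyse a chase sequence from $\Tuple{\FB,\setR_w\cup\setR_M}$ that first applies all rules of $\setR_w$ -- producing, by Lemma~\ref{prop-tape-creation}, the set $I_n$ -- and afterwards applies only rules of $\setR_M$. The plan is to show that, as far as $\setR_M$ is concerned, $I_n$ decomposes into mutually non-interfering pieces. First I would record two structural facts. (i) None of the predicates $\NonFinal,\Real,\Done,\Final,\Init,\Last$ occurs in any rule of $\setR_M$, so the ``spine'' atoms of $I_n$ over these predicates are inert: they are never produced by an $\setR_M$-rule, never trigger one, and cannot occur in the image of a blocking retraction -- indeed a retraction of $F_i\cup\funout{t}$ onto $F_i$ as in Definition~\ref{definition:applicability} fixes every term of $F_i$, hence maps every spine atom to itself and every $\setR_M$-atom of $\funout{t}$ to an $\setR_M$-atom. (ii) Restricting $I_n$ to the remaining predicates $\Next,\NextPlus,\Step,\End,\HeadState{q},\Content{c},\First$ yields exactly the term-disjoint union of the tape representations $Tp_k$ (for $k\le n$, with the index convention of Lemma~\ref{prop-tape-creation}; the two smallest ones being the copies already present in $\FB$) together with a ``brake block'' $B$ of atoms over the constant $b$; the $\Done$-edges that attach each $Tp_k$ to the spine carry an $\setR_M$-irrelevant predicate and thus merge none of these pieces.

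Next I would run the usual ``independent components'' induction on a fair $\setR_M$-derivation starting from $I_n$. Every body of a rule of $\setR_M$ is connected (any two of its atoms are joined by a chain of atoms sharing a variable), and every head atom is connected to the frontier; hence any $\setR_M$-trigger on a factbase reachable from $I_n$ maps its body into, and adds its output within, a single connected component of the $\setR_M$-relevant atoms. Moreover the $\R$-applicability of such a trigger against the whole factbase coincides with its $\R$-applicability against that one component alone: by fact (i) the other components and the spine can neither satisfy the body nor contribute to a blocking retraction, since such a retraction fixes the frontier -- which lies in the component -- and must therefore send the new atoms back into the same component. By induction on the length of the derivation this yields that the spine never changes, that the $\setR_M$-relevant part stays a term-disjoint union of components sitting respectively above $B$ and above each $Tp_k$, and that each component evolves exactly as a fair $\setR_M$-derivation from the corresponding seed. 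Since the $\setR_M$-simulation of a deterministic Turing machine is confluent (Lemma~\ref{proposition-tm-simulation}, following \cite{DBLP:conf/kr/BourgauxCKRT21}), each component -- hence the whole derivation -- has a unique result, equal to $I_n\cup\funchase{}{B,\setR_M}\cup\bigcup_{k\le n}\funchase{}{Tp_k,\setR_M}$.

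It then remains to check $\funchase{}{B,\setR_M}=B$, i.e.\ that the brake block is already closed under $\setR_M$ up to $\R$-applicability. This is immediate by inspection: the constant $b$ carries the self-loops $\Next(b,b)$, $\Step(b,b)$, $\NextPlus(b,b)$ together with $\First(b)$, $\End(b)$, and $\Content{c}(b)$, $\HeadState{q}(b)$ for all tape symbols $c$ and states $q$; so for every rule $B'\to\exists\vec{v}.\,H'$ of $\setR_M$ and every homomorphism of $B'$ into $B$, extending it by sending each variable of $\vec{v}$ to $b$ maps $H'$ into $B$, whence no trigger is $\R$-applicable and $B$ is its own (unique) $\setR_M$-chase. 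Plugging this into the equality above gives the claim: the result of any chase sequence is $I_n\cup\bigcup_{k\le n}\funchase{}{Tp_k,\setR_M}$, and combined with Lemma~\ref{proposition-tm-simulation} this pins down exactly when the whole chase is finite.

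The main obstacle will be making the ``components evolve independently'' step fully rigorous for the \emph{restricted} chase, rather than for the oblivious chase where it is routine: one must argue precisely that no retraction witnessing non-applicability of a trigger can borrow atoms from a foreign $Tp_{k'}$ or from the spine, and that global fairness is equivalent to fairness inside every component -- this is exactly where term-disjointness of the $Tp_k$, connectedness of $\setR_M$-bodies and of their heads to the frontier, and the $\setR_M$-irrelevance of the linking $\Done$-atoms must all be used together. A minor, bookkeeping-only point is aligning the index range of Lemma~\ref{prop-tape-creation} with the ``$k\le n$'' of the statement and verifying that the two base tapes hard-wired into $\FB$ are indeed isomorphic copies of the smallest $Tp_k$'s.
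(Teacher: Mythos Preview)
Your proposal is correct and follows essentially the same connected-components argument as the paper: since the $\setR_M$-rules are body-connected and head-connected with non-empty frontier, and the tapes $Tp_k$ live in pairwise term-disjoint components of the $\setR_M$-relevant part of $I_n$, the chase decomposes componentwise. You are in fact more careful than the paper on two points it glosses over: the explicit closure of the brake block $B$ under $\setR_M$ (the paper silently folds this into the inert part $\FB'$), and the retraction argument showing that $\R$-applicability of a trigger in one component cannot be blocked by atoms from another component or from the spine---a step that genuinely needs the frontier-fixes-the-component observation you give.
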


%\begin{proof}
% Let us notice that the result of any restricted chase sequence $\der$ from $\Tuple{\setR_w,D}$ is of the shape $D' \bigcup_{i\in \{2,\ldots,n\}} Tp_i$ for some $n$, where $D'$ does not contain any predicate appearing in $\setR_w$. Moreover, each $Tp_i$ is in its own connected component of \funres{\der} w.r.t. the predicates that appear in $\setR_w$. As these rules are both body and head connected, with a non-empty frontier, rules are applied by mapping their frontier to one connected component, and thus cannot merge connected components, or prevent a trigger occuring in another connected component to be applied. The result of any chase sequence from $\Tuple{\setR_w,\funres{\der}}$ is equal to $D' \bigcup_{i \in \{2,\ldots,n\}} \funres{\der_i}$, where $\der_i$ is a restricted chase sequence from $\Tuple{\setR_M,Tp_i}$.
%\end{proof}

\begin{proof}
 Let us notice that the result of any restricted chase sequence $\der$ from $\Tuple{\setR_w,\FB}$ is of the shape $\FB' \bigcup_{i\in \{2,\ldots,n\}} Tp_i$ for some $n$, where $\FB'$ does not contain any predicate appearing in $\setR_w$. Moreover, each $Tp_i$ is in its own connected component of \funres{\der} w.r.t. the predicates that appear in $\setR_w$. As these rules are both body and head connected, with a non-empty frontier, rules are applied by mapping their frontier to one connected component, and thus cannot merge connected components, or prevent a trigger occuring in another connected component to be applied. The result of any chase sequence from $\Tuple{\setR_w,\funres{\der}}$ is equal to $\FB' \bigcup_{i \in \{2,\ldots,n\}} \funres{\der_i}$, where $\der_i$ is a restricted chase sequence from $\Tuple{\setR_M,Tp_i}$.
\end{proof}

We can finally conclude the proof. Let us assume that there the restricted chase does not terminate. By Lemma~\ref{proposition-union-chases}, this implies that there exists $i$ such that the chase from $\Tuple{I_i,\setR_\TM}$ does not terminate, which by Lemma~\ref{proposition-tm-simulation} implies that $M$ does not halt on the input of length $i$.

Conversely, let us assume that $\TM$ does not halt on the input of length $n$. We build an infinite fair restricted chase sequence. We first generate $Tp_n$ by rule applications of Figure~\ref{rule-tape-creation-app} (and one application of Rule~(\ref{rule-head-init}), which is possible by Lemma~\ref{prop-tape-creation}). At this step, no trigger of a rule of Figure~\ref{rule-tape-creation-app} is left active. By Lemmas~\ref{proposition-tm-simulation} and \ref{proposition-union-chases}, applying rules of Figure~\ref{rule-tm-simulation-app} halts if and only if $\TM$ halts on any input of length less or equal to $n$. Hence it does not halt, which concludes the proof.

\end{tr}

\end{document}